\newtheorem{proposition}{Proposition}[section]
\newtheorem{definition}[proposition]{Definition}
\newtheorem{theorem}[proposition]{Theorem}
\newtheorem{example}[proposition]{Example}
\newtheorem*{assumption*}{Assumption}
\newtheorem{corollary}[proposition]{Corollary}
\newtheorem*{remark}{Remark}
\newtheorem{lemma}[proposition]{Lemma}
\newtheorem*{lemma*}{Lemma}
\newtheorem*{proposition*}{Proposition}
\newtheorem*{theorem*}{Theorem}
\renewcommand*\env@matrix[1][\arraystretch]{%
  \edef\arraystretch{#1}%
  \hskip -\arraycolsep
  \let\@ifnextchar\new@ifnextchar
  \array{*\c@MaxMatrixCols c}}
\newenvironment{customabstract}{
    \begin{center}
    \begin{minipage}{0.9\textwidth}
}{%
    \end{minipage}
    \end{center}
    \vspace{1cm}
}
\DeclareMathOperator*{\argmin}{argmin}
\newcommand{\nc}[1]{{\color{blue} #1}}
\newcommand{\bit}{k}
\newcommand{\zero}{X}
\newcommand{\elll}{\ell}
\newcommand\affine{S}
\newcommand\lip{H^1}
\newcommand{\nleft}{}
\newcommand{\nright}{}
\DeclareMathOperator{\coef}{\mathcal{C}}
\newcommand\nmathbf{}
\newcommand{\cupper}{D}
\newcommand{\TODO}[1][0]{%
  \ifx#10
    $\square$
  \else
    $\boxtimes$
  \fi
}
\tikzstyle{inarrow} = [<-,very thick]
\tikzstyle{bcircle} = [circle,draw = blue]
\newcommand\blindfootnote[1]{%
  \begingroup
  \renewcommand\thefootnote{}\footnote{#1}%
  \addtocounter{footnote}{-1}%
  \endgroup
}
\title{Three Quantization Regimes for ReLU Networks}
\author{Weigutian Ou \\ wou@mins.ee.ethz.ch \and Philipp Schenkel \\ schenkel@fzi.de \and Helmut Bölcskei \\ hboelcskei@ethz.ch}
\date{}
\begin{document}
	\maketitle 


\begin{customabstract}
\noindent \textbf{Abstract.} We establish the fundamental limits in the approximation of Lipschitz functions by deep ReLU neural networks with finite-precision weights. Specifically, three regimes, namely under-, over-, and proper quantization, in terms of minimax approximation error behavior as a function of network weight precision, are identified. This is accomplished by deriving nonasymptotic tight lower and upper bounds on the minimax approximation error. Notably, in the proper-quantization regime, neural networks exhibit memory-optimality in the approximation of Lipschitz functions. Deep networks have an inherent advantage over shallow networks in achieving memory-optimality.
We also develop the notion of depth-precision tradeoff, showing that networks with high-precision weights can be converted into functionally equivalent 
deeper networks with low-precision weights, while preserving memory-optimality.
This idea is reminiscent of sigma-delta analog-to-digital conversion, where oversampling rate is traded for resolution in the quantization of signal samples. We improve upon the best-known ReLU network approximation results for Lipschitz functions and describe a refinement of the bit extraction technique which could be of independent general interest. 
\end{customabstract}
\section{Introduction} 
	\label{sec:introduction}

This paper is concerned with the fundamental limits in the approximation of Lipschitz functions by deep ReLU neural networks with finite-precision weights. Specifically, we consider fully connected networks and allow depth and width to be chosen independently. 
\blindfootnote{\noindent H. B\"olcskei gratefully acknowledges support by the Lagrange Mathematics and
Computing Research Center, Paris, France.}
The main conceptual contribution resides in the identification of three different regimes in terms of minimax approximation error behavior as a function of the network weight precision, i.e., the number of bits $b$ needed to store each of the network weights.
This is accomplished by deriving nonasymptotic and, in particular, tight lower and upper bounds on the minimax error. In the under-quantization regime the minimax error exhibits exponential decay in $b$, in the proper-quantization regime the decay is polynomial, and in 
the over-quantization regime we get constant behavior.
Notably, in the proper-quantization regime, neural networks approximate Lipschitz functions in a memory-optimal fashion. In addition, 
deep networks are found to exhibit an inherent advantage over shallow networks in achieving memory-optimality. 

Besides the conceptual contribution of identifying the three quantization regimes, we report three technical contributions.
		First, we develop the notion of depth-precision tradeoff, showing that networks with high-precision weights can be converted into equivalent (in terms of input-output relation) deeper networks with low-precision weights, while preserving memory-optimality.  The underlying network transformation is constructive. This idea is reminiscent of the concept of sigma-delta analog-to-digital conversion \cite{Tewksbury1978}, where sampling rate is traded for resolution in the signal samples. Here, we trade network depth for network weight resolution.

		The second technical contribution is an improvement of the best-known neural network approximation results for $1$-Lipschitz functions on $[0,1]$. Specifically, for networks of sufficiently large, but otherwise arbitrary, network width $W$ and depth $L$, as well as weight magnitude bounded by $1$,
  we show that the minimax
  approximation error behaves according to $ C ( W^2 L^2 \log(W) )^{-1}$, with $C$ an absolute constant. There is a significant body of literature on neural network approximation of $1$-Lipschitz functions on $[0,1]$, usually presented in the broader context of approximation of smooth functions on hypercubes. Specifically, the references \cite{Schmidt-Hieber2017, Chen2019, nakada2020adaptive, GUHRING2021107, schmidt2019deep, PETERSEN2018296} consider approximation with ReLU networks whose depth grows at most poly-logarithmically in network width, hence, in contrast to our results, with depth and width coupled. On the other hand, the findings reported in \cite{yarotsky2019phase, shen2021optimal, Kohler2019} allow network depth to grow faster than network width. Notably, in \cite{shen2021optimal} network width and depth can be chosen independently and the same approximation error behavior as in our case, namely $C ( W^2 L^2 \log(W) )^{-1}$, is obtained. However, the network constructions proposed in \cite{yarotsky2019phase, shen2021optimal, Kohler2019} all come with weight-magnitude growth that is at least exponential in network depth, in contrast to the weight-magnitude upper-bounded by $1$ in our case. This constant weight-magnitude upper bound will turn out to be essential in establishing memory optimality.

		The third technical contribution we report is an improvement of the bit extraction technique pioneered in \cite{bartlett1998almost,bartlett2019nearly}.
		Bit extraction refers to the recovery---through ReLU networks---of binary strings encoded into real numbers. This concept was originally used to lower-bound the VC-dimension of ReLU networks \cite{bartlett1998almost, bartlett2019nearly} and later employed in the context of neural network approximation \cite{yarotsky2019phase, shen2021optimal, Kohler2019, Vardi2022}. The legacy approach yields extraction networks whose weight magnitude grows exponentially in network depth and polynomially in network width. In contrast, the novel construction we present exhibits only 
  (polynomial) weight magnitude dependence on network width. While this refinement is essential in establishing the second technical contribution mentioned above, the technique could also be of independent interest.

		We finally note that the results in this paper are readily extended to neural network approximation of Lipschitz functions on $d$-dimensional hypercubes. For clarity of exposition we decided, however, to restrict ourselves to the one-dimensional case.

        \textit{Notation.} We denote the cardinality of a set $X$ by $\nleft| X \nright|$.  $\mathbb{N} = \{ 1,2,\dots \}$ designates the natural numbers, $\mathbb{R}$ stands for the real numbers, $\mathbb{R}_+$ for the positive real numbers, and $\emptyset$ for the empty set. For $\mathbb{A} \subseteq \mathbb{R}$, we denote its maximum, minimum, supremum, and infimum, by $\max \mathbb{A}$, $\min \mathbb{A}$, $\sup \mathbb{A}$, and $\inf \mathbb{A}$, respectively. The indicator function $1_P$ for proposition $P$ is equal to $1$ if $P$ is true and $0$ else.
      
			For a vector $b \in \mathbb{R}^d$, we write $\nleft\| b \nright\|_\infty := \max_{i = 1,\dots,d} \nleft| b_i \nright|$ and $\nleft\| b \nright\|_0 := \sum_{i = 1}^d 1_{b_i \neq 0}$. Similarly, for a matrix $A \in \mathbb{R}^{m \times n}$, we let $\nleft\| A \nright\|_\infty = \max_{i=1,...,m,j=1,...,n} \nleft| A_{i,j} \nright| $ and $\nleft\| A \nright\|_0 := \sum_{i = 1}^m \sum_{j = 1}^n  1_{A_{i,j} \neq 0}$. $1_{m}$ and $0_{m}$ stand for the
      $m$-dimensional vector with all entries equal to $1$ and $0$, respectively. 
    $I_m$ refers to the $m \times m$ identity matrix. $1_{m\times n}$ and $0_{m\times n}$ denote the $m\times n$ matrix with all entries equal to $1$ and $0$, respectively. For matrices $A_1, \dots, A_n$, possibly of different dimensions, we designate the  block-diagonal matrix with diagonal element-matrices $A_1, \dots, A_n$ by  $\text{diag} (A_1, \dots, A_n) $.

			$\log(\cdot)$ and $\ln(\cdot)$ denote the logarithm to base $2$ and base $e$, respectively.
      The ReLU activation function $\rho$ is given by $\rho(x):= \max \{ x,0 \}$, for $x \in \mathbb{R}$, and, when applied to vectors, acts elementwise.
			The sign function $\text{sgn}: \mathbb{R} \mapsto  \{ 0,1 \}$ is defined according to $\text{sgn}(x) =1$, for $x \geq 0$, and $\text{sgn} ( x ) = 0$, for $x < 0$. 
   We use $S(A,b)$ to refer to the affine mapping $S(A,b) (x) = Ax + b, x \in \mathbb{R}^{n_2}$, with $A \in \mathbb{R}^{n_1 \times n_2}$, $b \in \mathbb{R}^{n_1}$. For $f_1: \mathbb{R}^{d_0} \mapsto \mathbb{R}^{d_1}$ and $ f_2: \mathbb{R}^{d_0} \mapsto \mathbb{R}^{d_2}$, we define $( f_1, f_2 ): \mathbb{R
			}^{d_0} \mapsto \mathbb{R}^{d_1 + d_2}$ according to $ ( f_1,f_2 ) ( x ) = ( f_1 ( x ), f_2 ( x ) )$, $x \in \mathbb{R}^{d_0}$. 
   If $\mathcal{F}$ is a family of functions and $a \in \mathbb{R}$, we write $a\cdot \mathcal{F}:= \{ a f: f \in \mathcal{F} \}$.
			For $\mathbb{X} \subseteq \mathbb{R}^d$ and $f: \mathbb{X} \mapsto \mathbb{R}$, we define the  $L^\infty(\mathbb{X}) $-norm of $f$ as $\nleft\| f \nright\|_{L^\infty ( \mathbb{X} )} := \sup_{x \in \mathbb{X}} \nleft| f(x) \nright|$.  A constant will be called absolute if it does not depend on any variables or parameters. We may use the same letter for different absolute constants at different places in the paper.

		\subsection{Definition of key concepts and organization of the paper} 
		\label{sub:relu_networks_and_main_result}

			The purpose of this section is to introduce the key concepts needed to formalize the main results of the paper. We start by defining the family of $1$-Lipschitz functions on $[0,1]$ according to
			\begin{equation*}
				\begin{aligned}
					\lip  ( [0,1] ):= \{& f \in C ( [0,1] ): | f(x) | \leq 1, | f(x) - f(y) |
					\leq | x - y |, \,\forall x,y \in [0,1]     \}.
				\end{aligned}
			\end{equation*}
			Next,  we provide the  definition of neural networks. 
			\begin{definition}
			\label{def:ReLU_networks}
				Let $L,N_0,N_1,\dots, N_L \in \mathbb{N}$. A neural network configuration $\Phi$ is a sequence of matrix-vector tuples 
				\begin{equation*}
					\Phi = (( A_i,\nmathbf{b}_i )  )_{i = 1}^{L},
				\end{equation*}
				where $A_i \in \mathbb{R}^{N_i \times N_{i - 1}}$, $b_i \in \mathbb{R}^{N_i}$, $i = 1,\dots, L$. We refer to $N_i$ as the width of the $i$-th layer, $i = 0,\dots, L$, and call the tuple $( N_0,\dots,N_L )$ the architecture of the network configuration. $\mathcal{N} ( ( d,d' ) )$ refers to the set of all neural network configurations with input dimension $N_0 = d$ and output dimension $N_L = d'$. The depth of the configuration $\Phi$ is $\mathcal{L} ( \Phi )  := L$, its width $\mathcal{W}  ( \Phi ) := \max_{i = 0,\dots,L} N_i$, the  weight set $\coef ( \Phi ) := \bigcup_{i = 1,\dots,L} ( \coef ( A_i ) \bigcup \coef ( \nmathbf{b}_i )  )  $, where $\coef ( A )$ and $\coef ( \nmathbf{b} )$ denote the value set of the entries of $A$ and $\nmathbf{b}$, respectively, and the weight magnitude $\mathcal{B} ( \Phi )  := \max_{i = 1,\dots, L} \max \{ \| A_i \|_\infty, \| \nmathbf{b}_i \|_\infty \}$.

				We define, recursively, the neural network realization $R (\Phi ): \mathbb{R}^{N_0} \mapsto \mathbb{R}^{N_L}$,  associated with the neural network configuration $\Phi$, and the activation function $\rho$, according to

				\begin{equation}
				\label{eq:realization_computation}
				R ( \Phi ) = \left\{
				\begin{aligned}
					& \affine ( A_L, b_L ), &&\text{if } L = 1,\\
					& \affine ( A_L, b_L ) \circ \rho \circ R ( (( A_i,\nmathbf{b}_i )  )_{i = 1}^{L - 1} ), &&\text{if } L \geq 2.
				\end{aligned}
				\right.
				\end{equation}
				The family of network configurations with depth at most  $L$, width at most $W$, weight magnitude at most $B$, where $B \in \mathbb{R}_+ \cup \{ \infty \}$, weights taking values in $\mathbb{A} \subseteq \mathbb{R}$, $d$-dimensional input, and $d'$-dimensional output, for $d,d'\in \mathbb{N}$, $W, L \in \mathbb{N} \cup \{ \infty \}$, with\footnote{The condition $W \geq \max \{ d,d' \}$ is formally stated here so as to prevent the trivial case of $\mathcal{N}_\mathbb{A} ( ( d,d' ), W,L,B )$ being an empty set.
    It will be a standing assumption throughout the paper.}
        $W \geq \max \{ d,d' \}$, is defined as
				\begin{equation}
				\label{eqline:continuous_nn}
					\mathcal{N}_\mathbb{A} ( ( d,d' ), W,L,B )  :=  \{  \Phi \in \mathcal{N} ( ( d,d' ) ) |\mathcal{W} (  \Phi ) \leq W, \ \mathcal{L} ( \Phi ) \leq L,\ \mathcal{B} ( \Phi ) \leq B, \coef ( \Phi ) \subseteq \mathbb{A} \},
				\end{equation}
				with the family of associated network realizations
				\begin{equation}
					\mathcal{R}_\mathbb{A} ( ( d,d' ), W,L,B ):= \{ R ( \Phi )  | \Phi \in \mathcal{N}_\mathbb{A} ( ( d,d' ), W,L,B ) \}. \label{eqline:continuous_nn_realization}
				\end{equation}
				To simplify notation, we allow the omission of the argument $( d,d' )$ in $\mathcal{N}_\mathbb{A} ( ( d,d' ), W,L,B )$ and $\mathcal{R}_\mathbb{A} ( ( d,d' ), W,L,B )$ when $( d,d' ) = ( 1,1 )$. When $B = \infty$, we omit the argument $B$ in $\mathcal{N}_\mathbb{A} ( ( d,d' ), W,L,B )$ and $\mathcal{R}_\mathbb{A} ( ( d,d' ), W,L,B )$. Furthermore, for $\mathbb{A} = \mathbb{R}$, we allow omission of the argument $\mathbb{A}$ in $\mathcal{N}_\mathbb{A} ( ( d,d' ), W,L,B )$ and $\mathcal{R}_\mathbb{A} ( ( d,d' ), W,L,B )$.  One specific incarnation of this policy that will be used frequently is  $\mathcal{N} ( W,L ) = \mathcal{N}_\mathbb{R} ( (1,1), W,L, \infty )$ and $\mathcal{R} ( W,L ) = \mathcal{R}_\mathbb{R} ( (1,1), W,L, \infty )$. 
		\end{definition}

		To clarify and prevent confusion, we note that configurations in $\mathcal{N}_\mathbb{A} ( ( d,d' ), W,L,B )$ can have depth $\ell \leq L$ and will correspondingly be designated by $( A_i, b_i )_{i = 1}^{\ell} $.
  We also emphasize the importance of differentiating between network configurations and network realizations. As we shall see later in the paper, network configurations with different architectures and weights taking values in different sets may realize the same function. Nevertheless, whenever there is no potential for confusion, we will use the term network to collectively refer to both configurations and realizations.

		Regarding the value sets of the network weights, we will typically consider sets of the form 
		\begin{equation}
		\label{def:Q}
		\begin{aligned}
			\mathbb{Q}^{a}_b := &\, (-2^{a+1}, 2^{a+1} ) \cap 2^{-b}\mathbb{Z}\\
			=&\, \biggl\{ \pm \sum_{i = -b}^a \theta_i 2^{i}: \theta_i \in \{ 0,1 \} \biggr\},
		\end{aligned}
		\end{equation}
		for some $a,b \in \mathbb{N}$, which is the set of all base-$2$ quantized numbers with $a+1$ digits before and $b$ digits after the binary point. Each element in $\mathbb{Q}^{a}_b$ can hence be described by $a+b+2$ bits, taking into account that we need one bit to encode the sign, and we have $\nleft| \mathbb{Q}^{a}_b  \nright|  \leq 2^{a+b+2}$. To simplify notation, we shall write
		\begin{align}
			\mathcal{N}_b^a ( ( d,d' ), W,L ) :=&\,  \mathcal{N}_{\mathbb{Q}_b^a} ( ( d,d' ), W,L),\\
			\mathcal{R}_b^a ( ( d,d' ), W,L ) :=&\,  \mathcal{R}_{\mathbb{Q}_b^a} ( ( d,d' ), W,L),
		\end{align}
		and will, as before, allow omission of the argument $( d,d' )$ whenever $( d, d' ) = ( 1,1)$.  We shall frequently use the shorthands
  $\mathcal{N}_b^a ( W,L ) = \mathcal{N}_b^a ( (1,1), W,L )$ and $\mathcal{R}_b^a ( W,L ) = \mathcal{R}_b^a ( (1,1), W,L )$.

		Throughout the paper, approximation errors will be quantified in terms of the following concept.

		\begin{definition}[Minimax (approximation) error]
			\label{def:minimax_approximation_error} Let $( \mathcal{X}, \delta )$ be a metric space and $\mathcal{F}, \mathcal{G} \subseteq \mathcal{X}$. We define the minimax  error in the approximation of elements of $\mathcal{F}$ through elements of $\mathcal{G}$ according to 
			\begin{equation*}
				\mathcal{A} ( \mathcal{F}, \mathcal{G}, \delta ) :=  \sup_{f \in \mathcal{F}} \inf_{g \in \mathcal{G}} \delta(f,g).
			\end{equation*}
			When $\delta = \nleft\| \cdot \nright\|_{L^\infty ( [0,1] )}$, we shall write $\mathcal{A}_\infty ( \mathcal{F}, \mathcal{G})$ instead of $\mathcal{A} ( \mathcal{F}, \mathcal{G}, \delta)$.
		\end{definition}

		The main goal of this paper is to characterize the behavior of
		\begin{equation*}
			\mathcal{A}_\infty ( \lip ( [0,1] ), \mathcal{R}_b^1 ( W, L  ) ), 
		\end{equation*}
		for independent choices of $W,L,b \in \mathbb{N}$. Motivated by the fact that numbers in $\mathbb{Q}_b^1$ are specified by $b+3$ bits, we shall
  refer to $b$ as the precision of $\mathbb{Q}^1_b$, $\mathcal{N}_b^1(W,L)$, and $\mathcal{R}_b^1(W,L) $.

		The remainder of the paper is organized as follows. In Section~\ref{sec:memory_requirement_as_the_fundamental_limit}, we introduce the concepts of network memory consumption, memory optimality, and memory redundancy. Three lower bounds on $\mathcal{A}_\infty ( \lip ( [0,1] ), \mathcal{R}_b^1 ( W, L  ) )$ are then presented, the first one incurred by minimum memory requirements, the second one based on VC dimension arguments, and the third one resulting from numerical precision limitations inherent to ReLU networks with quantized weights. These three bounds combine to a minimax error lower bound whose constituents are active in different regimes with regards to the choice of $b$.


An upper bound on $\mathcal{A}_\infty ( \lip ( [0,1] ), \mathcal{R}_b^1 ( W, L  ) )$ is derived in Section~\ref{sec:bounds_on_quantization_error} by 
  constructing an approximating network whose precision $b$ is carefully chosen to depend on network width $W$ and depth $L$. 
  In Section~\ref{sec:embedding_properties_of_relu_networks_with_quantized_weights}, we show how this dependency can be relaxed through what we call 
    the depth-precision tradeoff establishing---in a constructive manner---that network depth can be traded for network weight precision.
    Finally, Section~\ref{sec:the_three_quantization_regime} combines the minimax error lower and upper bounds to identify the three different quantization regimes and to prove memory optimality in the proper-quantization regime.



\section{Minimax Error Lower Bounds} 
	\label{sec:memory_requirement_as_the_fundamental_limit}

	We first introduce and explore the concept of minimum memory requirement and then derive an associated minimax error lower bound. To set the stage, we commence with a brief review of the Kolmogorov-Donoho rate-distortion theory for neural network approximation as developed in \cite{deep-it-2019}, and present a non-asymptotic version thereof. The theory in \cite{deep-it-2019} considers a metric space $( \mathcal{X}, \delta )$ along with a set $\mathcal{Y} \subseteq \mathcal{X}$. For each $\ell \in \mathbb{N} \cup \{ 0 \}$, the set of binary length-$\ell$ encoders $E$ of $\mathcal{Y}$ of length $\ell$ is defined as
	\begin{equation*}
		\mathfrak{E}^\ell (\mathcal{Y} ) := \{ E: \mathcal{Y} \mapsto \{ 0,1 \}^\ell \}
	\end{equation*}
	along with the set of binary decoders
	\begin{equation*}
		\mathfrak{D}^\ell ( \mathcal{X}) := \{ D: \{ 0,1 \}^\ell \mapsto \mathcal{X} \}.
	\end{equation*}
	We denote the empty string by $\phi$ and use the convention $\{ 0,1 \}^0 = \{ \phi \}$.  

	A quantity of central interest is the minimal length $\ell \in \mathbb{N} \cup \{ 0 \}$ for which there exists an encoder-decoder pair $( E,D ) \in \mathfrak{E}^\ell(\mathcal{Y} ) \times \mathfrak{D}^\ell ( \mathcal{X} ) $ such that $\sup_{y \in \mathcal{Y}} \delta ( y,  D ( E ( y ) ) ) \leq \varepsilon$; we refer to $\sup_{y \in \mathcal{Y}} \delta ( y,  D ( E ( y ) ) )$ as the uniform error over the set $\mathcal{Y}$. In plain language, $\ell$ is the minimum number of bits needed to encode the elements in $\mathcal{Y}$ while guaranteeing that the corresponding decoding error does not exceed $\varepsilon$.

	\begin{definition}
		Consider the metric space $( \mathcal{X}, \delta )$ and the set $\mathcal{Y} \subseteq \mathcal{X}$. For $\varepsilon > 0 $, the minimax code length needed to achieve uniform error $\varepsilon$ over the set $\mathcal{Y}$, is
		\begin{equation*}
			\ell ( \varepsilon, \mathcal{Y}, ( \mathcal{X}, \delta ) ) := \min \{ \ell \in \mathbb{N} \cup \{ 0 \}: \exists ( E, D ) \in \mathfrak{E}^\ell ( \mathcal{Y} ) \times \mathfrak{D}^\ell ( \mathcal{X}): \sup_{y \in \mathcal{Y}} \delta ( y, D ( E ( y ) ) ) \leq \varepsilon \}.
		\end{equation*}
		We omit the argument $( \mathcal{X} ,\delta )$ in $\ell$ whenever it is clear from the context.
	\end{definition}
	The theory developed in \cite{deep-it-2019} is built on the asymptotic behavior of the minimax code length, as characterized by the optimal exponent $\sup \{ \gamma \in \mathbb{R}: \ell ( \varepsilon, \mathcal{Y}, ( \mathcal{X} ,\delta ) ) \in \mathcal{O} ( \varepsilon^{- 1\slash \gamma} ), \varepsilon \to 0 \}$. Here, we shall instead work directly with the minimax code length $\ell ( \varepsilon, \mathcal{Y}, ( \mathcal{X}, \delta ) )$, $\varepsilon \in \mathbb{R}_+$, which yields a more refined and, in particular, non-asymptotic picture. 

	The minimax code length $\ell$ does not only measure the minimum memory, i.e., the minimum number of bits, required to encode elements in $\mathcal{Y}$ at an error of no more than $\varepsilon$, but also quantifies the minimum memory needed to store a set of approximants of $\mathcal{Y}$. This insight follows from the observation that a finite set $\mathcal{G}$ of approximants for $\mathcal{Y}$ induces specific encoder-decoder pairs for $\mathcal{Y}$, as follows.

	\begin{proposition}
		\label{thm:memory_requirement}
		Let $(\mathcal{X}, \delta)$ be a metric space, $\mathcal{Y} \subseteq \mathcal{X}$, and $\varepsilon \in \mathbb{R}_+$.  Every finite subset $\mathcal{G} \subseteq \mathcal{X}$ such that $\mathcal{A} ( \mathcal{Y} , \mathcal{G}, \delta ) \leq \varepsilon$, induces an encoder-decoder pair $(  E: \mathcal{Y} \mapsto \{ 0,1 \}^{ \lceil \log ( | \mathcal{G} |  ) \rceil }, D : \{ 0,1 \}^{\lceil \log ( | \mathcal{G} |  ) \rceil } \mapsto \mathcal{G} )$ satisfying $\sup_{y \in \mathcal{Y}} \delta ( y, D ( E ( y ) ) ) \leq \varepsilon$ and 
			\begin{equation}
			\label{eq:fundamental_limit}
				 \lceil \log ( | \mathcal{G} |  ) \rceil \geq \ell ( \varepsilon, \mathcal{Y}, ( \mathcal{X}, \delta ) ).
			\end{equation}

		\begin{proof}
			
			We first note that 
			\begin{align}
				\varepsilon \geq&\, \mathcal{A} ( \mathcal{Y} , \mathcal{G}, \delta )  \label{eq:251}\\
				=&\, \sup_{y \in \mathcal{Y}} \inf_{g \in \mathcal{G}} \delta(y,g)\\
				=&\, \sup_{y \in \mathcal{Y}} \min_{g \in \mathcal{G}} \delta(y,g), \label{eq:252}
			\end{align} 
			where in \eqref{eq:251} we used the assumption $\mathcal{A} ( \mathcal{Y} , \mathcal{G}, \delta ) \leq \varepsilon$, and the equivalence of $\inf$ and $\min$ in \eqref{eq:252} follows from the fact that $\mathcal{G}$ is finite by assumption. The inequality \eqref{eq:251}-\eqref{eq:252} implies that, for every $y \in \mathcal{Y}$, there exists an element in $\mathcal{G}$, which we denote by $A ( y )$, such that $\delta ( y, A ( y ) ) \leq \varepsilon$. This, in turn, induces a mapping $A: \mathcal{Y} \mapsto \mathcal{G}$ satisfying
			\begin{equation}
			\label{eq:projection}
				\delta ( y, A ( y ) ) \leq \varepsilon, \quad \text{for all }y \in \mathcal{Y}.
			\end{equation}

			We proceed to construct the desired encoder-decoder pair $( E, D )$ by building on \eqref{eq:projection}. First, define an auxiliary function $\tilde{E}: \mathcal{G} \mapsto \{ 0,1 \}^{ \lceil \log ( | \mathcal{G} |  ) \rceil }$, which maps every element in $\mathcal{G}$ to a unique bitstring of length $\lceil \log ( | \mathcal{G} |  ) \rceil$. For  $\nleft| \mathcal{G} \nright|  =1 $, set $\tilde{E}: \mathcal{G} \mapsto \{ 0,1 \}^{ \lceil \log ( | \mathcal{G} |  ) \rceil } = \{ \phi \}$ to be the mapping that takes the single element of $\mathcal{G}$ into the empty string. For $\nleft| \mathcal{G} \nright| \geq 2$, we first label the elements in $\mathcal{G}$ as $\mathcal{G} = ( x_i )_{i =1}^{| \mathcal{G} |}$, in an arbitrary manner.  Then, we take $\tilde{E}: \mathcal{G} \mapsto \{ 0,1 \}^{ \lceil \log ( | \mathcal{G} |  ) \rceil }$ such that, for $i = 1,\dots, | \mathcal{G} |  $, $\tilde{E} ( x_i )$ is the bitstring of the binary representation of the integer $(i-1)$ with $0$'s added at the beginning so that the overall bitstring has length $ \lceil \log ( | \mathcal{G} |  ) \rceil$. 
   For all $\nleft| \mathcal{G} \nright| $, $\tilde{E}$ is an injection, ensuring the existence of a decoder $D: \{ 0,1 \}^{ \lceil \log ( | \mathcal{G} |  ) \rceil }  \mapsto \mathcal{G}$ such that 
			\begin{equation}
			\label{eq:lossless}
				D ( \tilde{E} ( x ) ) = x, \quad \text{for all } x \in \mathcal{G}.
			\end{equation}
			With the mapping $A: \mathcal{Y} \mapsto \mathcal{G}$ defined above, we now set $E = \tilde{E} \circ A: \mathcal{Y} \mapsto \{ 0,1 \}^{ \lceil \log ( | \mathcal{G} |  ) \rceil}$, and note that, for all $y \in \mathcal{Y}$, 
			\begin{align}
				\delta ( y, D ( E ( y ) )  ) =&\, \delta ( y, D ( \tilde{E}( A ( y ) )) ) \\
				=&\, \delta ( y, A ( y )  ) \label{eq:253}\\
				\leq &\, \varepsilon, \label{eq:254}
			\end{align} 
			where in \eqref{eq:253} we used \eqref{eq:lossless} with $A ( y ) \in \mathcal{G}$, and \eqref{eq:254}  follows from \eqref{eq:projection}. In summary, $( E,D )$ constitutes an encoder-decoder pair of length $\lceil \log ( \nleft| \mathcal{G} \nright|  )\rceil$ achieving uniform error $\varepsilon$ over the set $\mathcal{Y}$. By the minimality of $\ell ( \varepsilon, \mathcal{Y}, ( \mathcal{X}, \delta ) )$, we deduce that  $ \lceil \log ( | \mathcal{G} |  ) \rceil  \geq \ell ( \varepsilon, \mathcal{Y}, ( \mathcal{X}, \delta ) )$. \qedhere
		\end{proof}
	\end{proposition}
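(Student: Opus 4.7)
The plan is to convert the approximation hypothesis $\mathcal{A}(\mathcal{Y},\mathcal{G},\delta)\le \varepsilon$ into an explicit binary encoder-decoder pair whose length is $\lceil \log |\mathcal{G}|\rceil$, and then invoke the minimality built into the definition of $\ell(\varepsilon,\mathcal{Y},(\mathcal{X},\delta))$ to conclude \eqref{eq:fundamental_limit}. The proof is essentially a bookkeeping argument; the only place where care is needed is the conversion of the $\inf$ in the definition of $\mathcal{A}$ into a $\min$.

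First I would exploit the finiteness of $\mathcal{G}$ to turn the infimum in $\mathcal{A}(\mathcal{Y},\mathcal{G},\delta)=\sup_{y\in\mathcal{Y}}\inf_{g\in\mathcal{G}}\delta(y,g)$ into a minimum that is actually attained. Combined with the assumption $\mathcal{A}(\mathcal{Y},\mathcal{G},\delta)\le \varepsilon$, this guarantees that for every $y\in\mathcal{Y}$ there exists (at least) one $g\in\mathcal{G}$ with $\delta(y,g)\le \varepsilon$. Choosing one such $g$ for each $y$ (any selection rule works, no axiom of choice needed beyond what is implicit in picking minimizers from a finite set) defines a map $A\colon \mathcal{Y}\to \mathcal{G}$ satisfying $\delta(y,A(y))\le \varepsilon$ for all $y\in\mathcal{Y}$.

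Next I would construct a bijective labelling of the elements of $\mathcal{G}$ by bitstrings of the common length $n:=\lceil \log |\mathcal{G}|\rceil$. Concretely, enumerate $\mathcal{G}=\{x_1,\dots,x_{|\mathcal{G}|}\}$ arbitrarily and let $\tilde E(x_i)$ be the zero-padded base-$2$ representation of $i-1$, using the convention $\{0,1\}^0=\{\phi\}$ to handle the degenerate case $|\mathcal{G}|=1$. Since $\tilde E\colon \mathcal{G}\to\{0,1\}^n$ is an injection, it admits a left inverse $D\colon\{0,1\}^n\to \mathcal{G}$, with $D$ defined arbitrarily on bitstrings outside the image of $\tilde E$.

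Finally, setting $E:=\tilde E\circ A$ yields an encoder of length $n=\lceil \log |\mathcal{G}|\rceil$, and the identity $D(E(y))=D(\tilde E(A(y)))=A(y)$ together with the bound $\delta(y,A(y))\le\varepsilon$ established above gives $\sup_{y\in\mathcal{Y}}\delta(y,D(E(y)))\le \varepsilon$. Thus $(E,D)$ is an admissible encoder-decoder pair of length $\lceil \log |\mathcal{G}|\rceil$ for the error level $\varepsilon$, and the desired inequality $\lceil\log|\mathcal{G}|\rceil\ge\ell(\varepsilon,\mathcal{Y},(\mathcal{X},\delta))$ is then immediate from the definition of $\ell$ as the minimal such length. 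There is no substantial obstacle; the subtlety worth flagging is that the finiteness of $\mathcal{G}$ is what makes the selection map $A$ exist and the codebook $\tilde E$ finite-length, so the hypothesis $|\mathcal{G}|<\infty$ is genuinely used.
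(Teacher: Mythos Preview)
Your proposal is correct and follows essentially the same approach as the paper: convert the $\inf$ to a $\min$ using finiteness of $\mathcal{G}$, define the selection map $A$, build the injective labelling $\tilde E$ via zero-padded binary expansions (with the empty-string convention for $|\mathcal{G}|=1$), set $E=\tilde E\circ A$, and conclude via the minimality of $\ell$. The paper's proof is virtually identical in structure and detail.
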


	Proposition~\ref{thm:memory_requirement} states that a finite set  of approximants $\mathcal{G}$ achieving minimax error $\varepsilon$ in the approximation of $\mathcal{Y}$, under the metric $\delta$, requires at least $\ell ( \varepsilon, \mathcal{Y}, ( \mathcal{X}, \delta ) )$ bits to encode.

	This insight allows us to quantify the redundancy of a set of approximants. Specifically, consider the approximation of the set $\mathcal{Y}$ by  the set $\mathcal{G}$ with minimax error $\varepsilon := \mathcal{A} ( \mathcal{Y} , \mathcal{G}, \delta )$.   In the case $\ell ( \varepsilon, \mathcal{Y}, ( \mathcal{X}, \delta ) ) \geq 1$, we quantify redundancy in a multiplicative manner by defining it according to $ \frac{\lceil \log ( | \mathcal{G} |  )\rceil  }{\ell ( \varepsilon, \mathcal{Y}, ( \mathcal{X}, \delta ))}$. When $\ell ( \varepsilon, \mathcal{Y}, ( \mathcal{X}, \delta ) ) =  0$, we have to work with an additive redundancy measure, which we take to be $\lceil \log ( | \mathcal{G} |  )\rceil   - \ell ( \varepsilon, \mathcal{Y}, ( \mathcal{X}, \delta ) = \lceil \log ( | \mathcal{G} |  )\rceil  $. Instead of carrying along two separate redundancy measures, we will simply use $\frac{  \lceil \log ( | \mathcal{G} |  ) \rceil  }{1  +\ell ( \varepsilon, \mathcal{Y}, ( \mathcal{X}, \delta ))}$ to quantify redundancy. To see that this makes sense, we note that both multiplicative and additive redundancy, within their corresponding applicability regimes, are sandwiched between\footnote{When $\ell ( \varepsilon, \mathcal{Y}, ( \mathcal{X}, \delta ) ) \geq 1$, we have $\frac{\lceil \log ( | \mathcal{G} |  )\rceil }{1  +\ell ( \varepsilon, \mathcal{Y}, ( \mathcal{X}, \delta ))} \leq \frac{\lceil \log ( | \mathcal{G} |  )\rceil  }{\ell ( \varepsilon, \mathcal{Y}, ( \mathcal{X}, \delta ))} \leq \frac{2 \lceil \log ( | \mathcal{G} |  )\rceil }{1  +\ell ( \varepsilon, \mathcal{Y}, ( \mathcal{X}, \delta ))}  $. For $\ell ( \varepsilon, \mathcal{Y}, ( \mathcal{X}, \delta ) ) = 0$, it follows that $\frac{ \lceil \log ( | \mathcal{G} |  )\rceil }{1  +\ell ( \varepsilon, \mathcal{Y}, ( \mathcal{X}, \delta ))} = \lceil \log ( | \mathcal{G} |  )\rceil  < 2 \lceil \log ( | \mathcal{G} |  )\rceil = \frac{2 \lceil \log ( | \mathcal{G} |  )\rceil }{1  +\ell ( \varepsilon, \mathcal{Y}, ( \mathcal{X}, \delta ))}. $} $\frac{ \lceil \log ( | \mathcal{G} |  )\rceil }{1  +\ell ( \varepsilon, \mathcal{Y}, ( \mathcal{X}, \delta ))}$ and  $\frac{ 2 \lceil \log ( | \mathcal{G} |  ) \rceil }{1  +\ell ( \varepsilon, \mathcal{Y}, ( \mathcal{X}, \delta ))}$.

	Often we shall be dealing with families of approximants $\{ \mathcal{G}_i \}_{i \in \mathcal{I}}$ parametrized by a, not necessarily ordered, index set $\mathcal{I}$. This concept will allow us to consider neural network families indexed by their architectures and weight sets, aiming for different levels of approximation error. Specifically, we shall frequently take $\mathcal{I}  \subseteq \mathbb{N}^3$, and, for $(W,L,b) \in \mathcal{I}$, set $i = (W,L,b) $ and $\mathcal{G}_{i} = \mathcal{G}_{(W,L,b)} = \mathcal{R}_b^1 ( W,L )$.

	We will say that $\{ \mathcal{G}_i \}_{i \in \mathcal{I}}$ approximates $\mathcal{X}$ in a memory-optimal fashion if the approximation error can be made arbitrarily small while ensuring that the memory redundancy remains bounded, as formalized next.

	\begin{definition}
		[Memory redundancy and memory optimality] 
		\label{def:memory_optimality}
		Let $(\mathcal{X}, \delta)$ be a metric space and $\mathcal{Y} \subseteq \mathcal{X}$. We define the memory redundancy in  the approximation of $\mathcal{Y}$ by a subset $\mathcal{G} \subseteq \mathcal{X}$  as 
		\begin{equation*}
			r ( \mathcal{Y}, \mathcal{G}, \rho  ) :=  \frac{\lceil \log ( | \mathcal{G} |  )\rceil }{1 + \ell ( \mathcal{A} ( \mathcal{Y}, \mathcal{G}, \rho), \mathcal{Y}, ( \mathcal{X}, \delta ))}.
		\end{equation*}
		A family of finite subsets $\{ \mathcal{G}_i \}_{i \in \mathcal{I}} \subseteq \mathcal{X}$ is said to achieve memory optimality in the approximation of $\mathcal{Y}$ if 
		\begin{align}
			\inf_{i \in \mathcal{I}} \mathcal{A} ( \mathcal{Y}, \mathcal{G}_i, \rho) =&\, 0, \text{ and } \label{eq:arbitrary_error}\\
			\sup_{i \in \mathcal{I}}\, r ( \mathcal{Y}, \mathcal{G}_i, \rho  ) <&\, \infty. \label{eq:bounded_redundancy}
		\end{align}
	\end{definition}

	Recall that our main focus is the minimax error $\mathcal{A}_\infty( \mathcal{F}, \mathcal{G})$ with $\mathcal{F} = \lip ( [0,1] )$ and $\mathcal{G} = \mathcal{R}_b^1(W,L)$, for $W,L,b \in \mathbb{N}$. To analyze the associated memory redundancy, we hence need to characterize $\nleft| \mathcal{R}_b^1 ( W,L ) \nright| $ and the minimax code length $ \ell ( \varepsilon, \lip ( [0,1] ), ( L^\infty ( [0,1] ), \nleft\|  \cdot \nright\|_{L^\infty ( [0,1] )} ) ) $, $\varepsilon \in \mathbb{R}_+$, short-handed as $\ell ( \varepsilon, \lip ( [0,1] ))$.

	\subsection{Upper-bounding the cardinality of $\mathcal{R}_b^1 (W,L)$}

	We shall first establish an upper bound on the cardinality of $\mathcal{R}_\mathbb{A} ( ( d,d' ),W,L )$ for general $\mathbb{A}$ and then 
  particularize this bound for $\mathbb{A} = \mathbb{Q}_b^1$. The more general result does not demand any extra technical effort and makes for a more accessible exposition.

	Let us start with some heuristic reasoning. We can store a network realization in $\mathcal{R}_\mathbb{A} ( ( d,d' ),\allowbreak W,L)$ by storing its corresponding network configuration in $\mathcal{N}_\mathbb{A} ( ( d,d' ),W, L)$.  A given network configuration in $\mathcal{N}_\mathbb{A} ( ( d,d' ),W, L)$ has at most $W( W+1 )L $ weights, and each weight needs $\lceil \log ( \nleft| \mathbb{A} \nright|  )\rceil$ bits to represent it. It therefore takes at most $W( W+1 )L\lceil \log ( \nleft| \mathbb{A} \nright|  )\rceil$ bits to store all the weights in the network configuration. Storing the network depth and the widths of the individual layers, requires an extra $\lceil \log(L) \rceil$ bits and $L \lceil \log(W) \rceil$ bits, respectively, which can be absorbed by a constant multiplying $W( W+1 )L\lceil \log ( \nleft| \mathbb{A} \nright|  )\rceil$. In total we hence need at most  $C W( W+1 )L\lceil \log ( \nleft| \mathbb{A} \nright|  )\rceil$ bits, with $C$ an absolute constant. This intuitive reasoning is formalized in the following result.

	\begin{proposition}
		\label{prop:fundamental_limit_ReLU_networks_quantized_weights}
		For  $d,d',W,L \in \mathbb{N}$ and a finite subset $\mathbb{A} \subseteq \mathbb{R}$ with $| \mathbb{A} | \geq 2$,  we have
		\begin{equation}
			\label{eq:cardinality_realization}
			\log  (| \mathcal{R}_\mathbb{A} ( ( d,d' ),W,L ) |) \leq \log  (| \mathcal{N}_\mathbb{A} ( ( d,d' ),W,L ) |)   \leq 5 W^2 L \log  (| \mathbb{A} |).
		\end{equation}
		In particular, for $\mathbb{A} = \mathbb{Q}_b^a$, $a,b \in \mathbb{N}$, 
		\begin{equation}
			\label{eq:cardinality_realization_a_b}
			\log  (| \mathcal{R}_b^a ( ( d,d' ),W,L ) |) \leq \log  (| \mathcal{N}_b^a ( ( d,d' ),W,L ) |)   \leq 10 W^2 L (a+b).
		\end{equation}
	\end{proposition}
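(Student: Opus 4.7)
The first inequality is immediate: the realization map $R:\mathcal{N}_\mathbb{A}((d,d'),W,L) \to \mathcal{R}_\mathbb{A}((d,d'),W,L)$ is by definition surjective, so $|\mathcal{R}_\mathbb{A}| \leq |\mathcal{N}_\mathbb{A}|$. The substance lies in bounding the number of configurations, and my plan is to do this by enumerating over all admissible architectures and then counting weight assignments within each.

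For the second inequality, I would parametrize any $\Phi \in \mathcal{N}_\mathbb{A}((d,d'),W,L)$ by its depth $\ell \in \{1,\dots,L\}$, its intermediate layer widths $(N_1,\dots,N_{\ell-1}) \in \{1,\dots,W\}^{\ell-1}$ (with $N_0 = d$ and $N_\ell = d'$ fixed), and then its weights. For fixed architecture, the total number of real-valued weight entries in the matrices and biases is $\sum_{i=1}^{\ell} N_i(N_{i-1}+1) \leq \ell W(W+1) \leq L W(W+1)$, so the number of weight assignments drawn from $\mathbb{A}$ is at most $|\mathbb{A}|^{LW(W+1)}$. Summing over the at most $L$ choices of $\ell$ and the at most $W^{L-1}$ choices of intermediate widths gives
\begin{equation*}
|\mathcal{N}_\mathbb{A}((d,d'),W,L)| \leq L \cdot W^{L-1} \cdot |\mathbb{A}|^{L W(W+1)}.
\end{equation*}

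Taking $\log$ then yields $\log L + (L-1)\log W + L W(W+1)\log|\mathbb{A}|$. It remains to absorb the first two terms into a constant multiple of $LW^2 \log|\mathbb{A}|$. Using $|\mathbb{A}| \geq 2$ (so $\log|\mathbb{A}| \geq 1$) together with $W \geq 1$ and the elementary bound $\log x \leq x$, each of $\log L$ and $(L-1)\log W$ is dominated by $LW^2 \log|\mathbb{A}|$, while $LW(W+1)\log|\mathbb{A}| \leq 2LW^2 \log|\mathbb{A}|$ for $W \geq 1$. Adding these contributions gives $4LW^2 \log|\mathbb{A}|$, comfortably within the claimed $5W^2 L \log|\mathbb{A}|$.

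For the specialization to $\mathbb{A} = \mathbb{Q}_b^a$, I would invoke the cardinality bound $|\mathbb{Q}_b^a| \leq 2^{a+b+2}$ stated in \eqref{def:Q}, which yields $\log|\mathbb{Q}_b^a| \leq a+b+2 \leq 2(a+b)$ for $a,b \geq 1$. Substituting into \eqref{eq:cardinality_realization} gives $5W^2L \log|\mathbb{Q}_b^a| \leq 10 W^2 L (a+b)$, establishing \eqref{eq:cardinality_realization_a_b}. There is no real obstacle here; the only thing to be mildly careful about is the book-keeping in the sum over admissible architectures and ensuring the crude bounds $\ell \leq L$, $N_i \leq W$, and $\log x \leq x$ are applied consistently so that all lower-order terms really are absorbed by $LW^2 \log|\mathbb{A}|$.
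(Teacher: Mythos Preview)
Your proof is correct and follows essentially the same approach as the paper's: count architectures, then weight assignments per architecture, then absorb the lower-order terms using $|\mathbb{A}|\ge 2$. The only difference is cosmetic---you count $W^{\ell-1}$ architectures (correctly noting $N_0=d$, $N_\ell=d'$ are fixed) where the paper uses the cruder $W^{\ell+1}$, yielding your $4W^2L\log|\mathbb{A}|$ versus the paper's $5W^2L\log|\mathbb{A}|$.
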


            \begin{proof}
			By definition,  
			\begin{align*}
				&\,\mathcal{N}_\mathbb{A} ( ( d,d' ),W,L) \\
                    &\subseteq \, \{ ( A_i, b_i)_{i = 1}^{\ell} \in \mathcal{N} ( ( d,d' ) ): \mathcal{W} (( A_i, b_i)_{i = 1}^{\ell}) \leq W, \ell \leq L, \coef (( A_i, b_i)_{i = 1}^{\ell} ) \subseteq \mathbb{A}  \}.
			\end{align*}
			Recall that, for a given network configuration $( A_i, b_i)_{i = 1}^{\ell} \in \mathcal{N}_\mathbb{A} ( ( d,d' ),W,L)$ with $A_i \in \mathbb{R}^{N_{i} \times N_{i-1}}$, $ N_{i - 1},N_i \in \mathbb{N}$, $i = 1,\dots, \ell$,  we call the tuple $( N_0,\dots, N_{\ell} )$ the architecture of $( A_i, b_i)_{i = 1}^{\ell}$. For given $\ell$, there are at most $W^{\ell + 1}$ different architectures. As $\ell \in \{ 1,\dots, L \}$, the total number of possible architectures $( N_0,\dots, N_{\ell} )$ for network configurations in $\mathcal{N}_\mathbb{A} ( ( d,d' ),W,L)$ is hence upper-bounded by $\sum_{\ell = 1}^{L} W^{\ell + 1} \leq L W^{L+1}$. For a given network configuration $\Phi$ with architecture $( N_0,\dots, N_{\ell} )$, the number of weights satisfies $\sum_{i = 1}^{\ell} ( N_{i} \, N_{i - 1} + N_i) \leq L W ( W+1 ) $; therefore, the number of possible $\Phi$ of a given architecture is no more than  $\nleft| \mathbb{A} \nright|^{L W ( W+1 )} $, as each weight can take $\nleft| \mathbb{A} \nright|$ different values. Putting everything together, we obtain
			\begin{align*}
				| \mathcal{N}_\mathbb{A} ( ( d,d' ),W,L )   | \leq L W^{L+1}\nleft| \mathbb{A} \nright|^{L W ( W+1 )},
			\end{align*}
			which, in turn, implies
			\begin{equation}
			\label{eq:counting_new}
			\begin{aligned}
				\log (| \mathcal{N}_\mathbb{A} ( ( d,d' ),W,L )  |) \leq&\, \log (L W^{L+1}) +  (L(W^2 + W)) \log (| \mathbb{A} |) \\
				\leq &\,  \log(L) + (L+1) \log(W) + 2 W^2 L \log( | \mathbb{A} |)\\
				\leq&\, 5 W^2 L \log( | \mathbb{A} |).
			\end{aligned}
			\end{equation} 
			Noting that $| \mathcal{R}_\mathbb{A} (  ( d,d' ),W,L ) | \leq | \mathcal{N}_\mathbb{A} ( ( d,d' ),W,L ) |$, yields \eqref{eq:cardinality_realization}. Finally, \eqref{eq:cardinality_realization_a_b} follows by using $\log (| \mathbb{Q}_b^a |) = \log (| \{ \pm \sum_{i = -b}^a \theta_i 2^{i}: \theta_i \in \{ 0,1 \} \} |)  \leq a+b+2 \leq 2 ( a+b )$. 
		\end{proof}
		Proposition~\ref{prop:fundamental_limit_ReLU_networks_quantized_weights} provides an upper bound on the memory required to store the network realizations in $\mathcal{R}_\mathbb{A} ( ( d,d' ),W,L )$. As this storage method mirrors how neural networks are stored on a computer,  we term it the natural encoding and refer to \eqref{eq:cardinality_realization} as the memory consumption upper bound under natural encoding. In contrast, \cite{deep-it-2019}  considers networks which are sparse in the sense of having a small number of nonzero weights, 
        and stores only the nonzero weights and their respective locations as uniquely decodable bitstrings.

	\subsection{Lower-bounding the minimax code length $\ell ( \varepsilon, \lip ( [0,1] ))$ } 
	\label{sub:characterization_of_the_minimax_code_length}
		We next lower-bound the minimax code length by relating it to the covering number and the packing number defined next. 
		\begin{definition}
		[Covering number and packing number] \cite[Definitions 5.1 and 5.4]{wainwright2019high}
		Let $(\mathcal{X}, \delta)$ be a metric space. An $\varepsilon$-covering of $\mathcal{X}$ is a finite set $\{ x_1, \dots,x_n \}$ of $\mathcal{X}$ such that for all $x \in \mathcal{X}$, there exists an $i \in \{ 1,\dots,n \}$ so that $\delta ( x,x_i )\leq \varepsilon$.  The $\varepsilon$-covering number $N ( \varepsilon,\mathcal{X}, \delta )$ is the cardinality of a smallest $\varepsilon$-covering of $\mathcal{X}$. An $\varepsilon$-packing of $\mathcal{X}$ is a finite subset $\{ x_1, \dots,x_n \}$ of $\mathcal{X}$ such that $\delta ( x_i,x_j ) > \varepsilon$, for all $i,j \in \{ 1,\dots, n \}$ with $i\neq j$. The $\varepsilon$-packing number $M ( \varepsilon, \mathcal{X}, \delta ) $ is the cardinality of a largest $\varepsilon$-packing of $\mathcal{X}$.
	\end{definition}
	An important relation between the covering number and the packing number is the following.
	\begin{lemma}
		\label{lem:equivalence_covering_packing}
		\cite[Lemma 5.5]{wainwright2019high} For a metric space $( \mathcal{X}, \delta )$ and $\varepsilon \in \mathbb{R}_+$, it holds that
		\begin{equation*}
			M ( 2 \varepsilon, \mathcal{X}, \delta ) \leq N ( \varepsilon, \mathcal{X}, \delta ) \leq M ( \varepsilon, \mathcal{X}, \delta ).
		\end{equation*}
	\end{lemma}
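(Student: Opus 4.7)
The plan is to establish the two inequalities separately, using standard covering-packing duality arguments.

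For the upper bound $N(\varepsilon, \mathcal{X}, \delta) \leq M(\varepsilon, \mathcal{X}, \delta)$, I would take a maximal $\varepsilon$-packing $\{x_1, \dots, x_M\}$ of $\mathcal{X}$ with $M = M(\varepsilon, \mathcal{X}, \delta)$, and argue that such a set must automatically be an $\varepsilon$-covering. Indeed, if some $x \in \mathcal{X}$ satisfied $\delta(x, x_i) > \varepsilon$ for every $i$, then $\{x, x_1, \dots, x_M\}$ would still be an $\varepsilon$-packing, contradicting the maximality of $M$. Hence every point in $\mathcal{X}$ lies within distance $\varepsilon$ of some $x_i$, so the smallest covering can be no larger, yielding $N(\varepsilon, \mathcal{X}, \delta) \leq M$.

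For the lower bound $M(2\varepsilon, \mathcal{X}, \delta) \leq N(\varepsilon, \mathcal{X}, \delta)$, I would fix a $2\varepsilon$-packing $\{x_1, \dots, x_M\}$ and an $\varepsilon$-covering $\{c_1, \dots, c_N\}$ of respective extremal sizes, and construct an injection from the packing into the covering. For each $x_i$, the covering property provides some index $f(i)$ with $\delta(x_i, c_{f(i)}) \leq \varepsilon$. If $f(i) = f(j)$ for distinct $i,j$, then the triangle inequality gives $\delta(x_i, x_j) \leq \delta(x_i, c_{f(i)}) + \delta(c_{f(j)}, x_j) \leq 2\varepsilon$, contradicting the strict inequality $\delta(x_i, x_j) > 2\varepsilon$ in the definition of a $2\varepsilon$-packing. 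Therefore $f$ is injective and $M \leq N$.

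Neither step presents a real obstacle; both are routine applications of the triangle inequality together with maximality of packings. The only subtlety worth double-checking is the strict-versus-non-strict inequality convention used in the definitions of covering and packing given just above the lemma, since the factor of $2$ in $M(2\varepsilon, \mathcal{X}, \delta)$ hinges on having $\delta(x_i, x_j) > 2\varepsilon$ (strict) and $\delta(x, x_i) \leq \varepsilon$ (non-strict), which is precisely the convention adopted here. Since the result is invoked only as a tool to relate minimax code length to packing/covering numbers later on, citing \cite[Lemma 5.5]{wainwright2019high} alone would suffice, but the two-paragraph argument above makes the paper self-contained.
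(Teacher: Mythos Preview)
Your argument is correct and is the standard textbook proof of this covering--packing duality. The paper itself does not prove this lemma; it simply cites \cite[Lemma 5.5]{wainwright2019high} and moves on, so there is no ``paper's own proof'' to compare against. Your two paragraphs make the paper self-contained at essentially no cost, and your observation about the strict/non-strict inequality conventions matching the definitions given just above the lemma is exactly the right consistency check.
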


	The minimax code length can be related to the covering and the packing numbers as follows.

	\begin{lemma}
		\label{lem:equivalence_covering_number_code_length}
		Let $(\mathcal{X}, \delta)$ be a metric space, $\mathcal{Y} \subseteq \mathcal{X}$, and $\varepsilon \in \mathbb{R}_+$. We have
		\begin{equation}
			\label{eq:relation_between_covering_number_and_minimum_code_length}
			\log (M ( 2 \varepsilon, \mathcal{Y}, \delta )) \leq \ell ( \varepsilon, \mathcal{Y}, ( \mathcal{X}, \delta ) ) \leq  \lceil \log (N ( \varepsilon, \mathcal{Y}, \delta )) \rceil.
		\end{equation}

		\begin{proof}
			We first prove the inequality ${\ell ( \varepsilon, \mathcal{Y}, ( \mathcal{X}, \delta ) )} \geq \log (M ( 2 \varepsilon, \mathcal{Y}, \delta )) $. To this end, let $(E:\mathcal{Y} \mapsto \{ 0,1 \}^{\ell ( \varepsilon, \mathcal{Y}, ( \mathcal{X}, \delta ) )} ,D:\{ 0,1 \}^{\ell ( \varepsilon, \mathcal{Y}, ( \mathcal{X}, \delta ) )} \mapsto \mathcal{X})$ be an encoder-decoder pair achieving uniform error $\varepsilon$ over the set $\mathcal{Y}$, i.e., 
			\begin{equation}
			\label{eqline:240}
				\delta( y, D (E (y))) \leq \varepsilon, \quad \text{for all } y \in \mathcal{Y},
			\end{equation}
			and let $\mathcal{P}$ be a largest $(2 \varepsilon)$-packing of $\mathcal{Y}$, i.e., $| \mathcal{P} | =  M ( 2 \varepsilon, \mathcal{Y}, \delta ) $. For $| \mathcal{P} |  = 1 $, the first inequality in \eqref{eq:relation_between_covering_number_and_minimum_code_length} is trivially satisfied as $\ell ( \varepsilon, \mathcal{Y}, ( \mathcal{X}, \delta ) ) \geq 0$ by definition.
   In the case $| \mathcal{P} | \geq 2$,  we have, for distinct $p_1, p_2 \in \mathcal{P}$,
			\begin{align}
				\delta ( D (E (p_1)), D (E (p_2)) ) \geq&\, \delta ( D (E (p_1)), p_2) - \delta ( p_2, D (E (p_2)) )  \label{eqline:241}\\
				\geq&\, \delta ( p_1, p_2) - \delta ( D (E (p_1)), p_1 ) - \delta ( p_2, D (E (p_2)) ) \label{eqline:242}\\
				>&\, 2 \varepsilon - \varepsilon - \varepsilon \label{eqline:2421} \\
				=&\, 0,
			\end{align}
			where \eqref{eqline:241} and \eqref{eqline:242} follow from the triangle inequality, and in \eqref{eqline:2421} we used \eqref{eqline:240} and $\delta ( p_1, p_2) > 2 \varepsilon$ owing to $\mathcal{P}$ being a $(2 \varepsilon)$-packing. We can hence conclude that $D (E (p_1)) \neq D (E (p_2))$ and have thereby established the injectivity of $D \circ E $ on $\mathcal{P}$. Consequentially, $E$ must also be injective on $\mathcal{P}$. This, in turn, implies that the cardinality of the range of $E$ is no less than the cardinality of $\mathcal{P}$, namely, $2^{\ell ( \varepsilon, \mathcal{Y}, ( \mathcal{X}, \delta ) )} \geq \nleft| \mathcal{P} \nright| = M ( 2 \varepsilon, \mathcal{Y}, \delta )$, and therefore $\log (M ( 2 \varepsilon, \mathcal{Y}, \delta )) \leq \ell ( \varepsilon, \mathcal{Y}, ( \mathcal{X}, \delta ) )$.

			It remains to show that $\ell ( \varepsilon, \mathcal{Y}, ( \mathcal{X}, \delta ) ) \leq  \lceil \log (N ( \varepsilon, \mathcal{Y}, \delta )) \rceil$. Let $\mathcal{C} = \{ c_i \}_{i = 1}^{N ( \varepsilon, \mathcal{Y}, \delta )} $ be a minimal $\varepsilon$-covering of $\mathcal{Y}$. Hence, $\mathcal{A} ( \mathcal{Y}, \mathcal{C}, \delta ) = \sup_{y \in \mathcal{Y}} \inf_{c \in \mathcal{C}}  \delta ( y,c ) \leq \varepsilon$. Application of Proposition~\ref{thm:memory_requirement} with $\mathcal{G} = \mathcal{C}$ yields $\lceil \log ( | \mathcal{C} | ) \rceil \geq \ell ( \varepsilon, \mathcal{Y}, ( \mathcal{X}, \delta ) )$, which together with $N ( \varepsilon, \mathcal{Y}, \delta )  = | \mathcal{C} |  $  establishes $\ell ( \varepsilon, \mathcal{Y}, ( \mathcal{X}, \delta ) ) \leq \lceil \log (N ( \varepsilon, \mathcal{Y}, \delta )) \rceil$.
		\end{proof}
	\end{lemma}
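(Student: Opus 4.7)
The plan is to prove the two inequalities separately, handling the lower bound via a packing-injectivity argument and the upper bound by invoking Proposition~\ref{thm:memory_requirement} on a minimal covering.

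For the lower bound $\log(M(2\varepsilon, \mathcal{Y}, \delta)) \leq \ell(\varepsilon, \mathcal{Y}, (\mathcal{X}, \delta))$, I would fix an encoder-decoder pair $(E,D)$ of length $\ell := \ell(\varepsilon, \mathcal{Y}, (\mathcal{X}, \delta))$ that achieves uniform error at most $\varepsilon$ over $\mathcal{Y}$, and fix a largest $(2\varepsilon)$-packing $\mathcal{P}$ of $\mathcal{Y}$. The key observation is that on $\mathcal{P}$ the composition $D \circ E$ must be injective: for any two distinct $p_1, p_2 \in \mathcal{P}$, two applications of the triangle inequality give $\delta(D(E(p_1)), D(E(p_2))) \geq \delta(p_1,p_2) - \delta(p_1, D(E(p_1))) - \delta(p_2, D(E(p_2))) > 2\varepsilon - \varepsilon - \varepsilon = 0$. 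Injectivity of $D \circ E$ on $\mathcal{P}$ forces injectivity of $E$ on $\mathcal{P}$, so the range of $E$ has cardinality at least $|\mathcal{P}| = M(2\varepsilon, \mathcal{Y}, \delta)$, and the range lies in $\{0,1\}^\ell$, giving $2^\ell \geq M(2\varepsilon, \mathcal{Y}, \delta)$. Taking $\log$ yields the desired bound. A trivial edge case to dispose of is $|\mathcal{P}| = 1$, where both sides are non-negative and the inequality is immediate.

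For the upper bound $\ell(\varepsilon, \mathcal{Y}, (\mathcal{X}, \delta)) \leq \lceil \log(N(\varepsilon, \mathcal{Y}, \delta)) \rceil$, I would take a minimal $\varepsilon$-covering $\mathcal{C}$ of $\mathcal{Y}$, so that $\mathcal{A}(\mathcal{Y}, \mathcal{C}, \delta) \leq \varepsilon$ and $|\mathcal{C}| = N(\varepsilon, \mathcal{Y}, \delta)$. Proposition~\ref{thm:memory_requirement}, applied with $\mathcal{G} = \mathcal{C}$, produces an encoder-decoder pair of length $\lceil \log |\mathcal{C}| \rceil$ attaining uniform error at most $\varepsilon$, and by the minimality defining $\ell$ this length must dominate $\ell(\varepsilon, \mathcal{Y}, (\mathcal{X}, \delta))$.

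Neither step looks hard; the only subtlety worth highlighting is the use of the strict inequality $\delta(p_1, p_2) > 2\varepsilon$ from the packing definition to conclude $D(E(p_1)) \neq D(E(p_2))$ (a non-strict packing would only give a non-strict inequality). Once that is handled cleanly, the proof is essentially the combination of a triangle-inequality injectivity argument with a direct appeal to Proposition~\ref{thm:memory_requirement}.
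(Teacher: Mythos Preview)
Your proposal is correct and follows essentially the same approach as the paper: the lower bound via the triangle-inequality injectivity argument on a maximal $(2\varepsilon)$-packing (including the $|\mathcal{P}|=1$ edge case and the observation about strict inequality), and the upper bound by applying Proposition~\ref{thm:memory_requirement} to a minimal $\varepsilon$-covering.
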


	\begin{remark}
		With the insights provided by Lemma~\ref{lem:equivalence_covering_number_code_length}, Proposition~\ref{thm:memory_requirement} and Definition~\ref{def:memory_optimality} could equivalently have been formulated in terms of packing and covering number. We decided, however, to work with the minimax code length so as to emphasize the implications of our results in terms of memory consumption.
	\end{remark}

	We can now lower-bound the minimax code length of $\lip ( [0,1] )$ using a lower bound on the covering number of $\lip ( [0,1] )$.

		\begin{lemma}
		\label{lem:minimax_code_length_lower_bound}
		For $\varepsilon > 0$, there exist absolute constants $C,\varepsilon_0 > 0$, such that 
		\begin{equation}
		\label{eq:minimax_code_length_H1}
		 	\ell ( \varepsilon, \lip ( [0,1] ) ) \geq C \,\varepsilon ^{-1}, \quad  \forall \varepsilon \in (0,  \varepsilon_0].
		 \end{equation} 
		\begin{proof}
			By \cite[Example 5.10]{wainwright2019high}, we have 
			\begin{equation}
			\label{eq:covering_packing_H1}
				\log (N ( \varepsilon, \lip  ( [0,1] ), \|\cdot \|_{L^\infty ( [0,1] ) }  )) \geq c\,\varepsilon^{-1}, \quad  \forall \varepsilon \in (0,  \varepsilon_1],
			\end{equation} 
			for absolute constants $c, \varepsilon_1 \in \mathbb{R}_+$. Set $\varepsilon_0 = \frac{1}{2} \varepsilon_1$ and $C = \frac{1}{2} c$. Then, for $\varepsilon \leq \varepsilon_0$, we have 
			\begin{align}
				\ell ( \varepsilon, \lip ( [0,1] ) ) \geq&\,  \log (M ( 2\varepsilon, \lip  ( [0,1] ), \|\cdot\|_{L^\infty ( [0,1] ) }  ))\label{eq:291}\\ 
				\geq&\, \log (N ( 2\varepsilon, \lip  ( [0,1] ), \|\cdot \|_{L^\infty ( [0,1] ) }  )) \label{eq:292}\\
				\geq&\,  c \,( 2 \varepsilon )^{-1} \label{eq:293}\\
				\geq&\, C \varepsilon^{-1},\label{eq:294}
			\end{align}
			where in \eqref{eq:291} we used Lemma~\ref{lem:equivalence_covering_number_code_length}, \eqref{eq:292} follows from Lemma~\ref{lem:equivalence_covering_packing}, and in \eqref{eq:293} we applied \eqref{eq:covering_packing_H1}.
		\end{proof}
	\end{lemma}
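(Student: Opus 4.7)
The plan is to chain together the two preceding lemmas with a classical metric entropy estimate for the unit ball of Lipschitz functions. The key ingredients are already in place: Lemma~\ref{lem:equivalence_covering_number_code_length}, which provides $\ell(\varepsilon, \lip([0,1])) \geq \log M(2\varepsilon, \lip([0,1]), \|\cdot\|_{L^\infty([0,1])})$; Lemma~\ref{lem:equivalence_covering_packing}, which sandwiches covering between packing numbers via $M(2\varepsilon) \leq N(\varepsilon) \leq M(\varepsilon)$; and a standard lower bound on the covering number of $\lip([0,1])$.

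First, I would invoke a known result such as \cite[Example 5.10]{wainwright2019high}, which asserts the existence of absolute constants $c, \varepsilon_1 > 0$ with $\log N(\varepsilon, \lip([0,1]), \|\cdot\|_{L^\infty([0,1])}) \geq c\, \varepsilon^{-1}$ for every $\varepsilon \in (0, \varepsilon_1]$. This $\varepsilon^{-1}$ metric entropy scaling of the $1$-Lipschitz ball is classical: it reflects the fact that a piecewise-constant or piecewise-linear approximation at vertical resolution $\varepsilon$ requires on the order of $\varepsilon^{-1}$ bits to specify.

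Next, I would apply Lemma~\ref{lem:equivalence_covering_packing} with $2\varepsilon$ in place of $\varepsilon$ to obtain $N(2\varepsilon, \lip([0,1]), \|\cdot\|_{L^\infty([0,1])}) \leq M(2\varepsilon, \lip([0,1]), \|\cdot\|_{L^\infty([0,1])})$. Combining this with Lemma~\ref{lem:equivalence_covering_number_code_length} yields the chain
\[
\ell(\varepsilon, \lip([0,1])) \;\geq\; \log M(2\varepsilon, \lip([0,1]), \|\cdot\|_{L^\infty([0,1])}) \;\geq\; \log N(2\varepsilon, \lip([0,1]), \|\cdot\|_{L^\infty([0,1])}).
\]
Setting $\varepsilon_0 := \varepsilon_1 / 2$ guarantees $2\varepsilon \leq \varepsilon_1$ whenever $\varepsilon \in (0, \varepsilon_0]$, so the covering-number bound applies and gives $\log N(2\varepsilon, \lip([0,1]), \|\cdot\|_{L^\infty([0,1])}) \geq c\,(2\varepsilon)^{-1} = (c/2)\,\varepsilon^{-1}$. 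Choosing $C := c/2$ then completes the argument.

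There is no real obstacle here: the lemma is essentially a book-keeping step that translates a well-known metric entropy estimate into a minimax code length bound via the packing--covering sandwich. The only subtle point is tracking the factor of two induced by Lemma~\ref{lem:equivalence_covering_number_code_length}, which forces the covering-number bound to be evaluated at $2\varepsilon$ and therefore halves both the admissible range of $\varepsilon$ and the constant in front of $\varepsilon^{-1}$.
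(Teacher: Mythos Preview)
Your proposal is correct and follows essentially the same approach as the paper: the same citation for the covering-number lower bound, the same chain $\ell(\varepsilon) \geq \log M(2\varepsilon) \geq \log N(2\varepsilon) \geq c(2\varepsilon)^{-1}$ via Lemmas~\ref{lem:equivalence_covering_number_code_length} and~\ref{lem:equivalence_covering_packing}, and the same choices $\varepsilon_0 = \varepsilon_1/2$, $C = c/2$.
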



	\subsection{Lower bound incurred by minimum memory requirement} 
	\label{sub:approximation_error_lower_bounds_due_to_underquantization}
		We now  have all the ingredients to derive a lower bound on the minimax error incurred by neural network approximation of $1$-Lipschitz functions. The specific result is as follows.
		\begin{proposition}
			\label{proposition:lower_bound_approximation}
			There exists an absolute constant $c_m$
			such that for all $W,L,b \in \mathbb{N}$, it holds that
			\begin{equation}
			\label{eq:lower_bound_a_b}
				\mathcal{A}_\infty	( \lip   ( [0,1] ), \mathcal{R}_b^1 (W, L  )  ) \geq c_m ( W^2 L b  )^{-1}.
			\end{equation}

			\begin{proof}
				Let $C,\varepsilon_0$ be the absolute constants in Lemma~\ref{lem:minimax_code_length_lower_bound} and set $c_m = \min \{ \frac{C}{30}, \varepsilon_0 \}$. Suppose, for the sake of contradiction, that 
				\begin{equation}
				\label{eq:contradiction_assumption_28}
					\mathcal{A}_\infty	( \lip   ( [0,1] ), \mathcal{R}_b^1 (W, L  )  ) < c_m (  W^2 L b   )^{-1}.
				\end{equation}
				Set $\varepsilon = c_m( W^2 L b )^{- 1}$.
				Applying Proposition~\ref{thm:memory_requirement} with $\mathcal{Y} = \lip   ( [0,1] ) $, $\mathcal{G} = \mathcal{R}_b^1 (W, L  )$, $\delta = \nleft\| \cdot \nright\|_{L^\infty ( [0,1] )}$, upon noting that the prerequisites in Proposition~\ref{thm:memory_requirement} are satisfied as $\mathcal{G} = \mathcal{R}_b^1 (W, L  )$ and $\eqref{eq:contradiction_assumption_28}$ implies $\mathcal{A}_\infty	( \lip   ( [0,1] ), \mathcal{R}_b^1 (W, L  )  ) < c_m (  W^2 L b   )^{-1} = \varepsilon$, we obtain
				\begin{equation*}
					\lceil \log ( \nleft| \mathcal{R}_b^1 (W, L  ) \nright|  )\rceil \geq \ell ( \varepsilon, \lip ( [0,1] ) ),
				\end{equation*}
				which together with $\lceil\log ( \nleft| \mathcal{R}_b^1 (W, L  ) \nright|  )\rceil \leq 10 W^2 L  ( 1+b ) \leq 20 W^2 Lb$, thanks to Proposition~\ref{prop:fundamental_limit_ReLU_networks_quantized_weights}, establishes
				\begin{equation}
				\label{eq:binary_h1_2000}
					20 W^2 Lb \geq \ell ( \varepsilon, \lip ( [0,1] ) ).
				\end{equation}
				On the other hand,
				\begin{align}
					20 W^2 Lb = &\, 20\, c_m\, \varepsilon^{-1} \label{eq:binary_h1_20}\\
					< &\, C  \varepsilon^{-1}\label{eq:binary_h1_3}\\
					\leq&\,\ell ( \varepsilon, \lip ( [0,1] ) ) ,\label{eq:binary_h1_4}
				\end{align}
				where  \eqref{eq:binary_h1_3} follows from $20 c_m = 20\min \{ \frac{C}{30}, \varepsilon_0 \} < C$, and in \eqref{eq:binary_h1_4} we applied  Lemma~\ref{lem:minimax_code_length_lower_bound} with the prerequisite satisfied as $\varepsilon = c_m( W^2 L b  )^{- 1} \leq c_m \leq \varepsilon_0$. Since \eqref{eq:binary_h1_2000} contradicts the strict inequality \eqref{eq:binary_h1_20}-\eqref{eq:binary_h1_4}, we must have
				\begin{equation*}
					\mathcal{A}_\infty	( \lip   ( [0,1] ), \mathcal{R}_b^1 (W, L )  ) \geq  c_m ( W^2 L b )^{-1}. \qedhere
				\end{equation*}
			\end{proof}
		\end{proposition}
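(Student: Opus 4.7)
The plan is to proceed by contradiction, combining the three central ingredients prepared in this section: the memory requirement bound (Proposition~\ref{thm:memory_requirement}), the cardinality bound for quantized network realizations (Proposition~\ref{prop:fundamental_limit_ReLU_networks_quantized_weights}), and the minimax code length lower bound for $\lip([0,1])$ (Lemma~\ref{lem:minimax_code_length_lower_bound}). I would fix $\varepsilon := c_m(W^2 L b)^{-1}$ for an absolute constant $c_m$ to be pinned down at the end, and suppose, toward contradiction, that $\mathcal{A}_\infty(\lip([0,1]), \mathcal{R}_b^1(W,L)) < \varepsilon$.

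Next, I would invoke Proposition~\ref{thm:memory_requirement} with $\mathcal{Y} = \lip([0,1])$, $\mathcal{G} = \mathcal{R}_b^1(W,L)$, and $\delta = \|\cdot\|_{L^\infty([0,1])}$; the prerequisite $\mathcal{A}(\mathcal{Y},\mathcal{G},\delta) \leq \varepsilon$ is immediate from the contradiction assumption, and the finiteness of $\mathcal{G}$ is guaranteed by quantization. This yields $\lceil \log |\mathcal{R}_b^1(W,L)| \rceil \geq \ell(\varepsilon, \lip([0,1]))$. I would then squeeze this inequality from both sides: Proposition~\ref{prop:fundamental_limit_ReLU_networks_quantized_weights} specialized to $a=1$ bounds the left-hand side by roughly $10 W^2 L(1+b) \leq 20 W^2 L b$, while Lemma~\ref{lem:minimax_code_length_lower_bound} bounds the right-hand side below by $C\varepsilon^{-1} = C c_m^{-1} W^2 L b$. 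Chaining the two produces $20 W^2 L b \geq C c_m^{-1} W^2 L b$, which fails as soon as $c_m < C/20$, delivering the contradiction and hence \eqref{eq:lower_bound_a_b}.

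The only mildly delicate point will be checking the hypothesis $\varepsilon \leq \varepsilon_0$ demanded by Lemma~\ref{lem:minimax_code_length_lower_bound}. Since $W,L,b \geq 1$, the chosen $\varepsilon = c_m(W^2 L b)^{-1}$ satisfies $\varepsilon \leq c_m$, so it suffices to additionally impose $c_m \leq \varepsilon_0$. Picking $c_m := \min\{C/30,\, \varepsilon_0\}$ (or any similarly small constant) therefore simultaneously secures the applicability of Lemma~\ref{lem:minimax_code_length_lower_bound} and the strict inequality that drives the contradiction. Beyond tracking these constants, no further obstacles are anticipated, as every structural lemma needed has already been put in place earlier in the section.
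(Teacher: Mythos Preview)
Your proposal is correct and follows essentially the same argument as the paper's proof: contradiction via Proposition~\ref{thm:memory_requirement}, the cardinality bound from Proposition~\ref{prop:fundamental_limit_ReLU_networks_quantized_weights}, and Lemma~\ref{lem:minimax_code_length_lower_bound}, with the identical choice $c_m = \min\{C/30,\varepsilon_0\}$ to simultaneously ensure applicability of the lemma and the strict inequality that yields the contradiction.
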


		We shall refer to \eqref{eq:lower_bound_a_b} as the minimax error lower bound incurred by the minimum memory requirement. Attaining this lower bound to within a multiplicative constant, implies memory optimality, as demonstrated next.

		\begin{proposition}
		\label{prop:converse_lower_bound}
			Let $\mathcal{I} \subseteq \mathbb{N}^3$ be an infinite set. Suppose that
			\begin{equation}
			\label{eq:memory_bound_attained}
				\mathcal{A}_\infty	( \lip   ( [0,1] ), \mathcal{R}_b^1 (W, L  )  ) \leq \cupper ( W^2 L b )^{-1},\quad \forall ( W,L, b) \in \mathcal{I}, 
			\end{equation}
			for some $\cupper \in \mathbb{R}_+$ independent of $W,L,b$. Then,
			\begin{align}
				\inf_{( W,L, b) \in \mathcal{I}} \mathcal{A}_\infty	( \lip   ( [0,1] ), \mathcal{R}_b^1 (W, L  )  ) =&\, 0, \label{eq:arbitrary_error_neural}\\
				\sup_{( W,L, b) \in \mathcal{I}}\, r ( \lip   ( [0,1] ), \mathcal{R}_b^1 (W, L  ), \nleft\| \cdot \nright\|_{L^\infty ( [0,1] )}   ) <&\, \infty, \label{eq:bounded_redundancy_neural}
			\end{align}
			and hence $\{R_b^1 (W,L):(W,L,b) \in \mathcal{I}\}$ achieves memory optimality---in the sense of Definition~\ref{def:memory_optimality}---in the approximation of $\lip([0,1])$.

		\begin{proof}
			We first note that
			\begin{align}
				\inf_{( W,L, b) \in \mathcal{I}} \mathcal{A}_\infty ( \lip ( [0,1] ),  \mathcal{R}_b^1 (W, L  )  ) \leq&\, \inf_{( W,L, b) \in \mathcal{I}} \cupper ( W^2 L b )^{-1} \label{eq:first_condition_1}\\
				=& \, 0,\label{eq:first_condition_2}
			\end{align}
			where \eqref{eq:first_condition_2} follows as $\mathcal{I}$ is an infinite set\footnote{For $\mathcal{I}$ infinite, we have $\sup_{( W,L, b) \in \mathcal{I}} \max \{ W,L,b \} = \infty$, which implies $\inf_{( W,L, b) \in \mathcal{I}} \cupper ( W^2 L b )^{-1} \leq \inf_{( W,L, b) \in \mathcal{I}} \cupper ( \max \{ W,L,b \} )^{-1} = 0$.}. This establishes \eqref{eq:arbitrary_error_neural}. To prove \eqref{eq:bounded_redundancy_neural}, we first fix a tuple $( W,L,b ) \in \mathcal{I}$ and consider the memory redundancy 
			\begin{align}
				r ( \lip ( [0,1] ),  \mathcal{R}_b^1 (W, L  ) ,\nleft\| \cdot \nright\|_{L^\infty ( [0,1] )}  ) =&\, \frac{\lceil \log ( \nleft| \mathcal{R}_b^1 (W, L  ) \nright|  )\rceil }{1 + \ell ( \mathcal{A}_\infty	( \lip   ( [0,1] ), \mathcal{R}_b^1 (W, L  )  ), \lip ( [0,1] ))} \label{eq:show_bounded_redundancy_1}\\
				\leq&\, \frac{ 10 W^2 L ( 1+b )}{1 + \ell ( \mathcal{A}_\infty	( \lip   ( [0,1] ), \mathcal{R}_b^1 (W, L  )  ), \lip ( [0,1] ))},\label{eq:show_bounded_redundancy_2}
			\end{align}
			where \eqref{eq:show_bounded_redundancy_2} follows from Proposition~\ref{prop:fundamental_limit_ReLU_networks_quantized_weights}.
			We now distinguish two cases. First, for $\cupper ( W^2 L b )^{-1} > \varepsilon_0$, where $\varepsilon_0$ is the absolute constant in Lemma~\ref{lem:minimax_code_length_lower_bound},  we have $W^2 L (1+b) < 2 W^2 L b< 2 D \varepsilon_0^{-1}$, which together with \eqref{eq:show_bounded_redundancy_1}-\eqref{eq:show_bounded_redundancy_2} leads to the memory redundancy upper bound $r ( \lip ( [0,1] ), \allowbreak  \mathcal{R}_b^1 (W, L  ) ,\nleft\| \cdot \nright\|_{L^\infty ( [0,1] )}  ) \leq 10 W^2 L ( 1+b ) \leq 20 D \varepsilon_0^{-1}$. Second, for $\cupper ( W^2 L b)^{-1} \leq \varepsilon_0$,  we have  
			\begin{align}
				&\ell ( \mathcal{A}_\infty	( \lip   ( [0,1] ), \mathcal{R}_b^1 (W, L  )  ), \lip ( [0,1] ))\\
				& \geq \, \ell ( \cupper ( W^2 L b )^{-1}, \lip ( [0,1] )) \label{eq:monotonicity_minimum_code}\\
				& \geq \, C \cupper^{-1}  W^2 L b, \label{eq:monotonicity_minimum_code_2}
			\end{align}
			where in \eqref{eq:monotonicity_minimum_code} we used  \eqref{eq:memory_bound_attained} together with the fact that $\varepsilon \mapsto \ell ( \varepsilon, \lip ( [0,1] ))$ is a nonincreasing function, and \eqref{eq:monotonicity_minimum_code_2} follows from Lemma~\ref{lem:minimax_code_length_lower_bound}, with $C$ being the  absolute constant from Lemma~\ref{lem:minimax_code_length_lower_bound}. Then, the memory redundancy can be upper-bounded according to 
			\begin{equation*}
				r ( \lip ( [0,1] ),  \mathcal{R}_b^1 (W, L  ) ,\nleft\| \cdot \nright\|_{L^\infty ( [0,1] )}  )\leq \frac{10 W^2 L ( 1+b )}{1 +C \cupper^{-1}  W^2 L b} \leq 20 D C^{-1}.
			\end{equation*}
			Combining the two cases, we get
			\begin{equation*}
				r ( \lip ( [0,1] ),  \mathcal{R}_b^1 (W, L  ) ,\nleft\| \cdot \nright\|_{L^\infty ( [0,1] )}  ) \leq \max \{ 20 D \varepsilon_0^{-1}, 20 D C^{-1} \} = 20 D \max \{ \varepsilon_0^{-1}, C^{-1} \}.
			\end{equation*}
			Recalling that $\varepsilon_0$ and $C$ are absolute constants, and the tuple $( W,L,b ) \in \mathcal{I}$ is fixed but arbitrary, it follows that
			\begin{equation*}
				\sup_{( W,L, b) \in \mathcal{I}} r ( \lip ( [0,1] ),  \mathcal{R}_b^1 (W, L  ) ,\nleft\| \cdot \nright\|_{L^\infty ( [0,1] )}  ) \leq \sup_{( W,L, b) \in \mathcal{I}} 20 D \max \{ \varepsilon_0^{-1}, C^{-1} \}  < \infty,
			\end{equation*}
			where we used $D < \infty$ and $\varepsilon_0, C > 0$.
			This validates \eqref{eq:bounded_redundancy_neural} and thereby finalizes the proof.
		\end{proof}

		\end{proposition}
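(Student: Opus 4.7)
The plan is to verify the two conditions of memory optimality separately, relying on the cardinality bound of Proposition~\ref{prop:fundamental_limit_ReLU_networks_quantized_weights} for the numerator of the memory redundancy and the minimax code length lower bound of Lemma~\ref{lem:minimax_code_length_lower_bound} for the denominator. The first condition, \eqref{eq:arbitrary_error_neural}, should follow almost for free from the hypothesis: since $\mathcal{I} \subseteq \mathbb{N}^3$ is infinite, there is a sequence $(W_n, L_n, b_n) \in \mathcal{I}$ along which $\max\{W_n, L_n, b_n\} \to \infty$, hence $W_n^2 L_n b_n \to \infty$, and the assumed upper bound forces $\mathcal{A}_\infty(\lip([0,1]), \mathcal{R}_{b_n}^1(W_n, L_n)) \leq D(W_n^2 L_n b_n)^{-1} \to 0$.

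For the second condition, \eqref{eq:bounded_redundancy_neural}, I would fix $(W,L,b)\in \mathcal{I}$ and rewrite the redundancy as
\begin{equation*}
r(\lip([0,1]), \mathcal{R}_b^1(W,L), \|\cdot\|_{L^\infty([0,1])}) = \frac{\lceil \log|\mathcal{R}_b^1(W,L)|\rceil}{1+\ell(\mathcal{A}_\infty(\lip([0,1]), \mathcal{R}_b^1(W,L)),\lip([0,1]))}.
\end{equation*}
Proposition~\ref{prop:fundamental_limit_ReLU_networks_quantized_weights} bounds the numerator by $10 W^2 L (1+b) \leq 20 W^2 L b$. The goal is to show this numerator is dominated by the denominator uniformly over $\mathcal{I}$, up to an absolute multiplicative constant.

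The key obstacle is that Lemma~\ref{lem:minimax_code_length_lower_bound} only gives the $\varepsilon^{-1}$ lower bound on $\ell$ in the small-$\varepsilon$ regime $\varepsilon \in (0, \varepsilon_0]$, so a direct substitution does not work for all tuples. I would therefore split into two cases according to whether the hypothesized error bound $D(W^2 L b)^{-1}$ lies above or below $\varepsilon_0$. If $D(W^2 L b)^{-1} > \varepsilon_0$, then $W^2 L b < D/\varepsilon_0$, which already caps the numerator by the absolute constant $20 D/\varepsilon_0$, yielding a trivially bounded redundancy. In the complementary case $D(W^2 L b)^{-1} \leq \varepsilon_0$, the hypothesis \eqref{eq:memory_bound_attained} together with monotonicity of $\varepsilon \mapsto \ell(\varepsilon, \lip([0,1]))$ and Lemma~\ref{lem:minimax_code_length_lower_bound} gives
\begin{equation*}
\ell(\mathcal{A}_\infty(\lip([0,1]), \mathcal{R}_b^1(W,L)), \lip([0,1])) \geq \ell(D(W^2 L b)^{-1}, \lip([0,1])) \geq C D^{-1} W^2 L b,
\end{equation*}
so the redundancy is bounded by $\frac{20 W^2 L b}{1 + C D^{-1} W^2 L b} \leq 20 D/C$.

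Taking the maximum of the two absolute-constant bounds from the two cases gives a uniform upper bound on the redundancy across $\mathcal{I}$, proving \eqref{eq:bounded_redundancy_neural}. The main technical wrinkle is simply ensuring the small-$\varepsilon$ hypothesis of Lemma~\ref{lem:minimax_code_length_lower_bound} is respected; everything else is a direct combination of the cardinality bound with the minimax code length lower bound.
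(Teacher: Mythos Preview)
Your proposal is correct and follows essentially the same approach as the paper's proof: both establish \eqref{eq:arbitrary_error_neural} from the infiniteness of $\mathcal{I}$, bound the numerator via Proposition~\ref{prop:fundamental_limit_ReLU_networks_quantized_weights}, and then split into the same two cases according to whether $D(W^2Lb)^{-1}$ exceeds $\varepsilon_0$, obtaining the bounds $20D\varepsilon_0^{-1}$ and $20DC^{-1}$ respectively.
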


		\subsection{Two additional lower bounds} 

		We proceed to establish two additional minimax error lower bounds. 
  The first one adapts the technique in \cite{shen2021optimal} for lower-bounding the minimax error for unquantized networks, i.e., networks with real-valued weights. Notably, \cite{shen2021optimal} uses an upper bound on the VC dimension of ReLU networks reported in \cite{bartlett2019nearly}. While this adaptation in itself is not substantial, we still feel that the underlying idea is worthy of recording, also in the sense of clarity and completeness of exposition.

		\begin{proposition} 
			\label{prop:lower_bound_approximation_VC_dimension}
			There exists an absolute constant $c_v$ such that for all $W \in \mathbb{N}$ and $L \in \mathbb{N}$ with $L \geq 2$, it holds that
			\begin{equation}
			\label{eq:lower_bound_VC_dimension}
				\mathcal{A}_\infty	( \lip  ( [0,1] ), \mathcal{R} (W, L  )  )  \geq  c_v ( W^2 L^2 ( \log (W) + \log (L) ) )^{-1},
			\end{equation}
			and, hence, for  nonempty  $\mathbb{A} \subseteq \mathbb{R}$, 
			\begin{equation}
				\label{eq:lower_bound_VC_dimension_general_weights}
				\mathcal{A}_\infty	( \lip  ( [0,1] ), \mathcal{R}_\mathbb{A} (W, L)  )  \geq  c_v ( W^2 L^2 ( \log (W) + \log (L) ) )^{-1}.
			\end{equation}
			\begin{proof}
				See Appendix~\ref{sub:proof_of_proposition_prop:lower_bound_approximation_vc_dimension}.
			\end{proof}
		\end{proposition}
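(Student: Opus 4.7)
My strategy is the classical shattering-via-approximation argument based on a VC-dimension upper bound for ReLU networks, as used in \cite{shen2021optimal}. Assume for contradiction that $\varepsilon := \mathcal{A}_\infty(\lip([0,1]), \mathcal{R}(W,L))$ is strictly smaller than $c_v (W^2 L^2 (\log W + \log L))^{-1}$ for a sufficiently small absolute constant $c_v$ to be fixed at the end. Set $n := \lfloor 1/(8\varepsilon) \rfloor$, so that $\varepsilon < 1/(4n)$, and consider the $n$ equispaced points $x_i := i/n$, $i = 0, \dots, n-1$. Using a standard triangular bump $h$ supported on $[-1/(2n), 1/(2n)]$ with peak value $1/(2n)$ and slopes $\pm 1$, the family
\begin{equation*}
f_y(x) := \sum_{i = 0}^{n - 1} y_i \, h(x - x_i), \qquad y \in \{0,1\}^n,
\end{equation*}
lies in $\lip([0,1])$, since the bumps have disjoint supports and slopes bounded by $1$. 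Moreover, $f_y(x_i) = y_i / (2n)$.

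Next, by the definition of $\varepsilon$, for every $y \in \{0,1\}^n$ there exists $g_y \in \mathcal{R}(W,L)$ with $\|f_y - g_y\|_{L^\infty([0,1])} < 1/(4n)$ (absorbing the harmless slack between infimum and supremum into a factor $2$, i.e., taking $c_v$ slightly smaller). Thresholding at level $\tau := 1/(4n)$ therefore recovers the labels: $\mathbf{1}_{g_y(x_i) \geq \tau} = y_i$ for all $i$. Since the shifted network $\tilde{g}_y := g_y - \tau$ is again an element of $\mathcal{R}(W,L)$ (the bias shift affects only the last layer's bias vector, within the same architecture), the binary classifier class
\begin{equation*}
\mathcal{H} := \bigl\{ x \mapsto \mathbf{1}_{\tilde{g}(x) \geq 0} : \tilde{g} \in \mathcal{R}(W,L) \bigr\}
\end{equation*}
shatters the $n$-point set $\{x_0, \dots, x_{n-1}\}$. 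Hence $\mathrm{VCdim}(\mathcal{H}) \geq n$.

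On the other hand, by the VC-dimension upper bound of Bartlett, Harvey, Liaw, and Mehrabian \cite{bartlett2019nearly}, any ReLU network of depth $L$ and total parameter count $U$ satisfies $\mathrm{VCdim}(\mathcal{H}) \leq c_0 \, U L \log U$ for an absolute constant $c_0$; since every $\Phi \in \mathcal{N}(W,L)$ has at most $U \leq W(W+1) L$ weights, this yields $\mathrm{VCdim}(\mathcal{H}) \leq c_1 W^2 L^2 (\log W + \log L)$ for an absolute constant $c_1$. Combining with the shattering lower bound gives
\begin{equation*}
n \leq c_1 W^2 L^2 (\log W + \log L),
\end{equation*}
which, by the definition $n = \lfloor 1/(8\varepsilon) \rfloor$, forces $\varepsilon \geq c_v (W^2 L^2 (\log W + \log L))^{-1}$ for $c_v := 1/(16 c_1)$ (adjusted to absorb the floor), contradicting the initial assumption and proving \eqref{eq:lower_bound_VC_dimension}. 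The extension \eqref{eq:lower_bound_VC_dimension_general_weights} to arbitrary nonempty $\mathbb{A} \subseteq \mathbb{R}$ is immediate from the set inclusion $\mathcal{R}_\mathbb{A}(W,L) \subseteq \mathcal{R}(W,L)$, which gives $\mathcal{A}_\infty(\lip([0,1]), \mathcal{R}_\mathbb{A}(W,L)) \geq \mathcal{A}_\infty(\lip([0,1]), \mathcal{R}(W,L))$.

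The main obstacle is bookkeeping: making sure the thresholding step does not enlarge the network class (which is why I fold the threshold into the final bias), carefully tracking constants so that $\varepsilon < 1/(4n)$ propagates to strict shattering, and invoking the Bartlett--Harvey--Liaw--Mehrabian bound in a form that delivers exactly the $\log W + \log L$ factor rather than $\log(WL)$ alone. The $L \geq 2$ hypothesis is used only to guarantee that the network class is nontrivial (depth-one networks are affine and cannot approximate arbitrary Lipschitz functions), so it does not affect the scaling.
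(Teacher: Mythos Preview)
Your argument is essentially the same as the paper's: assume the minimax error is too small, build $1$-Lipschitz functions taking prescribed signs on a grid, approximate by networks, and conclude that $\mathrm{sgn}\circ\mathcal{R}(W,L)$ shatters more points than the Bartlett--Harvey--Liaw--Mehrabian VC bound allows. The paper's version differs only cosmetically (it fixes $N$ from the VC bound first and constructs one piecewise-linear $f$ per label vector, rather than superposing bumps).

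There is one technical point you gloss over that the paper handles explicitly. The bound in \cite{bartlett2019nearly} is stated for a \emph{fixed} architecture, whereas $\mathcal{R}(W,L)$ by Definition~\ref{def:ReLU_networks} is a union over all architectures with width $\leq W$ and depth $\leq L$. Your sentence ``every $\Phi\in\mathcal{N}(W,L)$ has at most $U\leq W(W+1)L$ weights'' does not by itself license applying the fixed-architecture VC bound to the union. The paper closes this gap with a short embedding lemma: every $f\in\mathcal{R}(W,L)$ can be realized in the single architecture $(1,2W,\dots,2W,1)$ of depth exactly $L$ by zero-padding, so $\mathcal{R}(W,L)\subseteq\mathcal{R}^*(2W,L)$ and the Bartlett et al.\ bound applies to the latter. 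You should add this step; without it the invocation of the VC bound is not justified as written.
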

		We will refer to  \eqref{eq:lower_bound_VC_dimension_general_weights} as the  minimax error lower bound incurred by the VC-dimension limit. Additionally, we point out that this bound indicates an advantage of deep networks over shallow networks. Specifically, fixing the number of network weights, which is on the order of $n:= W^2L$, an increase in depth $L$ leads to a decrease in the minimax error lower bound according to $c_v ( W^2 L^2 ( \log (W) + \log (L) ) )^{-1} = c_v ( nL ( \log (\sqrt{\frac{n}{L}}) + \log (L) ) )^{-1} = c_v \bigl(  nL \log\bigl(\sqrt{nL}\bigr)\bigr)^{-1}$. In contrast, fixing $n$ and increasing $W$ leads to an increase in the minimax error lower bound according to
        $c_v ( \frac{n^{2}}{W^{2}} \log (\frac{n}{W}) )^{-1}$. This advantage of deep over shallow networks will manifest itself in our final characterization of the three quantization regimes.

		The second bound we present is based on the observation that ReLU networks with quantized weights face inherent limitations in their approximation capability. The nature of these limitations is such that deep networks exhibit a fundamental advantage over shallow networks. To illustrate this aspect, consider ReLU networks with quantized weights of a fixed number of fractional bits\footnote{The number of fractional bits of $x \in \mathbb{R}$ refers to the number of digits after the binary point in the binary representation of $x$.} and inputs $x\in[0,1]$ also of a fixed number of fractional bits. The corresponding network outputs will also exhibit a fixed number of fractional bits. More importantly, this number increases with increasing $L$, but remains constant as a function of $W$. This is a consequence of the multiplication of two real numbers corresponding to the convolution of their binary expansions and the length of the convolution of two sequences being given by the sum of their lengths. More informally, multiplying small numbers in $[0,1]$ (deep network case) leads to even smaller numbers whereas adding them (shallow network case) can make them only larger and therefore does not result in an increase in numerical resolution. Finally, realizing that $f \in \lip ( [0,1] )$ can take arbitrary values, in particular, values with an infinite number of fractional bits, it follows that the approximation error in $L^\infty ( [0,1] )$-norm for shallow networks will suffer from an inherent numerical precision limitation, an effect not shared by deep networks. We proceed to formalize 
        these back-of-the-envelope arguments by first establishing the statement on the numerical precision of the outputs of quantized networks.

		\begin{lemma}
			\label{lem:precision}
			Let $a,b, W,L\in \mathbb{N}$ and $c \in \mathbb{N} \cup \{ 0 \}$. For $f \in \mathcal{R}_b^a ( W,L )$ and $x \in 2^{-c} \mathbb{Z}$, it holds that

			\begin{equation}
                \label{eq:output_binary_weight_network_precision}
				f ( x )  \in  2^{-Lb - c} \mathbb{Z}.
			\end{equation}

		\begin{proof} 
			Fix $x = 2^{-c} K \in 2^{-c} \mathbb{Z} $ with $K \in \mathbb{Z}$ and an $f \in \mathcal{R}_b^a ( W,L )$. By definition, there exists  $\Phi_0 = ( ( A_i, \nmathbf{b}_i ) )_{i =1}^{\mathcal{L}( \Phi_0 )} \in \mathcal{N}_b^a ( W,L )$ with $\mathcal{L} ( \Phi_0 ) \leq L$ and $\coef ( \Phi_0 ) \subseteq \mathbb{Q}_b^a$ such that $R ( \Phi ) = f$. Now, consider the scaled network $\widetilde{\Phi}_0  = ( ( 2^b A_i, 2^b\nmathbf{b}_i ) )_{i =1}^{\mathcal{L}( \Phi_0 )}$. It follows from $\coef ( \Phi_0 ) \subseteq  \mathbb{Q}_{b}^a = ( (-2^{a+1}, 2^{a+1} ) \cap 2^{-b}\mathbb{Z} ) \subseteq 2^{-b} \mathbb{Z}$ that $\coef ( \widetilde{\Phi}_0 ) = 2^b \coef ( \Phi_0 ) \subseteq \mathbb{Z}$. Hence, $R ( \widetilde{\Phi}_0 ) ( K )$, as the output of an integer-weight ReLU network with integer-valued input, must be integer-valued. Thanks to the positive homogeneity of the ReLU function, i.e., $\rho ( \lambda x ) = \lambda \rho(x) $, $ \forall \lambda \geq 0 $, $ \forall x \in \mathbb{R}$, we have $2^{c}  ( 2^{b} )^{\mathcal{L} ( \Phi_0 )} R ( \Phi_0 ) ( x ) = 2^{c}  R ( \widetilde{\Phi}_0 ) ( x ) =  R ( \widetilde{\Phi}_0 ) ( 2^c x ) = R ( \widetilde{\Phi}_0 ) ( K ) \in \mathbb{Z}$, and therefore, $f(x) = R ( \Phi_0 ) ( x) \in ( 2^{-b} )^{\mathcal{L} ( \Phi_0 )} 2^{-c} \mathbb{Z} \subseteq 2^{-Lb - c} \mathbb{Z}$, which concludes the proof.
		\end{proof}
		\end{lemma}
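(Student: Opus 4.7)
The plan is to prove, by induction on the depth $\ell \leq L$ of a realizing network, the coordinatewise strengthening that for every configuration $\Phi = ((A_i, b_i))_{i=1}^\ell$ with weights in $\mathbb{Q}_b^a$ and every $x \in (2^{-c}\mathbb{Z})^{N_0}$, each entry of $R(\Phi)(x)$ lies in $2^{-\ell b - c}\mathbb{Z}$. Specialized to the scalar input-output case with $\ell = \mathcal{L}(\Phi_0) \leq L$, this yields the desired conclusion $f(x) \in 2^{-Lb - c}\mathbb{Z}$.

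For the base case $\ell = 1$, I would simply check that with entries of $A_1$ and $b_1$ in $\mathbb{Q}_b^a \subseteq 2^{-b}\mathbb{Z}$ and entries of $x$ in $2^{-c}\mathbb{Z}$, the affine map $A_1 x + b_1$ has all entries in $2^{-b-c}\mathbb{Z}$; this is immediate from the inclusions $(2^{-b}\mathbb{Z})(2^{-c}\mathbb{Z}) \subseteq 2^{-b-c}\mathbb{Z}$ and $2^{-b}\mathbb{Z} \subseteq 2^{-b-c}\mathbb{Z}$, together with closure under finite integer sums.

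For the inductive step, I would use the recursion in \eqref{eq:realization_computation} to write $R(\Phi) = S(A_\ell, b_\ell) \circ \rho \circ R(\Phi^-)$ with $\Phi^- = ((A_i, b_i))_{i=1}^{\ell-1}$. By the inductive hypothesis each entry of $R(\Phi^-)(x)$ lies in $2^{-(\ell-1)b - c}\mathbb{Z}$; since $\rho(u) \in \{0, u\}$ for real $u$, the coordinatewise application of $\rho$ preserves this membership. Multiplying by $A_\ell$ lands in $(2^{-b}\mathbb{Z})(2^{-(\ell-1)b - c}\mathbb{Z}) \subseteq 2^{-\ell b - c}\mathbb{Z}$, and adding $b_\ell$, whose entries already lie in $2^{-b}\mathbb{Z} \subseteq 2^{-\ell b - c}\mathbb{Z}$, keeps the result in $2^{-\ell b - c}\mathbb{Z}$, closing the induction.

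The only point requiring care is formulating the inductive hypothesis at the vector-output level, since the scalar claim alone cannot be propagated through intermediate layers. An equivalent and more mnemonic route, closer in spirit to the bit extraction machinery developed later in the paper, is to rescale the network to integer weights via $(A_i, b_i) \mapsto (2^b A_i, 2^{ib + c} b_i)$ and the input to $K = 2^c x \in \mathbb{Z}$; a short induction using positive homogeneity of $\rho$ then yields $R(\widetilde{\Phi})(K) = 2^{\ell b + c} R(\Phi)(x)$, and integrality of the left-hand side gives the claim. The layer-dependent bias scaling is essential, since a uniform bias scaling is not compatible with positive homogeneity once biases are nonzero.
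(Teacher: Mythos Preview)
Your proposal is correct. Both routes you describe---the direct induction on depth, and the rescaling to integer weights---are valid, and the induction is self-contained and clean.

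The paper takes the rescaling route, but with a \emph{uniform} scaling $(A_i, b_i) \mapsto (2^b A_i, 2^b b_i)$ and then asserts, via positive homogeneity of $\rho$, the chain
\[
2^{c}(2^{b})^{\mathcal{L}(\Phi_0)} R(\Phi_0)(x) \;=\; 2^{c} R(\widetilde{\Phi}_0)(x) \;=\; R(\widetilde{\Phi}_0)(2^{c}x) \;=\; R(\widetilde{\Phi}_0)(K).
\]
Your closing remark is exactly on point: neither of the first two equalities holds in general once nonzero biases are present, because a ReLU network with biases is not positively homogeneous in its input, and uniform weight scaling by $2^b$ does not multiply the realization by $(2^b)^{\mathcal{L}(\Phi_0)}$. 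A two-layer example with nonzero biases already breaks both identities. Your layer-dependent scaling $(A_i,b_i)\mapsto(2^b A_i,\,2^{ib+c}b_i)$ together with the input rescaling $x\mapsto 2^c x$ is precisely the repair that makes the homogeneity argument go through, and your induction shows the result without needing any rescaling at all. In short: your argument is not only different from the paper's but actually closes a gap in it.
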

		The next result quantifies the numerical precision advantage of deep networks over shallow networks announced above.

		\begin{proposition}
			\label{prop:underquantization_shallow}
			Let $a,b,W,L\in \mathbb{N}$.  It holds that
			\begin{equation}
				\label{eq:minimax_approximation_error_lower_bound_precision}
				\mathcal{A}_\infty	( \lip  ( [0,1] ), \mathcal{R}_{b}^{a} (  W, L ) ) \geq \,\frac{1}{2}\, 2^{-Lb}.
			\end{equation}

			\begin{proof}
				Fix an $f \in \mathcal{R}_b^a ( W,L )$. It follows from Lemma~\ref{lem:precision}, with $c = 0$ and $x= 1$, that $f(1) \in 2^{-Lb} \mathbb{Z}$.
				Let $g\in \lip ( [0,1] )$ be given by $g(x) = \frac{1}{2}\,2^{-Lb}$, $x \in [0,1]$. We then have 
				\begin{align*}
					\| f - g \|_{[0,1] } \geq | f(1) - g(1) | = \biggl| f(1) - \frac{1}{2}\,2^{-Lb} \biggr| \geq \frac{1}{2}\,2^{-Lb}. 
				\end{align*}
				As the choice of $f \in \mathcal{R}_b^a ( W,L )$ is arbitrary, we have $\| f - g \|_{[0,1] } \geq \frac{1}{2}\,2^{-Lb}$, for all $f \in \mathcal{R}_b^a ( W,L )$, which together with $g\in \lip ( [0,1] )$ implies $\mathcal{A}_\infty	( \lip  ( [0,1] ), \mathcal{R}_{b}^{a} ( W, L ) ) \geq \frac{1}{2}\, 2^{-Lb}$.
				\end{proof}
		\end{proposition}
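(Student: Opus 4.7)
The plan is to exploit Lemma~\ref{lem:precision} directly: every network realization in $\mathcal{R}_b^a(W,L)$, when evaluated at a dyadic-rational input, lands in a coarse lattice $2^{-Lb-c}\mathbb{Z}$. Since the target class $\lip([0,1])$ contains functions whose values are unconstrained (beyond Lipschitz-continuity and the magnitude bound), it should be easy to exhibit a $1$-Lipschitz function that, at some test point, lies at the midpoint between two adjacent lattice points, forcing an approximation error of at least half the lattice spacing.

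Concretely, I would proceed as follows. First, fix an arbitrary $f \in \mathcal{R}_b^a(W,L)$ and apply Lemma~\ref{lem:precision} with $c = 0$ at the point $x = 1 \in \mathbb{Z}$, which yields $f(1) \in 2^{-Lb}\mathbb{Z}$. Second, construct an adversarial target $g \in \lip([0,1])$ whose value at $x=1$ is the midpoint $\tfrac{1}{2}\,2^{-Lb}$ of two neighboring lattice points. The simplest such choice is the constant function $g(x) \equiv \tfrac{1}{2}\,2^{-Lb}$; it trivially satisfies the Lipschitz condition and is bounded by $1$ (since $Lb \geq 1$), hence lies in $\lip([0,1])$.

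Third, I would lower-bound $\|f-g\|_{L^\infty([0,1])}$ by the pointwise discrepancy at $x=1$:
\begin{equation*}
\|f-g\|_{L^\infty([0,1])} \;\geq\; |f(1)-g(1)| \;=\; \bigl|f(1) - \tfrac{1}{2}\,2^{-Lb}\bigr| \;\geq\; \tfrac{1}{2}\,2^{-Lb},
\end{equation*}
where the last inequality uses $f(1) \in 2^{-Lb}\mathbb{Z}$ and the fact that $\tfrac{1}{2}\,2^{-Lb}$ is equidistant from the two nearest lattice points. Since the bound holds uniformly in $f$, taking the infimum over $f \in \mathcal{R}_b^a(W,L)$ and noting $g \in \lip([0,1])$, we obtain $\mathcal{A}_\infty(\lip([0,1]), \mathcal{R}_b^a(W,L)) \geq \tfrac{1}{2}\,2^{-Lb}$.

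There is no real obstacle here; the substantive content was already packaged into Lemma~\ref{lem:precision} via the positive-homogeneity rescaling argument. The only delicate point is ensuring that the adversarial $g$ genuinely belongs to $\lip([0,1])$ (Lipschitz, bounded by $1$, continuous on $[0,1]$), which is automatic for the constant choice above. One could alternatively pick a non-constant $g$ such as $g(x) = \tfrac{1}{2}\,2^{-Lb}\cdot x$ or any $1$-Lipschitz function passing through $(1,\tfrac{1}{2}\,2^{-Lb})$, but the constant choice is cleanest.
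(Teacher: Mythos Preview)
Your proposal is correct and essentially identical to the paper's own proof: both apply Lemma~\ref{lem:precision} with $c=0$ at $x=1$, choose the constant function $g \equiv \tfrac{1}{2}\,2^{-Lb}$ as the adversarial target in $\lip([0,1])$, and bound the sup-norm from below by the pointwise gap at $x=1$. Your additional remark verifying $g \in \lip([0,1])$ (via $Lb \geq 1$) makes explicit a detail the paper leaves implicit.
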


		The bound in  \eqref{eq:minimax_approximation_error_lower_bound_precision} will henceforth be referred to as the minimax error lower bound incurred by the numerical-precision limit. As announced above, this lower bound does not depend on network width $W$ and decreases (exponentially) in network depth $L$, indicating an advantage of deep networks over shallow networks, which, again, will manifest itself when we characterize the three quantization regimes. We hasten to add that the exponential behavior of the lower bound \eqref{eq:minimax_approximation_error_lower_bound_precision} is a consequence of the length of the convolution product of two (binary) sequences being given by the sum of the lengths of the individual sequences.

		We finally put together the individual minimax error lower bounds incurred by the minimum memory requirement, Proposition~\ref{proposition:lower_bound_approximation}, the VC-dimension limit, Proposition~\ref{prop:lower_bound_approximation_VC_dimension}, and the numerical-precision limit, Proposition~\ref{prop:underquantization_shallow}, to obtain a combined lower bound as follows.

		\begin{corollary}
			\label{thm:approximation_error_lower_bound_overall}
			There exists an absolute constant $c_\ell$ such that for all $W,L, b\in \mathbb{N}$ with $L \geq 2$, it holds that
			\begin{equation}
			\label{eq:sumarized_lower_bound}
				\begin{aligned}
					&\,\mathcal{A}_\infty	( \lip  ( [0,1] ), \mathcal{R}_b^1 ( W, L  )   )\\
					&\geq\, c_\ell \max \bigl\{( W^2 L \,b )^{-1},( W^2 L^2 ( \log (W) + \log (L) ) )^{-1}, 2^{-Lb} \bigr \}.
				\end{aligned}
			\end{equation}
			\begin{proof}
				Set $c_\ell = \min \{c_m, c_v, \frac{1}{2} \}$, where $c_m$ and $c_v$ are the absolute constants in Proposition~\ref{proposition:lower_bound_approximation} and Proposition~\ref{prop:lower_bound_approximation_VC_dimension}, respectively. Then, \eqref{eq:sumarized_lower_bound} follows from \eqref{eq:lower_bound_a_b} in Proposition~\ref{proposition:lower_bound_approximation}, \eqref{eq:lower_bound_VC_dimension_general_weights} in Proposition~\ref{prop:lower_bound_approximation_VC_dimension} with $\mathbb{A} = \mathbb{Q}_b^1$, and \eqref{eq:minimax_approximation_error_lower_bound_precision} in \eqref{prop:underquantization_shallow} with $a = 1$. 
			\end{proof}
		\end{corollary}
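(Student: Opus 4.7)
The plan is a direct consolidation of the three minimax error lower bounds that have already been established in Propositions \ref{proposition:lower_bound_approximation}, \ref{prop:lower_bound_approximation_VC_dimension}, and \ref{prop:underquantization_shallow}. Since each of these three propositions delivers a valid lower bound on $\mathcal{A}_\infty( \lip ( [0,1] ), \mathcal{R}_b^1 ( W, L ) )$, the minimax error must simultaneously exceed all three, and therefore also exceed their maximum. The only task is to unify the three separate multiplicative constants into a single absolute constant.

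Concretely, I would first fix $W,L,b \in \mathbb{N}$ with $L \geq 2$ and define $c_\ell := \min\{ c_m, c_v, \tfrac{1}{2} \}$, where $c_m$ is the absolute constant from Proposition~\ref{proposition:lower_bound_approximation} and $c_v$ is the absolute constant from Proposition~\ref{prop:lower_bound_approximation_VC_dimension}. Next, I would invoke the three propositions individually: Proposition~\ref{proposition:lower_bound_approximation} yields $\mathcal{A}_\infty( \lip ( [0,1] ), \mathcal{R}_b^1 ( W, L ) ) \geq c_m ( W^2 L b )^{-1} \geq c_\ell ( W^2 L b )^{-1}$; Proposition~\ref{prop:lower_bound_approximation_VC_dimension}, specialized via \eqref{eq:lower_bound_VC_dimension_general_weights} to $\mathbb{A} = \mathbb{Q}_b^1$ (which is permitted since $\mathbb{Q}_b^1 \subseteq \mathbb{R}$ is nonempty), gives $\mathcal{A}_\infty( \lip ( [0,1] ), \mathcal{R}_b^1 ( W, L ) ) \geq c_v ( W^2 L^2 ( \log W + \log L ) )^{-1} \geq c_\ell ( W^2 L^2 ( \log W + \log L ) )^{-1}$; and Proposition~\ref{prop:underquantization_shallow}, applied with $a = 1$, provides $\mathcal{A}_\infty( \lip ( [0,1] ), \mathcal{R}_b^1 ( W, L ) ) \geq \tfrac{1}{2}\, 2^{-Lb} \geq c_\ell\, 2^{-Lb}$.

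Taking the maximum of the three right-hand sides then establishes \eqref{eq:sumarized_lower_bound}. There is no genuine obstacle in this step: the propositions were stated precisely so that their hypotheses are satisfied for any $W, L, b \in \mathbb{N}$ with $L \geq 2$, and the constraint $L \geq 2$ in the corollary is inherited solely from Proposition~\ref{prop:lower_bound_approximation_VC_dimension}. The only minor care point is bookkeeping: one must check that each proposition is indeed applicable to the precise family $\mathcal{R}_b^1(W,L)$ (as opposed to the more general $\mathcal{R}_\mathbb{A}$ or $\mathcal{R}_b^a$) by either the direct match of parameters or by the appropriate specialization $\mathbb{A} = \mathbb{Q}_b^1$, $a = 1$.
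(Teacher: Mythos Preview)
Your proposal is correct and matches the paper's own proof essentially verbatim: both set $c_\ell = \min\{c_m, c_v, \tfrac{1}{2}\}$ and combine Propositions~\ref{proposition:lower_bound_approximation}, \ref{prop:lower_bound_approximation_VC_dimension} (specialized to $\mathbb{A} = \mathbb{Q}_b^1$), and \ref{prop:underquantization_shallow} (with $a=1$) by taking the maximum of the three resulting lower bounds.
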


		We now characterize the behavior of the combined lower bound \eqref{eq:sumarized_lower_bound} by identifying the neural network configurations, specifically the tuples $( W,L, b )$, that make any given individual component in the lower bound dominate the other two components. To simplify the discussion, we fix sufficiently large values for $W$ and $L$, and let $b$ range from $1$ to infinity.

		\begin{enumerate}
   \item The term $2^{-Lb}$ can dominate the other two terms only for very small values of $b$, as it decreases exponentially in $b$, in comparison to polynomial decrease and constant behavior.  Consider the extreme case $b = 1$. Then, the individual terms in \eqref{eq:sumarized_lower_bound} become $( W^2 L )^{-1}$, $ ( W^2 L^2 ( \log (W) + \log (L) ) )^{-1}$, and $ 2^{-L}$, respectively, and $2^{-L}$ can dominate the other two terms only if $L$ is logarithmically smaller than $W$, i.e., $L \leq C \log(W)$ for some $C \in \mathbb{R}_+$. We call the regime where $2^{-Lb}$ dominates, the \textit{under-quantization regime}, and note that it might be empty, concretely when $L$ is not logarithmically smaller than $W$.
			
	        \item The term $( W^2 L \,b )^{-1}$ dominates for medium values of $b$ and $L$ logarithmically smaller than $W$, and for small to medium values of $b$ when $L$ is not logarithmically smaller than $W$. We refer to this regime as the \textit{proper-quantization regime}.		

           \item For large $b$, the term $(W^2 L^2 ( \log (W) + \log (L) ) )^{-1}$ will dominate, owing to the other two terms going to $0$ when $b \to \infty$. As the quantization resolution increases when $b$ grows, we call this regime  the \textit{over-quantization regime}.
           
			\end{enumerate}

		In summary, as $b$ increases from $1$ to infinity, the lower bound \eqref{eq:sumarized_lower_bound} transitions from exponential decay in the \textit{under-quantization regime} (which might be empty) to polynomial decay in the \textit{proper-quantization regime}, and finally levels out at a constant value that is independent of $b$ in the \textit{over-quantization regime}.

		In the following two sections, we derive minimax error upper bounds that, when combined, exhibit the same three-regime behavior as \eqref{eq:sumarized_lower_bound} and allow for a precise characterization of the boundaries between the regimes.


\section{A Constructive Minimax Error Upper Bound} 
\label{sec:bounds_on_quantization_error}


	The first bound we establish is inspired by the two-step approach employed in \cite{deep-it-2019}. In the first step, we approximate functions in $\lip ( [0,1] )$ by networks contained in $\mathcal{R} ( W,L, 1 )$, resulting in an upper bound on  $\mathcal{A}_\infty ( \lip  ( [0,1] ), \mathcal{R} (W, L, 1  ) )$. Here we only demand that the weights of the approximating networks have absolute values no greater than $1$, but they need not be quantized. In the second step, we then quantize the weights of the networks in $\mathcal{R} (W,L,1)$ chosen in the first step, by rounding to the nearest neighbor in the set $\mathbb{Q}_b^1$. We then bound the error 
 $\mathcal{A}_\infty ( \mathcal{R} (W, L, 1  ), \mathcal{R}_b^1 (W, L  )  )$ incurred by quantization. Using the triangle inequality formalized in Lemma~\ref{lem:triangle_inequality}, we finally obtain an upper bound on the minimax error $\mathcal{A}_\infty ( \lip  ( [0,1] ), \mathcal{R}_b^1 (W, L  )  )$ according to
	\begin{equation}
	\label{eq:error_decomposition}
        \begin{aligned}
		&\,\mathcal{A}_\infty ( \lip  ( [0,1] ), \mathcal{R}_b^1 (W, L  )  )\\
            &\leq\,  \mathcal{A}_\infty ( \lip  ( [0,1] ), \mathcal{R} (W, L, 1  ) )  + \mathcal{A}_\infty ( \mathcal{R} (W, L, 1  ), \mathcal{R}_b^1 (W, L  )  ).
  \end{aligned}
	\end{equation}

	We proceed to detail the first step announced above, namely approximating functions in $\lip ([0,1])$ by ReLU networks with weight-magnitude bounded by $1$. 

	\begin{theorem}
		\label{thm:approximation_lip}
		There exist absolute constants $C,D \in \mathbb{R}_+$ such that, for all $W,L \in \mathbb{N}$ with $W,L \geq D$,
		\begin{equation}
		\label{eq:approximation_lip_001}
			\begin{aligned}
				 \mathcal{A}_\infty	( \lip  ( [0,1] ), \mathcal{R} ( W, L, 1 )  )\leq   C  ( W^2 L^2 \log (W)  )^{-1}.
			\end{aligned}
		\end{equation}
		\begin{proof}
			See Appendix~\ref{sec:approximate_Lip1}.
		\end{proof}
	\end{theorem}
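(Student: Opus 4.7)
The plan is to follow a partition-and-encode template, refined by a bit-extraction construction---the third technical contribution flagged in the introduction---that keeps all weight magnitudes bounded by $1$. First, I would partition $[0,1]$ into $K$ equal subintervals, with $K$ chosen of order $W^2 L^2 \log(W)$, and for each $f \in \lip ( [0,1] )$ quantize the sample values $f(k/K)$ to the nearest multiple of $1/K$. The $1$-Lipschitz property together with this quantization guarantees that the piecewise-constant function taking the quantized value on the $k$-th subinterval differs from $f$ by at most $C/K$ in $L^\infty$-norm, matching the target bound $C ( W^2 L^2 \log(W) )^{-1}$. The problem thereby reduces to realizing such piecewise-constant functions by ReLU networks in $\mathcal{R} ( W, L, 1 )$.

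For the network construction I would pipeline three subnetworks. An \emph{indexing} module computes from $x \in [0,1]$ the index $k$ of its containing subinterval (or equivalently a hat-like selector). A \emph{storage-and-lookup} module takes $k$ as input and produces the binary expansion of the quantized value assigned to the $k$-th interval; it is here that the $K \log(K)$ bits describing $f$ are absorbed into the weights. A \emph{reconstruction} module then assembles these bits back into a real number via an inner product with the powers-of-two vector. For the indexing and reconstruction pieces, standard ReLU gadgets built from hat functions and rescaled summations suffice, and keeping $\mathcal{B}(\Phi) \leq 1$ is routine after an initial input normalization and an appropriate output rescaling that can be absorbed into the final affine layer.

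The main obstacle, and the place where the novel contribution is indispensable, is the lookup stage. In the classical bit-extraction gadget one packs $N$ bits into a single real number in $[0,1]$ and peels them off one at a time using a constant-width ReLU subnetwork whose weights double with each layer, so that the weight magnitude blows up as $\Theta(2^L)$; this is precisely why the constructions in \cite{yarotsky2019phase, shen2021optimal, Kohler2019} cannot be directly quoted for our purpose. I would replace this module by a refined construction that trades weight magnitude for width: by extracting $\Theta(\log(W))$ bits per layer in parallel across $\Theta(W)$ channels, one can retrieve $\Theta(W^2 L \log(W))$ bits in total while keeping every weight bounded by $1$, which is exactly the information capacity needed to address and decode the $K \log(K)$ bits encoding the quantized sample values. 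The technical heart of the argument will be the explicit construction of this refined bit-extraction module---deferred to the appendix---together with careful bookkeeping of depths and widths along the three-stage pipeline to verify that the composed network indeed lies in $\mathcal{R} ( W, L, 1 )$ for all sufficiently large $W, L$.
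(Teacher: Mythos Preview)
Your high-level template is in the right spirit, but there is a genuine arithmetic gap in the bit-capacity count, and the claim that keeping $\mathcal{B}(\Phi)\leq 1$ is ``routine'' dismisses what is in fact the main technical difficulty. With $K$ of order $W^2 L^2 \log W$, encoding the quantized samples requires on the order of $K$ bits (if you exploit the Lipschitz structure) or $K\log K$ bits (if you do not); either way you need $\Theta(W^2 L^2 \log W)$ bits, whereas your extraction module is stated to deliver only $\Theta(W^2 L \log W)$---a full factor of $L$ short. The paper closes this gap through a \emph{two-scale} construction that your single-scale lookup misses: a coarse piecewise-linear interpolant $h$ on $\sim W^2 L$ breakpoints captures the gross behavior, and on each coarse interval the residual is encoded, via the Lipschitz increments $t_j(k)-t_j(k-1)\in\{-1,0,1\}$, into a single real number $b_j^{\pm}$. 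These $\sim W^2 L$ encoded numbers are themselves stored as \emph{values} of piecewise-linear functions $b^{\pm}$ (realized through Proposition~\ref{prop:piecewise_representation}), and a single decoder $F_{n,\ell}$ then extracts $\sim L\log W$ bits from whichever $b_j^{\pm}$ the input selects. The total capacity is thus $(W^2 L)\cdot(L\log W)=W^2 L^2\log W$, as required.

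Second, the weight-magnitude bound is not obtained by input normalization and output rescaling. The paper's decoder (Proposition~\ref{prop:bit_extraction}) already has weight magnitude $3^{N+2}$, polynomial in the width, and \emph{that} is the innovation: classical bit extraction incurs magnitude exponential in depth, whereas encoding the $\{0,1\}$-strings as \emph{ternary} numbers with reduced alphabet makes the per-round decoder Lipschitz constant independent of the number of rounds (cf.\ Example~\ref{example:single_digit_extraction}). The full construction therefore first lands in $\mathcal{R}(W,L,W^K)$ (Proposition~\ref{prop:approximation_lip_increasing_weights}), and only afterwards is the depth--weight-magnitude tradeoff of Proposition~\ref{prop:depth_weight_magnitude_tradeoff} invoked to pass to $\mathcal{R}(W,L,1)$ at the cost of a constant factor in depth. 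Your proposal also omits the median-smoothing step (Lemma~\ref{lem:sec_B_3}) needed to extend the approximation from the ``good'' subintervals $\bigcup_i [i/K,(i+1)/K-\Delta]$ to all of $[0,1]$.
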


	The proof of Theorem~\ref{thm:approximation_lip} is constructive in the sense that, for every function $f \in \lip  ( [0,1] )$, it specifies a network $g \in \mathcal{R} ( W, L, 1 )$ such that $\nleft\| f -g  \nright\|_{L^\infty ( [0,1] )} \leq C ( W^2 L ^2 \log(W) )^{-1} $.

	Theorem~\ref{thm:approximation_lip} is in the spirit of a line of papers on the approximation of smooth functions by ReLU networks, notably \cite{YAROTSKY2017103, schmidt2019deep, yarotsky2018optimal, yarotsky2019phase, Kohler2019, shen2021optimal}. Smoothness, in these references, is quantified by parameters $n \in \mathbb{N} \cup \{ 0 \}$ and $\alpha \in (0,1]$, with the functions bounded, up to order $n$ bounded differentiable, and 
 the $n$-th order derivative H\"older continuous\footnote{A function $f:[0,1] \mapsto \mathbb{R}$ is said to be H\"older continuous with exponent $\alpha$, if there exists $C \in \mathbb{R}_+$ such that $| f(x) - f(y) | \leq C | x - y |^\alpha$.} of exponent $\alpha$. Here, we are concerned with the special case $n = 0$ and $\alpha = 1$. The most recent development in this line of work \cite[Corollary 1.3]{shen2021optimal} deals with the case $n \in \mathbb{N} \cup \{ 0 \}$ and $\alpha = 1$ and, when particularized to $\lip ([0,1])$-functions, yields
 	\begin{equation}
	\label{eq:varying_width_and_depth}
		\mathcal{A}_\infty	( \lip  ( [0,1] ), \mathcal{R} ( W, L, f(W,L) )  )\leq  \widetilde{C}  ( W^2 L^2 \log (W)  )^{-1},
	\end{equation}
	for sufficiently large $W, L \in \mathbb{N}$, an absolute constant $\widetilde{C} \in \mathbb{R}_+$, and a function $f: \mathbb{N}^2 \mapsto \mathbb{R}$. Notably, upon examination of the proof of \cite[Corollary 1.3]{shen2021optimal}, one sees that $f(W,L) \geq W^{kL}$. This is markedly different from the constant weight-magnitude $1$ in our Theorem \ref{thm:approximation_lip}, which turns out to be crucial for achieving memory optimality. 
 The improvement in the weight-magnitude behavior we obtain is predominantly owed to the novel bit extraction technique developed in the proof. This technique is interesting in its own right and can readily be applied for general $n \in \mathbb{N} \cup \{ 0 \}$ and $\alpha \in (0,1]$. For conciseness of exposition, however, we limit ourselves to the case $n=0, \alpha=1$.

	We proceed to quantize the weights of the approximating networks in Theorem~\ref{thm:approximation_lip} and bound the resulting quantization error. To this end, we first state an upper bound on the distance between two general ReLU network realizations, expressed in terms of $W,L$, and the distance between their associated configurations.

	\begin{lemma}
		\label{lem:quantization_error}
		Let $W,L, \ell \in \mathbb{N}$ with $\ell \leq L$, and let 
		\begin{equation*}
			\Phi^i = ( ( A_j^i,\nmathbf{b}_j^i ) )_{j =1 }^{\ell} \in \mathcal{N} ( W,L,1 ), \quad i = 1,2,
		\end{equation*} have the same architecture.
		Then,
		\begin{equation}
			\label{eq:quantization_error_bound}
			\|  R ( \Phi^1 )  -  R ( \Phi^2 )  \|_{L^\infty ( [0,1] )} \leq L (W+1)^L \| \Phi^1 - \Phi^2 \|,
		\end{equation}
		where 
		\begin{equation}
		\label{eq:weightwise_difference}
			\| \Phi^1 - \Phi^2 \| := \max_{j = 1,\dots, \ell} \max \bigl\{ \| A_j^1 - A_j^2\|_\infty,  \| \nmathbf{b}_j^1 - \nmathbf{b}_j^2\|_\infty  \bigr\} .
		\end{equation}

		\begin{proof}
			See Appendix~\ref{sub:quantization_error}.
		\end{proof}
	\end{lemma}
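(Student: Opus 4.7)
The plan is to establish the bound by a layerwise induction that simultaneously controls the magnitude of the intermediate activations and their deviation between the two configurations. To set up notation, I would introduce the pre-activations $g_1^i(x) := A_1^i x + \nmathbf{b}_1^i$ and $g_{j+1}^i(x) := A_{j+1}^i \rho(g_j^i(x)) + \nmathbf{b}_{j+1}^i$, for $i \in \{1,2\}$ and $j = 1,\dots,\ell-1$, so that $R(\Phi^i)(x) = g_\ell^i(x)$ and the quantity to bound is $\sup_{x \in [0,1]} \|g_\ell^1(x) - g_\ell^2(x)\|_\infty$.

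The first step is a magnitude bound. Since $\mathcal{B}(\Phi^i) \leq 1$, for any vector $u$ with $\|u\|_\infty \leq M$ and any layer $j$ (which has width at most $W$), we have $\|A_{j+1}^i u\|_\infty \leq W M$, whence $\|\rho(g_{j+1}^i(x))\|_\infty \leq W \|\rho(g_j^i(x))\|_\infty + 1$. Starting from $|x| \leq 1$ for $x \in [0,1]$, a short induction yields $\|\rho(g_j^i(x))\|_\infty \leq (W+1)^j$ for all $j \geq 0$, with the convention $\rho(g_0^i(x)) := x$. The second step controls the deviation. Set $\Delta := \|\Phi^1 - \Phi^2\|$ and $\delta_j(x) := \|g_j^1(x) - g_j^2(x)\|_\infty$. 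Writing
\[
g_{j+1}^1 - g_{j+1}^2 = A_{j+1}^1 \bigl( \rho(g_j^1) - \rho(g_j^2) \bigr) + (A_{j+1}^1 - A_{j+1}^2) \rho(g_j^2) + (\nmathbf{b}_{j+1}^1 - \nmathbf{b}_{j+1}^2),
\]
then combining the $1$-Lipschitz property of $\rho$, the magnitude bound from the first step, and the estimate $\|Av\|_\infty \leq W \|A\|_\infty \|v\|_\infty$ yields the recursion $\delta_{j+1}(x) \leq W \delta_j(x) + (W+1)^{j+1} \Delta$, together with the base case $\delta_1(x) \leq (W+1)\Delta$. A second short induction on $j$ gives $\delta_j(x) \leq j(W+1)^j \Delta$, and specializing to $j = \ell \leq L$ yields the claim.

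There is no real obstacle here; the argument is a clean layerwise telescoping. The only point deserving attention is the role of the assumption $\mathcal{B}(\Phi^i) \leq 1$: without it, the factor $W$ multiplying $\delta_j$ in the recursion would become $W \mathcal{B}(\Phi^i)$, and the magnitude bound would likewise pick up a factor of $\mathcal{B}(\Phi^i)^j$, so the final constant would inherit an exponential dependence on the weight magnitude. This hypothesis is exactly what enables the clean $L(W+1)^L$ bound, which will later be essential for converting small weight perturbations due to quantization into usable approximation-error estimates via \eqref{eq:error_decomposition}.
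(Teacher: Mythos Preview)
Your proposal is correct and matches the paper's proof essentially line by line: the paper (in fact in a slightly more general version allowing input dimension $d$ and weight magnitude $B\geq 1$) likewise first proves the magnitude bound $\|y_k^i(x)\|_\infty \leq (W+1)^k B^k$ by induction, then decomposes $y_k^1 - y_k^2$ into exactly the same three terms you wrote and inducts to obtain $\|y_k^1(x)-y_k^2(x)\|_\infty \leq k(W+1)^k B^{k-1}\|\Phi^1-\Phi^2\|$. Your closing remark about the role of the hypothesis $\mathcal{B}(\Phi^i)\leq 1$ is also on point and is precisely what the paper's general version makes explicit.
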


	This result now allows us to bound the error incurred by replacing real-valued network weights with values in a finite set $\mathbb{A} \subseteq \mathbb{R}$.

	\begin{proposition}
		\label{prop:quantization_error_bound_and_decomposition}
		Let $W,L \in \mathbb{N}$ and consider the finite set $\mathbb{A} \subseteq \mathbb{R}$ with $\mathbb{A} \cap [-1,1] \neq \emptyset$. It holds that
		\begin{align}
			\mathcal{A}_\infty ( \mathcal{R} ( W,L,1  ), \mathcal{R}_\mathbb{A} ( W,L  ) ) \leq&\, L (W+1)^L \mathcal{A} ( [-1,1], \mathbb{A}\cap [-1,1], | \cdot |  ) \label{eq:quantization_general_A}\\
			\leq &\,  2 L (W+1)^L \mathcal{A} ( [-1,1], \mathbb{A}, | \cdot |  ).\label{eq:quantization_general_A_2}
		\end{align}
		In particular, for all $a,b \in \mathbb{N}$, we have $\mathcal{A} ( [-1,1], \mathbb{Q}_b^a\cap [-1,1], | \cdot | ) \leq 2^{-b}$, and 
		\begin{equation}
			\label{eq:quantization_binary_A}
			\mathcal{A}_\infty ( \mathcal{R} ( W,L,1  ), \mathcal{R}_b^a ( W,L  ) ) \leq L (W+1)^L 2^{-b}.
		\end{equation}
	\end{proposition}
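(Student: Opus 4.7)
The plan is to combine the network-level stability bound from Lemma~\ref{lem:quantization_error} with an elementary covering argument on $[-1,1]$, the latter being the only nontrivial step.

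For the first inequality \eqref{eq:quantization_general_A}, I would proceed constructively. Given any $f \in \mathcal{R}(W,L,1)$, pick a configuration $\Phi^1 = ((A_j^1, b_j^1))_{j=1}^\ell \in \mathcal{N}(W,L,1)$ with $R(\Phi^1) = f$, and define $\Phi^2$ of the same architecture by replacing each scalar weight of $\Phi^1$ with its nearest neighbor in the nonempty finite set $\mathbb{A} \cap [-1,1]$. By construction, $\Phi^2 \in \mathcal{N}_\mathbb{A}(W,L)$ has weight magnitude at most $1$, and entrywise $\|\Phi^1 - \Phi^2\| \leq \mathcal{A}([-1,1], \mathbb{A}\cap[-1,1], |\cdot|)$. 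Since $\Phi^1$ and $\Phi^2$ share the same architecture and both have weight magnitude at most $1$, Lemma~\ref{lem:quantization_error} applies and yields $\|f - R(\Phi^2)\|_{L^\infty([0,1])} \leq L(W+1)^L\, \mathcal{A}([-1,1], \mathbb{A}\cap[-1,1], |\cdot|)$; taking the supremum over $f \in \mathcal{R}(W,L,1)$ gives \eqref{eq:quantization_general_A}.

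The main technical step is the auxiliary inequality $\mathcal{A}([-1,1], \mathbb{A}\cap[-1,1], |\cdot|) \leq 2\, \mathcal{A}([-1,1], \mathbb{A}, |\cdot|)$, which I would establish by a short case split on $\varepsilon := \mathcal{A}([-1,1], \mathbb{A}, |\cdot|)$. If $\varepsilon \geq 1$, any $a_* \in \mathbb{A}\cap[-1,1]$ already covers $[-1,1]$ within radius $2 \leq 2\varepsilon$. If $\varepsilon < 1$, then for $x \in [-1+\varepsilon, 1-\varepsilon]$ the best $\mathbb{A}$-approximant of $x$ automatically lies in $[x-\varepsilon, x+\varepsilon] \subseteq [-1,1]$, hence in $\mathbb{A}\cap[-1,1]$, giving distance $\leq \varepsilon$. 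For $x$ within $\varepsilon$ of a boundary (say $x \in [-1, -1+\varepsilon)$), I would approximate the nearest interior point $-1+\varepsilon$ instead: its best $\mathbb{A}$-approximant lies in $[-1, -1+2\varepsilon] \subseteq [-1,1]$ (using $\varepsilon < 1$), and the triangle inequality yields overall distance at most $2\varepsilon$ from $x$. This boundary treatment is the only delicate part of the proof, and is what forces the factor of $2$; a naive projection of the approximant back to $[-1,1]$ would generically leave the set $\mathbb{A}$.

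Finally, for the $\mathbb{Q}_b^a$ specialization, I would exploit that $a \geq 1$ implies $\pm 1 \in \mathbb{Q}_b^a$, so $\mathbb{Q}_b^a \cap [-1,1]$ contains the uniform grid $\{k\cdot 2^{-b}: k \in \mathbb{Z},\,|k|\leq 2^b\}$ of spacing $2^{-b}$ covering $[-1,1]$. Every $x \in [-1,1]$ is then within $2^{-b-1} \leq 2^{-b}$ of a point in $\mathbb{Q}_b^a \cap [-1,1]$, so $\mathcal{A}([-1,1], \mathbb{Q}_b^a\cap[-1,1], |\cdot|) \leq 2^{-b}$, and substituting into \eqref{eq:quantization_general_A} delivers \eqref{eq:quantization_binary_A}.
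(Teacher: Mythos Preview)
Your proposal is correct and follows the same overall structure as the paper: quantize each weight of a configuration $\Phi^1\in\mathcal{N}(W,L,1)$ to its nearest neighbor in $\mathbb{A}\cap[-1,1]$, then apply Lemma~\ref{lem:quantization_error} to bound the realization gap.

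The one place where your argument genuinely differs is the factor-$2$ inequality $\mathcal{A}([-1,1],\mathbb{A}\cap[-1,1],|\cdot|)\le 2\,\mathcal{A}([-1,1],\mathbb{A},|\cdot|)$. The paper defers this to a standalone lemma (Lemma~\ref{lem:auxiliary_lemma_minimax_r}), which is stated for arbitrary intervals $[u,v]$ and arbitrary (not necessarily finite) $\mathbb{A}$; its proof decomposes $[u,v]$ into the three pieces $[u,u']$, $[u',v']$, $[v',v]$ determined by the extreme points $u'=\inf(\mathbb{A}\cap[u,v])$ and $v'=\sup(\mathbb{A}\cap[u,v])$, and handles each separately. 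Your case split on $\varepsilon\ge 1$ versus $\varepsilon<1$, with the boundary-layer treatment in the latter case, is shorter and more direct; it does rely on finiteness of $\mathbb{A}$ (so that the infimum is attained), which is exactly the hypothesis here, whereas the paper's version buys a bit more generality. For the $\mathbb{Q}_b^a$ specialization the paper writes down an explicit floor/ceiling quantizer, while your grid argument is equally valid and in fact gives the sharper constant $2^{-b-1}$.
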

	\begin{proof}
		We start by proving \eqref{eq:quantization_general_A}-\eqref{eq:quantization_general_A_2} and note that
		\begin{align}
			\mathcal{A} ( [-1,1], \mathbb{A}\cap [-1,1], | \cdot |  ) =& \sup_{x \in [-1,1]} \inf_{y \in \mathbb{A}\cap [-1,1]} | x - y | \label{eqline:being_finite_0} \\
			=& \sup_{x \in [-1,1]} \min_{y \in \mathbb{A}\cap [-1,1]} | x - y |, \label{eqline:being_finite}
		\end{align}
		where in \eqref{eqline:being_finite} we used that $\mathbb{A}\cap [-1,1]$ is finite. Operationally, \eqref{eqline:being_finite_0}-\eqref{eqline:being_finite} says that every $x \in [-1,1]$ can be quantized into an element $q(x) \in \mathbb{A}\cap [-1,1]$, depending on $x$, such that $| x - q(x) | \leq \mathcal{A} ( [-1,1], \mathbb{A}\cap [-1,1], | \cdot |  )$. This induces a mapping $q: [-1,1] \mapsto \mathbb{A}\cap [-1,1]$.  We can now conclude that, for every $\Phi \in \mathcal{N}(W,L,1)$, application of the mapping $q$ to each entry in $\Phi$ yields a corresponding quantized version  $Q ( \Phi )  \in \mathcal{N}_{\mathbb{A} \,\cap\, [-1,1]} ( W,L )$ such that the weights  of $Q ( \Phi )$
  differ from those in $\Phi$ by no more than $\mathcal{A} ( [-1,1], \mathbb{A}\cap [-1,1], | \cdot | )$. This induces a mapping $Q: \mathcal{N} ( W,L, 1 ) \mapsto \mathcal{N}_{\mathbb{A} \cap [-1,1]} ( W,L )$, satisfying
		\begin{equation}
		\label{eqline:quantization_minimax_0}
			\| \Phi  - Q ( \Phi ) \| \leq  \mathcal{A} ( [-1,1], \mathbb{A}\cap [-1,1], | \cdot | ),
		\end{equation} with $\| \Phi  - Q ( \Phi ) \|$ defined according to \eqref{eq:weightwise_difference}. We next establish \eqref{eq:quantization_general_A} through the following chain of arguments
		\begin{align}
			\mathcal{A}_\infty ( \mathcal{R} ( W,L,1  ), \mathcal{R}_\mathbb{A} ( W,L  ) )  = &\, \sup_{f \in \mathcal{R} ( W,L,1  )} \inf_{\tilde{f} \in \mathcal{R}_\mathbb{A} ( W,L  ) } \| f - \tilde{f} \|_{L^\infty ( [0,1] )} \label{eqline:quantization_minimax_10} \\
			= &\, \sup_{\Phi \in \mathcal{N} ( W,L,1  )} \inf_{\widetilde{\Phi} \in \mathcal{N}_\mathbb{A} ( W,L  ) } \| R ( \Phi ) - R ( \widetilde{\Phi} ) \|_{L^\infty ( [0,1] )} \\
			\leq &\, \sup_{\Phi \in \mathcal{N} ( W,L,1  )}  \| R ( \Phi ) - R ( Q ( \Phi )) \|_{L^\infty ( [0,1] )} \label{eqline:quantization_minimax_1}\\
			\leq &\, \sup_{\Phi \in \mathcal{N} ( W,L,1  )} L (W+1)^L  \| \Phi - Q ( \Phi ) \|\label{eqline:quantization_minimax_2}\\ 
			\leq &\,  L (W+1)^L  \mathcal{A} ( [-1,1], \mathbb{A}\cap [-1,1], | \cdot | ),\label{eqline:quantization_minimax_3}
		\end{align}
		where in \eqref{eqline:quantization_minimax_1} we used $Q ( \Phi ) \in \mathcal{R}_{\mathbb{A} \cap [-1,1]} ( W,L ) \subseteq \mathcal{R}_{\mathbb{A}} ( W,L )$, for all $\Phi \in \mathcal{N} ( W,L,1  )$, \eqref{eqline:quantization_minimax_2} follows from Lemma~\ref{lem:quantization_error} with 	$\Phi^1 = \Phi$ and $\Phi^2 = Q ( \Phi ) \in  \mathcal{R}_{\mathbb{A} \cap [-1,1]} ( W,L ) \subseteq \mathcal{R} ( W,L, 1 )$, and in \eqref{eqline:quantization_minimax_3} we employed \eqref{eqline:quantization_minimax_0}. Then, \eqref{eq:quantization_general_A_2} follows by combining \eqref{eq:quantization_general_A}  with  the relation
		\begin{equation*}
			\mathcal{A} ( [-1,1], \mathbb{A} \cap [-1,1], | \cdot |  ) \leq 2  \mathcal{A} ( [-1,1], \mathbb{A}, | \cdot |  ),
		\end{equation*}
		established in Lemma~\ref{lem:auxiliary_lemma_minimax_r}.

		We proceed to the derivation of \eqref{eq:quantization_binary_A}. Fix $a,b \in \mathbb{N}$. Let $\tilde{q}: [-1,1] \mapsto \mathbb{Q}_b^a\cap [-1,1]$ be given by 
		\begin{equation*}
			\tilde{q}(x) = \begin{cases}
				2^{-b} \lceil 2^b x \rceil,  & \text{ if } x \leq 0,\\
				2^{-b} \lfloor 2^b x \rfloor,  & \text{ if } x > 0,
			\end{cases}
		\end{equation*}
		and note that
		\begin{equation}
			\label{eq:quantizing_into_binary}
			| x - \tilde{q}(x)  | \leq 2^{-b}, \text{ for all } x \in [-1,1].
		\end{equation}
		We have 
		\begin{equation}
		\label{eq:quantizing_into_binary_2}
			\mathcal{A} ( [-1,1], \mathbb{Q}_b^a\cap [-1,1], | \cdot | ) =   \sup_{x \in [-1,1]} \inf_{y \in \mathbb{Q}_b^a\cap [-1,1]} | x - y |  \leq  \sup_{x \in [-1,1]}  | x - \tilde{q}(x) | \leq  2^{-b},
		\end{equation}
		where in the first inequality we used that $\tilde{q}(x) \in \mathbb{Q}_b^a$, and the second inequality follows from \eqref{eq:quantizing_into_binary}. Evaluating \eqref{eqline:quantization_minimax_10}-\eqref{eqline:quantization_minimax_3} with $\mathbb{A} =  \mathbb{Q}_b^a$, upon noting that the prerequisite $\mathbb{Q}_b^a \cap [-1,1]  \neq \emptyset$ is satisfied as $ 0 \in \mathbb{Q}_b^a \cap [-1,1]$, and using \eqref{eq:quantizing_into_binary_2},  finally yields \eqref{eq:quantization_binary_A}.
	\end{proof}
	The quantization error upper bound \eqref{eq:quantization_binary_A} does not depend on $a$ as the weight magnitude of the networks we consider is bounded by $1$.

	We are now ready to characterize the minimax error upper bound in the approximation of $\lip ([0,1])$ through ReLU networks with quantized weights. 

	\begin{proposition}
		\label{prop:sufficient_rate}
		There exist absolute constants $C_1, D_1,E_1 \in \mathbb{R}_+$, with $D_1 \geq 2$, such that for $ W,L \in \mathbb{N}$, with $W,L \geq D_1$, 
		\begin{equation}
		\label{eq:error_upper_bound_quantization_error_bound}
			\mathcal{A}_\infty	( \lip  ( [0,1] ), \mathcal{R}_{\lceil E_1 L \log (W) \rceil}^1 (W, L ) )  \leq C_1 ( W^2 L^2 \log (W) )^{-1}.
		\end{equation}
	\end{proposition}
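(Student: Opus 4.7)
The plan is to combine the triangle-inequality decomposition \eqref{eq:error_decomposition} with the two upper bounds already derived in this section. First, by Theorem~\ref{thm:approximation_lip} there exist absolute constants $C,D$ such that
$$\mathcal{A}_\infty(\lip([0,1]), \mathcal{R}(W,L,1)) \leq C (W^2 L^2 \log(W))^{-1}$$
whenever $W,L \geq D$, which handles the unquantized-approximation term. Second, by Proposition~\ref{prop:quantization_error_bound_and_decomposition} applied with $a=1$ and any $b \in \mathbb{N}$,
$$\mathcal{A}_\infty(\mathcal{R}(W,L,1), \mathcal{R}_b^1(W,L)) \leq L(W+1)^L 2^{-b},$$
which handles the quantization term. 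It then suffices to choose $b$ large enough that this second contribution is absorbed into the first, and the triangle inequality \eqref{eq:error_decomposition} yields the claim with $C_1 = 2C$.

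The core calculation is to pick $b = \lceil E_1 L \log(W) \rceil$ so that $L(W+1)^L 2^{-b} \leq C(W^2 L^2 \log(W))^{-1}$. Taking $\log_2$ of both sides, this reduces to
$$\log(L) + L\log(W+1) + 2\log(W) + 2\log(L) + \log\log(W) + \log(C) \leq \lceil E_1 L \log(W) \rceil.$$
Using $\log(W+1) \leq 1 + \log(W)$ for $W \geq 1$, the left-hand side is bounded by $L\log(W) + O(L + \log(W) + \log(L))$, which is in turn at most $E_1 L \log(W)$ for any fixed $E_1 > 1$, provided $W$ and $L$ are at least some absolute constant $D_1 \geq \max\{D, 2\}$. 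Concretely, taking $E_1 = 2$ and $D_1$ sufficiently large makes the inequality hold with room to spare.

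The main (and only real) obstacle is bookkeeping of the thresholds: one must verify that a single absolute pair $(D_1, E_1)$ works uniformly for all $W, L \geq D_1$, absorbing the lower-order terms $\log(L), \log(W), \log\log(W)$ and the additive $L$ coming from $\log(W+1) = \log(W) + O(1)$. This is routine because the dominant term on both sides scales as $L\log(W)$, and the exponential-in-$L$ factor $(W+1)^L$ in the quantization bound is precisely matched by the linear-in-$L\log(W)$ choice of precision. Once the inequality above is verified, adding the two bounds completes the proof.
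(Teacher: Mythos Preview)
Your proposal is correct and follows essentially the same approach as the paper: both combine the triangle-inequality decomposition \eqref{eq:error_decomposition} with Theorem~\ref{thm:approximation_lip} for the unquantized term and Proposition~\ref{prop:quantization_error_bound_and_decomposition} for the quantization term, then choose $b$ proportional to $L\log(W)$ so that the latter is absorbed. The only differences are cosmetic: the paper fixes $E_1=12$, $D_1=\max\{2,D\}$, $C_1=C+1$ and verifies the required inequality by a direct power-counting argument (splitting $W^{12L}$ into six factors of $W^{2L}$, each dominating one of $W^2$, $L^3$, $\log W$, $(W+1)^L$), whereas you take $E_1=2$, $C_1=2C$ and push the lower-order terms into a larger threshold $D_1$ via an asymptotic argument; note also that the sign on $\log(C)$ in your displayed inequality should be $-\log(C)$.
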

	
	\begin{proof}
		Let $C,D$ be the absolute constants specified in Theorem~\ref{thm:approximation_lip}.
		Set  $C_1 = C + 1 $, $D_1 = \max \{ 2, D \}$, and $E_1 = 12$. Then, for $W,L \geq D_1$, 
		\begin{align}
			&\,\mathcal{A}_\infty	( \lip  ( [0,1] ), \mathcal{R}_{\lceil E_1 L \log (W) \rceil}^{1} (  W, L ) ) \label{eqline:proof_fixed_quantization_100}\\
			&\leq  \, \mathcal{A}_\infty	( \lip  ( [0,1] ), \mathcal{R} (  W, L, 1 ) ) + \mathcal{A}_\infty	( \mathcal{R} (  W, L, 1 ), \mathcal{R}_{\lceil E_1 L \log (W) \rceil}^1 (  W, L) ) \label{eqline:proof_fixed_quantization_1}\\
			&\leq  \,C ( W^2 L^2 \log (W) )^{-1}  + L (W+1)^L 2^{-\lceil E_1 L \log (W) \rceil} \label{eqline:proof_fixed_quantization_2}\\
			&\leq  \, (C+1) ( W^2 L^2 \log (W) )^{-1}, \label{eqline:proof_fixed_quantization_3}
		\end{align}
		where \eqref{eqline:proof_fixed_quantization_1} follows from the triangle inequality \eqref{eq:error_decomposition} with $b = \lceil E_1 L \log (W) \rceil$,  in \eqref{eqline:proof_fixed_quantization_2} we used Theorem~\ref{thm:approximation_lip} with  $W,L \geq D_1 \geq D$ and Proposition~\ref{prop:quantization_error_bound_and_decomposition}, and \eqref{eqline:proof_fixed_quantization_3} is by
		\begin{equation}
		\label{eqline:choice_of_C_2}
			2^{-\lceil E_1 L \log (W) \rceil}  = 2^{-\lceil 12 L \log (W) \rceil} \leq ((W^{2L})^6)^{-1} \leq (W^2 L^3  \log (W) ( W+1 )^L)^{-1},
		\end{equation}
		as $W^{2L} \geq W^2$, $(W^{2L})^3 \geq L^3$, $W^{2L} \geq \log ( W )$, and $W^{2L} \geq ( W+1)^{L}$, all owing to $W,L \geq D_1 \geq 2$. 
	\end{proof}
 
	As $\mathcal{R}_{\lceil E_1 L \log (W) \rceil}^1 (W, L )$ is finite, we have
	\begin{align*}
		\mathcal{A}_\infty	( \lip  ( [0,1] ), \mathcal{R}_{\lceil E_1 L \log (W) \rceil}^1 (W, L ) ) =&\, \sup_{g \in \lip  ( [0,1] )} \inf_{f \in \mathcal{R}_{\lceil E_1 L \log (W) \rceil}^1 (W, L )} \nleft\| g - f \nright\|_{L^\infty ( [0,1] )} \\
		=&\, \sup_{g \in \lip  ( [0,1] )} \min_{f \in \mathcal{R}_{\lceil E_1 L \log (W) \rceil}^1 (W, L )} \nleft\| g - f \nright\|_{L^\infty ( [0,1] )},
	\end{align*}
	and, therefore, for every $g \in \lip ( [0,1] )$, there exists a network realization $f \in \mathcal{R}_{\lceil E_1 L \log (W) \rceil}^1 (W, L )$ such that $ \nleft\|  g - f \nright\|_{L^\infty ( [0,1] )} \leq C_1 ( W^2 L^2 \log (W) )^{-1}$. Indeed, the explicit construction of this $f$ can be inferred from the proofs of Theorem~\ref{thm:approximation_lip} and Proposition~\ref{prop:quantization_error_bound_and_decomposition}.

    We conclude the discussion by arguing that the construction underlying Proposition~\ref{prop:sufficient_rate} achieves memory optimality. To see this, note that starting from~\eqref{eq:error_upper_bound_quantization_error_bound}, we get
	\begin{align}
			\mathcal{A}_\infty	( \lip  ( [0,1] ), \mathcal{R}_{\lceil E_1 L \log (W) \rceil}^1 (W, L ) )  \leq &\, C_1 ( W^2 L^2 \log (W) )^{-1}\\
			= &\, C_1 \frac{\lceil E_1 L \log (W) \rceil}{L \log(W)} ( W^2 L \lceil E_1 L \log (W) \rceil )^{-1}\\
			\leq &\, C_1 ( E_1 + 1 ) ( W^2 L \lceil E_1 L \log (W) \rceil )^{-1},
	\end{align}
	where in the last inequality we used $L \log(W) \geq 1$ and $\lceil E_1 L \log (W) \rceil \leq  E_1 L \log (W)  + 1$. Hence, for $b=\lceil E_1 L \log (W) \rceil$, the minimax error lower bound in Proposition~\ref{proposition:lower_bound_approximation} incurred by the minimum memory requirement is attained to within a multiplicative factor. More formally, it follows from Proposition~\ref{prop:converse_lower_bound} with $D = C_1 ( E_1 + 1 )$ and $\mathcal{I} = \{ ( W,L,\lceil E_1 L \log (W) \rceil ) \in \mathbb{N}^3: W, L \geq D_1 \}$  that $\{\mathcal{R}_{\lceil E_1 L \log (W) \rceil}^1 (W, L ) ): W,L \geq D_1 \}$ achieves memory optimality in the approximation of functions in $\lip ( [0,1] )$. 


\section{Depth-Precision Tradeoff} 
	\label{sec:embedding_properties_of_relu_networks_with_quantized_weights}

The bound in Proposition~\ref{prop:sufficient_rate} applies to a fixed choice of $b$ as a function of network width $W$ and depth $L$ according to $b=\lceil E_{1} L \log(W)  \rceil$. We now relax this dependency with the aim of obtaining a more flexible, and for certain parameter choices tighter, upper bound. In the process, we address a question of practical interest, namely
``Can we  
realize neural networks with high precision weights by equivalent deeper networks of lower precision weights? If so, what would the impact of such a transformation on a potential memory optimality of the initial network be?''
This question is of significant interest as high-resolution quantization is difficult to realize in electronic circuits. The idea of sigma-delta conversion \cite{Tewksbury1978} exemplifies this principle, in the context of analog-to-digital (A/D) conversion, by trading oversampling rate for quantization resolution. Here, we can trade network depth for quantization resolution. The next result expresses the corresponding depth-precision tradeoff in a formal manner.
		\begin{proposition}
			\label{prop:tradeoff_binary}
  			Let $W,L,k\in \mathbb{N}$. For all $a,b \in \mathbb{N}$, we have
			\begin{equation}
				\label{eq:embedding_binary_weights}
				\mathcal{R}^{ka}_{kb} (W,  L   ) \subseteq  \mathcal{R}^a_b ( 16W,(k+2) L ).
			\end{equation}
			
			\begin{proof}
				See Appendix~\ref{sec:proof_of_proposition_prop:tradeoff_binary}.
			\end{proof}
		\end{proposition}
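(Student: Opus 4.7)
The plan is to construct, for every $\Phi \in \mathcal{N}^{ka}_{kb}(W, L)$, an equivalent low-precision configuration $\widetilde\Phi \in \mathcal{N}^a_b(16W, (k+2)L)$ with $R(\widetilde\Phi) = R(\Phi)$. The backbone of the construction is a weight-splitting identity: every $w \in \mathbb{Q}^{ka}_{kb}$ admits a decomposition
\begin{equation*}
    w = \sum_{j=0}^{k-1} 2^{ja + (j-k+1)b}\, v_j, \qquad v_j \in \mathbb{Q}^a_b,
\end{equation*}
obtained by grouping the base-$2$ expansion of the integer $2^{kb} w$ into $k$ blocks of length $a+b$ (the top block absorbing the extra most-significant bit). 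The bound $|v_j| < 2^{a+1}$ required for $v_j \in \mathbb{Q}^a_b$ is ensured because each block uses exactly $a+b$ bits and the sign can be factored out.

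Applying this decomposition entry-wise to each matrix $A_i$ and bias $b_i$ of $\Phi$ gives $A_i = \sum_{j} 2^{\alpha_j} A_i^{(j)}$ and $b_i = \sum_{j} 2^{\alpha_j} b_i^{(j)}$ with low-precision $A_i^{(j)}, b_i^{(j)}$. I would then replace each original layer with a block of $k+2$ low-precision layers implementing the Horner recursion
\begin{equation*}
    s_0 = A_i^{(k-1)} x + b_i^{(k-1)}, \qquad s_{j} = 2^{a+b} s_{j-1} + A_i^{(k-1-j)} x + b_i^{(k-1-j)},
\end{equation*}
followed by the global rescaling $y = 2^{-(k-1)b} s_{k-1}$ and the original ReLU. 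In each Horner step the accumulator rescaling is realised by weights $2^a \in \mathbb{Q}^a_b$ together with the positive-homogeneity identity $\rho(2^a z) = 2^a \rho(z)$, so that the factor $2^{a+b}$ is materialised across consecutive sub-layers rather than as a single illegal weight. Two extra layers per block handle the initial positive/negative splitting $x = \rho(x) - \rho(-x)$ used to carry the signed input $x$ through intermediate ReLUs, and the reinstatement of the original ReLU at the block's end. Each of the signed quantities that has to survive an intermediate activation (the accumulator, the copy of $x$, the two biases) is transported in its $(\rho(\cdot), \rho(-\cdot))$-split form, which inflates the width by an absolute constant that a direct count keeps at most $16$.

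The main obstacle is the simultaneous enforcement of three constraints. First, every weight appearing in $\widetilde\Phi$ must genuinely lie in $\mathbb{Q}^a_b$; the scalings $2^{\alpha_j}$ whose magnitudes are far outside $[-2^{a+1}, 2^{a+1}]$ must therefore be spread as products of several legal weights across consecutive layers, with the positive-homogeneity of $\rho$ shouldering the rest. Second, no intermediate ReLU may corrupt the affine computation, which forces the positive/negative splitting of every signed channel being transported. Third, the width of the simulating block must stay within $16W$, which holds by a direct tally of input copies, accumulator halves, and output copies. Once these three bookkeeping constraints are verified, the identity $R(\widetilde\Phi) = R(\Phi)$ follows immediately from the arithmetic Horner identity combined with the positive homogeneity of $\rho$, yielding \eqref{eq:embedding_binary_weights}.
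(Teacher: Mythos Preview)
Your overall strategy (Horner-style accumulation, positive/negative splitting to shuttle signed quantities through ReLUs, transporting a copy of $x$) matches the paper's, and your weight-splitting identity $w=\sum_{j=0}^{k-1}2^{ja+(j-k+1)b}v_j$ is correct. The gap is in the Horner step itself: consecutive coefficients in your expansion differ by the ratio $2^{a+b}$, and since $b\ge 1$ this number is \emph{never} in $\mathbb{Q}^a_b$ (it violates the magnitude bound $|\cdot|<2^{a+1}$). Your sentence ``the accumulator rescaling is realised by weights $2^a\in\mathbb{Q}^a_b$ together with the positive-homogeneity identity $\rho(2^a z)=2^a\rho(z)$'' does not close this gap: positive homogeneity only lets you commute a \emph{legal} positive scalar past $\rho$; it does not manufacture the missing factor $2^b$, and $2^b\notin\mathbb{Q}^a_b$ whenever $b>a$. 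Spreading the factor $2^{a+b}$ over two genuine sub-layers would double the per-block depth to roughly $2k$, overrunning the budget $(k+2)L$, while realising $\times 2^b$ by fan-out would require width $2^b$.

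The paper resolves this by abandoning the single-ratio Horner scheme and running \emph{two} Horner recursions in parallel, one with base $u=2^{-b}\in\mathbb{Q}^a_b$ (capturing the fractional digits) and one with base $v=2^a\in\mathbb{Q}^a_b$ (capturing the integer digits); formally $\mathbb{Q}^{ka}_{kb}\subseteq\mathcal{T}_1(\mathbb{Q}^a_b,2^{-b},2^a,k-1)=\{\sum_{i=0}^{k-1}(2^{-bi}\alpha_i+2^{ai}\beta_i):\alpha_i,\beta_i\in\mathbb{Q}^a_b\}$. After the positive/negative split this yields four nonnegative Horner tracks, each with a legal base, so the intermediate ReLUs act as the identity and no extra layers are needed. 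The four tracks are what produce the width factor $16$. Replacing your single $2^{a+b}$-Horner by this two-base construction is the missing ingredient.
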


		Proposition~\ref{prop:tradeoff_binary}  states that (high-precision) networks in  $\mathcal{R}^{ka}_{kb} (W,  L   )$ can equivalently be realized by lower-precision networks, specifically by networks in $\mathcal{R}^a_b ( 16W,(k+2) L )$, at the expense of an increase in network width and depth. We emphasize that while the increase in width is by a constant factor, a reduction in network weight precision by a factor of $k$ leads to a $k$-fold increase in depth.
    The proof of Proposition~\ref{prop:tradeoff_binary} is constructive, meaning that for a given network configuration $\Phi_1 \in \mathcal{N}^{ka}_{kb} (W,  L   )$, we explicitly specify a  $\Phi_2 \in \mathcal{N}^a_b ( 16W,(k+2) L )$ such that $R ( \Phi_1 ) = R ( \Phi_2)$. In addition, the proof applies to
more general weight sets, input and output dimensions than those assumed in the statement of Proposition~\ref{prop:tradeoff_binary}.

		An important property of the depth-precision tradeoff just identified resides in the fact that it essentially preserves memory consumption behavior. Concretely, it follows from Proposition~\ref{prop:fundamental_limit_ReLU_networks_quantized_weights} that network realizations $f \in \mathcal{R}^{ka}_{kb} ( W,  L)$ are uniquely specified by no more than $10 W^2 L k(a+b)$ bits while the corresponding equivalent realizations in $\mathcal{R}^a_b ( 16W,(k+2) L )$ require at most $10 (16W)^2 (k+2) L (a + b)$ bits. Replacing a given (high-precision) network by a deep lower-precision network hence comes at the cost, in the number of bits needed, of at most a multiplicative constant factor of $3\cdot 16^2$, while the scaling behavior in $W,L,k,a,b$ is preserved.

		We now show that this insight allows us to conclude that the (constructive) transformation from high-precision to deeper low-precision networks effected by Proposition~\ref{prop:tradeoff_binary} preserves memory optimality. Specifically, we recall that, by Proposition~\ref{prop:sufficient_rate},  $\{\mathcal{R}_{\lceil E_1 L \log (W) \rceil}^1 (W, L ): W,L \geq D_1 \}$ achieves memory optimality in the approximation of functions in $\lip ( [0,1] )$ with 
		\begin{equation}
		\label{eq:error_upper_bound_quantization_error_bound_100}
			\mathcal{A}_\infty	( \lip  ( [0,1] ), \mathcal{R}_{\lceil E_1 L \log (W) \rceil}^1 (W, L ) )  \leq C_1 ( W^2 L^2 \log (W) )^{-1},
		\end{equation}
		where $C_1, D_1$, and $E_1$ are the absolute constants specified in Proposition~\ref{prop:sufficient_rate}. Suppose now that we want to replace the (high-precision) weights of network configurations in $\mathcal{N}_{\lceil E_1 L \log (W) \rceil}^1 (W, L ) $, which realize functions in $\mathcal{R}_{\lceil E_1 L \log (W) \rceil}^1 (W, L ) $, by lower-precision weights, say in $\mathbb{Q}_{b}^1$, with $b < \lceil E_1 L \log (W) \rceil$.  Specifically, with $k = \bigl\lceil \frac{\lceil E_1 L \log (W) \rceil}{b} \bigr\rceil \geq 2$, we get $k b \geq \lceil E_1 L \log (W) \rceil$, and therefore
		\begin{equation}
			\mathcal{R}^1_{b} ( 16W,(k+2) L ) \supseteq \mathcal{R}^k_{kb} ( W, L ) \supseteq  \mathcal{R}^{1}_{\lceil E_1 L \log (W) \rceil} (W,  L   ),
		\end{equation}
		where (reading from the left) the first inclusion is a consequence of  Proposition~\ref{prop:tradeoff_binary} and the second follows directly from $kb \geq \lceil E_1 L \log (W) \rceil$ and $k \geq 2$. 
  In summary, we obtain
		\begin{align}
			&\,\,\mathcal{A}_\infty	( \lip  ( [0,1] ), \mathcal{R}^1_{b} ( 16W,(k+2) L ) ) \\
            &\leq\, \mathcal{A}_\infty	( \lip  ( [0,1] ), \mathcal{R}_{\lceil E_1 L \log (W) \rceil}^1 (W, L ) ) \label{eqline:apply_upper_bound_special_case}\\
			&\leq\, C_1 ( W^2 L^2 \log (W) )^{-1}\label{eqline:apply_upper_bound_special_case_1} \\
			&=\, \frac{256 C_1 ( k+2 )b}{L \log(W)} ( (16W)^2 (k+2) L \,b )^{-1}\label{eqline:apply_upper_bound_special_case_2} \\
			&\leq \,  \frac{ 2\cdot 256 C_1  kb}{L \log ( W )} ( (16W)^2 (k+2) L \,b )^{-1} \label{eq:compute_ratio_1}\\
			&\leq \, 4\cdot 256 C_1  \frac{\lceil E_1 L \log (W) \rceil}{L \log ( W )} ( (16W)^2 (k+2) L \,b )^{-1} \label{eq:compute_ratio_2} \\
			&\leq \, 4\cdot 256 C_1  ( E_1 + 1 ) ( (16W)^2 (k+2) L \,b )^{-1} \label{eq:compute_ratio_3} 
		\end{align}
		where \eqref{eqline:apply_upper_bound_special_case_1} follows from \eqref{eq:error_upper_bound_quantization_error_bound_100}, in \eqref{eq:compute_ratio_1} we used $k \geq 2$, \eqref{eq:compute_ratio_2} is a consequence of $ k b =\bigl\lceil \frac{\lceil E_1 L \log (W) \rceil}{b} \bigr\rceil b \leq 2 \,\frac{\lceil E_1 L \log (W) \rceil}{b} b = 2 \lceil E_1 L \log (W) \rceil$, as  $\lceil x \rceil \leq 2x $ for $x \geq 1$, and \eqref{eq:compute_ratio_3} is by $\lceil E_1 L \log (W) \rceil \leq  E_1 L \log (W)  + 1 \leq ( E_1 + 1  ) L \log (W)$ as $W,L \geq D_1 \geq 2$. It then follows from Proposition~\ref{prop:converse_lower_bound} with $D = 4 \cdot 256 C_1  ( E_1 + 1 )$ and $\mathcal{I} = \{ ( 16W,(\bigl\lceil \frac{\lceil E_1 L \log (W) \rceil}{b} \bigr\rceil +2) L,b ) \in \mathbb{N}^3:  W,L \geq D_1 \text{ and } b < \lceil E_1 L \log (W) \rceil \}$  that $\{\mathcal{R}_{b}^1 ( 16W,(\bigl\lceil \frac{\lceil E_1 L \log (W) \rceil}{b} \bigr\rceil +2) L ) ): W,L \geq D_1 \text{ and } b < \lceil E_1 L \log (W) \rceil \}$ achieves memory optimality in the approximation of $\lip ( [0,1] )$. 

		This shows, as announced, that the transformation from high-to-low precision networks effected by the construction in the proof of Proposition~\ref{prop:tradeoff_binary}, indeed, preserves memory optimality.

		\section{The Three Quantization Regimes} 
		\label{sec:the_three_quantization_regime}
		
            Putting the upper bounds we have obtained together, with some refinement for simplicity in presentation, yields the following 
            combined minimax error upper bound exhibiting three different quantization regimes.

		\begin{theorem}
			\label{thm:three_phase_achievability}
			There exist absolute constants $D_2,C_2, E_{2,1}, E_{2,2},\alpha \in \mathbb{R}_+$, with $\alpha > 1$, such that, for $W,L,b \in \mathbb{N}$, with $W,L \geq D_2$, we have $E_{2,2} \frac{\log (W)}{L} < E_{2,1} L  \log (W) $, and the following statements hold:
			\begin{enumerate}
	\item In the under-quantization regime\footnote{We use the convention $\Bigl[1,  E_{2,2} \frac{\log (W)}{L}\Bigr) = \emptyset$, if $E_{2,2} \frac{\log (W)}{L} \leq 1$, that is the \textit{under-quantization regime} can be empty.}, i.e., $b \in [1,  E_{2,2} \frac{\log (W)}{L})$, we have 
				\begin{equation*}
					\mathcal{A}_\infty	( \lip  ( [0,1] ), \mathcal{R}_{b}^{1} ( W, L ) ) \leq C_2 \alpha^{- L b}.
				\end{equation*}

    	\item In the proper-quantization regime, i.e., $ b \in [E_{2,2} \frac{\log (W)}{L}, E_{2,1} L  \log (W) )$, we have 
				\begin{equation*}
						\mathcal{A}_\infty	( \lip  ( [0,1] ), \mathcal{R}_{b}^1 ( W, L ) ) \leq  C_2 ( W^2L b )^{-1}.
				\end{equation*}
    
    \item In the over-quantization regime, i.e.,  $b \in  [E_{2,1} L  \log (W), \infty) $, we have
				\begin{equation*}
						\mathcal{A}_\infty	( \lip  ( [0,1] ), \mathcal{R}_{b}^1 (  W, L ) ) \leq C_2 (W^2 L^2 \log (W))^{- 1}.
				\end{equation*}

			\end{enumerate}

			\begin{proof}
				See Appendix~\ref{sub:approximating_lipschitz_continuous_functions_under_assumption_basic_assumption}.
			\end{proof}
		\end{theorem}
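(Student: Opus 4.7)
The plan is to split the analysis into the three $b$-regimes, build a separate construction in each, and combine the resulting bounds. I set $E_{2,1}:=E_1+1$ and $E_{2,2}:=E_1 D_1^2$, where $E_1, D_1$ are the absolute constants from Proposition~\ref{prop:sufficient_rate}; the ordering $E_{2,2}\log(W)/L < E_{2,1}L\log(W)$ required in the statement reduces to $E_1 D_1^2 < (E_1+1)L^2$, which holds for $L$ larger than an absolute constant that I absorb into $D_2$.

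For the over-quantization regime, $b \ge E_{2,1} L \log(W)$ entails $b \ge \lceil E_1 L \log(W) \rceil$, so $\mathbb{Q}^1_{\lceil E_1 L \log(W) \rceil} \subseteq \mathbb{Q}^1_b$ (larger $b$ refines the $2^{-b}\mathbb{Z}$-grid while the range $(-4,4)$ is unchanged). This monotonicity yields $\mathcal{R}_{\lceil E_1 L \log(W)\rceil}^1(W,L) \subseteq \mathcal{R}_b^1(W,L)$, and Proposition~\ref{prop:sufficient_rate} then immediately delivers the claimed bound $C_1 (W^2 L^2 \log(W))^{-1}$.

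For the proper-quantization regime, $b \in [E_{2,2}\log(W)/L,\, E_{2,1}L\log(W))$, I apply the depth-precision tradeoff (Proposition~\ref{prop:tradeoff_binary}) together with Proposition~\ref{prop:sufficient_rate}, essentially reversing the calculation displayed just before Theorem~\ref{thm:three_phase_achievability}. Set $W_0 := \lfloor W/16 \rfloor$, $k := \lceil \sqrt{E_1 L \log(W_0)/b}\,\rceil$, and $L_0 := \lfloor L/(k+2) \rfloor$; the lower threshold $b \ge E_{2,2}\log(W)/L$ is exactly what forces $L_0 \ge D_1$, while the upper threshold keeps $k \ge 1$. By construction $(k+2)L_0 \le L$ and $kb \ge \lceil E_1 L_0 \log(W_0) \rceil$, so
\begin{equation*}
\mathcal{R}_{\lceil E_1 L_0 \log(W_0)\rceil}^1(W_0,L_0) \subseteq \mathcal{R}_{kb}^k(W_0,L_0) \subseteq \mathcal{R}_b^1(16 W_0, (k+2)L_0) \subseteq \mathcal{R}_b^1(W,L),
\end{equation*}
the middle inclusion being Proposition~\ref{prop:tradeoff_binary}. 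Combined with Proposition~\ref{prop:sufficient_rate} on $(W_0, L_0)$, this yields the rate $C_1(W_0^2 L_0^2 \log(W_0))^{-1}$, which upon substitution of the chosen $W_0, L_0$ is $\lesssim (W^2 L b)^{-1}$.

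The hard part is the under-quantization regime, $b \in [1,\, E_{2,2}\log(W)/L)$: the target rate $\alpha^{-Lb}$ is exponential in $Lb$, a speed that the depth-precision tradeoff alone cannot produce. I plan to exploit self-composition: build a depth-$L$ ReLU network whose layers each implement a $2^b$-fold sawtooth map realizable with weights in $\mathbb{Q}^1_b$, so that composition yields a map of $[0,1]$ onto itself with $\sim 2^{Lb}$ congruent linear pieces, with the required per-layer width $\sim 2^b$ fitting into $W$ because $2^b \le W^{E_{2,2}/L}\le W$ once $L \ge D_2 \ge E_{2,2}$. A parallel branch stores the values of $f$ at the $\sim 2^{Lb}$ sample points and retrieves them via the refined bit-extraction technique; the $1$-Lipschitz assumption then bounds the resulting $L^\infty$-error by $\sim 2^{-Lb}$, giving the claim with some absolute $\alpha\in(1,2]$. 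The technical core, and where the refined bit-extraction technique is essential, lies in keeping all weight magnitudes bounded by $1$ throughout the iterated folding and retrieval---naive constructions yield weights growing exponentially in depth. Setting $C_2$ to dominate the constants arising in all three regimes then completes the proof.
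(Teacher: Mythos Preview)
Your treatment of the over-quantization and proper-quantization regimes matches the paper's argument. The parameter bookkeeping differs slightly (the paper first fixes $\widetilde L=\lfloor\sqrt{bL/(16E_{2,1}\log\widetilde W)}\rfloor$ and then sets $k=\lceil E_{2,1}\widetilde L\log(\widetilde W)/b\rceil$, whereas you fix $k$ first and derive $L_0$), but the inclusion chain $\mathcal{R}^1_{\lceil E_1 L_0\log W_0\rceil}(W_0,L_0)\subseteq\mathcal{R}^k_{kb}(W_0,L_0)\subseteq\mathcal{R}^1_b(16W_0,(k+2)L_0)\subseteq\mathcal{R}^1_b(W,L)$ and the invocation of Proposition~\ref{prop:sufficient_rate} are the same.

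Where you diverge sharply is the under-quantization regime, and here you have made the problem much harder than it is. The paper does \emph{not} build a new sawtooth-plus-bit-extraction construction; it simply reduces to the proper-quantization case by shrinking the width. Concretely, set $\overline W:=\lfloor 2^{Lb/E_{2,2}}\rfloor$. One checks that $D_2\le\overline W\le W$ and that $b\in[E_{2,2}\log(\overline W)/L,\,E_{2,1}L\log(\overline W))$, i.e.\ the pair $(\overline W,L,b)$ lands in the already-established proper-quantization regime. Monotonicity in width ($\mathcal{R}^1_b(\overline W,L)\subseteq\mathcal{R}^1_b(W,L)$) together with the proper-quantization bound then gives
\[
\mathcal{A}_\infty\bigl(\lip([0,1]),\mathcal{R}^1_b(W,L)\bigr)\le C(\overline W^2Lb)^{-1}\le C'\,\bigl(2^{2/E_{2,2}}\bigr)^{-Lb},
\]
so $\alpha=2^{2/E_{2,2}}$. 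Thus the under-quantization regime is the \emph{easiest} of the three, not the hardest: no new network is needed, only a one-line reduction. Your proposed self-composition construction might be salvageable, but the ``parallel branch stores the values of $f$ and retrieves them via bit extraction'' step is essentially the full content of Theorem~\ref{thm:approximation_lip} applied at reduced width---which is exactly what the reduction above accomplishes without redoing any work.
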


        Combining Theorem~\ref{thm:three_phase_achievability} with Corollary~\ref{thm:approximation_error_lower_bound_overall} now yields three different quantization regimes in terms of $b$ as a function of network width $W$ and depth $L$.
  These regimes exhibit markedly different minimax error behavior, namely, exponential decrease, polynomial decrease, and constant, mirroring what was already indicated by the combined minimax error lower bound in Corollary~\ref{thm:approximation_error_lower_bound_overall}. While the delineation of the three regimes was left vague in the context of the lower bound, the fact that the upper bound in Theorem~\ref{thm:three_phase_achievability} exhausts the parameter range for $b$ allows us to make the transition boundaries more precise.

	\begin{enumerate}
		\item The \textit{under-quantization regime}: For $b \in [1,  E_{2,2} \frac{\log (W)}{L})$, the minimax error satisfies
		\begin{equation*}
			c_\ell  2^{-Lb} \leq \mathcal{A}_\infty	( \lip  ( [0,1] ), \mathcal{R}_{b}^{1} ( W, L ) ) \leq C_2 \alpha^{- L b},
		\end{equation*}
        and hence falls into a band that decreases exponentially in $b$. 
        This behavior emerges as a consequence of the main limiting factor in the neural network approximation of $\lip ( [0,1] )$ being given by the numerical precision of the quantized network weights.

		\item The \textit{proper-quantization regime}: For $ b \in [E_{2,2} \frac{\log (W)}{L}, E_{2,1} L  \log (W) )$, the minimax error satisfies
		\begin{equation*}
			c_\ell ( W^2 L b )^{-1} \leq \mathcal{A}_\infty	( \lip  ( [0,1] ), \mathcal{R}_{b}^{1} ( W, L ) ) \leq C_2 ( W^2L b )^{-1},
		\end{equation*}
		and is therefore contained in a polynomially decreasing band. Notably, by Proposition~\ref{prop:converse_lower_bound}, in this regime 
  $\{\mathcal{R}_{b}^1 (W, L ) ): W,L \geq D_2, b \in  [E_{2,2} \frac{\log (W)}{L},  E_{2,1} L  \log (W))  \}$ achieves memory optimality---in the sense of Definition~\ref{def:memory_optimality}---in the approximation of $\lip ([0,1])$.

		\item The \textit{over-quantization regime}: For  $b \in  [E_{2,1} L  \log (W), \infty) $, the minimax error satisfies
		\begin{equation*}
			c_\ell ( W^2 L^2 ( \log (W) + \log (L) ) )^{-1}\leq \mathcal{A}_\infty	( \lip  ( [0,1] ), \mathcal{R}_{b}^{1} ( W, L ) ) \leq C_2 (W^2 L^2 \log (W))^{- 1},
		\end{equation*}
		and hence resides in a band between two constants (w.r.t. $b$).
	\end{enumerate}


	These results also provide guidance on the choice of network architectures in practical applications. 
 Specifically, assume that one operates under a total memory budget for the storage of approximating neural networks in $\mathcal{R}_{b}^1 (W,L)$.
Recalling that $b+3$ bits are required to store an individual network weight, the overall fixed bit budget is given by  $n:= W^2 L (b+3)$.  
 Seeking memory optimality, we now ask which choices of $W$ and $L$ maximize the size of the proper-quantization regime. With $L = \frac{n}{W^2 (b+3)}$, it follows that
$\bigl[E_{2,2} \frac{\log (W)}{L}, E_{2,1} L  \log (W) \bigr) = \bigl[\frac{E_{2,2} \log (W) W^2 (b+3)}{n}, \frac{E_{2,1} \log (W) n}{W^2 (b+3)} \bigr) $, which shows that for $n$ and $b$, and hence $W^2 L$, fixed, 
deep networks will result in larger proper-quantization regimes than wide networks.
This insight adds to the existing literature on depth-width tradeoffs in neural network approximation, see, e.g., \cite{deep-it-2019, Vardi2022, telgarsky2016benefits}.	







	\appendix

		\section{Proof of Proposition~\ref{prop:lower_bound_approximation_VC_dimension}} 
		\label{sub:proof_of_proposition_prop:lower_bound_approximation_vc_dimension}

		We first describe the main ingredients of the proof, starting with the definition of VC dimension.
		\begin{definition}
			\cite[Definition 1]{bartlett2019nearly} Let $H$ denote a class of functions mapping from $\mathcal{X}$ to $\{ 0,1 \}$. Define the growth function as
			\begin{equation*}
				\Pi_H ( m ) := \max_{x_1,\dots,x_m \in \mathcal{X}} | \{ (h(x_1),\dots, h ( x_m )): h \in H\}|, \quad \text{for } m \in \mathbb{N}. 
			\end{equation*}
			For a given set $\{ x_1,\dots, x_m \} \subseteq \mathcal{X}$, if $| \{ (h(x_1),\dots, h ( x_m )): h \in H\}| = 2^m$, we say that $H$ shatters $\{ x_1, \dots,x_m \}$. The \textit{Vapnik-Chervonenkis (VC) dimension} of $H$, denoted by $\text{VCdim} ( H )$, is the largest $m$ such that $\Pi_{H} ( m ) = 2^m$. If there is no such largest $m$, we set $\text{VCdim} ( H ) = \infty$.
		\end{definition}
		VC dimension upper bounds for certain families of ReLU networks are provided in \cite{bartlett2019nearly}. The results in \cite{bartlett2019nearly} apply, however, only to families of network realizations whose associated configurations have a fixed architecture, whereas $\mathcal{N}(W,L)$, the object of interest here, consists of network configurations with different architectures. The following result shows how \cite[Eq. (2)]{bartlett2019nearly} can be adapted to this setting.

		\begin{lemma}
			For all $W \in \mathbb{N}$ and $L \in \mathbb{N}$, with $L \geq 2$, we have 
			\begin{equation}
				\label{eq:vc_dimension_upper_bounds}
				\text{VCdim}\,( \text{sgn} \circ \mathcal{R} ( W,L )  ) \leq C_h W^2L^2 (\log ( W ) + \log ( L )),\quad 
			\end{equation}
			where $\text{sgn} \circ \mathcal{R} ( W,L )  := \{ \text{sgn}\circ f: f \in \mathcal{R} ( W,L ) \}$ and $C_h$ is an absolute constant. 
			\begin{proof}
				Fix $W\in \mathbb{N}$ and $L \in \mathbb{N}$,  with $L \geq 2$, throughout the proof. Consider the set $\mathcal{N}^* ( 2W,L ) = \{ ( A_\ell,b_\ell )_{\ell = 1}^{L}: A_1 \in \mathbb{R}^{2W \times 1}, b_1 \in \mathbb{R}^{2W}, A_L \in \mathbb{R}^{1 \times 2W}, b_L \in \mathbb{R}^1, A_\ell \in \mathbb{R}^{2W \times 2W}, b_\ell \in \mathbb{R}^{2W}, \text{ for } \ell \in \{ 2,\dots, L-1 \} 
    \}$ consisting of all network configurations with the  fixed architecture
				\begin{equation}
				\label{eq:defining_architecture}
					( N_\ell )_{\ell = 0}^L =  (1, \underbrace{2W, \dots,}_{\text{ repeats } (L-1) \text{ times}} 1).
				\end{equation}
				The associated family of network realizations is $\mathcal{R}^* ( 2W,L ) = \{  R (\Phi ): \Phi \in  \mathcal{N}^* ( 2W,L )\}$. We note that the network configurations in  $\mathcal{N}^* ( 2W,L )$ have $ n(2W,L) := 6W + 1 + ( L-2 ) ( (2W)^2 + 2W )$ weights.   As $\mathcal{R}^* ( 2W,L )$ consists of realizations of network configurations with the fixed architecture \eqref{eq:defining_architecture}, we can apply the  results in \cite{bartlett2019nearly}. Specifically, it follows that 
				\begin{align}
					\text{VCdim} ( \text{sgn} \circ\mathcal{R}^* (2W,L )  ) \leq&\, C  n ( 2W,L ) L \log ( n ( 2W,L ) ) \label{eq:vc_dimension_upper_bounds_vanilla_1} \\
					\leq &\, C ( 13W^2 L ) L \log ( 13 W^2 L ) \label{eq:vc_dimension_upper_bounds_vanilla_2} \\
					\leq &\, 104\, C W^2 L^2 ( \log(W) + \log(L) ), \label{eq:vc_dimension_upper_bounds_vanilla_3}
				\end{align}
				where in \eqref{eq:vc_dimension_upper_bounds_vanilla_1} we used \cite[Eq.(2)]{bartlett2019nearly} with $C \in \mathbb{R}_+$ an absolute constant, \eqref{eq:vc_dimension_upper_bounds_vanilla_2} follows from $n ( 2W,L ) \leq 13 W^2 L $, and  \eqref{eq:vc_dimension_upper_bounds_vanilla_3} is owing to $\log(13 W^2 L) \leq \log((WL)^8) = 8 ( \log(W) + \log(L) )$.

				We continue by showing that $R ( W,L ) \subseteq R^* ( 2W, L )$, which will then allow us to conclude that \eqref{eq:vc_dimension_upper_bounds_vanilla_3} also upper-bounds $\text{VCdim} ( \text{sgn} \circ \mathcal{R} ( W,L )  )$. To this end, fix  an $f \in R ( W,L ) $. It follows from Lemma~\ref{lem:extension} that there exists a network configuration $\overline{\Phi} = ( \overline{A}_\ell, \overline{b}_\ell )_{\ell = 1}^L \in \mathcal{N} (\max \{ W, 2 \},L) \subseteq \mathcal{N} (2W,L)$  such that  $R ( \overline{\Phi} ) = f$. Next, we enlarge the layers of $\overline{\Phi}$ such that the resulting configuration has the architecture \eqref{eq:defining_architecture} while realizing the same function $f$. To this end, denote the architecture of $\overline{\Phi}$ by $( \overline{N}_\ell )_{\ell = 0}^L$, and note that $\overline{N}_0 = 1 = N_0$, $\overline{N}_L = 1 = N_L$, and $\overline{N}_{\ell} \leq 2W = N_\ell$, for $\ell \in \{ 2,\dots, L-1 \} 
    $. Now augment the configuration $\Phi$ to the architecture in \eqref{eq:defining_architecture} according to  $\widetilde{\Phi} = ( \tilde{A}_\ell, \tilde{b}_\ell )_{\ell = 1}^L \in \mathcal{N}^* ( 2W,L )$, with 
				\begin{equation*}
					\tilde{A}_\ell = \begin{pmatrix}
						\overline{A}_\ell & 0_{ \overline{N}_\ell \times ( N_{\ell-1}  - \overline{N}_{\ell-1} ) }\\
						0_{(N_{\ell}  - \overline{N}_{\ell}) \times \overline{N}_{\ell-1}} & 0_{(N_{\ell}  - \overline{N}_{\ell}) \times   (N_{\ell-1}  - \overline{N}_{\ell-1} )}
					\end{pmatrix}, \quad \tilde{b}_\ell = \begin{pmatrix}
						\overline{b}_\ell\\
						0_{N_\ell - \overline{N}_{\ell}}
					\end{pmatrix}, \quad \ell = 1,\dots, L.
				\end{equation*}
				We then have $R ( \widetilde{\Phi} ) = R ( \overline{\Phi})$ and thereby $f = R ( \widetilde{\Phi} ) = R ( \overline{\Phi}) \in R^* ( 2W, L )$. As the choice of $f \in R ( W,L ) $ was arbitrary, we have established that $R ( W,L ) \subseteq R^* ( 2W, L )$ as announced. We can finally conclude that
				\begin{align*}
					\text{VCdim} ( \text{sgn} \circ \mathcal{R} ( W,L )  ) \leq \text{VCdim} ( \text{sgn} \circ\mathcal{R}^* ( 2W,L )  ) \leq 104 C W^2 L^2 ( \log(W) + \log(L) ),
				\end{align*}
				where in the last inequality we used \eqref{eq:vc_dimension_upper_bounds_vanilla_1}-\eqref{eq:vc_dimension_upper_bounds_vanilla_3}. The proof is concluded by setting $C_h := 104 C$.
			\end{proof}
		\end{lemma}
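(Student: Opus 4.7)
The plan is to reduce the claim about the variable-architecture family $\mathcal{R}(W,L)$ to a single fixed-architecture family, to which the standard VC dimension bound from \cite{bartlett2019nearly} applies directly. The difficulty is that $\mathcal{R}(W,L)$ is the union over many different architectures of depth $\leq L$ and width $\leq W$, while the known bound $\mathrm{VCdim}(\mathrm{sgn}\circ H) \leq C\, n L \log(n)$ of \cite[Eq.~(2)]{bartlett2019nearly} is stated for $H$ the realization set of a \emph{fixed} architecture with $n$ weights and depth $L$. The plan is therefore to find one fixed architecture whose realization set dominates $\mathcal{R}(W,L)$ (in the set-theoretic sense), and then just plug its parameter count into that bound.

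First I would fix a ``universal'' architecture of depth exactly $L$ with all hidden layers of width $2W$ (input and output widths $1$); call the resulting realization set $\mathcal{R}^*(2W,L)$. The main technical step is to argue that $\mathcal{R}(W,L) \subseteq \mathcal{R}^*(2W,L)$. For this I would invoke an extension/padding lemma (the paper already has \texttt{Lemma~\ref{lem:extension}}, which I assume does depth-padding of shallower configurations up to depth $L$ while staying within width $\max\{W,2\}$; the constant $2$ appears because one needs $x=\rho(x)-\rho(-x)$ to realize identity via ReLU). Once depth is equal to $L$, I would pad the widths up to $2W$ by zero-padding the weight matrices and bias vectors blockwise, so that the augmented network computes the same function as the original.

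Second, with $\mathcal{R}(W,L) \subseteq \mathcal{R}^*(2W,L)$ and the trivial monotonicity $\mathrm{VCdim}(\mathrm{sgn}\circ H_1) \leq \mathrm{VCdim}(\mathrm{sgn}\circ H_2)$ whenever $H_1 \subseteq H_2$, it suffices to bound $\mathrm{VCdim}(\mathrm{sgn}\circ \mathcal{R}^*(2W,L))$. Counting weights gives $n(2W,L) = 6W+1 + (L-2)((2W)^2 + 2W) = O(W^2 L)$, and applying \cite[Eq.~(2)]{bartlett2019nearly} yields
\begin{equation*}
\mathrm{VCdim}(\mathrm{sgn}\circ \mathcal{R}^*(2W,L)) \;\leq\; C\, n(2W,L)\, L\, \log(n(2W,L)) \;\leq\; C'\, W^2 L^2 \log(W^2 L),
\end{equation*}
and $\log(W^2 L) \leq 3(\log W + \log L)$ absorbs everything into the desired form $C_h W^2 L^2(\log W + \log L)$.

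The main obstacle I anticipate is the depth-padding step: embedding a depth-$\ell$ network ($\ell < L$) into a depth-$L$ architecture requires $(L-\ell)$ ``identity-like'' layers built from ReLUs, which is the reason the universal architecture needs width $2W$ rather than $W$ (at least $2$ neurons per scalar channel to express $\mathrm{Id}(x) = \rho(x)-\rho(-x)$). If \texttt{Lemma~\ref{lem:extension}} already provides this depth extension inside width $\max\{W,2\}$, the rest is bookkeeping; otherwise a short direct construction using this ReLU identity gadget, concatenated $L-\ell$ times, closes the gap. No other step beyond this embedding and the direct application of \cite[Eq.~(2)]{bartlett2019nearly} should be required.
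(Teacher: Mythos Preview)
Your proposal is correct and follows essentially the same route as the paper: embed $\mathcal{R}(W,L)$ into the fixed-architecture family $\mathcal{R}^*(2W,L)$ via depth-padding (using Lemma~\ref{lem:extension}, which indeed supplies the $\rho(x)-\rho(-x)$ identity gadget and outputs a width-$\max\{W,2\}$ depth-$L$ configuration) followed by zero-padding of the weight matrices, then apply \cite[Eq.~(2)]{bartlett2019nearly} to the fixed architecture and simplify. The paper's proof differs only in the bookkeeping constants (it bounds $n(2W,L)\le 13W^2L$ and $\log(13W^2L)\le 8(\log W+\log L)$, arriving at $C_h=104C$).
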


		With the VC dimension upper bound \eqref{eq:vc_dimension_upper_bounds}, we are now ready to proceed to the proof of Proposition~\ref{prop:lower_bound_approximation_VC_dimension}.
		\begin{proof}
			[Proof of Proposition~\ref{prop:lower_bound_approximation_VC_dimension}] 
			Set $c_v = ( 4 ( C_h + 1 ) )^{-1}$, where $C_h$ is the constant in \eqref{eq:vc_dimension_upper_bounds}. Suppose, for the sake of contradiction, that the approximation error lower bound \eqref{eq:lower_bound_VC_dimension} does not hold with this $c_v$. This would then imply the existence of a $W \in \mathbb{N}$ and an $L \in \mathbb{N}$, with $L\geq 2$, such that 
			\begin{align}
				\mathcal{A}_\infty	( \lip  ( [0,1] ), \mathcal{R} (W, L )  ) <&\, c_v ( W^2 L^2 ( \log (W) + \log (L) ) )^{-1}. \label{eqline:vc_contradiction_1}
			\end{align}
			Let $N = \lceil C_h W^2L^2 (\log ( W ) + \log ( L ))\rceil$.  We shall show that \eqref{eqline:vc_contradiction_1} implies that the family $\text{sgn} \circ \mathcal{R} ( W,L ) $ shatters the set $\{ \frac{0}{N},\dots, \frac{N}{N} \}$, which will then lead to a contradiction to \eqref{eq:vc_dimension_upper_bounds}. To this end, fix $( \theta_0,\dots, \theta_N) \in \{ 0,1 \}^{N+1}$, and let  $f \in H^1 ( [0,1] )$ be given by 
			\begin{equation*}
				f(x) = (\theta_i - \theta_{i-1}) \biggl( x - \frac{i -1}{N} \biggr) + \frac{2\theta_{i-1} -1}{2N},\quad  x \in \biggl[\frac{i -1}{N}, \frac{i}{N}\biggr],\,\,  i =1,\dots, N,
			\end{equation*}
			such that, for $i = 0,\dots, N$,
			\begin{equation}
				\label{eq:shattering_f}
				f \biggl( \frac{i}{N} \biggr) =\frac{2 \theta_i - 1}{2N} = 
				\begin{cases}
					\frac{1}{2N},& \text{if } \theta_i = 1,\\
					-\frac{1}{2N}, & \text{if } \theta_i = 0.
				\end{cases}
			\end{equation}
			Then, according to \eqref{eqline:vc_contradiction_1}, there exists a ReLU network realization $g \in \mathcal{R} (W, L)$ such that $\| g - f \|_{L^\infty ( [0,1] )} < c_v ( W^2 L^2 ( \log (W) + \log (L) ) )^{-1}  = \frac{1}{4} ( ( C_h + 1 )W^2 L^2 ( \log (W) + \log (L) )  )^{-1} \leq \frac{1}{4} (  \lceil C_h W^2 L^2 ( \log (W) + \log (L) )  \rceil)^{-1}= \frac{1}{4N}$, which in combination with \eqref{eq:shattering_f}  implies that,  for $i = 0,\dots, N$, 
			\begin{equation*}
				g \biggl( \frac{i}{N} \biggr) 
				\begin{cases}
					> 0,& \text{if } \theta_i = 1,\\
					<0, & \text{if } \theta_i = 0,
				\end{cases}
			\end{equation*}
			and therefore $( \text{sgn}\circ g )  ( \frac{i}{N} )  = \theta_i$, for $i = 0,\dots, N$. Upon noting that $( \text{sgn}\circ g ) \in \text{sgn} \circ  \mathcal{R} ( W,L ) $, we have 
			\begin{equation*}
				 (\theta_0,\dots, \theta_N)  \in  \biggl\{ \biggl(h\biggl(\frac{0}{N}\biggr),\dots, h \biggl( \frac{N}{N} \biggr)\biggr) : h \in \text{sgn} \circ  \mathcal{R} ( W,L ) \biggl\}. 
			\end{equation*}
			Since the choice of  $\{ \theta_0,\dots, \theta_N \} \in \{ 0,1 \}^{N+1}$ was arbitrary, we have, indeed, established that
			\begin{equation*}
				\{ 0,1 \}^{N+1}  \subseteq  \biggl\{ \biggl(h\biggl(\frac{0}{N}\biggr),\dots, h \biggl( \frac{N}{N} \biggr)\biggr) : h \in \text{sgn} \circ  \mathcal{R} ( W,L )  \biggl\},
			\end{equation*}
			and therefore $| \{ (h(\frac{0}{N}),\dots, h ( \frac{N}{N} )) : h \in \text{sgn} \circ \mathcal{R} ( W,L )  \} | = | \{ 0,1 \}^{N+1} |  = 2^{N+1}$. This proves that $\text{sgn} \circ \mathcal{R} ( W,L ) $ shatters the set $\{ \frac{0}{N},\dots, \frac{N}{N} \}$, which in turn leads to 
			\begin{align*}
				\text{VCdim}( \text{sgn} \circ \mathcal{R} ( W,L )  ) \geq \biggl| \biggl\{ \frac{0}{N},\dots, \frac{N}{N} \biggr\} \biggr| =   N + 1 >  C_h W^2L^2 (\log ( W ) + \log ( L )),
			\end{align*}
			and thus stands in contradiction to the VC dimension upper bound \eqref{eq:vc_dimension_upper_bounds}. Therefore, \eqref{eq:lower_bound_VC_dimension} must hold. 

			Upon noting that $\mathcal{R}_\mathbb{A} (W, L)   \subseteq \mathcal{R} (W, L   )$, which follows from $\mathcal{N}_\mathbb{A} (W, L)   \subseteq \mathcal{N} (W, L  )  $, we obtain \eqref{eq:lower_bound_VC_dimension_general_weights} from \eqref{eq:lower_bound_VC_dimension}. This concludes the proof.
 		\end{proof}

\section{Proof of Theorem~\ref{thm:approximation_lip}} 
\label{sec:approximate_Lip1}

	We prove Theorem~\ref{thm:approximation_lip} with $\mathcal{R} ( W,L,1 )$ replaced by $\mathcal{R} ( W,L,W^K )$, where $K \in \mathbb{N}$ is an absolute constant, i.e., for networks with weight magnitude growing polynomially in $W$, and then relax this polynomial dependency using Proposition~\ref{prop:depth_weight_magnitude_tradeoff}. 

	\begin{proposition}
	\label{prop:approximation_lip_increasing_weights}
		There exist absolute constants $C_a,D_a  \in \mathbb{R}_+$ and an absolute constant $K \in \mathbb{N}$, such that for all $W,L \in \mathbb{N}$ with $W,L \geq D_a$,
		\begin{equation}
			\begin{aligned}
				\mathcal{A}_\infty	( \lip  ( [0,1] ), \mathcal{R} (W, L, W^K  )  ) \leq   C_a  ( W^2 L^2 \log (W)  )^{-1}.
			\end{aligned}
		\end{equation}
		\begin{proof}
			See Appendix~\ref{sub:preparation_to_prove_proposition_prop:approximation_lip_increasing_weights} for preparatory material and then Appendix~\ref{sub:proof_of_proposition_sub:preparation_to_prove_proposition_prop:approximation_lip_increasing_weights} for the actual proof.
		\end{proof}
	\end{proposition}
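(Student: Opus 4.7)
The plan is to approximate $f \in \lip([0,1])$ via a three-stage pipeline whose most delicate piece is a refined bit-extraction subnetwork with weight magnitude polynomial (rather than exponential) in the network dimensions. Set the target accuracy to $\varepsilon \asymp (W^2 L^2 \log W)^{-1}$ and choose a uniform grid of $N+1$ nodes $x_i = i/N$ in $[0,1]$ with $N \asymp W^2 L^2 \log W$. Since $f$ is $1$-Lipschitz, the piecewise linear interpolant of the samples $f(x_0), \dots, f(x_N)$ already matches $f$ to within $1/N \asymp \varepsilon$ in $L^\infty([0,1])$. It therefore suffices to realize a piecewise linear function specified by these $N+1$ values through a ReLU network of depth $L$, width $W$, and weight magnitude at most $W^K$ for some absolute $K$.

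Next I would quantize each sample $f(x_i)$ to roughly $\lceil \log N \rceil$ bits so that the associated quantized interpolant still differs from $f$ by $O(\varepsilon)$, and encode the resulting $\Theta(N \log N) = \Theta(W^2 L^2 \log^2 W)$ bits into a modest number of real numbers stored as their binary expansions. The guiding principle is to spread the encoded bits across \emph{both} depth and width: instead of stuffing all bits into one real number and extracting them sequentially down $L$ layers (which is what forces weight magnitude $\sim 2^L$ in the classical bit-extraction of \cite{bartlett1998almost, bartlett2019nearly}), I distribute them into $\Theta(W)$ parallel real numbers, each of length $\Theta(W L^2 \log^2 W / 1) $ bits, and design each layer to peel off a batch of roughly $\log W$ bits per encoded number. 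In this way, a single layer accomplishes a shift of the binary point by $O(\log W)$ positions, which costs weights of magnitude $W^{O(1)}$ rather than $2^{\Omega(L)}$, while the depth budget $L$ remains sufficient because the total depth required is only total bits divided by (bits per layer per neuron).

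The bit-extraction gadget itself would be built from the standard two-neuron ``peel one bit'' block $y \mapsto (2y - \mathrm{bit}_1(y), \mathrm{bit}_1(y))$, but composed in the parallelized configuration above and preceded by a grid-selection subnetwork that, given $x \in [0,1]$, activates exactly the block of encoded bits corresponding to the interval containing $x$; such selection can be implemented with the usual ReLU hat functions $\rho(x - x_{i-1}) - 2\rho(x - x_i) + \rho(x - x_{i+1})$, whose weights are bounded. A final affine layer combines the extracted quantized samples $\tilde f(x_i)$ and $\tilde f(x_{i+1})$ with a linear interpolation coefficient in $x$ to output the piecewise linear approximation. The error decomposes cleanly into (i) the interpolation error $\lesssim 1/N$, (ii) the sample-quantization error $\lesssim 1/N$, and (iii) zero from exact bit extraction, giving the claimed bound.

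The main obstacle is the parallel bit-extraction construction in the middle stage: one has to verify simultaneously that the encoding injects $\Theta(W^2 L^2 \log^2 W)$ bits, that each layer's affine map has entries bounded by $W^K$, that width never exceeds $W$ despite carrying $\Theta(W)$ parallel extractors plus the grid-selection and running-sum machinery, and that the routing between the selection subnetwork and the extracted-bits subnetwork correctly gates the per-layer outputs toward the right interval. The book-keeping is delicate because the naive parallelization doubles widths at each composition, so one must interleave extraction with discarding of already-consumed bits via ReLU identity channels. Once Proposition~\ref{prop:approximation_lip_increasing_weights} is established with weight magnitude $W^K$, Theorem~\ref{thm:approximation_lip} follows by invoking the announced depth-versus-weight-magnitude tradeoff (Proposition~\ref{prop:depth_weight_magnitude_tradeoff}), which converts the $W^K$-magnitude network into a functionally equivalent network of magnitude $1$ at the price of a constant-factor depth increase, absorbed into the constants $C, D$.
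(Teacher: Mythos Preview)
Your proposal has the right high-level spirit---bit extraction with per-layer shifts of $O(\log W)$ bits so that weights stay polynomial in $W$---but there is a genuine capacity gap that makes the scheme as written fall short of the target rate.

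\textbf{The bit budget does not close.} You need to encode $\Theta(1/\varepsilon)=\Theta(W^{2}L^{2}\log W)$ bits of information about $f$ (this is the metric entropy of $\lip([0,1])$ at scale $\varepsilon$). Your pipeline stores $\Theta(W)$ real numbers and peels $O(\log W)$ bits from each per layer over $L$ layers, for a total of $\Theta(WL\log W)$ extracted bits---off by a factor of $WL$ from what is required. The paper closes this gap by a \emph{two-scale} design: it first realizes piecewise-constant functions $b^{\pm}(x)$ that, depending on which of $\Theta(m^{2}\ell)\asymp\Theta(W^{2}L)$ coarse intervals $x$ lies in, output one of $\Theta(W^{2}L)$ stored encoded numbers; only then is a single bit-extractor $F_{n,\ell}$ applied to that selected number to pull out $\Theta(L\log W)$ bits. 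Thus the network stores $\Theta(W^{2}L)$ encoded reals (as the weights of the piecewise-linear realizers of Proposition~\ref{prop:piecewise_representation}) and routes to the correct one, rather than carrying $\Theta(W)$ encoded reals in parallel. Your ``grid-selection subnetwork'' would have to do this routing over $\Theta(W^{2}L^{2}\log W)$ intervals, which is itself a nontrivial construction (the paper's Proposition~\ref{prop:piecewise_representation}) that you treat as immediate.

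\textbf{Binary peeling is not robust enough.} The ``standard two-neuron peel-one-bit'' block on a \emph{binary} expansion is exactly what forces large Lipschitz constants: the strings $0.0\mathbf{1}\cdots\mathbf{1}$ and $0.1\mathbf{0}\cdots\mathbf{0}$ differ by $2^{-s}$ yet must decode to different first bits, so any continuous decoder has Lipschitz constant $\geq 2^{s}$ (this is the paper's Example~\ref{example:single_digit_extraction}). The paper's fix---encode $\{0,1\}$-strings as \emph{ternary} numbers $\sum\theta_{i}3^{-i}$ with reduced alphabet---creates a constant-width gap between the ``first-digit $0$'' and ``first-digit $1$'' regions, so each extraction round has Lipschitz constant $O(1)$ independent of the residual length. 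Without this (or an equivalent reduced-alphabet trick), your per-layer shift inherits instability from the remaining $\Theta(WL^{2}\log^{2}W)$ unextracted bits.

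Two smaller points: your ``final affine layer'' that forms $\tilde f(x_{i}) + (x-x_{i})(\tilde f(x_{i+1})-\tilde f(x_{i}))/(x_{i+1}-x_{i})$ is not affine in the network's intermediate quantities (it multiplies the extracted value by $x$), and the paper instead uses a piecewise-constant approximation on a punctured domain followed by median smoothing (Lemma~\ref{lem:sec_B_3}) to recover a uniform $L^{\infty}$ bound---a step your outline does not address.
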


	The proof of Theorem~\ref{thm:approximation_lip} is  now effected by applying Proposition~\ref{prop:depth_weight_magnitude_tradeoff} along with Proposition~\ref{prop:approximation_lip_increasing_weights} as follows.
		Set $D = \max \{ ( 4K+2 )\lceil D_a \rceil, 10 \}$, where $D_a$ is the constant specified in Proposition~\ref{prop:approximation_lip_increasing_weights}. Fix $W,L \in \mathbb{N}$ with $W,L \geq D$.  Let $U = \lfloor \frac{L}{2K+1} \rfloor$, ensuring that $( 2K+1 )U \leq L$ and  $U \geq \lfloor \frac{D}{2K+1} \rfloor \geq \bigl\lfloor \frac{(4K +2) \lceil D_a \rceil}{2K+1} \bigr\rfloor \geq D_a$. We have
		\begin{align}
			\mathcal{R} (W, U, W^K )  \subseteq& \, R ( W, (2K+1)U, 1 )  \label{eqline:proof_main_1} \\
			\subseteq&\, \mathcal{R} (W,L,1), \label{eqline:proof_main_2}
		\end{align} 
		where \eqref{eqline:proof_main_1} is a consequence of Proposition~\ref{prop:depth_weight_magnitude_tradeoff} with $(W,L,L',B,B') $ replaced by $( W, U, 2KU, \allowbreak W^K,  1   )$ and the prerequisite satisfied as $\frac{(\lfloor W\slash 2 \rfloor)^{2KU}}{(W^K)^{U}} \geq \frac{ W^{KU}}{W^{KU}} = 1$ thanks to $( \lfloor x \slash 2  \rfloor )^2 \geq x $, for  $ x = W \geq D  =\max \{ ( 4K+2 )\lceil D_a \rceil, 10 \} \geq  10$, and  in \eqref{eqline:proof_main_2} we used $(2K+1)U \leq L$.  We now get 
		\begin{align}
			\mathcal{A}_\infty	( \lip  ( [0,1] ), \mathcal{R} (W, L, 1  )  )  \leq &\: \mathcal{A}_\infty	( \lip  ( [0,1] ), \mathcal{R} (W, U, W^K  ) ) \label{eqline:W_to_U}\\
			\leq &\: C_a  ( W^2 U^2 \log (W)  )^{-1} \label{eqline:apply_main_proposition_U} \\
			\leq &\: C_a \Bigl( W^2 \Bigl( \frac{1}{2}\cdot \frac{L}{2K+1} \Bigr)^2 \log (W)  \Bigr)^{-1} \label{eqline:U_greater_than_L} \\
			=&\: ( 4K + 2 )^2 C_a ( W^2 L^2 \log (W) )^{-1},
		\end{align}
		where in \eqref{eqline:W_to_U} we used the inclusion \eqref{eqline:proof_main_1}-\eqref{eqline:proof_main_2}, in \eqref{eqline:apply_main_proposition_U} we applied Proposition~\ref{prop:approximation_lip_increasing_weights} with $C_a$ as specified in Proposition~\ref{prop:approximation_lip_increasing_weights}, and \eqref{eqline:U_greater_than_L} follows from $U = \bigl\lfloor \frac{L}{2K+1} \bigr\rfloor \geq \frac{1}{2} \cdot \frac{L}{2K+1} $, upon noting that $U \geq D_a > 0$. The proof is finalized by taking $C = ( 4K +2 )^2 C_a$.

	\subsection{Preparation for the Proof of Proposition~\ref{prop:approximation_lip_increasing_weights}} 
	\label{sub:preparation_to_prove_proposition_prop:approximation_lip_increasing_weights}

	The proof of Proposition~\ref{prop:approximation_lip_increasing_weights} is based on two specific ingredients, namely the realization of one-dimensional bounded piecewise linear functions by ReLU networks and the bit extraction technique.
	

	\subsubsection{Realizing One-Dimensional Bounded Piecewise Linear Functions by ReLU Networks} 
	\label{ssub:realizing_piece_wise_linear_function_by_relu_networks}
	

	We start with the definition of one-dimensional bounded piecewise linear functions.
	\begin{definition}
		[One-dimensional bounded piecewise linear functions]
		Let $M \in \mathbb{N}$, with $M \geq 3$, $E \in \mathbb{R}_+ \cup \{ \infty \}$, and let $X = (x_i)_{i= 0}^{M-1}$ be a strictly increasing sequence taking values in $\mathbb{R}$. Define the set of functions
		\begin{alignat*}{2}
			\Sigma ( X, E) = \bigl\{f \in&&\, C ( \mathbb{R} ): \| f \|_{L^\infty ( \mathbb{R} )}  \leq E,  f \text{ is constant on } (-\infty, x_0] \text{ and } [x_{M-1}, \infty),\, \\
			&&\,f \text{ is affine on } [x_i, x_{i +1}], \, i = 0,\dots, M -2 \bigr\}.
		\end{alignat*}
	\end{definition}
	For a function $f \in \Sigma ( X, E)$, we call $X$ the set of breakpoints of $f$, as the slope of $f$ can change only at these points. We  refer to the intervals $(-\infty, x_0], [x_i, x_{i +1}], \, i = 0,\dots, M -2, [x_{M-1}, \infty)$ as the piecewise linear regions of $f$.

	We now show how one-dimensional bounded piecewise linear functions can be realized through  ReLU networks while retaining control over the networks' weight magnitude. To avoid dealing with tedious corner cases, we restrict ourselves to  $| X |  \geq 3$ and $X \subseteq [0,1]$, which, as seen later, suffices to cover what is needed in the proof of Proposition~\ref{prop:approximation_lip_increasing_weights}.  

	\begin{proposition}
		\label{prop:piecewise_representation}
		Let $M \in \mathbb{N}$, with $M \geq 3$, $E \in \mathbb{R}_+$, and let $X = (x_i)_{i = 0}^{M-1}$ be a strictly increasing sequence taking values in $[0,1]$. For all $u,v \in \mathbb{N}$, and $w \in \mathbb{R}$ with $w \geq 1$, such that 
			\begin{align}
				u^2 v \geq&\, M, \label{eqline:cpwl_realization_architecture_condition}\\
				w^{30v} \geq&\, {M^6 ( R_m(X) )^4 E}, \label{eqline:cpwl_realization_weight_condition}
			\end{align}
			with $R_m(X) := \max_{i =1,\dots, M -1 } ( x_{i} - x_{i -1} )^{-1}$, we have
			\begin{align*}
				\Sigma ( X, E) \subseteq&\, \mathcal{R} ( 20u, 30v, 2w).
			\end{align*}
		\begin{proof}
			The proof, detailed in Appendix~\ref{sub:piecewise_linear_representation}, is constructive, in the sense of explicitly specifying a network realizing a given $f \in \Sigma ( X, E)$.
		\end{proof}
	\end{proposition}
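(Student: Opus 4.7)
The plan is to realize $f \in \Sigma(X, E)$ by a three-stage composition: a \emph{locator} that identifies the affine piece containing the input, a \emph{coefficient extractor} that produces the slope and intercept of that piece, and an \emph{assembler} that computes the resulting affine value. Each stage will be constructed to fit within width proportional to $u$ and depth proportional to $v$, so that the overall network lies in $\mathcal{R}(20u, 30v, 2w)$. The condition $u^2 v \geq M$ governs the ``storage capacity'' available to address the $M$ pieces, while $w^{30v} \geq M^6 R_m(X)^4 E$ governs the numerical resolution available per encoded coefficient.

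First, the locator: given an input $x \in \mathbb{R}$, one clamps it to $[x_0, x_{M-1}]$ with a constant-depth ReLU block (with weights bounded by $1$) and then computes the index $i \in \{0, \ldots, M-2\}$ such that $x \in [x_i, x_{i+1}]$, together with the residual $r := x - x_i$. The indexing is implemented as a balanced comparison structure over the $M$ breakpoints; since adjacent breakpoints are at distance at least $1/R_m(X)$, the thresholding needed to resolve them can be done with weights bounded by a power of $R_m(X)$, which is absorbed by the slack in $w$. The depth consumed is on the order of $\log M$, which stays within the budget $30v$ thanks to $u^2 v \geq M$.

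Second, the coefficient extractor: each of the $M-1$ affine pieces is determined by two real parameters, $f(x_i)$ and the slope $s_i := (f(x_{i+1}) - f(x_i))/(x_{i+1} - x_i)$; using $\|f\|_{L^\infty(\mathbb{R})} \leq E$ and the gap bound one sees $|f(x_i)| \leq E$ and $|s_i| \leq 2 E R_m(X)$. The refined bit extraction technique announced earlier in the paper is then invoked to encode these $2M$ parameters into the weights of a subnetwork that maps the index $i$ to the pair $(f(x_i), s_i)$, using width proportional to $u$ and depth proportional to $v$ with weight magnitude at most $2w$. The precision required per parameter is roughly $\log(M R_m(X) E)$ bits, which matches exactly the slack that $w^{30v} \geq M^6 R_m(X)^4 E$ buys. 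Finally, the assembler computes $f(x_i) + s_i \cdot r$: the product $s_i \cdot r$ of a bounded extracted coefficient with a bounded residual is realized by a constant-depth ReLU gadget (via the polarization identity together with the standard piecewise-linear implementation of the squaring map on bounded domains), and a single linear combination completes the evaluation. Concatenating the three stages in series, together with a constant-width channel that forwards $r$ around the extractor, yields a network within the declared budget $\mathcal{R}(20u, 30v, 2w)$.

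The main obstacle is the extractor. The classical bit extraction of \cite{bartlett1998almost, bartlett2019nearly} produces weights that grow exponentially in depth, which would blow through the weight budget $2w$. The technical heart of the argument is therefore the refined bit extraction technique: one must verify that the $M$ parameter pairs can be encoded and retrieved by a subnetwork of width $20u$ and depth $30v$ while keeping all weights below $2w$ under the polynomial-in-$w$ condition $w^{30v} \geq M^6 R_m(X)^4 E$, with the exponent $30v$ tracking the number of ``multiplicative layers'' used by the refinement and the polynomial $M^6 R_m(X)^4 E$ tracking the worst-case dynamic range of the encoded coefficients. Once this refinement is in place, the remaining bookkeeping---the clamping, the comparison structure, the fanout of the residual, and the final affine combination---is routine verification of the width and depth budgets $20u$ and $30v$.
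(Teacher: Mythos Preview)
Your proposal has a genuine gap that makes the argument fail: the assembler step requires computing the product $s_i \cdot r$ of two \emph{network outputs}, and you propose doing this exactly via ``the standard piecewise-linear implementation of the squaring map.'' But the squaring map is not piecewise linear, so no ReLU network realizes it exactly; the Yarotsky-type constructions only \emph{approximate} $x^2$ to a prescribed accuracy. Your scheme would therefore yield only an approximation of $f$, whereas the proposition asserts exact containment $\Sigma(X,E)\subseteq\mathcal{R}(20u,30v,2w)$. A related issue is that bit extraction delivers finitely many discrete digits, so the extractor cannot recover an arbitrary real coefficient $f(x_i)$ exactly either; for generic $f\in\Sigma(X,E)$ you would again incur an approximation error. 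Finally, your locator produces a discontinuous index $i$ as a function of $x$, and stitching this into a globally continuous, exactly-matching output at the breakpoints is an additional obstruction you do not address.

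The paper takes an entirely different, \emph{exact} route that avoids multiplication of network outputs altogether. It expands $f=\sum_i f(x_i)\gamma_i$ in the hat-function basis and then establishes (Lemma~\ref{lem:representation_two_hidden_layer}) that each $\gamma_{kt+\ell}$ admits a representation $\rho\circ f^1_{k,\ell}-\rho\circ f^2_{k,\ell}+\rho\circ f^3_{k,\ell}$, where the $f^j_{k,\ell}$ are themselves in $\Sigma((z_i)_{i=0}^{8u-1},\cdot)$ for a \emph{fixed} sparse set of $8u$ breakpoints, and where for each $(\ell,j)$ the family $\{f^j_{k,\ell}\}_k$ has pairwise disjoint supports. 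The disjoint-support property lets one collapse the sum over $k$ inside a single ReLU, so that $f$ becomes $H^+-H^-$ with $H^\pm=\sum_{\ell}\rho\circ h^\pm_\ell$ and each $h^\pm_\ell$ a two-layer ReLU expression over the common $8u$ breakpoints. Lemma~\ref{lem:realization_piecewise_linear_functions} then realizes such sums by a deep, width-$O(u)$ network that accumulates the terms layer by layer. All of this is exact; the only products that occur are between fixed \emph{weights} and activations, never between two activations. The weight-magnitude bound $\max\{1,C_kM^6R_m(X)^4E\}$ emerges from bounding $\|f^j_{k,\ell}\|_\infty$, and the final reduction to magnitude $2w$ under the hypothesis $w^{30v}\ge M^6R_m(X)^4E$ is achieved not by bit extraction but by the depth--weight-magnitude tradeoff (Proposition~\ref{prop:depth_weight_magnitude_tradeoff}), which exploits positive homogeneity of ReLU to rescale the network. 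Bit extraction plays no role in this proposition; it enters only later, in the proof of Proposition~\ref{prop:approximation_lip_increasing_weights}.
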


Proposition~\ref{prop:piecewise_representation} makes the requirements on the width, depth, and weight magnitude of networks realizing functions $ f \in \Sigma ( X, E )$ explicit, in particular in two aspects. First, by  \eqref{eqline:cpwl_realization_architecture_condition} it suffices to choose the number of network weights, given by $W^2 L$ with $W = 20u$ and $L = 30v$, to be on the order of $M$ (the number of breakpoints of $f$). Second, 
\eqref{eqline:cpwl_realization_weight_condition} rewritten as $w \geq  (M^6 ( R_m(X) )^4 E)^{1\slash (30v)}$ shows that 
the weight magnitude $2w$ can be taken to grow no faster than polynomial in $M$, $R_m(X)$, and $E$, and is allowed to decrease with respect to network depth, $30v$, in an inverse exponential manner. While the first requirement is identical to those reported in \cite{daubechies2022nonlinear, shen2019deep, shen2019nonlinear}, the second one is novel and constitutes a relaxation relative to the constructions available in the literature.
Specifically, \cite{daubechies2022nonlinear} shows that $\Sigma ( X, \infty) \subseteq \mathcal{R} (W,L, \infty)$, with $W,L \in \mathbb{N}$ depending on $X$, in a way that does not allow for general conclusions on how the weight magnitude of the network depends on the function $f \in \Sigma ( X, \infty)$ to be realized. 
Scrutinizing the proofs in \cite{shen2019deep, shen2019nonlinear}, one finds that the weight magnitude of the networks constructed therein has to be at 
least exponential in the number of breakpoints of the piecewise linear function realized; this stands in stark contrast to our construction which requires 
polynomial growth only.

	\subsubsection{Bit Extraction} 
	\label{ssub:bit_extraction_realization}

	Another important ingredient for the proof of Proposition~\ref{prop:approximation_lip_increasing_weights} is the bit extraction technique, as first introduced in  \cite{bartlett1998almost} to derive a lower bound on the VC dimension of ReLU networks. A refinement of this technique improving the lower bound in  \cite{bartlett1998almost} was reported in \cite{bartlett2019nearly}. Further variants developed in the context of function approximation through ReLU networks can be found in \cite{yarotsky2018optimal, yarotsky2019phase, shen2021optimal, Kohler2019, Vardi2022}.

	There are two fundamental components constituting the bit extraction technique in all its variants. The first one encodes a string of elements from a finite alphabet into a real number. For example, \cite{bartlett1998almost} encodes the string $( \theta_i )_{i = 1}^s \in \{ 0,1 \}^s$ into the number $E ( ( \theta_i )_{i = 1}^s  ) = \sum_{i = 1}^s 2^{-i} \theta_i$. The references \cite{bartlett2019nearly, shen2021optimal, Vardi2022} also work with $\{ 0,1 \}$-alphabets,  while  \cite{yarotsky2018optimal, yarotsky2019phase, Kohler2019}  employ $k$-ary alphabets, $k \in \mathbb{N}$, with $k > 2$, but otherwise follow the philosophy of \cite{bartlett1998almost}. Specifically, all these approaches encode strings of $k$-ary digits into $k$-ary numbers. The second component consists of a decoder $D$, realized by a ReLU network, which extracts either individual elements of the string encoded into the real number or functions thereof. For example, the decoder $D: \mathbb{R}^2 \mapsto \mathbb{R}$ in \cite{bartlett1998almost} extracts the individual $\theta_\ell$, $\ell = 1, \dots, s,$ from $E ( ( \theta_i )_{i = 1}^s) $ according to
	\begin{equation*}
		D ( E ( ( \theta_i )_{i = 1}^s  ), \ell  ) = \theta_\ell, \quad \ell = 1,\dots, s,
	\end{equation*}
	whereas in \cite{shen2021optimal} the sum of leading elements of the string $( \theta_i )_{i = 1}^s$ is recovered through\footnote{We use the convention $\sum_{i=1}^{0} \theta_{i} = 0$.}
	\begin{equation}
	\label{eq:sum_extraction}
		D ( E ( ( \theta_i )_{i = 1}^s  ), k  ) = \sum_{i = 1}^\bit \theta_i, \quad k =0,\dots, s.
	\end{equation}


	The bit extraction technique we develop here is also based on the two constituents just described, but provides improvements and refinements---to be discussed as we go along---in ways that are fundamental to our purposes. We start with definitions and notation regarding the choice of alphabet and the encoding procedure.

	\begin{definition}
	\label{def:our_encoder}
			Let $\mathbb{T}$ be the set of ternary numbers with reduced alphabet $\{ 0,1 \}$ and possibly infinitely many digits, formally, 
			\begin{equation*}
				\mathbb{T} := \biggl\{ \sum_{i = 1}^\infty \theta_i 3^{-i}: \theta_i \in \{ 0,1 \}, i \in \mathbb{N}  \biggr\}.
			\end{equation*}
			Set $T ( ( \theta_1,\dots, \theta_s ) ) := \sum_{i = 1}^s \theta_i 3^{-i}$, for $s \in \mathbb{N}$, $\theta_i \in \{ 0,1,2 \}$, $i = 1,\dots, s$.
	\end{definition}

The basic idea underlying the first component of our variation of the bit extraction technique is to encode 
 $\{ 0,1 \}$-strings $( \theta_i )_{i = 1}^s$, $s \in \mathbb{N}$, into ternary numbers according to $T (( \theta_i )_{i = 1}^s) = \sum_{i = 1}^s \theta_i 3^{-i} \in \mathbb{T}$. 
	The ReLU network realizing the decoder we employ is specified in the following result, whose proof is constructive.

	\begin{proposition}
		\label{prop:bit_extraction}
		Let $N,L  \in \mathbb{N}$. There exists a function 
		\begin{equation}
		\label{eq:complexity_decoding_networks}
			F_{N,L} \in \mathcal{R} ( ( 2,1 ),2^{N+4}, 5L,  3^{N+2}) 
		\end{equation}
		such that for all $\sum_{i = 1}^{\infty} \theta_i 3^{-i} \in \mathbb{T}$ and $\bit \in \mathbb{N} \cup \{ 0 \}$,
		\begin{equation}
		\label{eq:computational_property_f_NL}
			F_{N,L}\biggl(\sum_{i = 1}^{\infty} \theta_i 3^{-i} , \bit\biggr) = \sum_{i=1}^{\min \{ NL,\,\bit \}} \theta_{i},
		\end{equation}
		where we use the convention $\sum_{i=1}^{0} \theta_{i} = 0$. In particular, for $s \in \mathbb{N}$ such that $s \leq NL $, $\theta_i \in \{ 0,1 \}$, $i = 1,\dots, s$, we have 
		 \begin{equation}
		 \label{eq:bit_extraction}
		 	F_{N,L}(T ( ( \theta_1,\dots, \theta_s ) ) , \bit) = \sum_{i = 1}^\bit \theta_i, \,\, k =0,\dots, s.
		 \end{equation}
		\begin{proof}
			See Appendix~\ref{sub:bit_extraction}.
		\end{proof}
	\end{proposition}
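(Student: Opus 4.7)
The plan is to realize $F_{N,L}$ as the composition of $L$ identical bit-extraction \emph{blocks}, each of depth~$3$ and width at most~$2^{N+3}$. A block maps a state $(x,\tilde k, S)$ to the updated state $(x', \tilde k - N, S')$, where $x' = 3^{N}(x - T(\theta_1', \ldots, \theta_N'))$ is the shifted remainder after peeling off the leading $N$ digits $(\theta_1',\ldots,\theta_N')$ of $x$, and $S' = S + \sum_{i=1}^{\min(N,\tilde k)} \theta_i'$. Starting from the initial state $(x,k,0)$, an easy induction on $\ell$ shows that after $\ell$ blocks the third coordinate equals $\sum_{i=1}^{\min(\ell N, k)}\theta_i$, so the final block's affine output layer returns $S_L = \sum_{i=1}^{\min(NL,k)}\theta_i$, as required by~\eqref{eq:computational_property_f_NL}.

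Inside a block, the leading $N$ digits are detected by trapezoidal indicators. Since distinct prefixes $\phi \in \{0,1\}^N$ satisfy $|T(\phi)-T(\phi')| \geq 3^{-N}$, while the tail is bounded by $\sum_{i=N+1}^{\infty} 3^{-i} = 3^{-N}/2$, each $\phi$ admits a trapezoid $\mathbf{1}_{\phi}(x)$, synthesized from four ReLUs of slope $4\cdot 3^{N}$, that evaluates to~$1$ on $[T(\phi),T(\phi)+3^{-N}/2]$ and to~$0$ outside $[T(\phi)-3^{-N}/4,\,T(\phi)+3^{-N}/2+3^{-N}/4]$; hence exactly one $\mathbf{1}_{\phi^{*}}$ equals~$1$ on any admissible input $x$. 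In parallel I compute $a_i = \rho(\tilde k - i + 1) - \rho(\tilde k - i)$, which equals $\mathbf{1}_{i \leq \tilde k}$ for integer $\tilde k$ and vanishes identically when $\tilde k < 0$. Because $c_\phi(\tilde k) := \sum_{i=1}^{N} \phi_i a_i \in \{0,\ldots,N\}$ and $\mathbf{1}_{\phi}(x) \in \{0,1\}$, the product-via-ReLU identity
\[
c_\phi(\tilde k) \cdot \mathbf{1}_{\phi}(x) \;=\; \rho\bigl(c_\phi(\tilde k) + N\,\mathbf{1}_{\phi}(x) - N\bigr)
\]
holds, so the partial-sum contribution $c_{\phi^{*}}(\tilde k) = \sum_{i=1}^{\min(N,\tilde k)}\theta_i'$ is realized by a single extra ReLU per pattern followed by a linear sum. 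The shifted remainder is produced as $x' = 3^{N}x - 3^{N}\sum_{\phi}T(\phi)\mathbf{1}_{\phi}(x)$, a linear combination that reuses the trapezoid ReLUs already computed.

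In terms of resources, Layer~1 of each block emits the $2^{N+2}$ trapezoid constituents, the $2N$ ReLUs $\rho(\tilde k-i+1)$ and $\rho(\tilde k - i)$, and six carry neurons $\rho(\pm x), \rho(\pm \tilde k), \rho(\pm S)$, giving width at most $2^{N+2}+2N+6 \leq 2^{N+3}$. Layer~2 forms the $2^{N}$ product ReLUs together with the carries $\rho(\pm x'), \rho(\pm(\tilde k - N)), \rho(\pm S)$, of width $2^{N}+6$. Layer~3 sums the products to produce $S'$ and, for intermediate blocks, emits $\rho(\pm S')$; in the final block Layer~3 is the linear output returning $S_L$. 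All nonzero weights are trapezoid slopes $\pm 4\cdot 3^{N}$, biases $\mp 4\cdot 3^{N}T(\phi) + O(1)$, remainder-scaling factors $\pm 3^{N}$ or $\pm 3^{N}T(\phi) \leq 3^{N}/2$, or integers in $\{\pm 1, \pm N\}$; each is bounded by $4\cdot 3^{N} \leq 3^{N+2}$, since $N \leq 3^{N+2}$ for all $N\geq 1$. Stacking the $L$ blocks gives total depth $3L \leq 5L$ and maximum width $2^{N+3} \leq 2^{N+4}$, so $F_{N,L} \in \mathcal{R}((2,1),\,2^{N+4},\,5L,\,3^{N+2})$.

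The main obstacle is simultaneously satisfying all three budgets, especially the weight-magnitude ceiling $3^{N+2}$, which is precisely the refinement distinguishing this construction from the legacy bit-extraction technique, where trapezoid slopes grow unboundedly with depth. The critical device is to reset the relevant scale to $3^{N}$ at every block via the renormalization $x \mapsto 3^{N}(x - T(\phi^{*}))$ at the block boundary, so that each block's geometry is identical. Finally,~\eqref{eq:bit_extraction} follows from~\eqref{eq:computational_property_f_NL} by setting $\theta_{i} = 0$ for $i > s$, which both makes the tail vanish and ensures $\min\{NL,k\} = k$ for $k \leq s \leq NL$.
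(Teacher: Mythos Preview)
Your proposal is correct and follows essentially the same architecture as the paper: both build $F_{N,L}$ from $L$ identical depth-$3$ blocks that (i) detect the leading $N$ ternary digits, (ii) add the appropriate partial sum $\sum_{i=1}^{\min\{N,\tilde k\}}\theta_i$ to a running accumulator, (iii) shift the encoding by $N$ digits via $x\mapsto 3^N(x-T(\phi^*))$, and (iv) decrement the counter; the crucial point---that the per-block renormalization keeps all weights bounded by $O(3^N)$ independently of $L$---is exactly the paper's insight. The only difference is inside the block: the paper detects each bit $\theta_\ell$ separately via indicators $h^\ell$ and combines with $m^\ell$ through the AND gadget $\rho(h^\ell+m^\ell-1)$, whereas you detect the full $N$-bit prefix $\phi$ at once and use the product gadget $\rho(c_\phi + N\mathbf{1}_\phi - N)$; both variants fit comfortably within the stated width and weight-magnitude budgets.
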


	It is worth emphasizing two aspects of the decoding ReLU network $F_{N,L}$ in Proposition~\ref{prop:bit_extraction}. First, the width of the network is determined solely by the parameter $N$, while its depth depends only on the parameter $L$, and both of these parameters can be chosen freely, and, in particular, independently of each other. 
  Moreover, the weight magnitude of the network is polynomial in network width and does not depend on network depth.
	This control over the weight magnitude sets our result apart from the existing literature, and is essential for the proof of Theorem~\ref{prop:approximation_lip_increasing_weights}. Specifically, the decoding networks in \cite{bartlett1998almost, yarotsky2018optimal, yarotsky2019phase, Kohler2019, Vardi2022} all impose constant network width and have weight magnitude that is exponential in network depth, those in
  \cite{bartlett2019nearly, shen2021optimal} allow for decoupled width and depth behavior, 
  but exhibit weight magnitudes that are exponential in network depth and polynomial in network width. We note that the aspect of weight magnitude behavior is not explicitly discussed in these references, but can be uncovered by scrutinizing the proofs of the corresponding results.

	The flexibility afforded by our construction can be attributed to the use of a reduced alphabet, i.e., $\{ 0,1 \}$ in a ternary expansion. While this idea is novel in the context of bit extraction, its roots can be traced back to the study of the computational power of neural networks \cite{Siegelmann1992}. Specifically, \cite{Siegelmann1992} encodes  $\{ 0,1 \}$-strings as quaternary numbers. We next illustrate the philosophy underlying this idea by way of an example.




	\begin{example}
	\label{example:single_digit_extraction}
	 	Let $n \in \mathbb{N}$ with $n \geq 4$.  We want to encode a $\{ 0,1\}$-string of length $n$ as a real number and extract the first element of the string from this number. For concreteness, we consider the $\{ 0,1 \}$-strings of length $n$ given by $s_1 = ( 1,0,\dots, 0 )$ and $s_2 = (0,1,\dots, 1)$, and compare the Lipschitz constants of the decoders associated with base-$2$ and base-$3$ encoding, respectively. The reason for studying the Lipschitz  constant of the decoders resides in the fact that it determines the behavior of the weight magnitude of corresponding ReLU network realizations.

		We first perform base-$2$ encoding according to
  $E ( ( \theta_i )_{i = 1}^n  ) = \sum_{i = 1}^n 2^{-i} \theta_i$ with $( \theta_i )_{i = 1}^n \in \{ 0,1 \}^n$. This yields $b_1 := E ( s_1 ) = 2^{-1}$ and $b_2 := E ( s_2 ) = 2^{-1} - 2^{-n}$. Let $D$ be any decoder\footnote{There are infinitely many such decoders.} that extracts the first bit, i.e., $D ( E ( ( \theta_i )_{i = 1}^n  ) ) = \theta_1$, for $( \theta_i )_{i = 1}^n \in \{ 0,1 \}^n$. The Lipschitz constant of this decoder is at least exponential in $n$, which can be seen by evaluating
		\begin{equation}
		\label{eq:lip_lower_binary}
			\frac{\nleft| D ( E ( s_1 ) )  - D ( E ( s_2 ) )\nright| }{\nleft| E ( s_1 ) -  E ( s_2 ) \nright| } = 2^n.
		\end{equation}

		Alternatively, we can employ base-$3$ encoding 
  according to Definition~\ref{def:our_encoder} and take the corresponding decoder $\tilde{D}$ to be given by
		\begin{equation*}
			\tilde{D}(x) = 9 \rho(x - T(( 0,2))) - 9\rho(x - T((1,0))), \quad x \in \mathbb{R}.
		\end{equation*}
		We note that $\tilde{D}(x) = 0$, for $x \in [0,T(( 0,2))]$, $\tilde{D}(x) = 1$, for $x \in [T((1,0)), \infty)$, and therefore $\tilde{D} ( T ( ( \theta_i )_{i = 1}^n  ) ) =\theta_1$, for $( \theta_i )_{i = 1}^n \in \{ 0,1 \}^n$.  The Lipschitz constant of this decoder is given by $9$, and is hence independent of $n$. 


		

	\end{example} 

	Example~\ref{example:single_digit_extraction} provides a heuristic argument for reduced alphabets leading to better weight magnitude behavior of  decoding ReLU networks. The procedure described in Example~\ref{example:single_digit_extraction} extracts the first element of the string under consideration; it was used in \cite{bartlett1998almost} together with bitshift operations to extract multiple elements, one by one in multiple rounds.
  On the other hand, the bit extraction technique in \cite{bartlett2019nearly, shen2021optimal} and our Proposition~\ref{prop:bit_extraction}, while also employing a multi-round approach, deliver multiple elements of the string in each round. Specifically, in Proposition~\ref{prop:bit_extraction} $N$ elements are extracted in each of the $L$ rounds. Notably, the weight magnitude of our decoding network $F_{N,L}$ is independent of the number of rounds $L$ and depends on $N$ only. In contrast, the weight magnitudes of the decoding networks reported in the literature all depend exponentially on the number of rounds, a consequence of what was illustrated in Example~\ref{example:single_digit_extraction}. As the depth of the extracting networks is proportional to the number of rounds they are to carry out, the constructions reported previously in the literature all exhibit weight magnitude growth that is exponential in network depth.

	\subsection{Proof of Proposition~\ref{sub:preparation_to_prove_proposition_prop:approximation_lip_increasing_weights}} 
	\label{sub:proof_of_proposition_sub:preparation_to_prove_proposition_prop:approximation_lip_increasing_weights}
	

		The proof is constructive in the sense that, for each $g \in \lip ([0,1]) $, and given $W,L \geq D_a$, we explicitly specify a function 
		\begin{equation}
			\label{eq:form_f}
			f \in \mathcal{R} (W, L, W^K  )
		\end{equation}
		such that   
		\begin{equation}
			\label{eq:form_approximation}
			\| g - f \|_{L^\infty ( [0,1])} \leq C_a  ( W^2 L^2 \log (W)  )^{-1},
		\end{equation}
		with  absolute constants $C_a,D_a \in \mathbb{R}$ and $K \in \mathbb{N}$ to be specified later. In the following, let $m,n,\elll \in \mathbb{N}$, with $\ell \geq 2$, to be determined later, and set
		\begin{equation}
		\label{eq:choice_of_delta}
			\Delta = \frac1{10m^2\elll^2 n}.
		\end{equation}
		Throughout the proof, we will frequently consider the grid points $\{ \frac{i}{m^2\elll^2 n}: i =0,\dots, m^2\elll^2 n - 1 \}$, which we sometimes rewrite as 
		\begin{equation}
		\label{eq:rewrite_grid_points}
			\biggl\{ \frac{i}{m^2\elll^2 n}: i =0,\dots, m^2\elll^2 n - 1 \biggr\} =\biggl \{ \frac{j}{m^2\elll} + \frac{k}{m^2\elll^2 n}: ( j,k ) \in \mathcal{I} \biggr\},
		\end{equation}
		where
		\begin{equation}
			\label{eq:index_set}
			\mathcal{I} = \{ ( j,k ): j \in \{ 0,\dots, m^2 \elll - 1 \}, k \in \{ 0,\dots, n\elll -1 \}  \}.
		\end{equation}
		
		For fixed $g \in \lip ( [0,1] )$, the construction of the corresponding $f$ proceeds in four steps as follows. 

		\begin{enumerate}
			\item We specify a function $f_1 \in C ( \mathbb{R})$ realized by a ReLU network and approximating $g$ ``well enough" on  the grid points $\{ \frac{i}{m^2 \elll^2 n} \}_{i=0}^{m^2 \elll^2 n-1}$.

			\item Then, based on $f_1$, we construct a function $f_2 \in C ( \mathbb{R} )$ realized by a ReLU network and approximating $g$ ``well enough" on the subdomain $\bigcup_{i = 0}^{m^2 \elll^2 n - 1} \bigl[ \frac{i}{m^2 \ell^2 n},   \frac{i + 1}{m^2 \ell^2 n} - \Delta \bigl]$.

			\item Starting from $f_2$, we determine a function $f \in C ( \mathbb{R})$ realized by a ReLU network and approximating $g$ ``well enough''  on the entire domain $[0,1]$.

			\item The depth and width of the ReLU network in Step 3 as well as the corresponding approximation error $\| f - g \|_{L^\infty ( [0,1] )}$ depend explicitly on $m,n,\elll$. Then, values for $m,n,\elll$ are chosen to make \eqref{eq:form_f} and \eqref{eq:form_approximation} hold for absolute constants $C_a, D_a, K$. We do announce that   $m$ will be linear in $W$, $\elll$ linear in $L$, and $n$ logarithmic in $W$. 
		\end{enumerate}

		\noindent \textbf{Step 1.} This step is summarized in the form of the following result.

		\begin{lemma}
			\label{lem:lemma_step_1}
			For $g \in \lip ( [0,1] )$, $m,n,\elll \in \mathbb{N}$, with $\elll \geq 2$, there exists a function
			\begin{equation}
				\label{eq:complexity_requirement_for_f_1}
				\begin{aligned}
				f_1 \in \mathcal{R} ( 200m+ 2^{n+5}, 37\elll, \max \{8mn, 3^{n+2}\} ),
				\end{aligned}
			\end{equation}
			such that for $i = 0,\dots, m^2\elll^2 n - 1$,
			\begin{equation}
				\label{eq:error_bound_000}
				\Bigl| g\Bigl(\frac{i}{m^2\elll^2n}\Bigr) - f_1\Bigl(\frac{i}{m^2\elll^2n}\Bigr) \Bigr| \leq \frac{1}{m^2\elll^2n}. 
			\end{equation}
		\end{lemma}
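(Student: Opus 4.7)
The plan is to build $f_1$ as a sum of three ReLU subnetworks evaluated on a common input $x \in [0,1]$: (i) a coarse backbone $P$ whose value at any $x$ in the coarse interval $[j/(m^2\elll),(j+1)/(m^2\elll))$ equals $V_{jn\elll}:=g(j/(m^2\elll))$; (ii) a piecewise linear index function $K$ satisfying $K((jn\elll+k)h)=k$ at each fine grid point, where $h := 1/(m^2\elll^2 n)$; and (iii) the bit-extraction block $F_{n,\elll}$ from Proposition~\ref{prop:bit_extraction} applied to $(A(x), K(x))$, where $A(x)\in\mathbb{T}$ is a single ternary number encoding the fine-scale increment bits of $g$ on the $j$-th coarse interval. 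The output is assembled as $f_1(x) = P(x) + h\bigl(2F_{n,\elll}(A(x),K(x)) - K(x)\bigr)$, so the bit-extraction term supplies the cumulative fine correction to the coarse backbone.

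For the encoding step, set $V_i := g(ih)$ for $i = 0,\ldots,m^2\elll^2n-1$ and, for each $j\in\{0,\ldots,m^2\elll-1\}$, define bits $\xi_{j,1},\ldots,\xi_{j,n\elll}\in\{0,1\}$ and approximants $\hat V_{j,k}$ recursively by $\hat V_{j,0} = V_{jn\elll}$ and $\hat V_{j,k+1} = \hat V_{j,k} + (2\xi_{j,k+1}-1)h$, choosing $\xi_{j,k+1}$ greedily so that $\hat V_{j,k+1}$ is whichever of $\hat V_{j,k}\pm h$ is closer to $V_{jn\elll+k+1}$. A short induction using $|V_{i+1}-V_i|\leq h$ yields $|\hat V_{j,k} - V_{jn\elll+k}|\leq h$ for all $j,k$, which is exactly the target accuracy. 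Setting $A(x) := \alpha_j = T\bigl((\xi_{j,1},\ldots,\xi_{j,n\elll})\bigr)\in\mathbb{T}$ on the $j$-th coarse interval and invoking \eqref{eq:bit_extraction} gives $F_{n,\elll}(\alpha_j,k) = \sum_{t=1}^{k}\xi_{j,t}$, so $h\bigl(2F_{n,\elll}(\alpha_j,k) - k\bigr) = \bigl(2\sum_{t=1}^{k}\xi_{j,t}-k\bigr)h = \hat V_{j,k} - V_{jn\elll}$, and therefore $f_1((jn\elll+k)h) = V_{jn\elll} + \hat V_{j,k} - V_{jn\elll} = \hat V_{j,k}$, which matches $V_{jn\elll+k}$ to within $h$ as required.

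Realizing each ingredient as a ReLU network reduces to applying Proposition~\ref{prop:piecewise_representation}: $P$, $A$, and $K$ are one-dimensional piecewise linear functions on $[0,1]$ whose breakpoints lie on the coarse grid $\{j/(m^2\elll)\}$ (augmented by small transition regions of width $\Delta$ defined in \eqref{eq:choice_of_delta} to keep the function continuous while taking the prescribed values \emph{exactly} at fine grid points). Thus $P$ and $A$ have $O(m^2\elll)$ breakpoints with $R_m \leq m^2\elll^2 n$ and $E \leq 1$, while $K$ has $O(m^2\elll)$ breakpoints with $R_m \leq m^2\elll^2 n$ and $E \leq n\elll$. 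Taking $u$ of order $m$ and $v$ of order $\elll$ in Proposition~\ref{prop:piecewise_representation} verifies $u^2v \geq m^2\elll$ and the weight condition $w^{30v} \geq M^6 R_m^4 E$ with a modest constant $w$, placing each of $P,A,K$ into $\mathcal{R}(O(m),30\elll,O(1))$; the bit-extraction block $F_{n,\elll}$ contributes $\mathcal{R}((2,1),2^{n+4},5\elll,3^{n+2})$. Combining these pieces in parallel and finishing with a few affine layers to form the linear combination yields $f_1 \in \mathcal{R}(200m+2^{n+5},37\elll,\max\{8mn,3^{n+2}\})$, where the $8mn$ term tracks the largest weight arising from the combination step, specifically from scaling the bit-extraction output (which lies in $\{0,1,\ldots,n\elll\}$) back to the fine physical scale. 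I expect the main technical obstacle to be the careful bookkeeping of constants: one must pick the plateau layouts for $A$ and $K$ so that all relevant fine grid points land inside the flat regions (this is where the gap $\Delta=1/(10m^2\elll^2 n)$ is used), verify that the hypotheses \eqref{eqline:cpwl_realization_architecture_condition}--\eqref{eqline:cpwl_realization_weight_condition} are met with the exact constants $(u,v,w)$ claimed, and keep the post-composition slope blowup under control so that the weight bound $\max\{8mn,3^{n+2}\}$ is not exceeded in any single layer.
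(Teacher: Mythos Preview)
Your approach is correct and very close to the paper's. Both proofs build $f_1$ as a coarse backbone plus a fine correction extracted by $F_{n,\elll}$, with the piecewise-linear ingredients realized via Proposition~\ref{prop:piecewise_representation}. The one substantive difference is in how the fine increments are encoded: the paper takes $t_j(k)=\lfloor m^2\elll^2 n\cdot t(\cdot)\rfloor$, observes $t_j(k)-t_j(k-1)\in\{-1,0,1\}$, splits this into $\theta_{j,k}^{+}-\theta_{j,k}^{-}$ with $\theta^{\pm}\in\{0,1\}$, encodes \emph{two} ternary numbers $b_j^{\pm}$, and sets $p=\frac{1}{m^2\elll^2 n}\bigl(F_{n,\elll}\circ(b^+,s)-F_{n,\elll}\circ(b^-,s)\bigr)$. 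Your greedy $\pm h$ tracking with a single bit string $\xi$ and the formula $h(2F_{n,\elll}(A,K)-K)$ accomplishes the same thing with one extraction instead of two; the induction you sketch is valid and the resulting width/depth even undercut the stated bounds slightly.

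One bookkeeping slip: the weight bound $8mn$ does not arise from the ``combination step'' or from rescaling the bit-extraction output (that scaling is by $h<1$ and only shrinks weights). It comes from Proposition~\ref{prop:piecewise_representation} applied to $P,A,K$: with $M=2m^2\elll$, $R_m(X)=\Delta^{-1}=10m^2\elll^2 n$, and $E=n\elll$, condition~\eqref{eqline:cpwl_realization_weight_condition} forces $w$ to grow with $m,n$; the choice $w=4mn$ works (as in the paper), placing $P,A,K$ in $\mathcal{R}(40m,30\elll,8mn)$, not $\mathcal{R}(O(m),30\elll,O(1))$. Once you correct this, the rest of your assembly via Lemma~\ref{lem:algebra_on_ReLU_networks} goes through and lands inside $\mathcal{R}(200m+2^{n+5},37\elll,\max\{8mn,3^{n+2}\})$.
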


		\begin{proof}
			We use a two-stage approach to approximate $g$, starting by fitting a function to $g$ on the grid points $\{ \frac{j}{m^2\elll} \}_{j = 0}^{m^2\elll - 1}$, and then lifting this function to obtain a ReLU network that approximates $g$ on the target grid points $\{ \frac{i}{m^2 \elll^2 n} \}_{i=0}^{m^2 \elll^2 n-1}$.

			In the first stage, we identify a bounded piecewise linear function $h$ that is fit to $g$ at the grid points $\{ \frac{j}{m^2\elll} \}_{j = 0}^{m^2\elll - 1}$. This function will later be  realized by a ReLU network using Proposition~\ref{prop:piecewise_representation}. Specifically, we let $h: \mathbb{R} \mapsto \mathbb{R}$ be given by
			\begin{align*}
                    &h (x)\\
				    &:= \left\{
				\begin{aligned}
					&g\Bigl(\frac{j}{m^2\elll}\Bigr), \quad &x \in \Bigl[ \frac{j}{m^2\elll}, \frac{j+1}{m^2\elll} - \Delta \Bigr], j \in \Bigl\{ 0,\dots, m^2\elll -1 \Bigr\}, \\[5pt]
					&\frac{g\Bigl(\frac{j+1}{m^2\elll}\Bigr) -  g\Bigl(\frac{j}{m^2\elll}\Bigr)}{\Delta} \bigl(x -  \frac{j+1}{m^2\elll} + \Delta   \bigr)\\
					& + g\Bigl(\frac{j}{m^2\elll}\Bigr)  , & x \in \Bigl[ \frac{j+1}{m^2\elll} - \Delta, \frac{j+1}{m^2\elll} \Bigr], j \in \bigl\{ 0,\dots, m^2\elll - 2 \bigr\}, \\[5pt]
					&g\Bigl(\frac{m^2\elll - 1}{m^2 \elll}\Bigr), & x \in [1 - \Delta, \infty),\\[5pt]
					&g(0), & x \in (\infty, 0],
				\end{aligned}
				\right.
			\end{align*}
			with $\Delta = \frac{1}{10m^2 \elll^2 n}$ as specified in \eqref{eq:choice_of_delta}. We note that
			\begin{equation}
			\label{eq:approximation_property_h}
				h \biggl( \frac{j}{m^2\elll} +  \frac{k}{m^2\elll^2n}   \biggr) = g\biggl(\frac{j}{m^2\elll}\biggr), \quad \text{for }(j,k) \in \mathcal{I},
			\end{equation}
			where the index set $\mathcal{I}$ was defined in \eqref{eq:index_set}.

			For the second stage, let $t = g - h$ be the remainder of the approximation of $g$ by $h$. We want to approximate $t$ on the target grid points $\{ \frac{i}{m^2 \elll^2 n}: i =0,\dots, m^2 \elll^2 n - 1 \} = \{ \frac{j}{m^2\elll} +  \frac{k}{m^2\elll^2n}: ( j,k ) \in \mathcal{I} \}$. This will be done by rounding $t \bigl( \frac{j}{m^2\elll} +  \frac{k}{m^2\elll^2n} \bigr)$, for all $( j,k ) \in \mathcal{I}$, down to the nearest number in $\frac{ 1}{m^2\elll^2n} \mathbb{Z}$.  To this end, we define
			\begin{equation}
				t_j (k) = \biggl\lfloor  m^2\elll^2n \cdot t\Bigl(\frac{j}{m^2\elll} +  \frac{k}{m^2\elll^2n}  \Bigr)  \biggr\rfloor, 
			\end{equation}
			and note that 
			\begin{equation}
			\label{eq:t_i_error}
				\biggl| \frac{ t_j (k)}{m^2\elll^2n} - t\biggl(\frac{j}{m^2\elll} +  \frac{k}{m^2\elll^2n}  \biggr) \biggr|  \leq\frac1{m^2\elll^2n}.
			\end{equation}
			Formally, the rounding operation will be effected by application of a function $p: \mathbb{R} \mapsto \mathbb{R}$ satisfying
			\begin{equation}
				\label{eq:key_properties}
				p\Bigl(\frac{j}{m^2\elll} +  \frac{k}{m^2\elll^2n}\Bigr) =\frac{t_j (k)}{m^2\elll^2n}, \quad \text{ for } ( j,k ) \in \mathcal{I}.
			\end{equation}
   We note that \eqref{eq:key_properties} determines the values of $p$ on $\{ \frac{j}{m^2\elll} +  \frac{k}{m^2\elll^2n}: ( j,k ) \in \mathcal{I} \}$ only, and we still have to specify $p$ on $\mathbb{R}\backslash \{ \frac{j}{m^2\elll} +  \frac{k}{m^2\elll^2n}: ( j,k ) \in \mathcal{I} \}$. This will be done such that $p$ can be realized by a ReLU network. Once $p$ has been determined, we let $f_1 = h + p$ and note that by \eqref{eq:key_properties} and \eqref{eq:t_i_error} the desired approximation error bound holds according to
			\begin{align}
				&\,\Biggl| g\Bigl(\frac{j}{m^2\elll} +  \frac{k}{m^2\elll^2n}\Bigr) - f_1\Bigl(\frac{j}{m^2\elll} +  \frac{k}{m^2\elll^2n}\Bigr) \Biggr| \label{eqline:construction_error_bound_1} \\
				&= \, \Biggl| t\Bigl(\frac{j}{m^2\elll} +  \frac{k}{m^2\elll^2n}\Bigr) - \frac{t_j (k)}{m^2\elll^2n} \Biggr|\label{eqline:construction_error_bound_11} \\
				&\leq \, \frac1{m^2\elll^2n}, \quad \text{for } ( j,k ) \in \mathcal{I}. \label{eqline:construction_error_bound_2}
			\end{align}

			We proceed with the construction of $p$, which will be based on the bit extraction technique described in Appendix~\ref{ssub:bit_extraction_realization}.  To this end, we first represent $t_j (k)$, $(j,k) \in \mathcal{I}$, in the form $\sum_{i =1}^k \theta^{+}_{j,i} - \sum_{i =1}^k \theta^{-}_{j,i}$, for some $( \theta^+_{j,i})_{i = 1}^{n\ell -1}, ( \theta^-_{j,i} )_{i = 1}^{n\ell -1} \in  \{0,1\}^{n\ell -1}$. Specifically, we note that, for $( j,k ) \in \mathcal{I}$, with $k \geq 1$,
			\begin{align}
				&\,\Bigl| t\Bigl(\frac{j}{m^2\elll} +  \frac{k}{m^2\elll^2n}  \Bigr)  - t\Bigl(\frac{j}{m^2\elll} +  \frac{k -1}{m^2\elll^2n}  \Bigr)  \Bigr| \label{eqline:tg0}\\
				&=\, \Bigl| g\Bigl(\frac{j}{m^2\elll} +  \frac{k}{m^2\elll^2n}  \Bigr) - h\Bigl(\frac{j}{m^2\elll} +  \frac{k}{m^2\elll^2n}  \Bigr) \nonumber\\
                &\quad \quad - g\Bigl(\frac{j}{m^2\elll} +  \frac{k -1}{m^2\elll^2n}  \Bigr) + h\Bigl(\frac{j}{m^2\elll} +  \frac{k-1}{m^2\elll^2n}  \Bigr)  \Bigr| \label{eqline:tg1} \\
				&=\, \Bigl| g\Bigl(\frac{j}{m^2\elll} +  \frac{k}{m^2\elll^2n}  \Bigr)  - g\Bigl(\frac{j}{m^2\elll} +  \frac{k -1}{m^2\elll^2n}  \Bigr)   \Bigr|\label{eqline:tg2}\\
				&\leq\, \frac{1}{{m^2\elll^2n}},  \label{eqline:tg3}
			\end{align}
			where  \eqref{eqline:tg2} follows from $h\Bigl(\frac{j}{m^2\elll} +  \frac{k}{m^2\elll^2n}  \Bigr) = h\Bigl(\frac{j}{m^2\elll} +  \frac{k -1}{m^2\elll^2n}  \Bigr) = g(\frac{j}{m^2\elll})$ as a consequence of \eqref{eq:approximation_property_h}, and in \eqref{eqline:tg3} we used the $1$-Lipschitz-continuity of $g$.
			Multiplication of \eqref{eqline:tg0}-\eqref{eqline:tg3} by  $m^2 \elll^2 n$ then yields
			\begin{equation}
				\label{eq:tij_difference}
				\biggl|  {m^2\elll^2n} \cdot t\Bigl(\frac{j}{m^2\elll} +  \frac{k}{m^2\elll^2n}  \Bigr)  -  {m^2\elll^2n} \cdot t\Bigl(\frac{j}{m^2\elll} +  \frac{k -1}{m^2\elll^2n}  \Bigr)  \biggr| \leq 1.
			\end{equation}
			As $| \lfloor x\rfloor - \lfloor y \rfloor |  \leq 1 $, for all $x,y \in \mathbb{R}$ such that $| x - y | \leq 1 $, it follows from \eqref{eq:tij_difference} that $|  \lfloor{m^2\elll^2n} \cdot t(\frac{j}{m^2\elll} +  \frac{k}{m^2\elll^2n}  )\rfloor  -  \lfloor{m^2\elll^2n} \cdot t(\frac{j}{m^2\elll} +  \frac{k -1}{m^2\elll^2n}  )\rfloor  | \leq 1$, which is $| t_j (k) - t_j (k-1) | \leq 1$. Moreover, as $t_j (k)$ and $t_j (k -1)$ are integers, we must have $t_j (k) - t_j (k-1) \in \{ -1,0,1 \}$. Next, we define, for $( j,k ) \in \mathcal{I}$ with $k \geq 1$,
			\begin{align*}
				\theta^+_{j,k} =& \max \{ t_j (k) - t_j (k-1), 0 \} \in \{ 0,1 \},\\
				\theta^-_{j,k} =& \max \{- (  t_j (k ) - t_j (k - 1) ), 0 \} \in \{ 0,1 \}.
			\end{align*}
			Then, for $(j,k) \in \mathcal{I}$, we can write 
			\begin{equation}
				\label{eq:decomposition_tij}
				t_j (k ) - t_j (k - 1) = \theta^+_{j,k} - \theta^-_{j,k},
			\end{equation}
			and get
			\begin{align}
				t_j (k) =&\, t_j(0) + \sum_{i = 1}^k (t_j (i ) - t_j (i - 1))\label{eqline:represent_t_j_0}\\
				=&\, \sum_{i = 1}^k \theta^{+}_{j,i} - \sum_{i = 1}^k  \theta^{-}_{j,i}, \label{eqline:represent_t_j_1}
			\end{align}
			where in \eqref{eqline:represent_t_j_1} we used \eqref{eq:decomposition_tij} along with $t_j(0) =  \bigl\lfloor m^2\elll^2n \cdot t(\frac{j}{m^2\elll})   \bigr\rfloor = \bigl\lfloor m^2\elll^2n \cdot ( g(\frac{j}{m^2\elll}) - h(\frac{j}{m^2\elll}) )   \bigr\rfloor = 0$  thanks to  \eqref{eq:approximation_property_h}. 

			We are now ready to detail the construction of $p$. For $j = 0,\dots, m^2\elll - 1$, we encode the $\{ 0,1 \}$-strings $(\theta^+_{j,1},\dots, \theta^+_{j,n\elll - 1})$ and $(\theta^-_{j,1}, \dots, \theta^-_{j,n\elll - 1} )$ into ternary numbers with reduced alphabet according to
			\begin{align*}
				b^+_j :=&\, T((\theta^+_{j,1},\dots, \theta^+_{j,n\elll - 1})),\\
				b^-_j :=&\, T((\theta^-_{j,1}, \dots, \theta^-_{j,n\elll - 1} )).
			\end{align*}
			By Proposition~\ref{prop:bit_extraction}, it follows that there exists a decoder $F_{n,\elll} \in \mathcal{R} ( ( 2,1 ),2^{n+4}, 5\elll,  3^{n+2}) $ such that, for $( j,k ) \in \mathcal{I}$, we have $F_{n,\elll} ( b^+_j, k ) = \sum_{i = 1}^k \theta^{+}_{j,i} $ and $F_{n,\elll} ( b^-_j, k ) = \sum_{i = 1}^k \theta^{-}_{j,i} $, which, combined with \eqref{eqline:represent_t_j_0}-\eqref{eqline:represent_t_j_1}, implies
			\begin{equation}
				\label{eq:represent_t_i_2}
				t_j (k) = F_{n,\elll} ( b^+_j, k ) - F_{n,\elll} ( b^-_j, k ).
			\end{equation}
			Next, let $b^+,b^-,s\in C ( \mathbb{R} )$  be defined according to
			\begin{equation*}
				\label{eq:definition_b_plus}
				b^+ (x) = \left\{
				\begin{aligned}
					&b_j^+, &x \in \Bigl[ \frac{j}{m^2\elll}, \frac{j+1}{m^2\elll} - \Delta \Bigr],\,\,\,  j \in \{ 0,\dots, m^2\elll -1 \}, \\[5pt]
					&b^+_j + \frac{b^+_{j+1} - b^+_j}{\Delta} \biggl( x - \frac{j+1}{m^2\elll} + \Delta \biggr), &x \in \Bigl[ \frac{j+1}{m^2\elll} - \Delta, \frac{j+1}{m^2\elll}\Bigr],\,\,\, j\in \{ 0,\dots, m^2\elll -2 \}, \\[5pt]
					&b^+_0, & x \in (-\infty,0], \\[5pt]
					&b^+_{m^2\elll -1}, & x \in [1 - \Delta, \infty),
				\end{aligned}
				\right.
			\end{equation*}
			\begin{equation*}
				\label{eq:definition_b_minus}
				b^- (x) = \left\{
				\begin{aligned}
					&b_j^-, &x \in \Bigl[ \frac{j}{m^2\elll}, \frac{j+1}{m^2\elll} - \Delta \Bigr],\,\,\,  j \in \{ 0,\dots, m^2\elll -1 \}, \\[5pt]
					&b_j^- + \frac{b^-_{j+1} - b^-_j}{\Delta} \biggl( x - \frac{j+1}{m^2\elll} + \Delta \biggr), &x \in \Bigl[ \frac{j+1}{m^2\elll} - \Delta, \frac{j+1}{m^2\elll}\Bigr],\,\,\,  j\in \{ 0,\dots, m^2\elll -2 \}, \\[5pt]
					&b^-_0, & x \in (-\infty,0], \\[5pt]
					&b^-_{m^2\elll -1}, & x \in [1 - \Delta, \infty),
				\end{aligned}
				\right.
			\end{equation*}
			and 
			\begin{equation*}
				\label{eq:definition_s}
				s (x) = \left\{
				\begin{aligned}
					&m^2\elll^2n \Bigl( x -   \frac{j}{m^2\elll} \Bigr), &x \in \Bigl[ \frac{j}{m^2\elll}, \frac{j+1}{m^2\elll} - \Delta \Bigr],\,\,\,  j \in \{ 0,\dots, m^2\elll -1 \}, \\[5pt]
					&n\elll\,\frac{m^2\elll \Delta - 1}{ \Delta} \Bigl( x - \frac{j+1}{m^2\elll} \Bigr)  , &x \in \Bigl[ \frac{j+1}{m^2\elll} - \Delta, \frac{j+1}{m^2\elll}\Bigr],\,\,\,  j\in \{ 0,\dots, m^2\elll -2 \}, \\[5pt]
					&0, & x \in (-\infty,0], \\[5pt]
					&n\elll - m^2\elll^2n\Delta, & x \in [1 - \Delta, \infty).
				\end{aligned}
				\right.
			\end{equation*}
			These choices of $b^+$, $b^{-}$, and $s$ guarantee that, for $( j,k ) \in \mathcal{I}$,
			\begin{align}
				b^+ \Bigl( \frac{j}{m^2\elll} +  \frac{k}{m^2\elll^2n} \Bigr) =&\, b_j^+, \label{eqline:properties_b_plus} \\
				b^- \Bigl( \frac{j}{m^2\elll} +  \frac{k}{m^2\elll^2n} \Bigr) =&\,  b_j^-,\label{eqline:properties_b_minus}\\
				s\Bigl( \frac{j}{m^2\elll} +  \frac{k}{m^2\elll^2n} \Bigr) =&\, k, \label{eqline:properties_s}
			\end{align}
			as a consequence of $\frac{j}{m^2\elll} +  \frac{k}{m^2\elll^2n}  \in\, [ \frac{j}{m^2\elll}, \frac{j+1}{m^2\elll} - \Delta ]$.
			We specify $p$ according to
			\begin{equation}
				\label{eq:definition_p}
			 	p := \frac{1}{m^2\elll^2n} ( F_{n,\elll} \circ ( b^+, s ) - F_{n,\elll} \circ ( b^-, s ) ).
			\end{equation} 
			It hence follows that $p$ has the desired property \eqref{eq:key_properties}, as, for $( j,k ) \in \mathcal{I}$,
			\begin{align}
				&\, p \Bigl( \frac{j}{m^2\elll} +  \frac{k}{m^2\elll^2n} \Bigr)\\
				&= \,\frac{1}{m^2\elll^2n} \Bigl(F_{n,\elll} \Bigl( b^+ \Bigl( \frac{j}{m^2\elll} +  \frac{k}{m^2\elll^2n} \Bigr), s\Bigl( \frac{j}{m^2\elll} +  \frac{k}{m^2\elll^2n} \Bigr) \Bigr)\nonumber \\
				&\, \quad \quad \quad \,- F_{n,\elll} \Bigl( b^- \Bigl( \frac{j}{m^2\elll} +  \frac{k}{m^2\elll^2n} \Bigr), s\Bigl( \frac{j}{m^2\elll} +  \frac{k}{m^2\elll^2n} \Bigr) \Bigr)\Bigr)\label{eqline:claim_property_p_1}\\
				&=\, \frac{1}{m^2\elll^2n} \Bigl(F_{n,\elll} ( b^+_j, k ) - F_{n,\elll} ( b^-_j, k ) \Bigr)\label{eqline:claim_property_p_2}\\
				&=\, \frac{1}{m^2\elll^2n}\, t_j (k),\label{eqline:claim_property_p_3}
			\end{align}
			where \eqref{eqline:claim_property_p_2} follows from \eqref{eqline:properties_b_plus}, \eqref{eqline:properties_b_minus}, and \eqref{eqline:properties_s}, and \eqref{eqline:claim_property_p_3} is by \eqref{eq:represent_t_i_2}.


			It remains to show that $f_1 = h + p = h + \frac{1}{m^2\elll^2n} \cdot ( F_{n,\elll} \circ ( b^+, s ) - F_{n,\elll} \circ ( b^-, s ) ) \in \mathcal{R} ( 200m+ 2^{n+5}, 37\elll, \max \{8mn, 3^{n+2}\} )$. To this end, we first consider ReLU network realizations of $h, F_{n,\elll},\allowbreak b^+, b^-,$ and $s$, and then put them together according to Lemma~\ref{lem:algebra_on_ReLU_networks}. We start by noting that $h, b^+,b^-,$ and $s$ are all bounded piecewise linear functions with  breakpoints 
            $X_1 = ( x_{i} )_{i = 0}^{2 m^2 \elll  -1 }$, where $x_{2k} = \frac{k}{m^2\elll}$ and $x_{2k + 1} = \frac{k + 1}{m^2\elll} - \Delta$, for $k = 0,\dots, m^2 \elll -1$. In addition, the $L^\infty ( \mathbb{R} )$-norm of $h, b^+,b^-,$ and $s$ is upper-bounded by $n\elll$, which can be verified by checking the values of these functions at their breakpoints and noting that bounded piecewise linear functions take on their maximum absolute values at breakpoints\footnote{To be more precise, the $L^\infty ( \mathbb{R} )$-norms of $h$, $b^+$, and $b^-$ are upper-bounded by $1$ and that of $s$ by $n\elll - m^2 \elll^2 n \Delta$. }.  We therefore have
			\begin{equation}
			\label{eq:embedding_hbbs}
				h, b^+,b^-,s \in \Sigma ( X_1, n\elll ).
			\end{equation}
			Upon noting that $| X_1 | = 2 m^2\elll $ and $R_m (X_1) = \frac{1}{\Delta} = 10 m^2 \elll^2 n$, application of Proposition~\ref{prop:piecewise_representation} to $\Sigma ( X_1, n\elll )$ with $M = \nleft| X_1 \nright| = 2 m^2\elll$, $E = n \elll$, $u = 2m$, $v = \elll$, and $w = 4mn$ so that $u^2 v = 4m^2 \elll \geq | X_1 |$ and $w^{30v} = (4mn)^{30\elll} \geq 2^{30\elll} 2^{30} ( mn )^{30} \geq \elll^{30} 2^{30} ( mn )^{30} \geq   (2 m^2\elll  )^6  ( 10 m^2\elll^2 n )^4 n\elll = M^6 (R_m ( X_1) )^4 E $, yields 
			\begin{equation}
			\label{eq:embedding_sigma_hbbs}
				\Sigma ( X_1, n\elll ) \subseteq \mathcal{R} ( 40m, 30\elll, 8mn ).
			\end{equation}
			Putting \eqref{eq:embedding_hbbs} and \eqref{eq:embedding_sigma_hbbs} together shows that $h, b^+,b^-,$ and $s$ can be realized by ReLU networks such that
			\begin{equation}
			\label{eq:embedding_hbbs_2}
				h, b^+,b^-,s \in \mathcal{R} ( 40m, 30\elll, 8mn ).
			\end{equation}
			In addition, we recall that, according to Proposition~\ref{prop:bit_extraction}, the decoder $F_{n,\elll}$ can be realized by a ReLU network so that
			\begin{equation}
				\label{eq:complexity_of_f_nl}
				F_{n,\elll} \in \mathcal{R} \left( ( 2,1 ),2^{n+4},5\elll,3^{n+2} \right). 
			\end{equation}
			Application of Lemma~\ref{lem:algebra_on_ReLU_networks}
			now yields
			\begin{align*}
				( b^+, s ),( b^-, s ) \in&\, \mathcal{R} ( ( 1,2 ), 80m, 30\elll, 8mn ), \\
				F_{n,\elll} \circ ( b^+, s ), F_{n,\elll} \circ ( b^-, s ) \in&\, \mathcal{R} ( \max \{80m,  2^{n+4}\}, 35\elll, \max \{8mn, 3^{n+2}\} ),\\
				F_{n,\elll} \circ ( b^+, s ) + (-1)\cdot F_{n,\elll} \circ ( b^-, s )  \in&\, \mathcal{R} ( 2 \max \{80m,  2^{n+4}\}, 35\elll  +1, \max \{8mn, 3^{n+2}\} )\\
				\subseteq& \, \mathcal{R} ( 160m + 2^{n+5}, 36\elll, \max \{8mn, 3^{n+2}\} ),\\
				f_1 = & \, h + \frac{1}{m^2\elll^2n} \cdot ( F_{n,\elll} \circ ( b^+, s ) - F_{n,\elll} \circ ( b^-, s ) ) \\
				\in&\, \mathcal{R} ( 200m+ 2^{n+5}, \max \{30 \elll, 36 \elll\} +1 , \max \{8mn, 3^{n+2}\} )\\
				\subseteq&\, \mathcal{R} ( 200m+ 2^{n+5}, 37\elll , \max \{8mn, 3^{n+2}\} ). \qedhere
			\end{align*}

		\end{proof}

		\noindent \textbf{Step 2.} We summarize this step in the following result. 

		\begin{lemma}
			\label{lem:sec_B_2}
			For $m,n,\elll \in \mathbb{N}$, with $\elll \geq 2$, there exists a function 
			\begin{equation}
				\label{eq:property_u_1}
				u \in \mathcal{R} ( \max \{ 40m, 40n\}, 61 \elll, 8mn ),
			\end{equation}
			such that 
			\begin{equation}
				\label{eq:property_u_2}
				u (x) =  \frac{i}{m^2\elll^2 n}, \quad   x \in \biggl[\frac{i}{m^2 \elll^2 n}, \frac{i + 1}{m^2 \elll^2 n}- \Delta\biggr], \text{ for }i = 0,\dots,  m^2\elll^2n - 1.
			\end{equation}
			For $g \in \lip ( [0,1] )$, let $f_1$ be the function given by Lemma~\ref{lem:lemma_step_1}, and let $f_2 = f_1 \circ u$.
			We have
			\begin{equation}
			\label{eq:complexity_f_2}
				f_2 \in \mathcal{R} ( 200m + 2^{n+5}, 98\elll, \max \{8mn, 3^{n+2}\} )
			\end{equation}
			with
			\begin{equation}
				\label{eqline:property_f_2}
			 	\nleft| f_2 (x)  - g(x) \nright| \leq \frac{2}{m^2\elll^2n}, \quad\text{for } x \in \bigcup_{i = 0}^{m^2 \elll^2 n - 1} \biggl[ \frac{i}{m^2 \ell^2 n},   \frac{i + 1}{m^2 \ell^2 n} - \Delta \biggr].
			\end{equation} 
		\end{lemma}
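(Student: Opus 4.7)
The plan is to first construct the staircase function $u$ with the claimed network complexity and plateau property, and then to set $f_2 := f_1 \circ u$, reading off its complexity and approximation accuracy via Lemma~\ref{lem:algebra_on_ReLU_networks} and the triangle inequality, respectively.

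The main obstacle lies in realizing $u$: as a bounded piecewise linear function it has on the order of $m^2\ell^2 n$ breakpoints, so a direct invocation of Proposition~\ref{prop:piecewise_representation} at the target width on the order of $m+n$ and depth on the order of $\ell$ cannot succeed---that proposition would deliver only on the order of $(m^2+n^2)\ell$ breakpoints, far short of $m^2\ell^2 n$. The way around this is to exploit the self-similar structure of $u$. Decomposing any index $i \in \{0, \dots, m^2\ell^2 n - 1\}$ as $i = j(n\ell) + k$ with $j \in \{0, \dots, m^2\ell - 1\}$ and $k \in \{0, \dots, n\ell - 1\}$, the plateau value $\tfrac{i}{m^2\ell^2 n}$ equals $\tfrac{j}{m^2\ell} + \tfrac{k}{m^2\ell^2 n}$. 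Accordingly, the plan is to build $u$ as the sum of (i) a coarse staircase $u_c$ of resolution $\tfrac{1}{m^2\ell}$ taking values $\tfrac{j}{m^2\ell}$ on the corresponding coarse plateaus, which has only $O(m^2\ell)$ breakpoints and which Proposition~\ref{prop:piecewise_representation} realizes with width $40m$ and depth $30\ell$, and (ii) a periodic fine staircase that assigns the value $\tfrac{k}{m^2\ell^2 n}$ within each coarse period $[\tfrac{j}{m^2\ell}, \tfrac{j+1}{m^2\ell}]$. The fine part is obtained by folding $[0,1]$ into a single copy of $[0, \tfrac{1}{m^2\ell}]$ through the residual $r(x) := x - u_c(x)$, and then applying a single within-period staircase $\tilde f$ on $[0, \tfrac{1}{m^2\ell}]$ with $O(n\ell)$ breakpoints, realized once more via Proposition~\ref{prop:piecewise_representation} with width $40n$ and depth $30\ell$. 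Setting $u(x) := u_c(x) + \tilde f(r(x))$ produces the desired staircase, and a careful series/parallel composition via Lemma~\ref{lem:algebra_on_ReLU_networks}, with the hypothesis $\ell \geq 2$ absorbing the few auxiliary layers required to carry signals forward, yields $u \in \mathcal{R}(\max\{40m, 40n\}, 61\ell, 8mn)$.

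Once $u$ is in hand, the complexity bound \eqref{eq:complexity_f_2} for $f_2 = f_1 \circ u$ follows directly from Lemma~\ref{lem:algebra_on_ReLU_networks}: the depths add to $37\ell + 61\ell = 98\ell$, the widths take the maximum, which equals $200m + 2^{n+5}$ since $2^{n+5} \geq 40n$ for every $n \in \mathbb{N}$, and the weight magnitudes combine to $\max\{8mn, 3^{n+2}\}$. For the error bound, fix any $x \in [\tfrac{i}{m^2\ell^2 n}, \tfrac{i+1}{m^2\ell^2 n} - \Delta]$; then \eqref{eq:property_u_2} gives $u(x) = \tfrac{i}{m^2\ell^2 n}$, so \eqref{eq:error_bound_000} and the $1$-Lipschitz continuity of $g$ (using $|x - \tfrac{i}{m^2\ell^2 n}| \leq \tfrac{1}{m^2\ell^2 n}$) yield
\begin{equation*}
|f_2(x) - g(x)| \leq \biggl|f_1\Bigl(\tfrac{i}{m^2\ell^2 n}\Bigr) - g\Bigl(\tfrac{i}{m^2\ell^2 n}\Bigr)\biggr| + \biggl|g\Bigl(\tfrac{i}{m^2\ell^2 n}\Bigr) - g(x)\biggr| \leq \tfrac{1}{m^2\ell^2 n} + \tfrac{1}{m^2\ell^2 n} = \tfrac{2}{m^2\ell^2 n},
\end{equation*}
which is \eqref{eqline:property_f_2}. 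The bottleneck throughout is the depth-efficient realization of $u$; once this is in place, the composition analysis and the error estimate reduce to bookkeeping.
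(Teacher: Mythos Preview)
Your approach is correct and essentially matches the paper's. The paper writes $u = \mathrm{Id} - u_2 \circ u_1$ with $u_1,u_2$ sawtooth functions on the coarse and fine breakpoint sets $X_1,X_2$ (each realized via Proposition~\ref{prop:piecewise_representation} at widths $40m$, $40n$ and depth $30\ell$), which is algebraically identical to your $u = u_c + \tilde f \circ (\mathrm{Id} - u_c)$ since $u_1 = \mathrm{Id} - u_c$ and $u_2 = \mathrm{Id} - \tilde f$ on the plateaus; the only cosmetic difference is that the paper realizes the sawtooths directly rather than the staircases, and in either presentation one scalar must be carried forward, giving the same width $\max\{40m,40n\}+2$ and depth $61\ell$, after which the composition with $f_1$ and the error argument via \eqref{eq:error_bound_000} plus $1$-Lipschitz continuity proceed exactly as you wrote.
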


		\begin{proof}
			We start by constructing $u$. Let $u_1,u_2 \in C ( \mathbb{R} )$ be given by
			\begin{equation}
				\label{eq:definition_u_1}
				u_1 (x) = \left\{
				\begin{aligned}
					&  x -   \frac{j}{m^2\elll},  &x \in \Bigl[ \frac{j}{m^2\elll}, \frac{j+1}{m^2\elll} - \Delta \Bigr],  \, j \in \{ 0,\dots, m^2\elll -1 \}, \\[5pt]
					& \frac{m^2\elll \Delta - 1}{m^2\elll \Delta} \Bigl( x - \frac{j+1}{m^2\elll} \Bigr),     &x \in \Bigl[ \frac{j+1}{m^2\elll} - \Delta, \frac{j+1}{m^2\elll} \Bigr],  \, k \in \{ 0,\dots, m^2\elll -2 \}, \\[5pt]
					&0, & x \in (-\infty, 0], \\[5pt]
					& \frac{1}{m^2 \elll } - \Delta, & x \in  [1 - \Delta, \infty),
				\end{aligned}
				\right.
			\end{equation}
			and 
			\begin{equation}
				\label{eq:definition_u_2}
				u_2 (x) = \left\{
				\begin{aligned}
					&x -   \frac{k}{m^2\elll^2 n}, &x \in \Bigl[ \frac{k}{m^2\elll^2 n}, \frac{k+1}{m^2\elll^2 n} - \Delta \Bigr],\, k \in \{ 0,\dots, n\elll -1 \}, \\[5pt]
					&\frac{m^2 \elll^2 n\Delta -1 }{m^2 \elll^2 n\Delta} \Bigl( x - \frac{k+1}{m^2 \elll^2 n} \Bigr),     &x \in \Bigl[ \frac{k+1}{m^2 \elll^2 n} - \Delta, \frac{k+1}{m^2 \elll^2 n} \Bigr],\, j \in \{ 0,\dots, n\elll -1 \}, \\[5pt]
					&0, & x \in (-\infty, 0], \\[5pt]
					&\frac{1}{m^2\elll^2n} - \Delta, & x \in \Bigl[\frac{1}{m^2\ell} - \Delta, \infty),
				\end{aligned}
				\right.
			\end{equation}
			and let 
			\begin{equation}
				u= \text{Id} \,- \, u_2\circ u_1,
			\end{equation}
			where 
   $\text{Id} ( x ) = x$, $x \in \mathbb{R}$.

			We first verify \eqref{eq:property_u_2}. To this end, we start by noting that, for every $i \in \{ 0,\dots, m^2 \elll^2 n-1 \} $, thanks to \eqref{eqline:vc_contradiction_1}, there exists $( j(i),k(i) ) \in \mathcal{I}$ with $\mathcal{I}$ as defined in \eqref{eq:index_set}, such that $\frac{i}{m^2 \elll^2 n} = \frac{j(i)}{m^2\elll} + \frac{k(i)}{m^2 \elll^2 n} $. Then, for $i \in \{ 0,\dots, m^2 \elll^2 n-1 \} $ and $x \in \Bigl[\frac{i}{m^2 \elll^2 n}, \frac{i+1}{m^2 \elll^2 n} - \Delta\Bigr]$, we have
			\begin{equation*}
				x \in \Bigl[\frac{j(i)}{m^2\elll} + \frac{k(i)}{m^2 \elll^2 n}, \frac{j(i)}{m^2\elll} + \frac{k(i)+1}{m^2 \elll^2 n} - \Delta\Bigr],
			\end{equation*}
			and \eqref{eq:property_u_2}, follows upon noting that
			\begin{align}
				u(x) =&\, x - u_2 ( u_1 ( x ) ) \label{eq:property_of_u_0} \\
				=&\, x - u_2 \Bigl( x -  \frac{j(i)}{m^2\elll}\Bigr) \label{eqline:apply_u1_definition}\\
				=&\, x - \Bigl( x -  \frac{j(i)}{m^2\elll} - \frac{k(i)}{m^2 \elll^2 n} \Bigr) \label{eqline:apply_u2_definition} \\
				=&\, \frac{j(i)}{m^2\elll} + \frac{k(i)}{m^2 \elll^2 n}\\
				= &\, \frac{i}{m^2 \elll^2 n}.\label{eq:property_of_u_1}
			\end{align}

			We proceed to realize $u$ by a ReLU network with the goal of establishing \eqref{eq:property_u_1}. This will be accomplished by realizing the constituents $u_1,u_2,\text{Id}$ of $u$ by suitable ReLU networks and combining them using Lemma~\ref{lem:algebra_on_ReLU_networks}. 
			It follows by inspection that $u_1 \in \Sigma ( X_1, 1 ) $, with $X_1$ as defined in the paragraph after \eqref{eqline:claim_property_p_3}. Together with $\Sigma ( X_1, 1 ) \subseteq \Sigma ( X_1, n\elll ) $ and $ \Sigma ( X_1, n\elll ) \subseteq \mathcal{R} ( 40m, 30\elll, 8mn )$,  thanks to \eqref{eq:embedding_sigma_hbbs}, this then implies 
			\begin{equation}
			\label{eq:complexity_u_1}
				u_1 \in \mathcal{R} ( 40m, 30\elll, 8mn ).
			\end{equation}
			Again, by inspection, 
			\begin{equation}
			\label{eq:eq:embedding_sigma_u2_0}
				u_2 \in \Sigma ( X_2, 1 ),
			\end{equation}
			with 
            $X_2 = ( x_i )_{i = 0}^{2n \elll -1 }$, where $x_{2k} = \frac{k}{m^2\elll^2n},\, x_{2k+1} = \frac{k+1}{m^2\elll^2n} - \Delta$, for $k = 0,\dots, n\elll -1$.
			Upon noting that $| X_2 | = 2 n\elll $ and $R_m (X_2) = \frac{1}{\Delta} = 10 m^2 \elll^2 n$, application of Proposition~\ref{prop:piecewise_representation} to $\Sigma ( X_2, n\elll )$ with $M = \nleft| X_2 \nright| =  2n \elll$, $E = 1$, $u = 2n$, $v = \elll$, and $w = 4mn$, ensuring that $u^2 v = 4n^2\elll \geq M$ and $w^{30v} = (4mn)^{30\elll} \geq 2^{30\elll} 2^{30} ( mn )^{30} \geq \elll^{30} 2^{30} ( mn )^{30} \geq   (2 n\elll  )^6  ( 10 m^2 \elll^2 n )^4 = M^6 (R_m ( X_2))^4 E $, yields
			\begin{equation}
			\label{eq:embedding_sigma_u2}
				\Sigma ( X_2, 1 ) \subseteq \mathcal{R} ( 40n, 30\elll, 8mn ).
			\end{equation}
			With \eqref{eq:eq:embedding_sigma_u2_0} this then implies
			\begin{equation}
			\label{eq:complexity_u_2}
			 	u_2 \in \mathcal{R} ( 40n, 30\elll, 8mn ).
			\end{equation} 
			Next, trivially,
			\begin{equation}
			\label{eq:complexity_Id}
				\text{Id} \in \mathcal{R} ( 1,1,1 ).
			\end{equation}
			Application of Lemma~\ref{lem:algebra_on_ReLU_networks} together with \eqref{eq:complexity_u_1}, \eqref{eq:complexity_u_2}, and \eqref{eq:complexity_Id} leads to 
			\begin{align}
				u_2\circ u_1 \in&\, \mathcal{R} ( \max \{ 40m, 40 n \}, 60\elll, 8mn ),\\
				u =\text{Id} \, +  ( -1 ) \cdot \, u_2\circ u_1 \in&\, \mathcal{R} ( \max \{ 40m, 40n\} + 2 , 61 \elll, 8mn ). \label{eqline:complexity_u}
			\end{align}

			Regarding $f_2$, 
			application of Lemma~\ref{lem:algebra_on_ReLU_networks} together with \eqref{eq:complexity_requirement_for_f_1}  and \eqref{eqline:complexity_u}, yields
			\begin{align}
			 	f_2 =&\, f_1 \circ u \label{eqline:complexity_f2_0}\\
			 	\in&\, \mathcal{R} ( \max \{ 200m+ 2^{n+5}, \max \{ 40m, 40n\} + 2\}  , 61 \elll + 37\elll,  \max \{8mn, 3^{n+2}\}) \label{eqline:complexity_f2_1}\\
			 	\subseteq&\, \mathcal{R} ( 200m + 2^{n+5}, 98\elll, \max \{8mn, 3^{n+2}\} ),\label{eqline:complexity_f2_2} 
			\end{align}
			which establishes \eqref{eq:complexity_f_2}. Moreover, for $i \in \{ 0,\dots, m^2 \elll^2 n -1 \}$ and $x \in  \bigl[ \frac{i}{m^2 \ell^2 n},   \frac{i + 1}{m^2 \ell^2 n} - \Delta \bigr]$, we have 
			\begin{align}
				\nleft| f_2 (x) - g(x) \nright| \leq&\, \biggl| f_1 (u(x)) - g\biggl(\frac{i}{m^2 \ell^2 n}\biggr) \biggr|  + \biggl| g\biggl(\frac{i}{m^2 \ell^2 n}\biggr) - g(x) \biggr| \label{eq:asdfew_1}\\
				\leq&\, \biggl| f_1 \biggl(\frac{i}{m^2 \ell^2 n}\biggr) - g\biggl(\frac{i}{m^2 \ell^2 n}\biggr) \biggr|  + \biggl| \frac{i}{m^2 \ell^2 n} - x \biggr| \label{eq:asdfew_2}\\
				\leq&\, \frac{2}{m^2 \elll^2 n},\label{eq:asdfew_3}
			\end{align}
			where \eqref{eq:asdfew_2} follows from \eqref{eq:property_of_u_0}-\eqref{eq:property_of_u_1} and the $1$-Lipschitz continuity of $g$, and in \eqref{eq:asdfew_3} we used \eqref{eq:error_bound_000}. This concludes the proof. \qedhere
		\end{proof}
		
		\noindent \textbf{Step 3.} In this step, we construct $f: \mathbb{R} \mapsto \mathbb{R}$ such that $\| f - g \|_{L^\infty ( [0,1] )} \leq \frac{3}{m^2 \elll^2 n}$, which will be effected by application of the median kernel smoothing technique, introduced in \cite{lu2020deep}, to the function $f_2$ built in Step 2. The construction is formalized as follows.

		\begin{lemma}
			\label{lem:sec_B_3}
			For $g \in \lip ( [0,1] )$, $m,n,\elll \in \mathbb{N}$, with $\elll \geq 2$, define  $f: \mathbb{R} \mapsto \mathbb{R} $ according to
			\begin{equation}
			\label{eq:constituting_f}
				f(x) := \text{median} ( f_2(\rho(x - 2 \Delta)),f_2 (\rho(x - 4 \Delta)) ,f_2 (\rho(x - 6 \Delta))), \quad  x \in \mathbb{R},
			\end{equation}
			with $\Delta = \frac{1}{10m^2 \elll^2 n}$, and $f_2: \mathbb{R} \mapsto \mathbb{R}$ as in Lemma~\ref{lem:sec_B_2}. Here, for $ x_1,x_2,x_3  \in \mathbb{R}$ with  reordering from smallest to largest denoted by $ x_{(1)},x_{(2)},x_{(3)}$, 
			\begin{equation*}
				\text{median} ( x_1, x_2, x_3 ) := x_{(2)}.
			\end{equation*}
			We have 
			\begin{equation}
			\label{eq:claimed_property_f_g}
				\| f - g \|_{L^\infty ( [0,1] )} \leq \frac{3}{m^2 \elll^2 n}
			\end{equation}
			and
			\begin{equation*}
			\label{eq:complexity_f}
				f \in \mathcal{R} ( 600m + 2^{n+7} , 101\elll, \max \{ 8mn, 3^{n+2} \} ).
			\end{equation*}

			\begin{proof}
			We start by upper-bounding $\| f - g \|_{L^\infty ( [0,1] )}$. Fix $x \in [0,1]$ and note that at least two elements\footnote{It is possible that the set $\{\rho(x - 2\Delta), \rho(x - 4\Delta), \rho(x - 6\Delta)\}$ contains duplicates, e.g., when $x = 0$, we have $\rho(x - 2\Delta) = \rho(x - 4\Delta) = \rho(x - 6\Delta)= 0$. We shall not account for such cases explicitly, but simply note that our exposition incorporates them.} of $\{\rho ( x - 2\Delta ), \rho ( x - 4\Delta ), \rho ( x - 6\Delta )\}$ are contained in the set $ \bigcup_{i = 0}^{m^2 \elll^2 n - 1} [ \frac{i}{m^2 \ell^2 n},   \frac{i + 1}{m^2 \ell^2 n} - \Delta ]$. Specifically, with $\frac{1}{m^2\elll^2 n} = 10 \Delta$, we have $ \rho ( x - 4\Delta ), \rho ( x - 6\Delta ) \in \bigcup_{i = 0}^{m^2 \elll^2 n - 1} [ \frac{i}{m^2 \ell^2 n},   \frac{i + 1}{m^2 \ell^2 n} - \Delta ]$ if  $x \in \bigcup_{i = 0}^{m^2 \elll^2 n - 1} \bigl[ \frac{i}{m^2 \ell^2 n},   \frac{i}{m^2 \ell^2 n} + 3 \Delta \bigr]$, $ \rho ( x - 2\Delta ), \rho ( x - 6\Delta ) \in \bigcup_{i = 0}^{m^2 \elll^2 n - 1} [ \frac{i}{m^2 \ell^2 n},   \frac{i + 1}{m^2 \ell^2 n} - \Delta ]$ if $x \in \bigcup_{i = 0}^{m^2 \elll^2 n - 1} \bigl[ \frac{i}{m^2 \ell^2 n} + 3 \Delta,   \frac{i}{m^2 \ell^2 n} + 5 \Delta \bigr]$, and $ \rho ( x - 2\Delta ), \rho ( x - 4\Delta ) \in \bigcup_{i = 0}^{m^2 \elll^2 n - 1} [ \frac{i}{m^2 \ell^2 n},   \frac{i + 1}{m^2 \ell^2 n} - \Delta ]$ if  $x \in \bigcup_{i = 0}^{m^2 \elll^2 n - 1} \bigl[ \frac{i}{m^2 \ell^2 n}  +5 \Delta,   \frac{i}{m^2 \ell^2 n} + 10 \Delta \bigr]$. 
			Therefore, there exist distinct numbers $a_1 ( x ), a_2(x) \in \{ 2,4,6 \}$, depending on $x$, such that  
			\begin{equation*}
				\rho(x - a_1 ( x ) \Delta), \rho(x - a_2 ( x ) \Delta) \in \bigcup_{i = 0}^{m^2 \elll^2 n - 1} \biggl[ \frac{i}{m^2 \ell^2 n},   \frac{i + 1}{m^2 \ell^2 n} - \Delta \biggr],
			\end{equation*}
			and we define $a_3 ( x )$ to be the unique element given by $\{ 2,4,6 \}\backslash \{ a_1 ( x ), a_2 ( x ) \}$. If all elements of $\{\rho(x - 2\Delta), \rho(x - 4\Delta), \rho(x - 6\Delta)\}$ are contained in $\bigcup_{i = 0}^{m^2 \elll^2 n - 1} [ \frac{i}{m^2 \ell^2 n},   \frac{i + 1}{m^2 \ell^2 n} - \Delta ]$, we take $a_1(x) = 2$, $a_2(x) = 4$, and $a_3(x) = 6$.  For $i = 1,2$, we have 
			\begin{align}
				&\,\nleft| f_2 ( \rho( x - a_i ( x ) \Delta) ) - g(x) \nright|\label{eq:iuwqf_00} \\
				\leq&\, \nleft| f_2 ( \rho( x - a_i ( x ) \Delta) ) - g( \rho( x - a_i ( x ) \Delta) ) \nright| + \nleft| g( \rho( x - a_i ( x ) \Delta) ) - g(x) \nright| \label{eq:iuwqf_0} \\
				\leq&\, \frac{2}{m^2 \elll^2 n} + \nleft| g( \rho( x - a_i ( x ) \Delta) ) - g(x) \nright| \label{eq:iuwqf_1} \\
				\leq &\, \frac{3}{m^2 \elll^2 n}, \label{eq:iuwqf_4}
			\end{align}
			where in \eqref{eq:iuwqf_1} we used \eqref{eqline:property_f_2}, and \eqref{eq:iuwqf_4} follows from the $1$-Lipschitz continuity of $g$, combined with $6\Delta \leq \frac{1}{m^2 \ell^2 n}$. To simplify notation, we set $y_i ( x ) =  f_2(\rho(x - a_i(x) \Delta))$, for $i =1,2,3$. Then, we have 
			\begin{equation}
			\label{eq:asdfdsafe_0}
				f(x) = \, \text{median} (y_1 ( x ), y_2 ( x ), y_3(x))  \in [\min ( y_1 ( x ), y_2 ( x )  ), \max ( y_1 ( x ), y_2 ( x ) ) ],
			\end{equation}
			and
			\begin{align}
				&\,\nleft| f (x) - g(x) \nright| \label{eq:asdfdsafe_1}\\
				&\leq \, \max ( \nleft|\min ( y_1 ( x ), y_2 ( x )  ) - g(x) \nright|, \nleft|\max ( y_1 ( x ), y_2 ( x )  ) - g(x) \nright|  ) \label{eq:asdfdsafe_2}\\
				&= \, \max ( \nleft| y_1 ( x ) - g(x) \nright|,  \nleft| y_2 ( x ) - g(x) \nright|  ) \label{eq:asdfdsafe_3}\\
				&=\, \max ( \nleft| f_2 ( \rho( x - a_1 ( x ) \Delta) ) - g(x)\nright|,  \nleft|f_2 ( \rho( x - a_2 ( x ) \Delta) ) - g(x)  \nright|  )\label{eq:asdfdsafe_4} \\
				&\leq \, \frac{3}{m^2 \elll^2 n}, \label{eq:asdfdsafe_5}
			\end{align}
			where in \eqref{eq:asdfdsafe_2} we used \eqref{eq:asdfdsafe_0},  and \eqref{eq:asdfdsafe_5} follows from \eqref{eq:iuwqf_00}-\eqref{eq:iuwqf_4}. As the choice of $x \in [0,1]$ was arbitrary, we have established that $\| f - g \|_{L^\infty ([0,1] )} \leq \frac{3}{m^2 \elll^2 n}$.

			It remains to show that $f$ can be realized by a ReLU network such that $f \in \mathcal{R} ( 600m + 2^{n+7} , 101\elll, \max \{ 8mn, 3^{n+2} \} )$. This will be accomplished by realizing the individual components of $f$ by suitable ReLU networks and then combining them according to Lemma~\ref{lem:algebra_on_ReLU_networks}. For $z \in \mathbb{R}$, define $r_{z}:\mathbb{R} \mapsto \mathbb{R}$, $r_z ( x ) = \rho ( x - z )$ and note that
			\begin{align*}
				f =&\, \text{median} \circ (  f_2\circ  r_{2\Delta}, f_2 \circ  r_{4\Delta}  , f_2  \circ  r_{6\Delta} ).
			\end{align*}
			As $2\Delta, 4\Delta, 6\Delta \leq 1$, we get
			\begin{equation}
			\label{eq:complexity_r}
				r_{2\Delta},r_{4\Delta},r_{6\Delta} \in \mathcal{R} ( 1, 2, 1).
			\end{equation} 
			Moreover, thanks to Lemma~\ref{lem:complexity_median}, we have 
			\begin{equation}
			\label{eq:complexity_median}
			 	\text{median} \in \mathcal{R} ( ( 3,1 ), 16, 3, 1 ).
			\end{equation}
			Application of Lemma~\ref{lem:algebra_on_ReLU_networks} together with \eqref{eq:complexity_r},\eqref{eq:complexity_median}, and \eqref{eq:complexity_f_2} then yields 
			\begin{align}
				f_2 \circ r_{2\Delta}, f_2 \circ r_{4\Delta},f_2 \circ r_{6\Delta} \in &\, \mathcal{R} ( 200m + 2^{n+5} , 98\elll + 2, \max \{ 8mn, 3^{n+2} \} )  \\
				\subseteq&\, \mathcal{R} ( 200m + 2^{n+5} , 99\elll, \max \{ 8mn, 3^{n+2} \} ), \label{eqline:l_geq_2_1} \\
				(  f_2 \circ r_{2\Delta}, f_2 \circ r_{4\Delta}  , f_2 \circ r_{6\Delta} ) \in&\, \mathcal{R} ( ( 1,3 ),  600m + 3\cdot 2^{n+5} , 99\elll, \max \{ 8mn, 3^{n+2} \} ),
			\end{align}
			and 
			\begin{align}
				f = &\, median \circ (  f_2 \circ r_{2\Delta}, f_2 \circ r_{4\Delta}  , f_2 \circ r_{6\Delta} )\\
				\in &\, \mathcal{R} ( 600m + 3\cdot 2^{n+5} , 99\elll + 3, \max \{ 8mn, 3^{n+2} \} ) \\
				\subseteq&\, \mathcal{R} ( 600m + 2^{n+7} , 101\elll, \max \{ 8mn, 3^{n+2} \} ) \label{eqline:l_geq_2_2},
			\end{align}
			where in \eqref{eqline:l_geq_2_1} and \eqref{eqline:l_geq_2_2} we used the assumption $\elll\geq 2$.
		\end{proof}
		\end{lemma}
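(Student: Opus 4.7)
The plan is to split the lemma into two independent parts: the $L^\infty$ approximation error bound and the ReLU network realization. I will handle them in that order.

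For the error bound, I would exploit the sandwich property of the median: $\mathrm{median}(y_1,y_2,y_3)$ always lies between $\min(y_i,y_j)$ and $\max(y_i,y_j)$ for any pair $i\neq j$. Hence if, for every $x\in[0,1]$, at least two of the three inputs $f_2(\rho(x-c\Delta))$ (with $c\in\{2,4,6\}$) lie within $\varepsilon$ of $g(x)$, then so does $f(x)$. To guarantee this, I would note that Lemma~\ref{lem:sec_B_2} only controls $f_2$ on $G := \bigcup_i [i/(m^2\ell^2 n),\,(i+1)/(m^2\ell^2 n) - \Delta]$, whose complement in $[0,1]$ is a union of gaps of width $\Delta$ spaced $1/(m^2\ell^2 n) = 10\Delta$ apart. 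Since the three values $\rho(x-2\Delta),\rho(x-4\Delta),\rho(x-6\Delta)$ are mutually at distance at least $2\Delta$ (strictly larger than the gap width $\Delta$), at most one of them can lie in any given gap, so at least two must lie in $G$. For each such good index $c$, the bound $|f_2(\rho(x-c\Delta)) - g(\rho(x-c\Delta))|\le 2/(m^2\ell^2 n)$ from Lemma~\ref{lem:sec_B_2} combined with $1$-Lipschitz continuity of $g$ and $|\rho(x-c\Delta)-x|\le 6\Delta = 0.6/(m^2\ell^2 n)$ gives a distance $\le 2.6/(m^2\ell^2 n)<3/(m^2\ell^2 n)$ to $g(x)$, as required. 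A minor care point is $x$ small enough that $\rho(x-c\Delta)=0$ for several $c$: here the shifted inputs coincide at $0\in G$, and the sandwich argument still applies (degenerate duplicates are harmless).

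For the complexity bound, I would realize $f$ by composition and parallelization via Lemma~\ref{lem:algebra_on_ReLU_networks}. Each shift $r_c:x\mapsto\rho(x-c\Delta)$ with $c\Delta\leq 1$ is a trivial one-hidden-layer ReLU network in $\mathcal{R}(1,2,1)$. Composing with $f_2\in\mathcal{R}(200m+2^{n+5},98\ell,\max\{8mn,3^{n+2}\})$ adds only constant depth and keeps weight magnitude unchanged; parallelizing the three composed networks $(f_2\circ r_2,\,f_2\circ r_4,\,f_2\circ r_6)$ triples the width, yielding a factor $3\cdot 2^{n+5}\le 2^{n+7}$. Finally, $\mathrm{median}:\mathbb{R}^3\to\mathbb{R}$ admits a small constant-size ReLU realization (for instance via $\mathrm{median}(a,b,c)=a+b+c-\max(a,b,c)-\min(a,b,c)$ with $\max$ and $\min$ built from two ReLU units each), which I would isolate as an auxiliary lemma (effectively a Lemma~\ref{lem:complexity_median}-type statement placing the median in $\mathcal{R}((3,1),16,3,1)$). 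Composing with this median network contributes only additive constants to width and depth; using the hypothesis $\ell\ge 2$ to absorb these constants into multiplicative ones yields width $\le 600m+2^{n+7}$, depth $\le 101\ell$, and weight magnitude $\max\{8mn,3^{n+2}\}$.

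The conceptually delicate step is the first part: one must verify the covering-by-shifts argument rigorously and handle the boundary effects introduced by $\rho(\cdot)$ when $x$ is close to $0$. The second part is largely bookkeeping once a small-complexity median network is in hand, though care is needed to ensure the depth overhead of the shifts, parallelization, and median can all be absorbed into the jump from $98\ell$ to $101\ell$ using only $\ell\ge 2$.
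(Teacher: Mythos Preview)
Your proposal is correct and follows essentially the same approach as the paper: the median sandwich argument combined with the ``at least two out of three shifts land in the good set $G$'' observation for the error bound, and the same composition-parallelization-median assembly via Lemma~\ref{lem:algebra_on_ReLU_networks} and Lemma~\ref{lem:complexity_median} for the complexity bound. Your pairwise-distance argument (the three shifted points are $\ge 2\Delta$ apart while gaps have width $\Delta$ and are $9\Delta$ apart, so at most one point can fall in the bad set) is in fact a cleaner substitute for the paper's explicit three-region case split, but the overall strategy is identical.
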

		\noindent \textbf{Step 4.} We are now ready to prove Proposition~\ref{prop:approximation_lip_increasing_weights}.
		\begin{proof}
			[Proof of Proposition~\ref{prop:approximation_lip_increasing_weights}] Set $D_a  = 2000$. For $W,L \geq D_a = 2000$, set $m = \bigl\lfloor \frac{W}{1000} \bigr \rfloor > 1$, $n = \bigl\lfloor  \log (\frac{2W}{5}) \bigr \rfloor - 7 > 1$, and $\elll = \lfloor\frac{L}{101}\rfloor> 10$. Fix $g \in \lip ( [0,1] ) $. Application of Lemma~\ref{lem:sec_B_3} to $g$ yields the existence of an 
			\begin{equation}
			\label{eq:complexity_constructed_f}
				f \in \mathcal{R} ( 600m + 2^{n+7} , 101\elll, \max \{ 8mn, 3^{n+2} \} )
			\end{equation}
			such that 
			\begin{equation}
			\label{eq:error_bound_f_minus_g}
				\| f - g \|_{L^\infty ( [0,1] )} \leq \frac{3}{m^2 \elll^2 n}.
			\end{equation}
			Owing to
			\begin{align}
				600m + 2^{n+7} \leq &\; 600\cdot \frac{W}{1000} + \frac{2}{5} W \leq W \\ 
				101 \elll \leq&\; L,\\
				\max \{ 8mn, 3^{n+2} \} \leq & \; W^2, \label{eqline:bound_weight_magnitude_finally}
			\end{align}
			where in \eqref{eqline:bound_weight_magnitude_finally} we used $m \leq \frac{W}{1000}$, $n \leq \log (\frac{2W}{5}) \leq \frac{2W}{5}$, and $3^{n+2} \leq (2^{ n+7 })^2 \leq W^2$, it follows from \eqref{eq:complexity_constructed_f} that 
			\begin{equation}
				f \in \mathcal{R} ( W,L,W^2 ).
			\end{equation}
			Next, note that there exists an absolute constant $c \in \mathbb{R}_+$ such that $m \geq c W$, $n \geq c \log (W) $, and $\elll \geq c L$. Hence, \eqref{eq:error_bound_f_minus_g} implies
			\begin{equation}
				\| f- g \|_{L^\infty ( [0,1] )}  \leq \frac{3}{m^2\elll^2 n} \leq \frac{3}{c^5} \left( W^2 L^2 \log (W) \right)^{-1}.
			\end{equation}
			Since the choice of $g \in \lip \left( [0,1] \right)$ was arbitrary, we have established
			\begin{equation}
				\begin{aligned}
				&\,\mathcal{A}_\infty	( \lip  \left( [0,1] \right), \mathcal{R} ( W, L, W^2  )  ) \leq   \frac{3}{c^5} ( W^2 L^2 \log (W) )^{-1},
				\end{aligned}
			\end{equation}
			which, upon setting $C_a = \frac{3}{c^5}$ and $ K = 2$, concludes the proof.
		\end{proof}

	\subsection{Realization of the Median Function by ReLU Networks} 
	\label{sub:proof_for_median}
		A ReLU network realization of the median function was reported in \cite{lu2020deep}. For completeness, we provide a formal statement thereof here, but note that its proof follows exactly the construction in \cite{lu2020deep}.
  		\begin{lemma}
		\label{lem:complexity_median}

			Let $\text{median}: \mathbb{R}^3 \mapsto \mathbb{R}$ be given by   
			\begin{equation*}
				\text{median} ( x_1, x_2, x_3 ) := x_{(2)},
			\end{equation*}
			where for $ x_1,x_2,x_3  \in \mathbb{R}$, the reordering of $x_1,x_2,x_3 $ from smallest to largest is denoted as $ x_{(1)},x_{(2)},x_{(3)} \in \mathbb{R}$. It holds that 
			\begin{equation*}
				\text{median} \in \mathcal{R} ( ( 3,1 ), 16, 3, 1 ).
			\end{equation*}
			\begin{proof}
				Let $x_1,x_2,x_3 \in \mathbb{R}$. First, note that
				\begin{equation}
					\label{eq:decompose_median}
					\text{median} ( x_1, x_2, x_3 ) = x_1 + x_2 +x_3 - \max ( x_1,x_2,x_3 ) - \min ( x_1,x_2,x_3 ).
				\end{equation}
				We have
				\begin{align}
					&\,\max ( x_1,x_2,x_3 )\label{eqline:median_1}\\
					&=\, \max (x_1, \max (x_2,x_3 ) )\\
					&=\, x_1 + \rho ( \max ( x_2,x_3 ) - x_1 )\label{eqline:median_11}\\
					&=\, x_1 + \rho ( x_2 + \rho (x_3 - x_2) - x_1 ) \label{eqline:median_12}\\
					&=\, \rho(x_1) - \rho(-x_1) + \rho (\rho(x_2) - \rho(-x_2) - \rho(x_1) + \rho(-x_1) + \rho (x_3 - x_2)  ),\label{eqline:median_2}
				\end{align}
				where \eqref{eqline:median_11} and \eqref{eqline:median_12} follow from $\max ( a,b ) = a + \rho( b -a)$, for $a,b \in \mathbb{R}$,  and in \eqref{eqline:median_2} we used $x = \rho(x) - \rho(-x)$, for $x \in \mathbb{R}$. Inserting \eqref{eqline:median_1}-\eqref{eqline:median_2} with $( x_1,x_2,x_3 )$ replaced by $( -x_1,-x_2,-x_3 )$ into the relation $\min ( x_1,x_2,x_3 ) = - \max ( -x_1,-x_2,-x_3 )$, $x_1,x_2,x_3 \in \mathbb{R}$, yields
				\begin{align}
					& \min ( x_1,x_2,x_3 )\label{eqline:median_3}\\
					&=\, - ( \rho(-x_1) - \rho(x_1) + \rho (\rho(-x_2) - \rho(x_2) - \rho(-x_1) + \rho(x_1) + \rho (-x_3 + x_2)  )) \\
					&=\,  - \rho(-x_1) + \rho(x_1) - \rho (\rho(-x_2) - \rho(x_2) - \rho(-x_1) + \rho(x_1) + \rho ( x_2-  x_3)  ). \label{eqline:median_4} 
				\end{align}
				Moreover, 
				\begin{equation}
					x_1 + x_2 +x_3 = \rho ( x_1 + x_2 +x_3 ) - \rho ( - x_1 - x_2 - x_3 ).\label{eqline:median_5}
				\end{equation}
				Substituting \eqref{eqline:median_1}-\eqref{eqline:median_5} into \eqref{eq:decompose_median} and using $\rho\circ \rho = \rho$ yields
				\begin{align*}
					&\text{median} ( x_1, x_2, x_3 ) \\
					&=\, \rho ( \rho ( x_1 + x_2 +x_3 ) ) - \rho  ( \rho ( - x_1 - x_2 - x_3 ) )\\
					&\,\,- \rho ( \rho(x_1) ) + \rho(\rho(-x_1)) - \rho (\rho(x_2) - \rho(-x_2) - \rho(x_1) + \rho(-x_1) + \rho (x_3 - x_2)  )   \\
					&\,\, + \rho ( \rho(- x_1) ) - \rho(\rho(x_1)) + \rho (\rho(- x_2) - \rho(x_2) - \rho(-x_1) + \rho(x_1) + \rho (x_2 - x_3)  ),
				\end{align*}
				which allows us to conclude that
				\begin{equation*}
				 	\text{median} \in \mathcal{R} ( (3,1), 16, 3, 1 ). \qedhere
				\end{equation*} 
			\end{proof}
		\end{lemma}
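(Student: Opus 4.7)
The plan is to reduce $\text{median}(x_1,x_2,x_3)$ to ReLU-compatible primitives via the decomposition
\begin{equation*}
\text{median}(x_1,x_2,x_3) = x_1 + x_2 + x_3 - \max(x_1,x_2,x_3) - \min(x_1,x_2,x_3),
\end{equation*}
together with the elementary identities $\max(a,b) = a + \rho(b-a)$, $\min(a,b) = -\max(-a,-b)$, and $x = \rho(x)-\rho(-x)$. The last of these is the workhorse for forwarding raw inputs through a ReLU layer without changing their value while staying consistent with the definition of $R(\Phi)$, which alternates affine maps with $\rho$.

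In detail, I would iterate the $\max$ identity to obtain $\max(x_1,x_2,x_3) = x_1 + \rho(x_2 + \rho(x_3-x_2) - x_1)$ and, after replacing each ``bare'' $x_i$ by $\rho(x_i)-\rho(-x_i)$, realize the whole expression as a depth-$3$ ReLU network. A completely symmetric construction on the negated inputs yields $\min(x_1,x_2,x_3)$ at the same depth. The sum $x_1+x_2+x_3$ is trivially represented via $\rho(x_1+x_2+x_3)-\rho(-x_1-x_2-x_3)$ at depth $2$, which can be embedded into the depth-$3$ architecture simply by leaving these two neurons idle in the second hidden layer. The final affine map $S_3$ combines the three pieces according to the decomposition above; all coefficients used are $\pm 1$, which takes care of the weight-magnitude bound $\mathcal{B}(\Phi)\le 1$.

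The main things to verify are the width count and the consistency of the layer structure. For $\max$, the first hidden layer needs the four units $\rho(\pm x_1),\rho(\pm x_2)$ plus $\rho(x_3-x_2)$, i.e.\ five neurons; the second hidden layer needs one extra neuron $\rho(\cdot)$ where the argument is the previously assembled affine combination. The $\min$ branch contributes the same count on negated inputs, and the $x_1+x_2+x_3$ branch contributes two more neurons at the first hidden layer and two more at the second. A careful but routine tally gives at most $16$ units in each hidden layer, matching the claimed width. The main obstacle I anticipate is bookkeeping: one has to pick a single consistent architecture in which every neuron needed for later layers is already alive, without exceeding width $16$, and this forces one to reuse the $\rho(\pm x_i)$ units across the $\max$, $\min$, and sum branches. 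Depth $3$ is forced by the nested $\rho$ in the $\max$ formula (two ReLU applications) plus the final affine readout.
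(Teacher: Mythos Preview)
Your proposal is correct and follows essentially the same approach as the paper: the same decomposition $\text{median}=\sum x_i-\max-\min$, the same nested identity $\max(x_1,x_2,x_3)=x_1+\rho(x_2+\rho(x_3-x_2)-x_1)$, the same use of $x=\rho(x)-\rho(-x)$ to pass values through a ReLU layer, and the same padding of the sum branch to depth $3$ via $\rho\circ\rho=\rho$. One minor remark: the sum branch uses the dedicated neurons $\rho(\pm(x_1+x_2+x_3))$ rather than the individual $\rho(\pm x_i)$, so the sharing you anticipate is only between the $\max$ and $\min$ branches; with that bookkeeping the first hidden layer has $8$ distinct neurons and the second at most $8$, so the width bound $16$ is comfortably met.
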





	\section{Proof of Proposition~\ref{prop:piecewise_representation}} 
	\label{sub:piecewise_linear_representation}

		We start with an intermediate result.
		\begin{proposition}
			\label{prop:piecewise_representation_constructive}
			Let $M \in \mathbb{N}$ with $M \geq 3$, $E \in \mathbb{R}_+$, and let $X = (x_i)_{i = 0}^{M-1}$ be a strictly increasing sequence taking values in $[0,1]$. Then, for all $u,v \in \mathbb{N}$ such that $u^2 v \geq M$, we have
			\begin{align*}
				\Sigma ( X, E) \subseteq&\, \mathcal{R} ( 20u, 30v, \max \{ 1, C_k M^6 R_m ( X ) (R_c( X ))^3 E \} ) \\
				\subseteq &\,\mathcal{R} ( 20u, 30v, \max \{ 1, C_k M^6 (R_m ( X ))^4 E \} ),
			\end{align*}
			for an absolute constant $C_k \in \mathbb{R}$ satisfying $2\leq C_k \leq 10^5$, and where $R_m(X) := \max_{i =1,\dots, M} ( x_{i} - x_{i -1} )^{-1}$ and $R_c(X) := \frac{\max_{i =1,\dots, M-1}  (x_{i} - x_{i -1}) }{\min_{i =1,\dots, M-1}  (x_{i} - x_{i -1}) }$.
		\end{proposition}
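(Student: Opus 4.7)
The plan is to construct, for any given $f \in \Sigma(X,E)$, an explicit ReLU network realization of $f$ with the stated width, depth, and weight magnitude. The natural starting point is the classical slope-change representation
\[
f(x) \,=\, f(x_0) + s_1\,\rho(x-x_0) + \sum_{i=1}^{M-2}(s_{i+1}-s_i)\,\rho(x-x_i),
\]
where $s_i$ denotes the slope of $f$ on $[x_{i-1},x_i]$ (and $f$ is constant outside $[x_0,x_{M-1}]$, so the boundary slopes cancel). This already gives a width-$M$, depth-$2$ realization with coefficient magnitudes bounded by $|s_{i+1}-s_i| \le 2\max_i |s_i| \le 4E\,R_m(X)$, but the width $M$ is too large; we must compress it to $20u$ while spending depth $30v$, where $u^2v \ge M$.

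To achieve this compression, I would partition $\{0,\dots,M-1\}$ into $v$ blocks $B_1,\dots,B_v$, each of cardinality at most $u^2$, and write $f = f(x_0) + \sum_{j=1}^v P_j$ with
\[
P_j(x) \,:=\, \sum_{i\in B_j}(s_{i+1}-s_i)\,\rho(x-x_i).
\]
The overall network is then a chain of $v$ depth-$30$ blocks that, on layer $j$, takes as input $(x,\,f(x_0)+\sum_{j'<j}P_{j'}(x))$ and outputs $(x,\,f(x_0)+\sum_{j'\le j}P_{j'}(x))$. Within a single block, realizing $P_j$ — a piecewise linear function with up to $u^2$ breakpoints — by a sub-network of width $20u$ and \emph{constant} depth $30$ is the crux. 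For this I would combine a two-level hierarchical decomposition of $B_j$ into $u$ sub-blocks of size $u$ with a bit-extraction step in the spirit of Proposition~\ref{prop:bit_extraction}: the $u$ slope-difference coefficients of each sub-block are quantized and packed into a single ternary number stored as a fixed network weight, and a bounded-weight decoder recovers the relevant partial sums. The breakpoint location within the sub-block is handled by a piece-selector built from a constant-depth, width-$O(u)$ arrangement of ReLUs shifted by the $\{x_i\}$, which routes $x$ to the correct affine branch. Summing sub-block contributions within a block is done by a shallow sum implemented with depth-$30$ budget.

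The main obstacle, and the substance of the proposition, is controlling the weight magnitude so that it grows only \emph{polynomially} in $M$, $R_m(X)$, $R_c(X)$, and $E$, rather than exponentially in depth as in naive Telgarsky/Yarotsky-style constructions. I would track three sources of weight inflation: (i) the base slope-difference magnitudes, bounded by $4E\,R_m(X)$; (ii) the scaling constants introduced by the ternary encoding/decoding of coefficients, which by Proposition~\ref{prop:bit_extraction} contribute a factor polynomial in $u$ and hence in $M$; and (iii) the piece-selector thresholds, which introduce factors on the order of $R_m(X)\,R_c(X)^k$ (the powers of $R_c(X)$ originating from the ratio of the largest to smallest gap, since the selector must resolve all breakpoint spacings simultaneously). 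A careful accounting should yield a weight-magnitude bound of the form $C_k M^6 R_m(X) R_c(X)^3 E$ for some absolute $C_k\in[2,10^5]$, which is the first claimed inclusion.

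The second inclusion is immediate once the first is established: since $X\subseteq[0,1]$ we have $\max_i(x_i-x_{i-1})\le 1$, and therefore
\[
R_c(X) \,=\, \frac{\max_i(x_i-x_{i-1})}{\min_i(x_i-x_{i-1})} \,\le\, \frac{1}{\min_i(x_i-x_{i-1})} \,=\, R_m(X),
\]
so $R_m(X)R_c(X)^3 \le R_m(X)^4$, which upgrades the bound to $C_k M^6 R_m(X)^4 E$ as stated. The polynomial (rather than exponential) dependence on depth is precisely what distinguishes this construction from those in \cite{daubechies2022nonlinear, shen2019deep, shen2019nonlinear} and is the key enabling step for the memory-optimal approximation result of Theorem~\ref{thm:approximation_lip}.
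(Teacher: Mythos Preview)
Your overall architecture---partition the $M$ breakpoints into $v$ groups and process them sequentially through $v$ depth-blocks of width $O(u)$---matches the paper's high-level shape. The crucial gap is in how you propose to handle the $u^2$ breakpoints inside a single block. You invoke bit extraction (Proposition~\ref{prop:bit_extraction}) to ``quantize and pack'' the $u$ slope-difference coefficients of each sub-block into a ternary number stored as a weight, then decode. But the statement requires \emph{exact} containment $\Sigma(X,E)\subseteq\mathcal{R}(\dots)$, not approximation: the slope differences $s_{i+1}-s_i$ are arbitrary reals in $[-4ER_m(X),\,4ER_m(X)]$, and any quantization destroys exact realization. Moreover, the decoder in Proposition~\ref{prop:bit_extraction} extracts \emph{partial sums of $\{0,1\}$-strings}, not arbitrary real coefficients; there is no way to recover the original $s_{i+1}-s_i$ exactly from a finite-precision encoding by a ReLU network. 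So the compression step as you describe it cannot work, and the ``piece-selector'' you sketch remains too vague to see how $u^2$ breakpoints are handled in width $O(u)$ and constant depth without it.

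The paper takes a completely different route that avoids any coefficient compression. It writes $M=tu$ with $t=uv$, and for each hat basis function $\gamma_{kt+\ell}$ finds three piecewise-linear functions $f^1_{k,\ell},f^2_{k,\ell},f^3_{k,\ell}$ with $\gamma_{kt+\ell}=\rho\circ f^1_{k,\ell}-\rho\circ f^2_{k,\ell}+\rho\circ f^3_{k,\ell}$ (Lemma~\ref{lem:representation_two_hidden_layer}). The point is that every $f^j_{k,\ell}$ has breakpoints only in a \emph{fixed set of $8u$ nodes} (the first and last four $x$'s in each of the $u$ blocks), and for fixed $(\ell,j)$ the functions $\{f^j_{k,\ell}\}_{k=0}^{u-1}$ have pairwise disjoint supports. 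The disjoint-support property lets one pull the sum over $k$ inside $\rho$, collapsing $g=\sum_i g(x_i)\gamma_i$ into $H^+-H^-$ with $H^\pm=\sum_{\ell=0}^{3t-1}\rho\circ h_\ell^\pm$, where each $h_\ell^\pm$ is an affine combination of only the $8u$ ReLU atoms $\rho(\cdot-z_i)$. A separate lemma (Lemma~\ref{lem:realization_piecewise_linear_functions}) then realizes such a sum of $3t=3uv$ terms by a width-$(9u{+}1)$, depth-$(3v{+}2)$ network that carries the $8u$ atoms along and accumulates $u$ terms per layer. The $R_c(X)^3$ factor comes from the explicit solution for the heights $y^j_{k,\ell}$ in the $f^j_{k,\ell}$ construction; the $M^6$ (rather than $M^2$) arises only when padding a general $M$ up to $\hat u^2\hat v$ in the reduction step. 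No bit extraction is used anywhere in this proposition.

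Your final paragraph on $R_c(X)\le R_m(X)$ and the second inclusion is correct and matches the paper.
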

		We now show how Proposition~\ref{prop:piecewise_representation_constructive} leads to the proof of Proposition~\ref{prop:piecewise_representation} and provide the proof of Proposition~\ref{prop:piecewise_representation_constructive} thereafter.
		\begin{proof}
		[Proof of Proposition~\ref{prop:piecewise_representation}]
			Application of Proposition~\ref{prop:piecewise_representation_constructive} to $\Sigma \bigl( X, \frac{1}{C_k M^6 ( R_m(X) )^4} \bigr)$, with $C_k$ the constant in the statement of Proposition~\ref{prop:piecewise_representation_constructive}, yields

			\begin{equation}
			\label{eq:normalize_the_weight}
				\Sigma \biggl( X, \frac{1}{C_k M^6 ( R_m(X) )^4}\biggr) \subseteq \mathcal{R} ( 20u, 30v, 1 ).
			\end{equation}
			We hence get
			\begin{align}
				\mathcal{R} ( 20u, 30v, 2w) \supseteq&\, ( 2w )^{30v} \cdot \mathcal{R} ( 20u, 30v, 1) \label{eqline:relu_set_weight_1}\\
				\supseteq &\,  ( 2w )^{30v} \cdot \Sigma \biggl( X, \frac{1}{C_k M^6 ( R_m(X) )^4}\biggr)\label{eqline:embedding_cpwl_in_relu}\\
				\supseteq &\, ( 2w )^{30v} \cdot \frac{1}{( 2w )^{30v}}\cdot\Sigma \biggl( X, \frac{( 2w )^{30v}}{C_k M^6 ( R_m(X) )^4}\biggr) \label{eqline:scaling_relation_cpwl} \\
				\supseteq &\, \Sigma ( X, E ), \label{eqline:embedding_cpwl_in_cpwl}
			\end{align}
			where in \eqref{eqline:relu_set_weight_1} we applied Proposition~\ref{prop:depth_weight_magnitude_tradeoff} with $( W,L,L',B,B' ) = ( 20u, 30v, 0, 1, 2w )$, \eqref{eqline:embedding_cpwl_in_relu} follows from \eqref{eq:normalize_the_weight},  and \eqref{eqline:embedding_cpwl_in_cpwl} is a consequence of \eqref{eqline:cpwl_realization_weight_condition} and $2^{30v} \geq 10^5 \geq C_k$. Further, in \eqref{eqline:scaling_relation_cpwl} we used that\footnote{While the reverse inclusion is also valid, it will not be needed here.}
   $a\cdot \Sigma ( X, b ) \subseteq \Sigma ( X, ab )$, for all $a,b \in \mathbb{R}_+$, which follows from the fact that for every $f \in \Sigma ( X, b ) $, $a\cdot f$ is a bounded piecewise linear function with breakpoints in $X$ and $L^\infty ( \mathbb{R} )$-norm no greater than $ab$.
		\end{proof}

		It remains to prove Proposition~\ref{prop:piecewise_representation_constructive}. 
  The proof will be effected by representing the functions in $\Sigma ( X, B )$ in terms of a specific basis for the linear space $\Sigma ( X, \infty)$. Crucially, the elements of this basis will be realized by ReLU networks with suitable properties. Concretely, we shall work with the basis $\{ \gamma_i: \mathbb{R} \mapsto \mathbb{R} \}_{i =0 }^{M- 1} $ given by 
		\begin{equation}
		\label{eq:basis_1}
			\gamma_0 (x) = \left\{
			\begin{aligned}
				&1, && \quad x \in (-\infty, x_0 ],\\
				&1 - \frac{x - x_0}{x_1 - x_0}, && \quad x \in (x_0, x_1],\\
				&0, && \quad x \in (x_1, \infty),
			\end{aligned}
			\right.
		\end{equation}
		for $i = 1, \dots, M - 2$,
		\begin{equation}
		\label{eq:basis_2}
			\gamma_i (x) = \left\{
			\begin{aligned}
				&0, && \quad x \in (-\infty, x_{ i -1} ] \cup (x_{i+1}, \infty),\\
				&\frac{x - x_{i-1}}{x_i - x_{i-1}}, && \quad x \in (x_{i-1}, x_i],\\
				&1 - \frac{x - x_{i}}{x_{i+1} - x_{i}}, && \quad x \in (x_{i}, x_{i+1}],
			\end{aligned}
			\right.
		\end{equation}
		and
		\begin{equation}
		\label{eq:basis_3}
			\gamma_{M-1} (x) = \left\{
			\begin{aligned}
				&0, && \quad x \in (-\infty, x_{M-2} ],\\
				&\frac{x - x_{M-2}}{x_{M-1} - x_{M-2}}, && \quad x \in (x_{M -2}, x_{M-1}],\\
				&1, && \quad x \in (x_{M - 1}, \infty).
			\end{aligned}
			\right.
		\end{equation}
		We note that  $\gamma_0,\dots, \gamma_{M-1} \in \Sigma ( X, 1)$, and for $i = 0,\dots, M-1$, $j  = 0,\dots, M-1$, we have the interpolation property
		\begin{equation*}
			\gamma_i(x_j) = \left\{ 
			\begin{aligned}
				&1, && \text{if } i = j,\\
				&0, && \text{if } i \neq j.
			\end{aligned}
			\right.
		\end{equation*}
		An illustration of the basis $\{ \gamma_i: \mathbb{R} \mapsto \mathbb{R}\}_{i =0 }^{M-1}$ is provided in Figure~\ref{fig:basis}.

		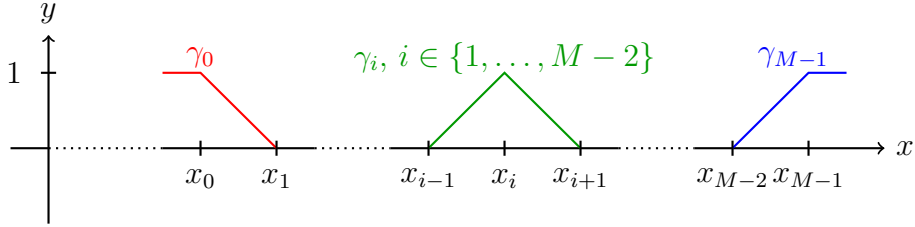
\begin{figure}[H]
			\centering
			\begin{tikzpicture}
			    \draw[thick, -] (-0.5,0) -- (0,0);
			    \draw[thick, dotted] (0,0) -- (1.5,0) ;
			    \draw[thick, -] (1.5,0) -- (3.5,0);
			    \draw[thick, dotted] (3.5,0) -- (4.5,0) ;
			    \draw[thick, -] (4.5,0) -- (7.5,0);
			    \draw[thick, dotted] (7.5,0) -- (8.5,0) ;
			    \draw[thick, ->] (8.5,0) -- (11,0) node[right] {$x$};
			    \draw[->, thick] (0,-1) -- (0,1.5) node[above] {$y$};

			    \draw[-, thick] (-0.1,1) -- (0.1,1) node[left=3mm] {$1$};
			    
			    \draw[red, thick] (1.5,1) -- (2,1) -- (3,0);
			    
			    \draw[green!60!black, thick] (5,0) -- (6,1) -- (7,0);
			    
			    \draw[blue, thick] (9,0) -- (10,1) -- (10.5,1);
			    
			    \foreach \x in {2,3,5,6,7,9,10} {
			    	\draw[thick] (\x,-0.1) -- (\x,0.1);
			    }
			    
			    
			    \node at (2,-0.4) {$x_0$};
			    \node at (3,-0.4) {$x_1$};
			    \node at (5,-0.4) {$x_{i-1}$};
			    \node at (6,-0.4) {$x_{i}$};
			    \node at (7,-0.4) {$x_{i+1}$};
			    \node at (9,-0.4) {$x_{M-2}$};
			    \node at (10,-0.4) {$x_{M-1}$};
			    
			    \node[red] at (2,1.2) {$\gamma_0$};
			    \node[green!60!black] at (6,1.2) {$\gamma_i,\, i \in \{ 1,\dots, M-2 \}$};
			    \node[blue] at (9.8,1.2) {$\gamma_{M-1}$};
			\end{tikzpicture}
			\caption{The basis $\{ \gamma_i\}_{i =0 }^{M - 1}$ for $\Sigma(X, \infty)$.}
			\label{fig:basis}
		\end{figure}
		For every $f \in \Sigma ( X, \infty )$, we have 
		\begin{equation}
			\label{eq:basis_representation}
			f(x) = \sum_{i = 0}^{M-1} f(x_i) \gamma_i(x), \quad \text{ for all } x \in \mathbb{R},
		\end{equation}
		which is a consequence of the fact that $f$ and $\sum_{i = 0}^{M-1} f(x_i) \gamma_i$, by virtue of both being bounded piecewise linear functions with the same breakpoints and the same function values on these breakpoints, must be identical. This shows that $\{ \gamma_i\}_{i =0 }^{M - 1}$ is, indeed, a basis for $\Sigma ( X, \infty )$.
		Based on \eqref{eq:basis_representation}, we proceed to the next building block of our proof.
		\begin{lemma}
			\label{lem:representation_single_hidden_layer}
			Let $M \in \mathbb{N}$ with $M \geq 3$, $E \in \mathbb{R}_+$, and let $X = ( x_i )_{i = 0}^{M-1}$ be a strictly increasing sequence taking values in $[0,1]$. Then, every function $f \in \Sigma ( X, E )$ can be represented as
			\begin{equation*}
				f(x) = b + \sum_{i = 0}^{M-1} a_i\, \rho(x - x_i), \quad x \in \mathbb{R},
			\end{equation*}
			for some $b,a_0,\dots, a_{M-1} \in \mathbb{R} $ with  $\max \{ | b |,| a_0 |, \dots, | a_{M-1} |    \}  \leq 4 R_m(X) E$, where $R_m(X) := \max_{i =1}^{M} ( x_{i} - x_{i -1} )^{-1}$. 
			In particular, $f \in \mathcal{R} ( M, 2, 4 R_m(X) E )$.
		\end{lemma}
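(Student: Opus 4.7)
The plan is to start from the basis expansion \eqref{eq:basis_representation}, namely $f = \sum_{i=0}^{M-1} f(x_i)\,\gamma_i$, and, in parallel, to express each basis element $\gamma_i$ as an affine combination of the shifted ReLUs $\rho(\,\cdot\,-x_j)$, $j = 0,\ldots,M-1$. A direct inspection of \eqref{eq:basis_1}--\eqref{eq:basis_3} shows that every $\gamma_i$ is a bounded piecewise linear function whose slope changes only at the breakpoints $x_{i-1},x_i,x_{i+1}$ (the functions $\gamma_0$ and $\gamma_{M-1}$ being handled separately due to their flat tails at $\pm\infty$). Hence the coefficient of $\rho(x - x_j)$ in the shifted-ReLU expansion of $\gamma_i$ is exactly the jump in the slope of $\gamma_i$ at $x_j$, and is nonzero only for $j \in \{i-1,i,i+1\}$.

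Summing across $i$, the coefficient $a_j$ of $\rho(x - x_j)$ in $f$ equals the slope jump of $f$ itself at $x_j$,
\begin{equation*}
a_j \;=\; \frac{f(x_{j+1}) - f(x_j)}{x_{j+1} - x_j} \;-\; \frac{f(x_j) - f(x_{j-1})}{x_j - x_{j-1}}\qquad (1 \le j \le M-2),
\end{equation*}
with the natural one-sided boundary conventions at $j=0$ and $j=M-1$, where only one of the two slopes appears because $f$ is constant on $(-\infty, x_0]$ and $[x_{M-1}, \infty)$. The constant offset is $b = f(x_0)$, since $\rho(x - x_i) = 0$ for every $i$ whenever $x \le x_0$. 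Using $|f(x_j)| \le E$ together with $(x_j - x_{j-1})^{-1} \le R_m(X)$, the triangle inequality gives $|a_j| \le 4 R_m(X)\,E$. Moreover, as $M \ge 3$ points lie in $[0,1]$, at least one gap is $\le 1/(M-1)$, so $R_m(X) \ge M-1 \ge 2$, and therefore $|b| = |f(x_0)| \le E \le 4 R_m(X)\,E$ as well.

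The representation $f(x) = b + \sum_{i=0}^{M-1} a_i\,\rho(x - x_i)$ translates verbatim into a depth-$2$ ReLU network of width $M$: the first affine map takes $x \in \mathbb{R}$ to $(x - x_0,\ldots,x - x_{M-1})^\top$, with input weights equal to $1$ and biases $-x_i \in [-1,0]$; the activation $\rho$ is applied componentwise; and the second affine map computes $\sum_i a_i(\,\cdot\,) + b$. Every weight and every bias has magnitude at most $\max\{1, |b|, \max_i |a_i|\} \le 4 R_m(X)\,E$, giving $f \in \mathcal{R}(M, 2, 4 R_m(X)\,E)$. The main obstacle is purely bookkeeping, namely keeping track of the one-sided slope-jump formulas at the boundary breakpoints $x_0$ and $x_{M-1}$ where $\gamma_0$ and $\gamma_{M-1}$ contribute asymmetrically; once this is handled cleanly, the weight-magnitude bound is an immediate triangle inequality.
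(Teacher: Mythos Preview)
Your argument is correct and essentially identical to the paper's: expand $f$ in the hat-function basis \eqref{eq:basis_representation}, insert the shifted-ReLU formulas \eqref{eq:represented} for each $\gamma_i$, collect terms (your slope-jump expression for $a_j$ is precisely what the paper obtains after simplification), and bound using $|f(x_j)|\le E$ and $(x_{j+1}-x_j)^{-1}\le R_m(X)$. One small point: your final inequality $\max\{1,|b|,\max_i|a_i|\}\le 4R_m(X)E$ tacitly assumes $4R_m(X)E\ge 1$, which is not guaranteed for small $E$; the paper's proof sidesteps this by only establishing the bound on $b,a_0,\dots,a_{M-1}$ and leaving the ``In particular'' clause implicit, so your treatment is no less complete than the original.
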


		\begin{proof}


			The following representations of the functions $\{\gamma_0,\dots, \gamma_{M-1}\}$ can be read off directly from their definition 
			\begin{equation}
			\label{eq:represented}
			\begin{aligned}
				\gamma_0 (x) =&\, - \frac{1}{x_1 - x_0}\,\rho(x - x_0) + \frac{1}{x_1 - x_0}\,\rho(x - x_1) + 1,\\
				\gamma_i(x) = &\, \frac{1}{x_{i} - x_{i-1}} \,\rho(x - x_{i-1})\\
				& - \biggl( \frac{1}{x_{i} - x_{i-1}} + \frac{1}{x_{i+1} - x_{i}} \biggr) \,\rho(x - x_i)\\
				&+ \frac{1}{x_{i+1} - x_{i}} \,\rho(x - x_{i+1}), \quad i = 1,\dots, M-2,\\
				\gamma_{M-1} (x) =&\, \frac{1}{x_{M-1} - x_{M-2}}\,\rho(x - x_{M-2}) - \frac{1}{x_{M - 1} - x_{M-2}}\,\rho(x - x_{M-1}),
			\end{aligned}
			\end{equation}
			for $x \in \mathbb{R}$ in all cases.  Inserting \eqref{eq:represented} into \eqref{eq:basis_representation} yields
			\begin{align*}
				f(x) =& \, f(x_0) +  \Bigl( - \frac{f(x_0)}{x_1 - x_0} + \frac{f(x_{1})}{x_{1} - x_{0}} \Bigr) \rho(x - x_{0}) \\
				& +  \sum_{i = 1}^{M-2} \biggl( \frac{f(x_{i+1})}{x_{i+1} - x_{i}}  -  \frac{f(x_i)}{x_{i} - x_{i-1}} - \frac{f(x_i)}{x_{i+1} - x_{i}}   +  \frac{f(x_{i -1})}{x_{i} - x_{i -1 }} \biggr) \rho(x - x_{i}) \\
				& + \Bigl( \frac{f(x_{M -2})}{x_{M- 1} - x_{M -2 }}  - \frac{f(x_{M-1})}{x_{M-1} - x_{M-2}} \Bigr)\rho(x - x_{M-1}), \quad \text{for } x \in \mathbb{R}.
			\end{align*}
			We conclude the proof by setting $b = f(x_0)$, $a_0 =   - \frac{f(x_0)}{x_1 - x_0} + \frac{f(x_{1})}{x_{1} - x_{0}} $, $a_i = \frac{f(x_{i+1})}{x_{i+1} - x_{i}}  -  \frac{f(x_i)}{x_{i} - x_{i-1}} - \frac{f(x_i)}{x_{i+1} - x_{i}}   +  \frac{f(x_{i -1})}{x_{i} - x_{i -1 }}$, for $i = 1,\dots, M-2$, and $a_{M-1} = \frac{f(x_{M -2})}{x_{M-1} - x_{M -2 }}  - \frac{f(x_{M-1})}{x_{M-1} - x_{M-2}} $, so that, indeed, $\max \{ | b |,| a_0 |, \dots, | a_{M-1} |    \}  \leq 4 \| f \|_{L^\infty ( [0,1] )} \max_{i =1,\dots, M-1} ( x_{i} - x_{i -1} )^{-1} \leq  4 R_m(X) E. $
		\end{proof}

  We next provide a lemma describing ReLU network realizations of the basis functions $\{ \gamma_i \}_{i = 0}^{M-1}$.
  These constructions are inspired by \cite{daubechies2022nonlinear} and \cite{shen2019nonlinear}.

		\begin{lemma}
			\label{lem:representation_two_hidden_layer}
			Let $t,u \in \mathbb{N}$ with $t \geq 8$, set $M = tu$ and 
			\begin{equation}
				\label{eq:index_set_cpwl}
				\mathcal{I}:= \{ ( k,\ell ): k \in \{ 0,\dots, u-1 \},\ell \in \{ 0,\dots, t -1 \}\}.
			\end{equation}
			We have 
			\begin{equation}
				\{ 0,\dots, M -1 \} = \{ kt + \ell: ( k,\ell ) \in \mathcal{I} \}.
			\end{equation}
			Let $X = (x_i)_{i = 0}^{M-1}$ be a strictly increasing sequence taking values in $[0, 1]$ and let $\{\gamma_i \}_{i = 0}^{M-1}$ be the basis for $\Sigma ( X, \infty )$ defined in \eqref{eq:basis_1}-\eqref{eq:basis_3}. Then, there exist $f_{k,\ell}^1, f_{k,\ell}^2, f_{k,\ell}^3 \in \Sigma ( X, \infty ) $, for all $( k,\ell ) \in \mathcal{I}$, such that the following statements hold.
			\begin{itemize}
				\item (Property 1) For all $( k,\ell ) \in \mathcal{I}$, the basis function $\gamma_{kt +\ell}$ can be realized according to
				\begin{equation*}
					\gamma_{kt +\ell} = \rho \circ f_{k,\ell}^1 - \rho \circ f_{k,\ell}^2 + \rho \circ f_{k,\ell}^3.
				\end{equation*}
				\item (Property 2) Let $( z_i )_{i = 0}^{8u -1}$ be the strictly increasing sequence obtained by sorting the elements in 
				\begin{equation}
					\label{eq:definitio_z_sequence}
					\{ x_{kt + \elll}: k \in \{ 0,\dots, u-1 \}, \elll \in \{ 0,1,2,3,t - 4, t-3,t-2,t-1 \} \}.
				\end{equation}
				For all $( k,\ell ) \in \mathcal{I}$, and for $j \in \{ 1,2,3 \}$, the function $f^j_{k,\ell}$ can be written as
				\begin{equation}
					\label{eq:expression_of_fjki}
					 f^j_{k,\ell} (x) = b + \sum_{i = 0}^{8u - 1} a_i \rho(x - z_i), \quad x \in \mathbb{R},
				\end{equation}
				for $b, a_0,\dots, a_{8u - 1} \in \mathbb{R}$ depending on $k,\elll,j$ and such that $$ | b |, | a_0 |,\dots, | a_{8u - 1} | \leq  12 t^2 R_m(X)  (R_c(X))^3 ,$$ where $R_m(X) := \max_{i =1, \dots, M-1} ( x_{i} - x_{i -1} )^{-1}$ and $R_c(X) := \frac{\max_{i =1, \dots, M-1}  ( x_{i} - x_{i -1} ) }{\min_{i =1, \dots, M-1}  ( x_{i} - x_{i -1} ) }$.

				\item (Property 3) For every $( \ell, j ) \in \{ 0,1\dots, t-1 \} \times  \{ 1,2,3 \}$, the functions in $\{ f^j_{k,\ell} \}_{k =0}^{u-1}$ have pairwise disjoint supports

			\end{itemize}

			\begin{proof}
				See Appendix~\ref{sub:proof_of_lemma_lem:representation_two_hidden_layer}.
			\end{proof}
		\end{lemma}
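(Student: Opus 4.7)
The plan is to construct $f^1_{k,\ell}, f^2_{k,\ell}, f^3_{k,\ell}$ by a case analysis on $\ell$, distinguishing between ``boundary'' indices $\ell \in \{0,1,2,t-3,t-2,t-1\}$ and ``middle'' indices $\ell \in \{3,4,\dots,t-4\}$ (the latter case is empty when $t = 8$). In both cases, I will exploit the decomposition \eqref{eq:represented} of $\gamma_i$ into a signed sum of three terms of the form $c \cdot \rho(x - y)$, with the roles of $f^1, f^2, f^3$ being to reproduce these three terms via positive homogeneity of $\rho$. Concretely, since $\rho(c(x-y)) = c\,\rho(x-y)$ for $c \geq 0$, I can absorb the coefficients into the slopes of the affine constituents.

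For the boundary cases, all three breakpoints $x_{kt+\ell-1}, x_{kt+\ell}, x_{kt+\ell+1}$ of $\gamma_{kt+\ell}$ lie in the $z$-sequence (for $\ell = 0$, using that $x_{kt-1}$ is $z_{8k-1}$ from the previous group; for $\ell = t-1$, that $x_{(k+1)t}$ is $z_{8(k+1)}$ from the next group). I therefore take $f^1_{k,\ell}$, $f^2_{k,\ell}$, $f^3_{k,\ell}$ to be rescaled shifted ramps anchored at these three $z$-points, and then modify them outside a neighborhood of group $k$ by attaching piecewise linear ``cut-offs'' supported on the eight group-$k$ $z$-points $\{z_{8k},\dots,z_{8k+7}\}$ so that each $f^j_{k,\ell}$ becomes nonpositive (and eventually constant) once $x$ leaves $[x_{kt},x_{(k+1)t-1}]$. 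These cut-offs do not change $\rho(f^j_{k,\ell})$ on the support of $\gamma_{kt+\ell}$ but are essential for Property 3.

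For the middle cases, the three relevant breakpoints $x_{kt+\ell-1}, x_{kt+\ell}, x_{kt+\ell+1}$ do \emph{not} lie in the $z$-sequence, since the middle region $[x_{kt+3}, x_{kt+t-4}]$ contains no $z$-points. The crucial observation is that any function expressible as $b + \sum_i a_i\,\rho(x - z_i)$ is \emph{affine} on this middle region, and an affine function can be chosen to vanish at any prescribed interior point. I therefore define $f^1_{k,\ell}$, $f^2_{k,\ell}$, $f^3_{k,\ell}$ to be affine on $[x_{kt+3}, x_{kt+t-4}]$ with slopes $1/\alpha$, $1/\alpha + 1/\beta$, $1/\beta$ (where $\alpha := x_{kt+\ell}-x_{kt+\ell-1}$ and $\beta := x_{kt+\ell+1}-x_{kt+\ell}$) and zero-crossings at $x_{kt+\ell-1}, x_{kt+\ell}, x_{kt+\ell+1}$ respectively. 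A direct verification, identical to the one underlying \eqref{eq:represented}, shows $\rho(f^1_{k,\ell}) - \rho(f^2_{k,\ell}) + \rho(f^3_{k,\ell}) = \gamma_{kt+\ell}$ on the middle region, and the same cut-off device as above localizes each $f^j_{k,\ell}$ to $[x_{kt},x_{(k+1)t-1}]$ while preserving equality outside.

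The main obstacle is the coefficient bound in Property 2. The values of the affine pieces at the edges of the middle region can be as large as $(\text{slope}) \times (\text{width of middle region})$; the slopes are bounded by $R_m(X)$, and the middle-region width is at most $(t-7)\max_i(x_{i+1}-x_i) \lesssim t\,R_c(X)/R_m(X)$, giving edge values on the order of $t\,R_c(X)$. Matching these edge values to $0$ at the group boundaries requires attaching ramps with slopes on the order of $t\,R_c(X) \cdot R_m(X)$ over gaps at least $1/R_m(X)$, and since the coefficients $a_i$ in the expansion $b + \sum_i a_i\,\rho(x-z_i)$ record successive slope changes, each $|a_i|$ is bounded by a constant multiple of $t\,R_m(X)\,(R_c(X))^2$. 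The extra factors of $t$ and $R_c(X)$ come from the cascading effect when several consecutive group-$k$ $z$-points have gaps that differ by factors of $R_c(X)$; bookkeeping the worst combination yields the stated bound $|b|, |a_i| \le 12\,t^2\,R_m(X)\,(R_c(X))^3$. Properties 1 and 3 will then fall out by construction, leaving only the careful but routine slope/coefficient accounting for Property 2, which I anticipate being the most technically involved portion of the proof.
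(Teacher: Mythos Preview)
Your middle-case construction turns out to be essentially the paper's: a quick computation shows that the paper's $f^j_{k,\ell}$ (defined as elements of $\Sigma((x_{kt},x_{kt+1},x_{kt+t-2},x_{kt+t-1}),\infty)$ with a prescribed zero-crossing and a prescribed value $y^j$ at $x_{kt+t-2}$) are affine on $[x_{kt+1},x_{kt+t-2}]$ with exactly your slopes $1/\alpha$, $1/\alpha+1/\beta$, $1/\beta$ and your zero-crossings. The paper, however, takes the affine region to be the \emph{larger} interval $[x_{kt+1},x_{kt+t-2}]$ and uses only four $z$-breakpoints per group (not eight): each $f^j$ is brought to zero by a single linear segment on $[x_{kt+t-2},x_{kt+t-1}]$ and on $[x_{kt},x_{kt+1}]$. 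This is precisely the point where your proposal is underspecified. Your ``cut-off device'' must preserve $\rho(f^1)-\rho(f^2)+\rho(f^3)=0$ on the region between $\mathrm{supp}\,\gamma_{kt+\ell}$ and the group boundary, but if the three $f^j$'s become nonpositive at different points during the cut-off, the identity can break. The paper resolves this by solving a $3\times 3$ linear system for the heights $y^j:=f^j(x_{kt+t-2})$, forcing $y^1-y^2+y^3=0$; then linear interpolation to $0$ at $x_{kt+t-1}$ keeps all three positive with $f^1-f^2+f^3=0$ throughout the segment.

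For the boundary case your plan is substantially more elaborate than needed. The paper simply sets $f^1_{k,\ell}=\gamma_{kt+\ell}$ and $f^2_{k,\ell}=f^3_{k,\ell}=0$, using $\gamma_{kt+\ell}\ge 0$ so that $\rho\circ f^1=\gamma_{kt+\ell}$; no cut-off is required, Property~3 is inherited from the disjoint supports of the $\gamma$'s, and Property~2 is immediate because the three breakpoints of $\gamma_{kt+\ell}$ already lie among the $z_i$'s (this is exactly why eight $z$-points per group are carried, even though the middle case only uses four). Finally, the coefficient bound is obtained cleanly: one bounds $\|f^j_{k,\ell}\|_{L^\infty}$ by evaluating at the two interior breakpoints (the worst case is $\|f^3\|_\infty\le 3t^2(R_c(X))^3$), then applies Lemma~\ref{lem:representation_single_hidden_layer} to get coefficients $\le 4R_m(X)\cdot 3t^2(R_c(X))^3$. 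Your cascading heuristic points in the right direction, but the extra factor of $t(R_c)^2$ that produces the stated bound comes specifically from $|f^3(x_{kt+1})|$, not from the cut-off slopes.
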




		Further, we need the following technical lemma, which realizes a 3-layer ReLU network of a specific form by an equivalent deeper ReLU network. The result is inspired by the construction reported in \cite[Lemma 4.2]{shen2021optimal}.

		\begin{lemma}
		\label{lem:realization_piecewise_linear_functions}
			Let $u,s,r \in \mathbb{N}$, $( z_i )_{i = 0}^{r-1} \subseteq [0,1]$, and $T \in \mathbb{R}_+$. For $\ell = 0,\dots, us -1$, let $h_\ell: \mathbb{R} \mapsto \mathbb{R} $ be  given by
			\begin{equation}
				\label{eq:definition_h}
				h_\ell ( x ) = d_\ell + \sum_{i = 0}^{r-1} c_{\ell,i} \rho ( x - z_i ),
			\end{equation}
			with $d_\ell, c_{\ell,0}, \dots, c_{\ell, r-1} \in \{ x \in \mathbb{R}: | x | \leq T  \}$, and let 
			\begin{equation}
			\label{eq:form_of_H}
				H = \sum_{\ell = 0}^{us - 1} (\rho \circ h_\ell).
			\end{equation}
			Then,
			\begin{equation*}
				H \in \mathcal{R} ( r + u + 1, s + 2, \max \{ 1, T  \} ).
			\end{equation*}
			\begin{proof}
				See Appendix~\ref{sub:proof_of_lemma_lem:realization_piecewise_linear_functions}.
			\end{proof}
		\end{lemma}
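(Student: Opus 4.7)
The plan is to construct, layer by layer, a network whose hidden layers successively produce one ``batch'' of $u$ of the summands $\rho(h_\ell)$ at a time while maintaining two auxiliary quantities: the $r$ values $\rho(x - z_i)$ and a running accumulator that absorbs already-processed batches. Since each $h_\ell$ is an affine combination of the $\rho(x-z_i)$, this lets the network process all $us$ summands in exactly $s$ ``batch layers'' after an initial preprocessing layer, while keeping the maximum layer width at $r + u + 1$.

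More concretely, I would specify the hidden layers as follows. The first hidden layer has width $r$ and outputs $\rho(x - z_0), \ldots, \rho(x - z_{r-1})$; this only requires the bias vector $(-z_0, \ldots, -z_{r-1})$ and weight matrix $\mathbf{1}_r$, both of magnitude bounded by $1$. The second hidden layer has width $r + u$: it passes the $r$ quantities $\rho(x-z_i)$ through (using identity weights, valid because these values are non-negative so $\rho$ acts as identity), and simultaneously produces $\rho(h_0), \ldots, \rho(h_{u-1})$ by using the coefficients $d_\ell, c_{\ell,0}, \ldots, c_{\ell,r-1}$ from \eqref{eq:definition_h}. Each subsequent hidden layer $k$, for $k = 3, \ldots, s+1$, has width $r + u + 1$ and stores: the $r$ quantities $\rho(x - z_i)$ (pass-through), the $u$ quantities $\rho(h_{(k-2)u}), \ldots, \rho(h_{(k-1)u - 1})$ computed afresh from the $\rho(x - z_i)$'s, and the running sum $S_{k-2} := \sum_{\ell = 0}^{(k-2)u - 1} \rho(h_\ell)$, which is non-negative so that its ``pass-through'' through $\rho$ also acts as identity and the update $S_{k-1} = S_{k-2} + \sum_{\ell = (k-2)u}^{(k-1)u - 1} \rho(h_\ell)$ is performed by the affine map feeding into layer $k+1$. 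Finally, the $(s+2)$-th affine map takes the last hidden layer and outputs $H = S_{s-1} + \sum_{\ell = (s-1)u}^{su - 1} \rho(h_\ell)$, without further ReLU.

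Checking the parameters is then routine: the number of affine maps is $s + 2$ (giving depth $s + 2$), the maximum layer width is $\max\{r,\, r+u,\, r+u+1\} = r + u + 1$, and every weight in the construction is either $\pm 1$, a pass-through identity entry, a coefficient $c_{\ell,i}$ or $d_\ell$ (bounded by $T$), or one of the $z_i \in [0,1]$. All of these are bounded in magnitude by $\max\{1, T\}$.

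I do not expect a serious obstacle; the only mildly delicate point is making sure that the ``pass-through'' entries (for the $\rho(x - z_i)$ and for the accumulator $S_{k-2}$) indeed survive the ReLU on the way to the next layer. This is resolved by the elementary observation $\rho \circ \rho = \rho$ together with the non-negativity of the quantities being passed through, which is guaranteed by construction since they are themselves outputs of a ReLU or sums of such. Writing out the matrix and bias of each affine map explicitly in block form (with blocks acting independently on the $r$ ``feature'' coordinates, the $u$ ``fresh batch'' coordinates, and the $1$ ``accumulator'' coordinate) then finishes the proof.
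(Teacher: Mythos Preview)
Your proposal is correct and follows essentially the same construction as the paper: an initial layer producing the $r$ features $\rho(x-z_i)$, then $s$ ``batch'' layers each carrying the $r$ features, one fresh block of $u$ values $\rho(h_\ell)$, and a scalar accumulator, with the non-negativity of passed-through quantities ensuring that $\rho$ acts as the identity on them. The only cosmetic difference is that the paper allocates the (initially zero) accumulator slot already at the second layer, giving width $r+u+1$ from layer two onward, whereas you introduce it at layer three; both variants stay within the claimed width, depth, and weight-magnitude bounds.
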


		We proceed to the proof of Proposition~\ref{prop:piecewise_representation_constructive}, which will be effected through Lemmata~\ref{lem:representation_two_hidden_layer} and \ref{lem:realization_piecewise_linear_functions}.
		\begin{proof}
		[Proof of Proposition~\ref{prop:piecewise_representation_constructive}]
			Let $g \in \Sigma ( X, E)$ with $X= ( x_\ell)_{\ell = 0}^{M-1}$, $M\geq 3$, a strictly increasing sequence taking values in $[0,1]$ and $E \in \mathbb{R}_+$. We start with the special case $M = u^2v$ and $uv \geq 8$, and will later reduce the other cases to this one. Write $g$ as $H^+ - H^-$, for some $H^+$ and $H^-$, both of the form \eqref{eq:form_of_H}. Setting $t = uv \geq 8$, we can write $M = tu$. According to Lemma~\ref{lem:representation_two_hidden_layer}, there exist $f_{k,\ell}^j$, $k = 0,\dots, u- 1 $, $\ell = 0,\dots, t-1$, $j = 1,2,3$, such that Properties $1$-$3$ in the statement of Lemma~\ref{lem:representation_two_hidden_layer} hold. For $i = 0,\dots, M-1$, let $y_i = g(x_i)$, $y_i^+ = \max \{ y_i ,0 \}$, $y_i^- = \max \{ -y_i ,0 \}$, and note that 
			\begin{align}
				y_i =&\, y_i^+ - y_i^- \label{eqline:decompose_y_i}\\
				| y_i^+ |,| y_i^- | \leq&\, | y_i | \leq\, \| g \|_{L^\infty ( [0,1] )} \leq\, E . \label{eqline:bound_on_y_i} 
			\end{align}
			We then have, for $x \in \mathbb{R}$,
			\begin{align}
				g (x) = &\sum_{\ell = 0}^{M-1} g(x_i) \gamma_i (x) \label{eqline:decompose_g}\\
				= &\sum_{i = 0}^{M-1} y_i \gamma_i(x) \\
				=& \sum_{\ell = 0}^{t - 1} \sum_{k = 0}^{u - 1 } y_{kt + \ell} \gamma_{kt + \ell}(x)\label{eqline:decompose_M}\\
				=& \sum_{\ell = 0}^{t - 1} \sum_{k = 0}^{u - 1 }   ( y_{kt + \ell}^+  - y_{kt + \ell}^- )  \Biggl( \rho(  f_{k,\ell}^1(x)) -  \rho(  f_{k,\ell}^2(x) ) + \rho(  f_{k,\ell}^3(x)) \Biggr) \label{eqline:apply_property_one} \\
				=& \sum_{\ell = 0}^{t - 1} \sum_{k = 0}^{u - 1 } \Biggl( \rho(y^+_{kt + \ell}   f_{k,\ell}^1(x)) +  \rho( y^-_{kt + \ell}   f_{k,\ell}^2(x))  + \rho( y^+_{kt + \ell}   f_{k,\ell}^3 (x))  \nonumber\\
				&  - \rho(y^-_{kt + \ell}   f_{k,\ell}^1(x)) -  \rho( y^+_{kt + \ell}   f_{k,\ell}^2(x))  - \rho( y^-_{kt + \ell}   f_{k,\ell}^3(x)) \Biggr)\label{eqline:positive_homogeneous}  \\
				=& \sum_{\ell = 0}^{t - 1}  \Biggl( \rho\Biggl( \sum_{k = 0}^{u - 1 } y^+_{kt + \ell}   f_{k,\ell}^1(x)\Biggr) +  \rho\Biggl( \sum_{k = 0}^{u - 1 } y^-_{kt + \ell}   f_{k,\ell}^2(x)\Biggr)  + \rho\Biggl( \sum_{k = 0}^{u - 1 } y^+_{kt + \ell}   f_{k,\ell}^3(x)\Biggr)\nonumber \\
				&  - \rho\Biggl( \sum_{k = 0}^{u - 1 } y^-_{kt + \ell}   f_{k,\ell}^1(x)\Biggr) -   \rho\Biggl(\sum_{k = 0}^{u - 1 }  y^+_{kt + \ell}   f_{k,\ell}^2(x)\Biggr)  - \rho\Biggl( \sum_{k = 0}^{u - 1 } y^-_{kt + \ell}   f_{k,\ell}^3(x)\Biggr) \Biggr), \label{eqline:merging_lemma}
			\end{align}
			where \eqref{eqline:apply_property_one} follows from  Property 1 in Lemma~\ref{lem:representation_two_hidden_layer}, in \eqref{eqline:positive_homogeneous} we used the positive homogeneity of the ReLU function, i.e., $\rho(xy) = x\rho(y)$, for all $y \in \mathbb{R}$ and $x \in \mathbb{R}_+ \cup \{ 0 \}$, and \eqref{eqline:merging_lemma} is a consequence  
			of the functions in $\{ f^j_{k,\ell} \}_{k =0}^{u-1}$ being of pairwise-disjoint support, for all $\ell \in \{ 0,1\dots, t-1 \}$, $j \in \{ 1,2,3 \}$, as guaranteed by Property 3 in Lemma~\ref{lem:representation_two_hidden_layer}.

			For $\ell = 0,\dots, t -1$, now define
			\begin{equation}
				\label{eqline:component_1}
				h^+_{\ell}= \sum_{k = 0}^{u - 1 } y^+_{kt + \ell}\,   f_{k,\ell}^1,
			\end{equation}
			$h^+_{t+\ell} =  \sum_{k = 0}^{u - 1 } y^-_{kt + \ell}   f_{k,\ell}^2$, $h^+_{2t+ \ell} =  \sum_{k = 0}^{u - 1 } y^+_{kt + \ell}   f_{k,\ell}^3$, $h^-_{\ell} =   \sum_{k = 0}^{u - 1 } y^-_{kt + \ell}   f_{k,\ell}^1$, $h^-_{t + \ell} = \sum_{k = 0}^{u - 1 }  y^+_{kt + \ell}   f_{k,\ell}^2$, $h^-_{2t+\ell} =   \sum_{k = 0}^{u - 1 } y^-_{kt + \ell}   f_{k,\ell}^3$, and let 
			\begin{equation*}
				H^+ = \sum_{\ell =0}^{3t -1 } \rho \circ h_\ell^+, \quad H^- = \sum_{\ell =0 }^{3t -1 } \rho \circ h_\ell^-.
			\end{equation*}
			By \eqref{eqline:decompose_g}-\eqref{eqline:merging_lemma}, we can therefore write
			\begin{align*}
				g = H^+ - H^-.
			\end{align*}
			We next show that $h^+_\ell$, $h^-_\ell$, $\ell = 0,\dots, 3t -1$ are all of the form \eqref{eq:definition_h} as required by Lemma~\ref{lem:realization_piecewise_linear_functions}. First, consider $h_0^+$. Thanks to Property 2 in Lemma~\ref{lem:representation_two_hidden_layer}, for all $( k,\ell )\in \mathcal{I}$ and $j \in  \{1,2,3\}$, one has
			\begin{equation}
				f^j_{k,\ell} (x) = b^{(j,k,\ell)} + \sum_{i = 0}^{8u - 1} a^{(j,k,\ell)}_i \rho(x - z_i), \label{eq:decompose_f_proof}
			\end{equation}
			with the strictly increasing sequence $( z_i )_{i = 0}^{8u -1 }$ obtained by sorting the elements in $\{ x_{kt + \ell}: k \in \{ 0,\dots, u-1 \}, \ell \in \{ 0,1,2,3,t-4, t-3,t-2,t-1 \}\}$,  and $b^{(j,k,\ell)}, a^{(j,k,\ell)}_0,\dots, a^{(j,k,\ell)}_{8u -1 } \in \mathbb{R}$ such that 
			\begin{equation}
				\label{eq:bound_on_c}
				| b^{(j,k,\ell)} |,  \max_{i = 0,\dots, 8u-1} | a^{(j,k,\ell)}_i | \leq  12\,t^2 R_m(X)  (R_c(X))^3,
			\end{equation}
			where $R_m(X)$ and  $R_c(X)$ are as defined in Lemma~\ref{lem:representation_two_hidden_layer}. We hence obtain
			\begin{align}
				h^+_{0}(x) =&\, \sum_{k = 0}^{u - 1 } y^+_{kt}   f_{k,0}^1(x) \label{eqline:application_component_1} \\
				=&\,  \sum_{k = 0}^{u - 1 } y^+_{kt}   \biggl(b^{(1,k,0)} + \sum_{i = 0}^{8u -1 } a^{(1,k,0)}_i \rho(x - z_i)\biggr) \label{eqline:application_decompose_f_proof}\\
				=&\, d_0^+ + \sum_{i=0}^{8u - 1} c_{0,i}^+ \rho(x - z_i), \, x \in \mathbb{R}, \label{eqline:define_d_1_c}
			\end{align}
			where in \eqref{eqline:define_d_1_c} we set $d_0^+ := \sum_{k = 0}^{u-1} y^+_{kt}\, b^{(1,k,0)}$ and $c_{0,i}^+ :=  \sum_{k =0}^{u-1} y^+_{kt}\, a^{(1,k,0)}_i$, for $ i = 0,\dots, 8u - 1$. Moreover, we note that $| d^+_0 | \leq \sum_{k = 0}^{u-1} |y^+_{kt}| | b^{(1,k,0)}| \leq 12 ut^2 R_m(X)  (R_c(X))^3 E $ and $| c^+_{0, i} |  =  \sum_{k = 0}^{u-1} | y^+_{kt}  | | a^{(1,k,0)}_i| \leq 12 ut^2 R_m(X)  (R_c(X))^3 E$, $i = 0, \dots, 8u -1$, where we used \eqref{eqline:bound_on_y_i} and \eqref{eq:bound_on_c}. Analogously, 
   one can show that, for $\ell = 0,\dots, 3 t - 1$, 
			\begin{align}
				h^+_{\ell}(x) = d^+_\ell + \sum_{i=0}^{8u -1 } c^+_{\ell, i} \rho(x - z_i)\label{eqline:hplus}\\
				h^-_{\ell}(x) = d^-_\ell + \sum_{i=0}^{8u - 1} c^-_{\ell, i} \rho(x - z_i),\label{eqline:hminus}
			\end{align}
			with $d^+_\ell,d^-_\ell, c^+_{\ell, 0},\dots, c^+_{\ell, 8u-1}, c^-_{\ell, 0},\dots, c^-_{\ell, 8u-1} \in \mathbb{R}$  of absolute values not exceeding the term $12 ut^2 R_m(X)  (R_c(X))^3 E$. We recall that $H^+  = \sum_{\ell =0}^{3uv -1 } \rho \circ h_\ell^+$ and $ H^- =  \sum_{\ell =0 }^{3uv -1 } \rho \circ h_\ell^-$, and apply Lemma~\ref{lem:realization_piecewise_linear_functions} to $H^+$ and $H^-$, individually, with $(u,s, r)$ replaced by $( u,3v, 8u )$ and $T = 12 ut^2 R_m(X) (R_c(X))^3 E$, to get 
			\begin{align*}
				H^+ \in&\, \mathcal{R} (9u + 1, 3v  + 2, \max \{ 1, 12 ut^2 R_m(X) (R_c(X))^3 E \}    ),\\
				H^- \in&\, \mathcal{R} (9u + 1, 3v  + 2, \max \{ 1, 12 ut^2 R_m(X) (R_c(X))^3 E \}    ).
			\end{align*}
			We conclude the proof for the special case $M = u^2v$ and $uv \geq 8$ by noting that 
			\begin{align}
				g = H^+  - H^-  \in&\, \mathcal{R} \bigl( 18u + 2, 3v+3, \max \{ 1, 12 ut^2 R_m(X) (R_c(X))^3 E \} \bigr) \label{eq:g_h_h_1}\\
				\subseteq &\,\mathcal{R} \bigl( 18u + 2, 3v+3, \max \{ 1, 12 M^2 R_m(X) (R_c(X))^3 E \} \bigr),\label{eq:g_h_h_2}
			\end{align}
			where \eqref{eq:g_h_h_1} follows from Lemma~\ref{lem:algebra_on_ReLU_networks}, and \eqref{eq:g_h_h_2} is by $ut^2 = u^3 v^2 \leq M^2$.

			We proceed to the case of general $u,v,M \in \mathbb{N}$, with $M \geq 3$ and $u^2 v \geq M$. The proof will be effected by reducing to the special case just established. To this end, we shall find $\hat{u},\hat{v}$ and a strictly increasing sequence $( \hat{x}_{i} )_{i = 0}^{\hat{u}^2\hat{v}-1}$ taking values in $[0,1]$ such that $( x_i )_{i = 0}^{M-1} \subseteq ( \hat{x}_{i} )_{i = 0}^{\hat{u}^2\hat{v}-1}$ and $g \in \Sigma ( ( \hat{x}_{i} )_{i = 0}^{\hat{u}^2\hat{v}-1}, E )$.
			Concretely, let $(\hat{u},\hat{v})$ be a solution\footnote{If there are multiple solutions, we simply pick any one of them.} to the following constrained optimization problem
			\begin{equation}
			\label{eq:optimization_problem}
				(\hat{u},\hat{v}) = \argmin_{(e,f)\, \in\, \mathbb{G} \cap \mathbb{H}} (e + f  ),
			\end{equation}
			with $\mathbb{G} := \{ ( e,f) \in \mathbb{N}^2: e \leq u, 8 \leq f \leq 8v  \}$ and $\mathbb{H} = \{ ( e,f ) \in \mathbb{N}^{2}: e^2f \geq M  \}$, and set $\hat{M} = \hat{u}^2 \hat{v}$. Here, the choice of $\mathbb{G}$ ensures that the condition $\hat{u}\hat{v} \geq 8$ is met while $\hat{v}$ is not much greater than $v$. Note that we enforce $\hat{u} \hat{v} \geq 8$ so as to, indeed, be able to apply the result for the special case dealt with above. Since $( u,8v ) \in \mathbb{G} \cap \mathbb{H}$, the feasible region $\mathbb{G} \cap \mathbb{H}$ is non-empty, which together with $\mathbb{G} \cap \mathbb{H}$ being finite guarantees the existence of a minimizer to \eqref{eq:optimization_problem}. Now, let $\hat{X} = (\hat{x}_i )_{i = 0}^{\hat{M}-1}$ with $\hat{x}_i = x_i$, for $i = 0,\dots, M-2$, and $\hat{x}_{i} = x_{M-2} + ( i - M +2 )\cdot \frac{x_{M-1} - x_{M-2}}{\hat{M} - M + 1} $, for $i = M - 1, \dots, \hat{M}-1$. As $\hat{x}_{\hat{M} -1} = x_{M-1}$, it follows that $X \subseteq \hat{X}$. Intuitively, $\hat{X}$ is obtained by adding $\hat{M} - M$ points in the interval $[x_{M-2}, x_{M-1}]$ to $X$ chosen such that $[x_{M-2}, x_{M-1}]$ is partitioned into $\hat{M} - M + 1$ subintervals of equal length. It then follows that $g \in\, \Sigma ( X, E ) \subseteq \Sigma ( \hat{X}, E )$. Using \eqref{eq:g_h_h_1}-\eqref{eq:g_h_h_2} with $( X,u,v )$ replaced by $( \hat{X}, \hat{u},\hat{v} )$, we get 
			\begin{align}
				g \in&\,  \mathcal{R} \bigl( 18 \hat{u} + 2, 3\hat{v}+3, \max \{ 1, 12 \hat{M}^2 R_m ( \hat{X} ) (R_c( \hat{X} ))^3 E \} \bigr). \label{eqline:apply_special_case}
			\end{align}
			
			Next, we note that
			\begin{align*}
				R_m ( \hat{X} ) =&\, \max_{i =1,\dots, \hat{M}} ( \hat{x}_{i} - \hat{x}_{i -1} )^{-1} \\
				=&\, \max \biggl(\max_{i =1,\dots, M-2} ( \hat{x}_{i} - \hat{x}_{i -1} )^{-1}, \max_{i =M-1,\dots, \hat{M}-1} ( \hat{x}_{i} - \hat{x}_{i -1} )^{-1}  \biggr)\\
				=&\, \max \biggl(\max_{i =1,\dots, M-2} ( x_{i} - x_{i -1} )^{-1}, \biggl(\frac{x_{M-1} - x_{M-2}}{\hat{M} - M + 1}\biggr)^{-1}  \biggr) \\
				\leq &\, ( \hat{M} - M + 1 )\max_{i =1,\dots, M-1}  (x_{i} - x_{i -1})^{-1}\\
				= &\,  ( \hat{M} - M + 1 ) R_m (X)
			\end{align*}
			and 
			\begin{align*}
				R_c(\hat{X}) =&\, \frac{\max_{i =1,\dots, \hat{M}-1}  (\hat{x}_{i} - \hat{x}_{i -1}) }{\min_{i =1,\dots, \hat{M}-1}  (\hat{x}_{i} - \hat{x}_{i -1}) }\\
				=&\, \frac{(\min_{i =1,\dots, \hat{M}-1}  (\hat{x}_{i} - \hat{x}_{i -1})^{-1})^{-1} }{(\max_{i =1,\dots, \hat{M}-1}  (\hat{x}_{i} - \hat{x}_{i -1})^{-1} )^{-1}}\\
				=&\, \frac{\max_{i =1,\dots, \hat{M}-1}  (\hat{x}_{i} - \hat{x}_{i -1})^{-1} }{\min_{i =1,\dots, \hat{M}-1}  (\hat{x}_{i} - \hat{x}_{i -1})^{-1} }\\
				\leq&\, \frac{( \hat{M} - M + 1 ) \max_{i =1,\dots, M-1}  (x_{i} - x_{i -1})^{-1} }{\min_{i =1,\dots, M-1}  (x_{i} - x_{i -1})^{-1} }\\
				=&\, \frac{( \hat{M} - M + 1 ) \max_{i =1,\dots, M-1}  (x_{i} - x_{i -1}) }{\min_{i =1,\dots, M-1}  (x_{i} - x_{i -1})}\\
				=&\, ( \hat{M} - M + 1 ) R_c (X),
			\end{align*}
			which, when used in \eqref{eqline:apply_special_case}, yields
			\begin{equation}
				g \subseteq   \mathcal{R} ( 18\hat{u} + 2, 3\hat{v} + 3, \max \{ 1, 12 \hat{M}^2 (( \hat{M} - M + 1 ) R_m ( X ) ) (( \hat{M} - M + 1 ) R_c( X ) )^3 E \} ). \label{eqline:apply_relation_X_hatX}
			\end{equation}
			Finally, owing to $\hat{u} \leq u$, $\hat{v} \leq 8 v$, and $\hat{M} - M  + 1 \leq \hat{M}$, it follows from \eqref{eqline:apply_relation_X_hatX} that
			\begin{equation}
				g \subseteq   \mathcal{R} ( 18u + 2, 24v + 3, \max \{ 1, 12 \hat{M}^6 R_m ( X )  (R_c( X ))^3 E \} ).  \label{eqline:apply_relation_X_hatX_1}
			\end{equation}
			We next want to get rid of $\hat{M}$ in \eqref{eqline:apply_relation_X_hatX_1}. This will be accomplished by establishing a quantitative relation between $\hat{M}$ and $M$ organized into  three different cases, which, taken together, are exhaustive in terms of the parameter range under consideration.
			\begin{enumerate}
				\item If $\hat{u} \geq 2$, it must hold that $(\hat{u}-1,\hat{v}) \not \in \mathbb{H}$, i.e., $( \hat{u} -1 )^2 \hat{v} < M$,  otherwise $(\hat{u},\hat{v})$ can not be a solution to \eqref{eq:optimization_problem} as the objective function takes on  a smaller value at $(\hat{u} - 1,\hat{v}) \in \mathbb{G} \cap \mathbb{H}$.  We therefore have $\hat{M} = \hat{u} ^2 \hat{v} \leq 4 ( \hat{u} -1 )^2 \hat{v} < 4M$. 
				\item If $\hat{v} \geq 9$, we must have $(\hat{u},\hat{v} - 1) \not \in \mathbb{H}$, i.e., $ \hat{u}^2 ( \hat{v} - 1 ) < M$, as, again, otherwise $(\hat{u},\hat{v})$ can not be a solution to \eqref{eq:optimization_problem}. We then have $\hat{M} = \hat{u} ^2 \hat{v} \leq 2  \hat{u}^2 ( \hat{v}  - 1 )< 2M$.
				\item If $\hat{u} = 1$ and $\hat{v} = 8$, we obtain $\hat{M} = \hat{u}^2 \hat{v} = 8 \leq 4 M$ owing to $M \geq 3$.
			\end{enumerate}
			In all cases, one gets $\hat{M} \leq 4M$, so that
			\eqref{eqline:apply_relation_X_hatX_1} yields 
			\begin{align*}
				g \in &\,  \mathcal{R} ( 18u + 2, 24v + 3, \max \{ 1, 12\cdot 4^6 M^6 R_m ( X )  (R_c( X )) ^3 E \} ).
			\end{align*}
			The proof is concluded by setting $C_k = 12\cdot 4^6 \leq 10^5$ and noting that  
			\begin{equation*}
				R_c(X) =\frac{\max_{i =1,\dots, M-1}  (x_{i} - x_{i -1}) }{\min_{i =1,\dots, M-1}  (x_{i} - x_{i -1}) } \leq \max_{i =1,\dots, M} ( x_{i} - x_{i -1} )^{-1}   =  R_m(X), 
			\end{equation*}
			where the inequality follows from $( x_i )_{i = 0}^{M-1} \in [0,1]$ and hence  $\min_{i =1,\dots, M-1}  (x_{i} - x_{i -1})  \leq 1$.
		\end{proof}

		\subsection{Proof of Lemma~\ref{lem:representation_two_hidden_layer}} 
		\label{sub:proof_of_lemma_lem:representation_two_hidden_layer}
			The proof is constructive in the sense of explicitly specifying $f^1_{k,\ell},f^2_{k,\ell},f^3_{k,\ell}$, for $ ( k,\ell ) \in \mathcal{I}$. We consider the cases $( k,\ell ) \in \mathcal{I}^1$ with  
			\begin{equation*}
				\mathcal{I}^1 := \{ ( k,\ell ) \in \mathcal{I}: \ell \in \{0,1,2,t-3, t -2, t-1\}\},
			\end{equation*}
			and $( k,\ell ) \in \mathcal{I}^2$ with 
			\begin{equation*}
				\mathcal{I}^2 := \{ ( k,\ell ) \in \mathcal{I}: \ell \in \{3,\dots, t- 4\} \},
			\end{equation*}
			separately as the corresponding constructions are different. 
            The sets $\mathcal{I}^1$ and $\mathcal{I}^2$ are nonempty and, owing to $t \geq 8$, disjoint. Further, $\mathcal{I}$ is the union of $\mathcal{I}^1$ and $\mathcal{I}^2$. 
            The verification of the three properties to be established is conducted right after the respective constructions.

			Fix $( k,\ell ) \in \mathcal{I}^1$. Let $f_{k,\ell}^1 = \gamma_{kt + \ell}$, and let $f_{k,\ell}^2 (x) = f_{k,\ell}^3 (x) = 0$, $x \in \mathbb{R}$. Regarding Property 1, we have 
			\begin{align}
				\gamma_{kt+\ell}(x) =&\, \rho(\gamma_{kt+\ell}(x)) \label{eqline:positivity_of_gamma}\\
				=&\, \rho(f_{k,\ell}^1(x)) - \rho(f_{k,\ell}^2(x)) + \rho(f_{k,\ell}^3(x)), \quad  x \in \mathbb{R},
			\end{align}
			where in \eqref{eqline:positivity_of_gamma} we used $\gamma_{kt+\ell}(x) \geq 0$, for $x \in \mathbb{R}$.

			Property 2 is satisfied, for $j = 2,3$, as  $f_{k,\ell}^2 (x) = f_{k,\ell}^3(x) = 0 = \sum_{i = 0}^{8u - 1} 0\cdot \rho(x - z_i)$, for $x \in \mathbb{R}$. For $j = 1$, thanks to \eqref{eq:represented}, we note that
			\begin{enumerate}
				\item $f^1_{k,\ell} (x) = - \frac{1}{x_{1} - x_0}\rho(x - x_0) + \frac{1}{x_1 - x_0}\rho(x - x_1) + 1, \,\,x \in \mathbb{R}$,  if $k = \ell = 0$,
				\item $f^1_{k,\ell}(x) = \frac{1}{x_{ut-1} - x_{ut-2}}\rho(x - x_{ut-2}) - \frac{1}{x_{ut - 1} - x_{ut-2}}\rho(x - x_{ut-1}),\,\,x \in \mathbb{R}$, if $k = u- 1$, $\ell = t - 1$,
				\item $f^1_{k,\ell} ( x ) = \frac{1}{x_{kt +\ell} - x_{kt +\ell-1}} \rho(x - x_{kt +\ell-1}) - \bigl( \frac{1}{x_{kt +\ell} - x_{kt +\ell-1}} + \frac{1}{x_{kt +\ell+1} - x_{kt +\ell}} \bigr) \rho(x - x_{kt +\ell})+  \frac{1}{x_{kt +\ell+1} - x_{kt +\ell}} \cdot \allowbreak  \rho(x - x_{kt +\ell+1}), \, \,x \in \mathbb{R}$, if $( k,\ell ) \in \mathcal{I}^1 \backslash \{ (0,0), (u-1,t -1) \}$.
			\end{enumerate}
			In each of these three cases, upon recalling the definition of $( z_i )_{i =0}^{8u -1 }$ in \eqref{eq:definitio_z_sequence}, $f^1_{k,\ell}$ can be written as 
			\begin{equation*}
				f^1_{k,\ell}(x) = b + \sum_{\ell = 0}^{8u -1 } a_\ell \rho(x - z_i),\, x \in \mathbb{R},
			\end{equation*}
			for some $b, a_0,\dots, a_{8u -1 }  \in \mathbb{R}$ such that $| b | , |a_0|,\dots, |a_{8u -1}| \leq \max \{ 1, 2\max_{i =1, \dots, M-1} ( x_{i} - x_{i -1} )^{-1}\}\allowbreak  = \max \{ 1, 2R_m(X) \} < 12t^2 R_m(X)  (R_c(X))^3 $, where the inequality follows from $t \geq 1$, $ R_c(X) \geq 1$, and $ R_m(X) \geq 1$. Since the choice of $( k,\ell ) \in \mathcal{I}^1$ was arbitrary, we have given the construction of $f^1_{k,\ell},f^2_{k,\ell},f^3_{k,\ell}$ and established Properties 1 and 2 for all $( k,\ell ) \in \mathcal{I}^1$. 

			We proceed to verify Property 3 for $( k, \ell ) \in \mathcal{I}^1$.
   To this end, we first note that for $( \ell, j ) \in \{0,1,2,t- 3,t -2, t-1\} \times \{ 2,3 \}$, the functions $\{ f^j_{k,\ell} \}_{k =0}^{u-1}$ are all identically equal to zero and hence trivially of pairwise disjoint support. 
   For  $( \ell, j ) \in \{0,1,2,t-3, t -2, t-1\} \times \{1\}$, $\{ f^1_{k,\ell} \}_{k =0}^{u-1} = \{ \gamma_{kt +\ell} \}_{k =0}^{u-1}$ and the pairwise disjoint support is a consequence of 
   \eqref{eq:basis_1}-\eqref{eq:basis_3}.





			We continue by constructing $f^1_{k,\ell},f^2_{k,\ell},f^3_{k,\ell}$ and verifying Properties $1$-$3$ for $( k,\ell ) \in \mathcal{I}^2$. Fix $( k,\ell ) \in \mathcal{I}^2$. For $j = 1,2,3$, let
			\begin{equation}
			\label{eq:piecewise_linearity_f}
			 	f_{k,\ell}^j \in \Sigma ( (x_{kt},x_{kt+1},x_{kt + t- 2},x_{kt + t- 1} ), \infty ) 
			 \end{equation} 
			be such that 
			\begin{align}
				f_{k,\ell}^{j} ( x_{kt} ) =&\, 0, \label{eqline:choice_y_i_0}  \\
				f_{k,\ell}^{j} ( x_{kt +\ell +j - 2 } ) =&\,  0, \label{eqline:choice_y_i_1}\\
				f_{k,\ell}^{j} ( x_{ kt + t - 2} ) = &\, y_{k,\ell}^{j}, \label{eqline:choice_y_i_2}\\
				f_{k,\ell}^{j} ( x_{kt + t -1} ) =&\, 0, \label{eqline:choice_y_i_3}
			\end{align}
			for a
			\begin{equation}
			\label{eq:assumption_y}
				y_{k,\ell}^{j} > 0
			\end{equation}
			to be specified later. We refer to Figure~\ref{fig:fjki} for an illustration of the so-defined $f_{k,\ell}^{j}$, $j \in \{ 1,2,3 \}$. Next, we remark that $f_{k,\ell}^{j}$, for $j=1,2,3$, is uniquely determined by $y_{k,\ell}^j$. To see this, first note that 
      $f_{k,\ell}^{j}$ is affine on the interval $[x_{kt+1},x_{kt + t - 2}]$, by definition, and therefore the values of $f_{k,\ell}^{j}$ at the points $ x_{kt +\ell +j - 2 } $ and $ x_{ kt + t - 2}$, both of which are contained in the interval, determine the values of $f_{k,\ell}^{j}$ at the end points $x_{kt+1}, x_{kt + t - 2}$ of the interval. 
            With \eqref{eqline:choice_y_i_0} and \eqref{eqline:choice_y_i_3}, owing to \eqref{eq:piecewise_linearity_f}, this determines 
            $f_{k,\ell}^{j}$ itself.  In preparation for the choice of $y_{k,\ell}^{j}$, we need to characterize $f_{k,\ell}^{j}$ and $\rho\circ f_{k,\ell}^{j}$ in more detail. Fix $j \in \{ 1,2,3 \}$.  We first investigate the sign of $f_{k,\ell}^{j}$ on $\mathbb{R}$. To this end, we use the fact that $f_{k,\ell}^j$ is affine on the interval $[x_{kt+1},x_{kt + t - 2}]$, by definition, and therefore satisfies the interpolation formula
			\begin{align}
				f^j_{k,\ell} (x) =&\, f^j_{k,\ell}( x_{kt +\ell +j - 2 } ) +  ( x - x_{kt +\ell +j - 2 } )\,\frac{f^j_{k,\ell} ( x_{kt + t -2 } ) - f^j_{k,\ell}( x_{kt +\ell +j - 2 } )}{x_{kt + t -2 } - x_{kt +\ell +j - 2 }} \label{eqline:expression_f_1}\\
				=&\, y^j_{k,\ell}\, \frac{ x - x_{kt +\ell +j - 2 }}{x_{kt + t - 2} - x_{kt +\ell +j - 2 }}, \quad  x \in [x_{kt+1},x_{kt + t - 2}] \label{eqline:expression_f_2},
			\end{align}
		where we used \eqref{eqline:choice_y_i_1} and \eqref{eqline:choice_y_i_2}. In particular, we have 
		\begin{equation}
			\label{eqline:choice_y_i_4}
			f_{k,\ell}^{j} ( x_{kt + 1} ) = \, y_{k,\ell}^{j} \frac{x_{kt + 1} - x_{kt +\ell+j - 2}}{x_{ kt +t  - 2} - x_{kt +\ell  +j - 2}} < 0,
		\end{equation}
		where we used that $\ell + j -2 \in \{ 2, \dots, t- 3 \}$ and therefore $kt + 1 < kt +\ell+j - 2$ and $kt +t  - 2 > kt +\ell  +j - 2$.
			The values of $f_{k,\ell}^{j}$ at the points $\{x_{kt}, x_{kt + 1}, x_{kt +\ell +j - 2 }, x_{ kt + t - 2}, x_{kt + t - 1}\}$, as specified by \eqref{eqline:choice_y_i_0}, \eqref{eqline:choice_y_i_4}, \eqref{eqline:choice_y_i_1}, \eqref{eqline:choice_y_i_2}, \eqref{eqline:choice_y_i_3}, combined with \eqref{eq:piecewise_linearity_f}, allow us to conclude that 
			\begin{equation}
			\label{eq:sgn_of_f}
			f_{k,\ell}^{j} (x) \left\{
			\begin{aligned}
				& = 0, && x \in (-\infty, -x_{kt}] \cup \{  x_{kt +\ell +j - 2 }  \} \cup [x_{kt + t -1}, \infty),\\
				&> 0, && x \in (x_{kt +  \ell +j - 2 }, x_{kt + t - 1}),  \\
				&< 0, && x \in (x_{kt }, x_{kt +\ell +j - 2 }), 
			\end{aligned}
			\right.
			\end{equation}
			which, in turn, implies 
			\begin{equation}
			\label{eq:expression_of_f}
			\rho(f_{k,\ell}^{j} (x)) =  \left\{
			\begin{aligned}
				&f_{k,\ell}^{j} (x)   , && x \in (x_{kt +\ell +j - 2 }, x_{kt + t -1})  ,\\
				&0, && x \in (-\infty, x_{kt +\ell +j - 2 }] \cup [x_{kt + t -1}, \infty).
			\end{aligned}
			\right.
			\end{equation}

			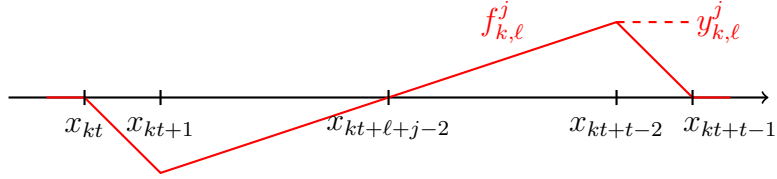
\begin{figure}[H]
			\centering
			\begin{tikzpicture}
			    \draw[thick, ->] (1,0) -- (11,0) node[right]{};

			    \draw[red, thick] (1.5,0)-- (2,0) -- (3,-1) -- (6,0) -- (9,1) -- (10,0) -- (10.5,0);
			    \draw[dashed, thick, red] (9,1) -- (10,1) node[right=-1mm] {$y_{k,\ell}^j$};
			    \node[red] at (7.5,1) {$f_{k,\ell}^j$};

			    \foreach \x in {2,3,6,9,10} {
			    	\draw[thick] (\x,-0.1) -- (\x,0.1);
			    }
			    

			    \node at (2,-0.4) {$x_{kt}$};
			    \node at (3,-0.4) {$x_{kt+1}$};
			    \node at (6,-0.4) {$x_{kt + \ell + j - 2}$};
			    \node at (9,-0.4) {$x_{kt + t - 2}$};
			    \node at (10.5,-0.4) {$x_{kt + t - 1}$};
			    
			\end{tikzpicture}
			\caption{The function $f^j_{k,\ell}$.}
			\label{fig:fjki}	
		\end{figure}

		\begin{figure}[H]
			\centering
			\begin{tikzpicture}

			    \draw[thick, ->] (1,0) -- (11,0) node[right]{};

			    \draw[red, thick] (1.5,0) -- (2,0) -- (3,0) -- (6,0) -- (9,1) -- (10,0) -- (10.5,0);
			    \draw[dashed, thick, red] (9,1) -- (10,1) node[right=-1mm] {$y_{k,\ell}^j$};
			    \node[red] at (7.5,1) {$\rho \circ f_{k,\ell}^j$};
			    
			    \foreach \x in {2,3,6,9,10} {
			    	\draw[thick] (\x,-0.1) -- (\x,0.1);
			    }

			    \node at (2,-0.4) {$x_{kt}$};
			    \node at (3,-0.4) {$x_{kt+1}$};
			    \node at (6,-0.4) {$x_{kt +\ell + j - 2}$};
			    \node at (9,-0.4) {$x_{kt + t - 2}$};
			    \node at (10.5,-0.4) {$x_{kt + t - 1}$};
			\end{tikzpicture}
			\caption{The function $\rho\circ f^j_{k,\ell}$.}
			\label{fig:rhofjki}	
		\end{figure}
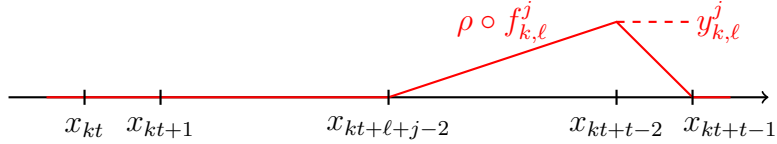

		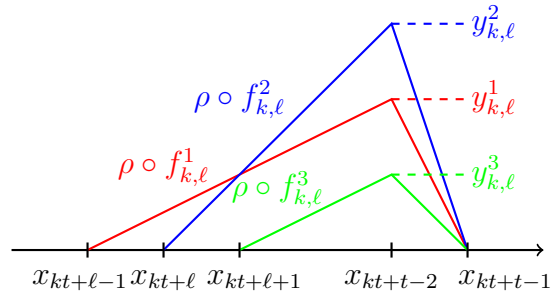
\begin{figure}[H]
			\centering
			\begin{tikzpicture}
			    \draw[thick, ->] (4,0) -- (11,0) node[right]{};
			    \draw[red, thick](5,0) -- (9,2) -- (10,0);
			    \draw[dashed, thick, red] (9,2) -- (10,2) node[right=-1mm] {$y_{k,\ell}^1$};
			    \node[red] at (6,1.1) {$\rho \circ f_{k,\ell}^1$};
			    \draw[blue, thick](6,0) -- (9,3) -- (10,0);
			    \draw[dashed, thick, blue] (9,3) -- (10,3) node[right=-1mm] {$y_{k,\ell}^2$};
			    \node[blue] at (7,2) {$\rho \circ f_{k,\ell}^2$};
			    \draw[green, thick](7,0) -- (9,1) -- (10,0);
			    \draw[dashed, thick, green] (9,1) -- (10,1) node[right=-1mm] {$y_{k,\ell}^3$};
			    \node[green] at (7.5,0.8) {$\rho \circ f_{k,\ell}^3$};

			    \foreach \x in {5,6,7,9,10} {
			    	\draw[thick] (\x,-0.1) -- (\x,0.1);
			    }

			    \node at (4.9,-0.4) {$x_{kt  + \ell -1}$};
			    \node at (6,-0.4) {$x_{kt + \ell }$};
			    \node at (7.2,-0.4) {$x_{kt + \ell + 1 }$};
			    \node at (9,-0.4) {$x_{kt + t - 2}$};
			    \node at (10.5,-0.4) {$x_{kt + t - 1}$};

			\end{tikzpicture}
			\caption{The functions $\rho \circ f^j_{k,\ell}, j =1,2,3$.}
			\label{fig:fjki123}
		\end{figure}

		\begin{figure}[H]
			\centering
			\begin{tikzpicture}
			    \draw[thick, ->] (4,0) -- (11,0) node[right]{};
			    \draw[red, thick](4.5,0)  -- (5,0) -- (6,1) -- (7,0) -- (10.5,0);
			    \draw[dashed, thick, red] (6,1) -- (7,1) node[right=-1mm] {$1$};
			    \node[red, right] at (6.5,0.5) {$\gamma_{kt + \ell} $};
			    \foreach \x in {5,6,7,9,10} {
			    	\draw[thick] (\x,-0.1) -- (\x,0.1);
			    }
			    \node at (4.9,-0.4) {$x_{kt  + \ell -1}$};
			    \node at (6,-0.4) {$x_{kt + \ell }$};
			    \node at (7.2,-0.4) {$x_{kt + \ell + 1 }$};
			    \node at (9,-0.4) {$x_{kt + t - 2}$};
			    \node at (10.5,-0.4) {$x_{kt + t - 1}$};
			\end{tikzpicture}
			\caption{$\gamma_{kt + \ell} = \rho\circ f_{k, \ell}^1 - \rho\circ f_{k, \ell}^2 + \rho\circ f_{k, \ell}^3$}
			\label{fig:sum_fjki123}
		\end{figure}
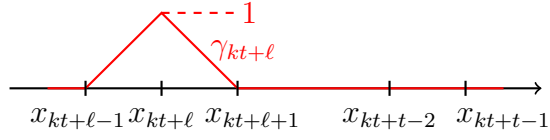

			We refer to Figure~\ref{fig:rhofjki} for an illustration of $\rho\circ f_{k,\ell}^{j}$. Owing to \eqref{eq:piecewise_linearity_f} and \eqref{eq:expression_of_f}, we have
			\begin{equation}
				\label{eq:rho_f}
				\rho \circ f_{k,\ell}^{j} \in \Sigma ( ( x_{kt +\ell +j - 2 }, x_{kt + t -2},x_{kt + t - 1} ), \infty ).
			\end{equation}
			As the choice of $j \in \{ 1,2,3 \}$ was arbitrary, we have constructed $f_{k,\ell}^j$, $ j = 1,2,3$, satisfying \eqref{eq:piecewise_linearity_f}-\eqref{eq:rho_f}.
			
   We proceed to determine $y_{k,\ell}^{j}$, $j = 1,2,3$, such that Property 1 is satisfied.
			As the functions $\gamma_{kt + \ell}$, $\rho\circ f_{k,\ell}^1$, $\rho \circ  f_{k,\ell}^2$, and $\rho \circ f_{k,\ell}^3$ are all bounded piecewise linear functions with breakpoints $\{ x_{kt+\ell -1}, x_{kt+\ell}, x_{kt+\ell+1}, x_{kt + t -2}, x_{kt + t -1}\}$,
			Property $1$ is equivalent to
			\begin{equation}
			\label{eq:equivalent_condition}
			\begin{aligned}
				\gamma_{kt + \ell} (x_i) =& \, \rho(f_{k,\ell}^1(x_i)) - \rho(f_{k,\ell}^2(x_i)) + \rho(f_{k,\ell}^3(x_i)),
			\end{aligned}
			\end{equation}
			for $i  =  kt + \ell -1, kt+\ell, kt+\ell+1, kt + t - 2, kt + t -1 $. As all of the functions $\gamma_{kt+\ell}$, $\rho\circ f_{k,\ell}^1$,  $\rho \circ  f_{k,\ell}^2$, $\rho \circ f_{k,\ell}^3$ take on the value 0 at $x_{kt+\ell -1}$ and $ x_{kt + t - 1}$,  
			the relation \eqref{eq:equivalent_condition} holds trivially for $i = kt+\ell -1, kt + t - 1$ regardless of the choice of $y_{k,\ell}^{j}$, $j = 1,2,3$. Substituting \eqref{eq:expression_of_f} into \eqref{eq:equivalent_condition}, for $i = kt+\ell$, $kt+\ell+1$, $kt + t -2$ yields 
			\begin{align}
				1 =&\, f_{k,\ell}^1 (x_{kt+\ell}),\label{eqline:system_equation_f_1}\\
				0 =&\, f_{k,\ell}^1(x_{kt+\ell + 1}) - f_{k,\ell}^2 (x_{kt+\ell + 1}),\label{eqline:system_equation_f_2}\\
				0 =&\, f_{k,\ell}^1 (x_{kt + t - 2 }) - f_{k,\ell}^2(x_{kt + t - 2 }) + f_{k,\ell}^3(x_{kt + t - 2 }). \label{eqline:system_equation_f_3}
			\end{align}
			Next, using the interpolation formula \eqref{eqline:expression_f_1}-\eqref{eqline:expression_f_2} in  \eqref{eqline:system_equation_f_1}-\eqref{eqline:system_equation_f_3}, we can rewrite \eqref{eqline:system_equation_f_1}-\eqref{eqline:system_equation_f_3} in terms of $y_{k,\ell}^1, y_{k,\ell}^2,y_{k,\ell}^3$ according to
			\begin{align}
				1 =&\,\, y^1_{k,\ell} \frac{ x_{kt+\ell} - x_{kt +\ell -1 }} {x_{kt + t -2 } - x_{kt +\ell -1 }},\\
				0 =&\,\, y^1_{k,\ell} \frac{ x_{kt+\ell+1} - x_{kt +\ell -1 }} {x_{kt + t -2 } - x_{kt +\ell -1 }} - y^2_{\ell,k} \frac{ x_{kt + \ell + 1} - x_{kt +\ell  }} {x_{kt + t -2 } - x_{kt +\ell }}   ,\\
				0 =&\,\, y_{k,\ell}^1 - y_{k,\ell}^2 + y_{k,\ell}^3, \label{eqline:b6_100}
			\end{align}
		which has the unique solution 
			\begin{align}
				y^1_{k,\ell} =&\,\, \frac{x_{kt + t -2 } - x_{kt +\ell -1 }}{ x_{kt+\ell} - x_{kt +\ell -1 }}, \label{eqline:solution_1}\\
				y^2_{k,\ell} =&\,\, y^1_{k,\ell} \frac{ x_{kt+\ell+1} - x_{kt +\ell -1 }} {x_{kt + t -2 } - x_{kt +\ell -1 }} \cdot \frac{x_{kt + t -2 } - x_{kt +\ell }}{ x_{kt + \ell + 1} - x_{kt +\ell  }}\label{eqline:solution_4}\\
				= &\, \frac{ x_{kt+\ell+1} - x_{kt +\ell -1 }}{ x_{kt+\ell} - x_{kt +\ell -1 }}   \frac {x_{kt + t -2} - x_{kt +\ell }} { x_{kt+\ell + 1} - x_{kt +\ell }},\label{eqline:solution_2}\\
				y^3_{k,\ell} =&\,\, \frac{ x_{kt+\ell+1} - x_{kt +\ell -1 }}{ x_{kt+\ell} - x_{kt +\ell -1 }}   \frac {x_{kt + t -2} - x_{kt +\ell }} { x_{kt+\ell + 1} - x_{kt +\ell }} - \frac{x_{kt + t -2 } - x_{kt +\ell -1 }}{ x_{kt+\ell} - x_{kt +\ell -1 }}.\label{eqline:solution_3}
			\end{align}
		It remains to verify that the specific choices for $y^1_{k,\ell}, y^2_{k,\ell}, y^3_{k,\ell}$ per \eqref{eqline:solution_1}-\eqref{eqline:solution_3}, indeed, satisfy the positivity assumption \eqref{eq:assumption_y}. 
  To this end, first note that, owing to $\ell \in \{ 3, \dots, t -4 \}$, we have $kt + t -2 > kt +\ell >  kt +\ell  - 1 $, and hence  \eqref{eqline:solution_1}-\eqref{eqline:solution_2} implies $y^1_{k,\ell}, y^2_{k,\ell} > 0$. Then, it follows from  \eqref{eqline:solution_4}, exploiting the strictly increasing nature of  $( x_{i} )_{i = 0}^{M - 1}$, that $\frac{ x_{kt+\ell+1} - x_{kt +\ell -1 }} {x_{kt + t -2 } - x_{kt +\ell -1 }} \cdot \frac{x_{kt + t -2 } - x_{kt +\ell }}{ x_{kt + \ell + 1} - x_{kt +\ell  }} = \frac{x_{kt + t -2 } - x_{kt +\ell }}{x_{kt + t -2 } - x_{kt +\ell -1 }} \cdot \frac{ x_{kt+\ell+1} - x_{kt +\ell -1 }}{ x_{kt + \ell + 1} - x_{kt +\ell  }} > 1 $ and hence $y^2_{k, \ell} > y^1_{k,\ell}$. Together with \eqref{eqline:b6_100} this yields $y_{k,\ell}^3 = y^2_{k,\ell} - y^1_{k,\ell} > 0$. We refer to Figures~\ref{fig:fjki123} and \ref{fig:sum_fjki123} for an illustration of $\rho \circ f^j_{k,\ell}, j =1,2,3,$ and $\gamma_{kt + \ell} = \rho\circ f_{k, \ell}^1 - \rho\circ f_{k, \ell}^2 + \rho\circ f_{k, \ell}^3$, respectively.


			Property 2 will be validated by upper-bounding $\| f_{k,\ell}^j \|_{L^\infty ( \mathbb{R} )} $, $j = 1,2,3$, followed by application of
      Lemma~\ref{lem:representation_single_hidden_layer}. First, note that the functions $f_{k,\ell}^1, f_{k,\ell}^2, f_{k,\ell}^3 \in \Sigma ( ( x_{kt},x_{kt+1},x_{kt + t -2},x_{kt + t -1} ), \infty )$ are equal to zero on $(-\infty,x_{kt}] \cup [x_{kt + t - 1},\infty) $ and hence take on their maximum absolute values at the breakpoints  $ x_{kt + 1}, x_{kt + t - 2}$. Next, we provide a relation, which will be  used repeatedly later, namely for integers $m_1,m_2,m_3,m_4$ with $0 \leq m_1< m_2 < M-1, 0 \leq m_3 < m_4 \leq M - 1$,
			\begin{align}
				\Bigl |\frac{x_{m_2} - x_{m_1}}{ x_{m_4} - x_{m_3}} \Bigr | =&\, \Biggl |\frac{\sum_{i = m_1 + 1}^{m_2} ( x_{i} - x_{i - 1}  )}{\sum_{i = m_3 + 1}^{m_4} ( x_{i} - x_{i - 1}  )}\Biggr | \label{eq:b6_20}\\
				\leq &\, \Biggl | \frac{( m_2 - m_1 ) \max_{i = 1,\dots, M-1}  ( x_{i} - x_{i - 1}  )}{( m_4 - m_3 ) \min_{i = 1,\dots, M-1}  ( x_{i} - x_{i - 1}  )}\Biggr |\\
				= &\, \frac{m_2 - m_1}{m_4 - m_3} R_c(X).\label{eq:b6_21}
			\end{align}
			For $j =1$, we have 
			\begin{align}
				| f_{k,\ell}^1(x_{kt + t - 2}) |  =&\, |y^1_{k,\ell}|\label{eqline:b6_201}\\
				=&\, \Bigl |\frac{x_{kt + t - 2} - x_{kt +\ell -1 }}{ x_{kt+\ell} - x_{kt +\ell -1 }} \Bigr | \label{eqline:b6_1}\\
				\leq &\, ( t-2  - ( \ell - 1 )) R_c(X)\label{eqline:b6_101}\\ 
				\leq &\, t R_c(X),\label{eqline:b6_2}
			\end{align}
			and 
			\begin{align}
				| f_{k,\ell}^1(x_{kt +1}) | =&\, \Bigl| y_{k,\ell}^{1} \frac{x_{kt + 1} - x_{kt +\ell -1}}{x_{ kt + t -2} - x_{kt +\ell  -1}} \Bigr| \label{eqline:b6_3}\\
				=&\, \Bigl|\frac{x_{kt + 1} - x_{kt +\ell -1}} { x_{kt+\ell} - x_{kt +\ell -1 }} \Bigr| \label{eqline:b6_300}\\
				\leq &\, ( \ell - 2 ) R_c(X) \label{b6_7}\\
				\leq &\, t R_c(X), \label{eqline:b6_4}
			\end{align}
			where \eqref{eqline:b6_201} follows from \eqref{eqline:choice_y_i_2}, \eqref{eqline:b6_1} is by \eqref{eqline:solution_1}, in \eqref{eqline:b6_101} we used \eqref{eq:b6_20}-\eqref{eq:b6_21},   in \eqref{eqline:b6_3} we employed \eqref{eqline:choice_y_i_4}, \eqref{eqline:b6_300} is a consequence of \eqref{eqline:solution_1},  \eqref{b6_7} is by \eqref{eq:b6_20}-\eqref{eq:b6_21}, and for \eqref{eqline:b6_4} we used $\ell \in \{ 3,\dots, t-4 \}$ owing to $( k,\ell ) \in \mathcal{I}^2$. Since $f_{k,\ell}^1$ takes on its maximum absolute value at $x_{kt+1}$ or $x_{kt + t - 2}$, as noted above, we get $\| f_{k,\ell}^1  \|_{L^\infty ( \mathbb{R} )} \leq t R_c(X)$. Similarly, for $j =2$, it holds that
			\begin{align}
				| f_{k,\ell}^2(x_{kt + t - 2}) |  =&\, |y^2_{k,\ell}| \label{eqline:b6_80}\\
				=&\, \Bigl| \frac{ x_{kt+\ell+1} - x_{kt +\ell -1 }}{ x_{kt+\ell} - x_{kt +\ell -1 }}   \frac {x_{kt + t -2} - x_{kt +\ell }} { x_{kt+\ell + 1} - x_{kt +\ell }} \Bigr| \label{b6_822} \\
				=&\, \Bigl| \frac{ x_{kt+\ell+1} - x_{kt +\ell -1 }}{ x_{kt+\ell} - x_{kt +\ell -1 }} \Bigr| \Bigl| \frac {x_{kt + t -2} - x_{kt +\ell }} { x_{kt+\ell + 1} - x_{kt +\ell }} \Bigr|\label{b6_8} \\
				\leq&\, 2 R_c(X) ( t-2 - \ell )R_c(X) \label{eqline:b6_82}\\
				\leq &\, 2t (R_c(X))^2, \label{eqline:b6_9}
			\end{align}
			and 
			\begin{align}
				| f_{k,\ell}^2(x_{kt+1}) |  =&\, \Bigl|y^2_{k,\ell} \frac{x_{kt + 1} - x_{kt +\ell}}{x_{ kt + t -2} - x_{kt +\ell }} \Bigr|\label{eqline:b6_81}\\
				=&\, \Bigl| \frac{ x_{kt+\ell+1} - x_{kt +\ell -1 }}{ x_{kt+\ell} - x_{kt +\ell -1 }}   \frac {x_{kt + 1} - x_{kt +\ell}} { x_{kt+\ell + 1} - x_{kt +\ell }} \Bigr| \\
				\leq &\, 2 R_c(X)  ( \ell - 1)R_c(X) \label{eqline:b6_83}\\
				\leq &\, 2t (R_c(X))^2, \label{eqline:b6_8333}
			\end{align}
			where \eqref{eqline:b6_80} follows from \eqref{eqline:choice_y_i_2}, \eqref{b6_822} is by \eqref{eqline:solution_4}-\eqref{eqline:solution_2}, \eqref{eqline:b6_82} is a consequence of \eqref{eq:b6_20}-\eqref{eq:b6_21}, \eqref{eqline:b6_9} is thanks to $\ell \in \{ 3,\dots, t-4 \}$,  in \eqref{eqline:b6_81} we used \eqref{eqline:choice_y_i_4}, \eqref{eqline:b6_83} follows from \eqref{eq:b6_20}-\eqref{eq:b6_21}, and \eqref{eqline:b6_8333} is owing to $\ell \in \{ 3,\dots, t-4 \}$. Again, as $f_{k,\ell}^2$ takes on its maximum absolute value at $x_{kt+1}$ or $x_{kt + t - 2}$, we get
			\begin{equation*}
				\| f_{k,\ell}^2  \|_{L^\infty ( \mathbb{R} )} \leq 2t (R_c(X))^2.
			\end{equation*} 
			For $j = 3$, we have  
			\begin{align}
				| f_{k,\ell}^3(x_{kt + t -2}) |  =&\, |y^3_{k,\ell}|\\
				=&\, |y^2_{k,\ell} - y^1_{k,\ell}| \label{eqline:b6_5}\\
				\leq &\, |y^2_{k,\ell}| + |y^1_{k,\ell}|  \\
				\leq &\, 2t R_c(X)^2 + t R_c(X)\label{eqline:b6_7}\\
				\leq &\, 3t (R_c(X))^2,
			\end{align}
			where in \eqref{eqline:b6_5} we used \eqref{eqline:b6_100}, and \eqref{eqline:b6_7} follows from \eqref{eqline:b6_201}-\eqref{eqline:b6_2} and  \eqref{eqline:b6_80}-\eqref{eqline:b6_9}. Finally, 
			\begin{align}
				| f_{k,\ell}^3(x_{kt+1}) |  =&\, \Bigl|y^3_{k,\ell}\,  \frac{x_{kt + 1} - x_{kt +\ell + 1}}{x_{ kt + t - 2} - x_{kt +\ell + 1}}\Bigr|\label{eqline:b6_8}\\
				\leq &\, 3t (R_c(X))^2  \frac{\ell}{t- \ell -3} R_c(X) \label{eqline:b6_10}\\
				\leq &\, 3 t^2 (R_c(X))^3,
			\end{align}
			where \eqref{eqline:b6_8} is a consequence of \eqref{eqline:choice_y_i_4}, and \eqref{eqline:b6_10} follows from \eqref{eq:b6_20}-\eqref{eq:b6_21}. We therefore have 
			\begin{equation*}
				\| f_{k,\ell}^3  \|_{L^\infty ( \mathbb{R} )} \leq 3t^2 (R_c(X))^3.
			\end{equation*}
			In summary, we established that $\| f_{k,\ell}^1  \|_{L^\infty ( \mathbb{R} )},\| f_{k,\ell}^2  \|_{L^\infty ( \mathbb{R} )}, \| f_{k,\ell}^3  \|_{L^\infty ( \mathbb{R} )} \leq 3 t^2 (R_c(X))^3$, which together with $f_{k,\ell}^j \in \Sigma ( ( x_{kt},x_{kt+1},x_{kt + t -2},x_{kt + t -1} ), \infty )$, implies 
			\begin{equation}
				\label{eq:embedding}
				 f_{k,\ell}^1, f_{k,\ell}^2,  f_{k,\ell}^3 \in \Sigma ( ( x_{kt},x_{kt+1},x_{kt + t -2},x_{kt + t -1} ), 3 t^2 (R_c(X))^3).
			\end{equation}
			Application of Lemma~\ref{lem:representation_single_hidden_layer} to $f_{k,\ell}^j$, for $j \in \{ 1,2,3 \}$, with $X = ( x_{kt},x_{kt+1},x_{kt + t -2},x_{kt + t -1} )$ and $E = 3 t^2 (R_c(X))^3$, yields
			\begin{align*}
				f_{k,\ell}^j (x) =d\, +\,  &c_1 \rho(x - x_{kt}) + c_2 \rho(x - x_{kt+1}) \\
				&+ c_3 \rho( x - x_{kt + t - 2}) + c_4 \rho(x - x_{kt + t - 1}), \quad  x \in \mathbb{R},
			\end{align*}
			for some $d,c_1,c_2,c_3,c_4 \in \mathbb{R}$ such that $|d|,|c_1|,|c_2|,|c_3|,|c_4| \leq  12 t^2 R_m (X)  (R_c(X))^3$. 
The validity of Property 2 now follows upon noting that
			\begin{align*}
				&( x_{kt}, x_{kt+1}, x_{kt + t - 2}, x_{kt + t - 1}) \subseteq ( z_i )_{i = 1}^{8u - 1},
			\end{align*}
			with $( z_i )_{i = 1}^{8u - 1}$ as defined in \eqref{eq:definitio_z_sequence}. Since the choice of $( k,\ell ) \in \mathcal{I}^2$ was arbitrary, this completes the construction of $f^1_{k,\ell},f^2_{k,\ell},f^3_{k,\ell}$ and establishes Properties 1 and 2 for all $( k,\ell ) \in \mathcal{I}^2$.

			We conclude the proof by verifying Property 3.
   The statement follows directly as, for every $( \ell, j ) \in \{ 3,\dots, t -4  \} \times \{ 1,2,3 \}$, the functions $f_{k,\ell}^j $, $k \in \{ 0,\dots,u-1 \}$, are supported in $(x_{kt}, x_{kt + t -1})$ according to \eqref{eq:sgn_of_f}.

		\subsection{Proof of Lemma~\ref{lem:realization_piecewise_linear_functions}} 
		\label{sub:proof_of_lemma_lem:realization_piecewise_linear_functions}
			We start by defining several auxiliary quantities. Let $G(x) := (  \rho(x - z_0), \dots, \rho (x - z_{r-1}) )^T$, $x \in \mathbb{R}$. 
   For $j = 0, \dots, s - 1, x \in \mathbb{R}$, define, 
			\begin{equation*}
				U_{j} := \begin{pmatrix}
					c_{ju + 0, 0} & \ldots & c_{ju + 0, r - 1}\\
					\vdots & \ddots & \vdots\\
					c_{ju + u -1 , 0}& \ldots & c_{ju + u - 1, r-1}
				\end{pmatrix}\!,\, v_{j} := \begin{pmatrix}
					d_{ju + 0} \\
					\vdots\\
					d_{ju + u -1 }
				\end{pmatrix}\!,\, w_j (x) := \begin{pmatrix}
					h_{ju + 0}(x)\\
					\vdots \\
					h_{ju + u - 1}(x)
				\end{pmatrix},
			\end{equation*}
			such that 
			\begin{equation}
				\label{eq:compute_w_j}
				U_j\, G(x) + v_j = w_j (x), \quad x \in \mathbb{R}.
			\end{equation}
			Based on \eqref{eq:compute_w_j}, we proceed to construct a ReLU network realization of $H$ according to
			\begin{equation}
				\label{eq:decomposition_H}
				H(x) = \sum_{\ell = 0}^{us - 1} \rho (h_\ell ( x )) = \sum_{j = 0}^{s- 1} 1_{u}^T \rho ( w_j ( x ) ),\, \quad x \in \mathbb{R},
			\end{equation}
			where we recall that the ReLU function $\rho$ acts componentwise. Specifically, let $\Phi = ( ( A_k, b_k ) )_{k =1}^{s+2}$ be given by
			\begin{alignat*}{2}
				A_1 = & \, 1_{r \times 1  }, & b_1 & = (- z_0,\cdots, -z_{r-1 } )^T,\\
				A_2 = & \begin{pmatrix}
					 I_{r } \\
					U_0\\
					0_{1\times r}
				\end{pmatrix}, & b_2 & = \begin{pmatrix}
					0_{r}\\
					v_0\\
					0_1
				\end{pmatrix},\\
				A_{s+2} = &  \begin{pmatrix}
				0_{1 \times r} & 1_{1 \times u} & 1_{1 \times 1}
				\end{pmatrix}, \quad &b_{s+2} & =  0,
			\end{alignat*}
			and for $k \in \mathbb{N}$ such that $3 \leq  k \leq s+1$,\footnote{For $s = 1$, there is no $k \in \mathbb{N}$ satisfying the constraint and the assignment is thus skipped.}
			\begin{equation*}
				A_{k } =  \begin{pmatrix}
				I_{r }& 0_{r \times u }& 0_{r \times 1}\\
				U_{k -2 }& 0_{u \times u}& 0_{u \times 1} \\
				0_{1 \times r} & 1_{1 \times u} & 1_{1 \times 1}
				\end{pmatrix},\quad  b_{k } =  \begin{pmatrix}
				0_{r}\\
				v_{k - 2}\\
				0_1
				\end{pmatrix}.
			\end{equation*}
			Note that $\mathcal{L} ( \Phi ) = s+2$, $\mathcal{W} (\Phi  ) = r + u + 1$, and $\mathcal{B} ( \Phi ) \leq \max \{ 1, \max_{\ell,i}| c_{\ell, i}|, \max_\ell | d_\ell | ,\max_{i} \{ z_i \}   \} \leq \max \{ 1, T \}$.

			We proceed to show that $H = R ( \Phi )$, which will then imply $H \in \mathcal{R} ( r + u + 1, s+2, \max \{ 1, T \} )$, as desired. For $k = 1,\dots, s+2$, set  $y_k := R ( ( A_\ell,b_\ell )_{\ell = 1}^{k} )$. We have, 
			\begin{equation}
				y_1 (x) = A_1 x  + b_1 = ( x - z_0, \dots, x - z_{r-1} )^T, \quad  x \in \mathbb{R}.
			\end{equation}
			Next, it is proved by induction that for $k = 2,\dots, s + 1$,
			\begin{equation}
				\label{eq:induction_goal_linear_sum}
				y_{k } (x) =\begin{pmatrix}
					G(x) \\
					w_{k - 2} (x) \\
					\sum_{j =0}^{k-3  } 1_{u}^T \rho ( w_j ( x ) )
				\end{pmatrix}, \quad  x \in \mathbb{R},
			\end{equation}
			where we use the convention $\sum_{j =0}^{-1  } 1_{u}^T \rho ( w_j ( x ) ) = 0$. The base case $k = 2$ follows from 
			\begin{align}
				y_2 ( x ) =\,  & A_2 \rho ( y_1 ( x ) ) + b_2\\
				=\,& \begin{pmatrix}
					\rho ( ( x - z_0, \dots, x - z_{r-1} )^T )\\ 
					U_0 \rho ( ( x - z_0, \dots, x - z_{r-1} )^T ) + v_0\\
					0_r
				\end{pmatrix}\\
				=\, & \begin{pmatrix}
					G(x) \\
					w_{0} ( x )\\
					0_r
				\end{pmatrix} \quad x \in \mathbb{R}, \label{eqline:y_2}
			\end{align}
			where in \eqref{eqline:y_2} we used $U_0 G(x) + v_0 = w_0 ( x )$, for $x \in \mathbb{R}$. 
			We proceed to prove the induction step $k - 1 \mapsto k$ with $3 \leq  k \leq s+1$\footnote{For $s = 1$, the induction step is not needed as the base case already complete the proof.}, noting that the induction assumption is given by
			\begin{equation}
			\label{eqline:induction_square_assumption}
				y_{k -1 } (x) =\begin{pmatrix}
					G(x) \\
					w_{k - 3} (x) \\
					\sum_{j =0}^{k-4  } 1_{u}^T \rho ( w_j ( x ) )
				\end{pmatrix},\, x \in \mathbb{R}.
			\end{equation}
			The result is immediate from
			\begin{align}
				y_{k}(x) =&\, A_{k } \rho ( y_{k-1} ( x ) ) + b_{k} \\
				=&\, \begin{pmatrix}
					I_r   G(x) \\
					U_{k - 2}  G(x)  +  v_{k - 2}\\
					1_u^T \rho ( w_{k-3} (x) ) + \sum_{j =0}^{k-4  } 1_{u}^T \rho ( w_j ( x ) )
				\end{pmatrix} \label{eqline:induction_square_1}\\
				=& \, \begin{pmatrix}
					G(x)\\
					w_{k-2} (x)\\
					\sum_{j =0}^{k-3  } 1_{u  }^T \rho ( w_j ( x ) )
				\end{pmatrix}, \quad x \in \mathbb{R},
			\end{align}
			where in \eqref{eqline:induction_square_1} we used the induction assumption~\eqref{eqline:induction_square_assumption} and the fact that $G(x),\sum_{j =0}^{k-4  } 1_{u}^T \rho ( w_j ( x ) ) \geq 0 $, for all $x \in \mathbb{R}$. We conclude the overall proof by noting that, for $x \in \mathbb{R}$, 
			\begin{align}
				y_{s+2} (x) =&\, A_{s+2} \rho ( y_{s+1} (x) ) + b_{s+2}\\
				=&\, 1_u^T \rho ( w_{s-1} (x) ) + \sum_{j =0}^{s-2  } 1_{u  }^T \rho ( w_j ( x ) )\\
				=&\, \sum_{j =0}^{s-1  } 1_{u  }^T \rho ( w_j ( x ) ) \\
				=&\, H(x), \label{eqline:induction_square_2}
			\end{align}
			where  \eqref{eqline:induction_square_2} follows from \eqref{eq:decomposition_H}.







\section{Proof of Proposition~\ref{prop:bit_extraction}} 
	\label{sub:bit_extraction}
		Let $N \in \mathbb{N}$ be fixed throughout. We start by providing intuition behind the construction of $F_{N,L}$, $L \in \mathbb{N}$, and then follow up with the proof. Consider the following decomposition,  for all $L \in \mathbb{N}$, $\sum_{i\, = \,1}^\infty \theta_{i} 3^{-i} \in \mathbb{T}$, and $\bit \in \mathbb{N} \cup \{ 0 \}$,
		\begin{align}
			\sum_{i\, = \,1}^{\min \{N(L+1),\, \bit \}} \theta_{i} = \sum_{i\, = \,1}^{\min \{ N, \, \bit \}} \theta_{i} + \sum_{i\, = \,1}^{\min \{ N L,\, \max  \{\bit - N,\, 0\} \} } \theta_{N + i},\label{eqline:decomposition_for_several_steps}
		\end{align}
		where we recall the convention  $\sum_{i\, = \,1}^{0} \theta_{N + i} = 0$. With $F_{N,\,L}: \mathbb{R} \mapsto \mathbb{R}$ according to \eqref{eq:computational_property_f_NL}, \eqref{eqline:decomposition_for_several_steps} implies the recursive relation 
		\begin{equation}
			\label{eq:intuitive_induction}
			F_{N,L+1} \biggl( \sum_{i =1}^\infty \theta_{i} 3^{-i}, \bit \biggr) = \sum_{i\, = \,1}^{\min \{ N, \bit \}} \theta_{i} + F_{N,L} \biggl( \sum_{i =1}^\infty \theta_{N+ i} 3^{-i}, \max \{ \bit -N , 0 \} \biggr),
		\end{equation}
        which will be seen below to inspire our proof.

  We begin with two technical lemmata introducing the function $g$ and the family of functions $\{ G_{N,L} \}_{L \in \mathbb{N}}$ that serve as basic building blocks of our construction.

		\begin{lemma}
			\label{lem:constructive_proof_of_g}
			There exists a function 
			\begin{equation}
				\label{eq:g_complexity}
				g \in \mathcal{R} ( ( 3,3 ),2^{N+4}, 3,  3^{N+2})
			\end{equation}
			such that for all $\zero \in \mathbb{R}$, $\sum_{i = 1}^\infty \theta_{i} 3^{-i} \in \mathbb{T}$, $\bit \in \mathbb{N} \cup \{ 0 \}$,
			\begin{equation}
				\label{eq:expression_of_g_bit_extraction}
				g\biggl(\zero, \sum_{i = 1}^\infty  \theta_i\, 3^{-i}, \bit  \biggr) = \Bigl( \rho(\zero) +  \sum_{i = 1}^{\min \{ N,\,\bit \}} \theta_i, \sum_{i = 1}^\infty  \theta_{N+i}\, 3^{-i} , \max \{ \bit - N, 0 \}   \Bigr).
			\end{equation}
			\begin{proof}
				See Appendix~\ref{sub:proof_of_lemma_lem:constructive_proof_of_g}.
			\end{proof}
		\end{lemma}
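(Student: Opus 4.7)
The plan is to construct $g$ explicitly as a 2-hidden-layer network, exploiting the fact that the first $N$ ternary digits of $\tau := \sum_{i=1}^\infty \theta_i 3^{-i} \in \mathbb{T}$ index a finite set of ``patterns'' $s = (\sigma_1, \ldots, \sigma_N) \in \{0,1\}^N$. Writing $a_s := \sum_{i=1}^N \sigma_i 3^{-i}$ and $b_s := a_s + \tfrac{1}{2} 3^{-N}$, the valid $\tau \in \mathbb{T}$ with first $N$ digits $s$ are exactly those in $I_s := [a_s, b_s]$. The key arithmetic observation is that $3^N(a_s - a_{s'})$ is a nonzero integer for $s \neq s'$, so distinct $a_s, a_{s'}$ are at least $3^{-N}$ apart; combined with the $\{0,1\}$-reduced alphabet making each $I_s$ only $\tfrac{1}{2} 3^{-N}$ wide, this guarantees gaps of width $\geq \tfrac{1}{2} 3^{-N}$ between the intervals, which is what lets finite-slope ReLU gadgets behave like exact indicators on $\mathbb{T}$.

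The first hidden layer will contain, for each of the $2^N$ patterns $s$, the four ReLUs $\rho(\tau - a_s + \tfrac{1}{2} 3^{-N})$, $\rho(\tau - a_s)$, $\rho(\tau - b_s)$, $\rho(\tau - b_s - \tfrac{1}{2} 3^{-N})$; it will also hold $\rho(X)$, $\rho(k - N)$, and for each $i = 1, \ldots, N$ the pair $\rho(k - i + 1), \rho(k - i)$. Appropriate linear combinations of these primitives (absorbed into the map $A_2$) realize (a) a trapezoidal $\chi_s$, using coefficients $\pm 2 \cdot 3^N$, that equals $1$ on $I_s$ and $0$ outside $[a_s - \tfrac{1}{2} 3^{-N}, b_s + \tfrac{1}{2} 3^{-N}]$; (b) a piecewise-linear ``ramp'' $\phi_s$, using coefficients $3^N, -2 \cdot 3^N, 3^N$ on the last three ReLUs, equal to $3^N(\tau - a_s)$ on $I_s$, returning linearly to $0$ within the adjacent gap, and vanishing on every other $I_{s'}$; and (c) the indicator $1_{i \leq k} = \rho(k - i + 1) - \rho(k - i)$, exact for $k \in \mathbb{N} \cup \{0\}$.

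The second hidden layer then (i) carries $\rho(X)$, $\rho(k - N)$, and the ramp sum $\sum_s \phi_s$ through by the identity $\rho \circ \rho = \rho$ applied to non-negative quantities, and (ii) for each $i = 1, \ldots, N$ computes $\rho(\theta_i + 1_{i \leq k} - 1)$, where $\theta_i = \sum_s \sigma_i \chi_s(\tau)$ is a linear combination of first-layer outputs. Since both $\theta_i$ and $1_{i \leq k}$ take values in $\{0, 1\}$ on valid inputs, this ReLU realizes their boolean product $\theta_i \cdot 1_{i \leq k}$ exactly. The output affine map then emits $\rho(X) + \sum_{i=1}^N \theta_i \cdot 1_{i \leq k} = \rho(X) + \sum_{i=1}^{\min\{N,k\}} \theta_i$ as the first coordinate, $\sum_s \phi_s(\tau) = 3^N(\tau - a_{s^*}) = \sum_{i=1}^\infty \theta_{N+i} 3^{-i}$ as the second, and $\rho(k - N) = \max\{k - N, 0\}$ as the third.

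Resource accounting is routine: the first hidden layer has $4 \cdot 2^N + 2N + 2 \leq 2^{N+4}$ neurons and the second has $N + 3$, so $\mathcal{W}(\Phi) \leq 2^{N+4}$; all nontrivial weights are the coefficients $\pm 3^N, \pm 2 \cdot 3^N$ in $A_2$, bounded by $3^{N+2}$. The one non-routine step is verifying that each $\phi_s$ really vanishes at every point of $I_{s'}$ with $s' \neq s$---so that the $2^N$ ramps do not interfere when summed---which boils down to the integer-valuedness argument $3^N(a_s - a_{s'}) \in \mathbb{Z} \setminus \{0\}$, and this is precisely where the reduced alphabet $\{0,1\}$ is essential: with the full alphabet $\{0,1,2\}$ each $I_s$ would have width $3^{-N}$, matching the minimum separation exactly and leaving no room for the ReLU transitions.
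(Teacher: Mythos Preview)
Your construction is correct and follows essentially the same approach as the paper's proof. Both arguments exploit the $\tfrac{1}{2}3^{-N}$ gaps created by the reduced alphabet to build trapezoidal pattern-indicators, recover $\theta_i$ from these, realize the Boolean AND $\theta_i \cdot 1_{i\le k}$ via $\rho(\theta_i + 1_{i\le k} - 1)$, implement the $N$-digit shift by summing pattern-matched ramp functions, and read off $\max\{k-N,0\}=\rho(k-N)$.

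The only organizational difference is that the paper constructs the three output coordinates as separate networks $g^1,g^2,g^3$ and then parallelizes them via Lemma~\ref{lem:algebra_on_ReLU_networks}, whereas you build a single integrated 3-layer network sharing the $4\cdot 2^N$ pattern ReLUs between the digit-extraction and the shift; also, the paper's digit indicator $h^\ell$ sums over length-$(\ell-1)$ prefixes rather than full length-$N$ patterns. These are cosmetic variations leading to the same width and weight-magnitude bounds.
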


		Define $G_{N,L }: \mathbb{R}^3 \mapsto \mathbb{R}^3$, $L \in \mathbb{N}$, recursively, according to 
			\begin{equation}
			\label{eq:complexity_G_N}
			G_{N,\,L } = \left\{
			\begin{aligned}
				& g, &&\text{for } L = 1,\\
				& g \circ G_{N,\, L - 1}, && \text{for } L \geq 2,
			\end{aligned}
			\right.
			\end{equation}
		where $g$ is the function specified by Lemma~\ref{lem:constructive_proof_of_g}. The properties of $G_{N,\,L }$, $L \in \mathbb{N}$, are summarized in the following result.
		\begin{lemma}
		\label{lem:C2}
			Let $L \in \mathbb{N}$. It holds that 
			\begin{equation}
			\label{eq:C2_1}
				G_{N,\,L} \in \, \mathcal{R} ( (3,3),2^{N+4},3 L,3^{N+2}),
			\end{equation}
			and, for $\sum_{i = 1}^\infty \theta_{i} 3^{-i} \in \mathbb{T}$, $\bit \in \mathbb{N} \cup \{ 0 \}$,
			\begin{equation}
			\label{eq:C2_2}
				G_{N,L} \biggl(0, \sum_{i = 1}^\infty \theta_i 3^{-i}, \bit  \biggr) = \biggl(\,\sum_{i = 1}^{\min \{ NL,\,\bit \}}\theta_i ,  \sum_{i =1}^\infty \theta_{NL + i}\, 3^{-i},  \max \{ \bit - NL, 0 \}   \biggr).
			\end{equation}
			\begin{proof}
				See Appendix~\ref{sub:proof_of_lemma_lem:c2}.
			\end{proof}
		\end{lemma}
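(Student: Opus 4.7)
The plan is to prove Lemma~\ref{lem:C2} by induction on $L$, exploiting the recursive definition $G_{N,L+1} = g \circ G_{N,L}$ and the functional properties of $g$ from Lemma~\ref{lem:constructive_proof_of_g}.

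For the base case $L=1$, we have $G_{N,1} = g$. The complexity bound \eqref{eq:C2_1} with $L=1$ is exactly \eqref{eq:g_complexity}. For the functional identity \eqref{eq:C2_2}, we evaluate \eqref{eq:expression_of_g_bit_extraction} at $X = 0$ and use $\rho(0) = 0$, which gives precisely the claimed expression.

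For the inductive step, I assume \eqref{eq:C2_1} and \eqref{eq:C2_2} hold for $L$. The complexity bound for $G_{N,L+1} = g \circ G_{N,L}$ is obtained by invoking the composition rule for ReLU networks (Lemma~\ref{lem:algebra_on_ReLU_networks}) together with \eqref{eq:g_complexity} and the inductive complexity bound: widths combine as $\max\{2^{N+4}, 2^{N+4}\} = 2^{N+4}$, depths add to $3L + 3 = 3(L+1)$, and weight magnitudes remain bounded by $3^{N+2}$. For the functional identity, I first apply the inductive hypothesis to write
\begin{equation*}
G_{N,L}\biggl(0, \sum_{i=1}^\infty \theta_i 3^{-i}, \bit\biggr) = \biggl(\,\sum_{i=1}^{\min\{NL,\bit\}} \theta_i,\, \sum_{i=1}^\infty \theta_{NL+i}\,3^{-i},\, \max\{\bit - NL, 0\}\biggr),
\end{equation*}
then feed this into $g$. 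Setting $s := \sum_{i=1}^{\min\{NL,\bit\}} \theta_i \geq 0$ and $\bit' := \max\{\bit - NL, 0\}$, \eqref{eq:expression_of_g_bit_extraction} (with $\rho(s) = s$ since $s \geq 0$) yields
\begin{equation*}
G_{N,L+1}\biggl(0, \sum_{i=1}^\infty \theta_i 3^{-i}, \bit\biggr) = \biggl(s + \sum_{i=1}^{\min\{N,\bit'\}} \theta_{NL+i},\, \sum_{i=1}^\infty \theta_{N(L+1)+i}\,3^{-i},\, \max\{\bit' - N, 0\}\biggr).
\end{equation*}
The third coordinate simplifies to $\max\{\bit - N(L+1), 0\}$ and the second is already in the desired form; so the remaining task is the first coordinate.

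The main (though routine) obstacle is verifying the identity
\begin{equation*}
\sum_{i=1}^{\min\{NL,\bit\}} \theta_i + \sum_{i=1}^{\min\{N,\max\{\bit - NL, 0\}\}} \theta_{NL+i} = \sum_{i=1}^{\min\{N(L+1),\,\bit\}} \theta_i,
\end{equation*}
which is precisely the decomposition \eqref{eqline:decomposition_for_several_steps} (with $L$ replaced by $L+1$ and the role of the ``$N$ digits at the top'' interchanged with the ``$NL$ digits below''). I will handle this by a three-case split on $\bit$: (i) $\bit \leq NL$, in which case the second sum is empty and both sides equal $\sum_{i=1}^{\bit} \theta_i$; (ii) $NL < \bit \leq N(L+1)$, in which case $\min\{N, \bit'\} = \bit - NL$ and telescoping gives $\sum_{i=1}^{\bit}\theta_i$; (iii) $\bit > N(L+1)$, in which case both sums are saturated and yield $\sum_{i=1}^{N(L+1)} \theta_i$. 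Assembling these cases completes the inductive step and hence the proof.
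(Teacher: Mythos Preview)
Your proof is correct and follows essentially the same inductive strategy as the paper: base case from Lemma~\ref{lem:constructive_proof_of_g}, inductive step via the composition rule of Lemma~\ref{lem:algebra_on_ReLU_networks} for \eqref{eq:C2_1}, and direct evaluation of $g$ on the inductive output for \eqref{eq:C2_2}. The only cosmetic difference is that the paper indexes the step as $L-1\mapsto L$ and leaves the first-coordinate identity implicit (it is \eqref{eqline:decomposition_for_several_steps}), whereas you spell out the three-case verification; one minor remark is that in the width bound the composition rule also contributes $2d'=6$, but this is dominated by $2^{N+4}$.
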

	
		Fix $L \in \mathbb{N}$. We are now ready to define $F_{N,\,L}$ and verify the corresponding properties \eqref{eq:complexity_decoding_networks}-\eqref{eq:bit_extraction}. Recall that $S(A,b)$ refers to the affine mapping given by $S(A,b) (x) = Ax + b, x \in \mathbb{R}_{n_2}$, for $A \in \mathbb{R}_{n_1 \times n_2}$, $b \in \mathbb{R}_{n_1}$, $n_1,n_2 \in \mathbb{N}$. Set  
		\begin{equation}
		\label{eq:def_F_N_1}
		 	F_{N,\,L} = f_2 \circ G_{N,\,L} \circ f_1,
		\end{equation} 
		with 
		\begin{alignat}{3}
			f_1 =&\, \affine \biggl(\biggl(\,\begin{smallmatrix}
			0 & 0\\
			1 & 0\\
			0 & 1
		\end{smallmatrix} \,\biggr), 0_3\biggr) \,&\in \mathcal{R} ( ( 2,3 ), 3,1,1 ), \label{eq:c2_f1}\\
			f_2 =&\, \affine ( (1\,0\,0) , 0)  \,&\in \mathcal{R} ( ( 3,1 ), 3,1,1 ), \label{eq:c2_f2}
		\end{alignat}
		and note that
		\begin{alignat}{2}
			f_1 ( y,z ) =&\, ( 0,y,z), \quad &&\text{for } (y,z) \in \mathbb{R}^2, \label{eq:c2_f1_meaning}\\
			f_2 ( x,y,z)  =&\,\, x,  &&\text{for } (x,y,z) \in \mathbb{R}^3. \label{eq:c2_f2_meaning}
		\end{alignat}

		Noting that $F_{N,L}$ is the composition of functions which can be realized by ReLU networks according to \eqref{eq:C2_1}, \eqref{eq:c2_f1}, and \eqref{eq:c2_f2},  application of Lemma~\ref{lem:algebra_on_ReLU_networks} yields
		\begin{align*}
			F_{N,L} \in&\, \mathcal{R} ( (2,1),2^{N+4},3 L + 2,3^{N+2})\\
			\subseteq&\, \mathcal{R} ( (2,1),2^{N+4},5L,3^{N+2}),
		\end{align*}
		which establishes \eqref{eq:complexity_decoding_networks}.
			Furthermore, for  $\sum_{i = 1}^\infty \theta_{i} 3^{-i} \in \mathbb{T}$, and $\bit \in \mathbb{N} \cup \{ 0 \}$,  we have
			\begin{align}
				F_{N,\,L} \biggl( \sum_{i = 1}^\infty \theta_i 3^{-i}, \bit  \biggr) =&\, f_2\circ G_{N,\,L} \circ f_1 \biggl( \sum_{i = 1}^\infty \theta_i 3^{-i}, \bit \biggr) \label{eq:FNL_1}\\
				=&\, f_2\circ G_{N,L}\biggl(0, \sum_{i = 1}^\infty \theta_i 3^{-i}, \bit \biggr)\label{eq:FNL_2} \\
				=&\, f_2 \Biggl(\,\sum_{i = 1}^{\min \{ NL,\,\bit \}} \theta_i ,  \sum_{i =1}^\infty \theta_{NL + i}  3^{-i},  \max \{ \bit - NL, 0 \}   \Biggr)  \label{eq:FNL_3} \\
				=&\, \sum_{i = 1}^{\min \{ NL,\,\bit \}} \theta_i, \label{eq:FNL_4}
			\end{align}
			where \eqref{eq:FNL_2} follows from \eqref{eq:c2_f1_meaning}, in \eqref{eq:FNL_3} we used \eqref{eq:C2_2}, and
   \eqref{eq:FNL_4} is by \eqref{eq:c2_f2_meaning}, thereby verifying property \eqref{eq:computational_property_f_NL}.  We conclude the proof by noting that  \eqref{eq:bit_extraction} follows directly from \eqref{eq:computational_property_f_NL}. 

			\subsection{Proof of Lemma~\ref{lem:constructive_proof_of_g}} 
			\label{sub:proof_of_lemma_lem:constructive_proof_of_g}
			
			
			We start by constructing functions $g^1,g^2,g^3 \in \mathcal{R} ( (3,1) )$ such that, for all $\zero \in \mathbb{R}$, $\sum_{i = 1}^\infty \theta_{i} 3^{-i} \in \mathbb{T}$, $\bit \in \mathbb{N} \cup \{ 0 \}$, 
			\begin{enumerate}[label=(\roman*)]
				\item $g^1(\zero, \sum_{i = 1}^\infty  \theta_i 3^{-i}, \bit  ) = \rho(\zero) +  \sum_{i = 1}^{\min \{ N,\bit \}} \theta_i$,
				\item $g^2(\zero, \sum_{i = 1}^\infty  \theta_i 3^{-i}, \bit  ) = \sum_{i = 1}^\infty  \theta_{N+i} 3^{-i}$,
				\item $g^3(\zero, \sum_{i = 1}^\infty  \theta_i 3^{-i}, \bit  ) = \max \{ \bit - N, 0 \}$,
			\end{enumerate}
			and then putting them together, through Lemma~\ref{lem:algebra_on_ReLU_networks}, as $g = ( (g_1, g_2), g_3 )$\footnote{Recall that for  $f_1:\mathbb{R}^d \mapsto \mathbb{R}^{d'}$ and $f_2: \mathbb{R}^d \mapsto \mathbb{R}^{d''}$, $d,d',d'' \in \mathbb{N}$, $( f_1, f_2 ): \mathbb{R}^{d} \mapsto \mathbb{R}^{d' + d''}$ is defined according to $( f_1, f_2 ) ( x ) = ( f_1 ( x ), f_2 ( x ) )$, for $x \in \mathbb{R}^d$.} to obtain a ReLU network.
   We proceed to the construction of $g^1,g^2,g^3$.
			\begin{enumerate}[label=(\roman*)]
				\item \label{item:g1} Construction of $g^1$. Fix $\zero \in \mathbb{R}$, $\sum_{i = 1}^\infty \theta_{i} 3^{-i} \in \mathbb{T}$, $\bit \in \mathbb{N} \cup \{ 0 \}$ throughout part \ref{item:g1}.  We first note that 
				\begin{equation}
				\label{eq:inspiration_realizing_g_1}
				 	 \rho(\zero) +  \sum_{i = 1}^{\min \{ N,\bit \}} \theta_{i} = \rho(\zero) + \sum_{\ell = 1}^{N} 1_{\{\theta_\ell =1\}} 1_{\{\ell \leq \bit\}}.
				\end{equation} 
				Next, we construct functions $h^\ell, m^\ell:\mathbb{R} \mapsto \mathbb{R}$ such that $h^\ell ( \sum_{i = 1}^\infty \theta_{i} 3^{-i}) = 1_{\{\theta_\ell = 1\}}$ and $m^\ell ( \bit ) = 1_{\{\ell\, \leq\, \bit\}} $, $\ell = 1,\dots, N$.
				\begin{enumerate}
					\item \label{item:indication_function} Construction of $h^\ell$. We distinguish the cases $\ell = 1$ and $\ell \geq 2$. For $\ell = 1$, we let $h^1:\mathbb{R} \mapsto \mathbb{R}$ be given by 
					\begin{equation}
					\label{eq:h_0_expression}
					  	h^1 (y) =\, 9 \rho(y - T(( 0,2))) - 9\rho(y - T((1,0))),
					  \end{equation}  
					which takes on the value $0$ on $[0,T(( 0,2))]$ and equals $1$ on $[T((1,0)), \infty)$. Upon noting that $\sum_{i = 1}^\infty \theta_{i} 3^{-i} \in \mathbb{T} \subseteq [0,T((0,2))] \cup [T((1,0)), \infty) $, we therefore have $h^1 (\sum_{i = 1}^\infty \theta_{i} 3^{-i}) = 1_{\{\theta_1 = 1\}}$. For $\ell \geq 2$, we first decompose $1_{\{\theta_\ell = 1\}}$  according to 
					\begin{equation}
						\label{eq:furthur_decomposition}
						1_{\{\theta_\ell = 1\}} = \sum_{\substack{(a_1,\dots, a_{\ell-1}) \\ \in \{ 0,1 \}^{\ell - 1}}} 1_{\{ \theta_1 = a_1, \dots, \theta_{\ell -1} = a_{\ell -1}, \theta_\ell =1 \}}.
					\end{equation}

					Next, note that, for $(a_1,\dots, a_{\ell-1})  \in \{ 0,1 \}^{\ell - 1}$, the function $r^{( a_1,\dots, a_{\ell-1} )}: \mathbb{R} \mapsto \mathbb{R}$, given by
					\begin{equation}
					\label{eq:indication_functions_multiple_digits}
					\begin{aligned}
						r^{( a_1,\dots, a_{\ell-1} )} (y) & = 3^{\ell+1}\rho(y - T ( ( a_1,\dots, a_{\ell-1},0,2 ) ))\\
						&\,\,- 3^{\ell+1} \rho(y - T ( (a_1,\dots, a_{\ell-1},1) )) \\
						&\,\,- 3^{\ell+1}\rho(y - T((a_1,\dots,a_{\ell-1},2)))\\
						&\,\, + 3^{\ell+1} \rho(y - T((a_1,\dots,a_{\ell-1},2,1) )),
					\end{aligned}
					\end{equation}
					satisfies $r^{( a_1,\dots, a_{\ell-1} )} (\sum_{i=1}^\infty \theta_i 3^{-i})=  1_{\{ \theta_1 = a_1, \dots, \theta_{\ell -1} = a_{\ell -1}, \theta_\ell =1 \}}$. We refer to Figure~\ref{fig:indication_function} for an illustration of $r^{( a_1,\dots, a_{\ell-1} )}$. 
					Summation over $r^{( a_1,\dots, a_{\ell-1} )}$ for $a_1,\dots, a_{\ell-1}  \in \{ 0,1 \}$ yields the desired $h^\ell: \mathbb{R} \mapsto \mathbb{R}$ according to
					\begin{alignat}{2}
						h^\ell ( y ) :=& \sum_{\substack{(a_1,\dots, a_{\ell-1}) \\ \in \{ 0,1 \}^{\ell - 1}}} r^{( a_1,\dots, a_{\ell-1} )}( y )\\
						=&\:\:  \sum_{\substack{(a_1,\dots, a_{\ell-1}) \\ \in \{ 0,1 \}^{\ell - 1}}} ( 3^{\ell+1}\rho(y - T ( ( a_1,\dots, a_{\ell-1},0,2 ) )) \label{eq:define_h_i_1}\\
						&\:\:- 3^{\ell+1} \rho(y -T ( (a_1,\dots, a_{\ell-1},1) )) \\
						&\:\:- 3^{\ell+1}\rho(y - T((a_1,\dots,a_{\ell-1},2) ))\\
						&\:\: + 3^{\ell+1} \rho(y - T((a_1,\dots,a_{\ell-1},2,1) )) ), \label{eq:define_h_i_2}
					\end{alignat}
					which satisfies 
					\begin{align}
						h^\ell \biggl( \sum_{i = 1}^\infty \theta_{i} 3^{-i}\biggr) =&\: \sum_{\substack{(a_1,\dots, a_{\ell-1}) \\ \in \{ 0,1 \}^{\ell - 1}}} r^{( a_1,\dots, a_{\ell-1} )}\biggl(  \sum_{i = 1}^\infty \theta_{i} 3^{-i} \biggr) \label{eqline:property_h_ell_1}\\
						=&\: \sum_{\substack{(a_1,\dots, a_{\ell-1}) \\ \in \{ 0,1 \}^{\ell - 1}}} 1_{\{ \theta_1 = a_1, \dots, \theta_{\ell -1} = a_{\ell -1}, \theta_\ell =1 \}}\\
						=&\:\: 1_{\{\theta_\ell = 1\}},\label{eqline:property_h_ell_2}
					\end{align}
					as desired. We finally note that $h^\ell$, $\ell \in \mathbb{N}$, can be written as  
					\begin{equation}
					\label{eq:another_expression_h_ell}
						h^\ell ( y ) = \sum_{j = 1}^{2^{\ell+1}} u_{\ell,j} \rho (y - v_{\ell,j}), \quad y \in \mathbb{R},
					\end{equation}
					for some $u_{\ell,j},v_{\ell,j} \in \mathbb{R}$ with $|u_{\ell,j}|,  |v_{\ell,j}| \leq 3^{\ell + 1}$, $j = 1,\dots, 2^{\ell + 1}$. 
					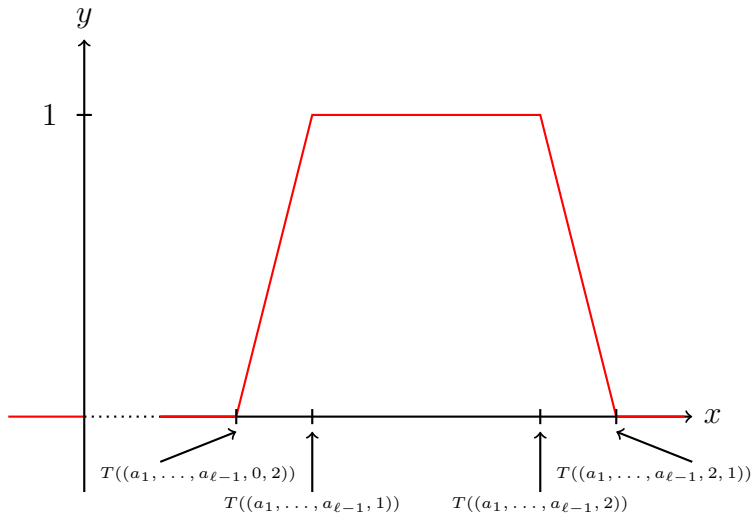
\begin{figure}[H]
						\centering
						\begin{tikzpicture}
						    \draw[-, thick, red] (-1,0) -- (0,0) node[right]{};
						    \draw[dotted, thick] (0,0) -- (1,0) node[right]{};
						    \draw[->, thick] (1,0) -- (8,0) node[right] {$x$};
						    \draw[->, thick] (0,-1) -- (0,5) node[above] {$y$};
						    
						    \draw[red, thick] (1,0) -- (2,0) -- (3,4) -- (6,4) -- (7,0) -- (7.9,0);

						    \draw[-, thick] (2,-0.1) -- (2,0.1);
						    \draw[-, thick] (3,-0.1) -- (3,0.1);
						    \draw[-, thick] (6,-0.1) -- (6,0.1);
						    \draw[-, thick] (7,-0.1) -- (7,0.1);
						    
						    \draw[->, thick] (1,-0.6) -- (2,-0.2) node[midway, below = 1mm, font=\tiny] {$T(( a_1,\dots,a_{\ell-1},0,2))$};
						    \draw[->, thick] (3,-1) -- (3,-0.2) node[midway, below =3mm, font=\tiny] {$T((a_1,\dots,a_{\ell-1},1) )$};
						    \draw[->, thick] (6,-1) -- (6,-0.2) node[midway, below = 3mm, font=\tiny] {$T((a_1,\dots,a_{\ell-1},2))$};
						    \draw[->, thick] (8,-0.6) -- (7,-0.2) node[midway, below = 1mm, font=\tiny] {$T((a_1,\dots,a_{\ell-1},2,1) )$};
						    \draw[-, thick] (-0.1,4) -- (0.1,4) node[left=3mm] {1};
						\end{tikzpicture}
						\caption{The function $r^{( a_1,\dots, a_{\ell - 1} )}$.}
						\label{fig:indication_function}
					\end{figure}


					\item The ReLU networks realizing $m^\ell: \mathbb{R} \mapsto \mathbb{R}$ are given by $m^\ell  ( z ) = \rho( z - (\ell -1)) - \rho(z - \ell)$, $z \in \mathbb{R}$. It is readily seen that
					\begin{equation}
					\label{eq:property_m_ell}
						m^\ell ( \bit ) = 1_{\{\ell\, \leq\, \bit\} }.
					\end{equation}


				\end{enumerate}
				Now, define $g^1: \mathbb{R}^3 \mapsto \mathbb{R}$ according to
				\begin{align}
					g^1 ( x,y,z ) & :=  \rho(x) + \sum_{\ell = 1}^N \rho ( h^\ell ( y ) + m^{\ell} ( z ) - 1  ) \label{eq:expression_g_1_1}\\
					& =  \rho ( \rho(x) ) + \sum_{\ell = 1}^N \rho \biggl(\sum_{j = 1}^{2^{\ell+1}} u_{\ell,j} \rho (y - v_{\ell,j})   + \rho( z - (\ell -1)) - \rho(z - \ell) - 1  \biggr), \label{eq:expression_g_1_2}
				\end{align}
                for $x,y,z \in \mathbb{R}$,    
				where in \eqref{eq:expression_g_1_2} we used the ReLU network realizations of $h^\ell$ and $m^\ell$, $\ell = 1,\dots, N$, along with $\rho\circ \rho = \rho$. This shows that $g^1$ can be realized by a ReLU network of depth $3$, input dimension $3$, with $1 + \sum_{\ell = 1}^N ( 2^{\ell+1} + 2 ) \leq 2^{N+3}$ nodes in the first layer, $N+1$ nodes in the second layer, and, owing to $| u_{\ell,j} |, | v_{\ell,j} | \leq 3^{\ell + 1} \leq 3^{N+1} $, $\ell= 1,\dots, N$, $j = 1, \dots, 2^{\ell +1}$,
    weight magnitude upper-bounded by $3^{N+1}$.
    Formally, we have established that
				\begin{equation}
				\label{eqline:g_1_complexity}
					g^1 \in \mathcal{R} ( ( 3,1 ), 2^{N+3}, 3, 3^{N+1}). 
				\end{equation}
				We finally note that
				\begin{align}
					g^1 \biggl( \zero,\sum_{i = 1}^\infty \theta_{i} 3^{-i},\bit \biggr) =&\, \rho(\zero) + \sum_{\ell = 1}^N \rho \biggl( h^\ell \biggl( \sum_{i = 1}^\infty \theta_{i} 3^{-i} \biggr) + m^{\ell} ( \bit ) - 1  \biggr) \\
					=&\, \rho(\zero) + \sum_{\ell = 1}^N \rho ( 1_{\{\theta_\ell = 1\}} + 1_{\{\ell\, \leq\, \bit\}} - 1)\label{eqline:g_1_1} \\
					=&\, \rho(\zero) + \sum_{\ell = 1}^N 1_{\{\theta_\ell = 1\}} 1_{\{\ell\, \leq\, \bit\}} \label{eqline:g_1_2} \\
					=&\, \rho(\zero) +  \sum_{i = 1}^{\min \{ N,\bit \}} \theta_{i},
				\end{align}
				as desired, where in \eqref{eqline:g_1_1} we used 
    \eqref{eqline:property_h_ell_1}-\eqref{eqline:property_h_ell_2} and \eqref{eq:property_m_ell}, and \eqref{eqline:g_1_2} follows from $\rho( c + d -1) = c d$, for $c,d \in \{ 0,1 \}$. As the choice of  $\zero \in \mathbb{R}$, $\sum_{i = 1}^\infty \theta_{i} 3^{-i} \in \mathbb{T}$, $\bit \in \mathbb{N} \cup \{ 0 \}$ was arbitrary and the construction of $g^1$ does not depend on $\zero,\sum_{i = 1}^\infty \theta_{i}3^{-i},\bit$, this concludes the argument.

				\item Construction of $g^2$. We first show how to realize the bit shifting operation $\sum_{i = 1}^\infty  \theta_i 3^{-i} \in \mathbb{T} \mapsto \sum_{i = 1}^\infty  \theta_{N+i} 3^{-i} \in \mathbb{T}$ by a ReLU network. This will be accomplished by decomposing the mapping into submappings, realizing the individual submappings by ReLU networks, and then putting these networks together to obtain a ReLU network construction for the overall mapping. Specifically, we work with the decomposition
				\begin{equation}
				 	\sum_{i = 1}^\infty  \theta_{N+ i}\, 3^{-i} = \sum_{(a_1,\dots, a_N) \in \{ 0,1 \}^N} \biggl(1_{\{\theta_1 = a_1, \dots, \theta_N =a_N\} } \sum_{i = 1}^\infty  \theta_{N+ i} \,3^{-i}\biggr), 
				 \end{equation}
				for $\sum_{i=1}^\infty \theta_i 3^{-i} \in \mathbb{T}$. 
                Now, for $a_i \in \{ 0,1 \}$, $i = 1,\dots, N$, consider the function 
				\begin{align*}
					f^{(a_1,\dots, a_N)} (y) & = 3^{N} \rho(y - T ( ( a_1,\dots, a_N ) )) - 7\cdot 3^{N} \rho( y - T ( ( a_1,\dots, a_N, 2) ) ) \\
					& + 6\cdot 3^{N} \rho( y - T ( ( a_1,\dots, a_N,2,1) ) ), \quad y \in \mathbb{R},
				\end{align*} 
				illustrated in Figure~\ref{fig:image_of_f}. 
                For $\sum_{i = 1}^\infty  \theta_i 3^{-i} \in \mathbb{T}$, with $\theta_i = a_i$, for $i = 1,\dots, N$, we have  
				\begin{equation*}
					\sum_{i = 1}^\infty  \theta_i 3^{-i} \in \Biggl[\sum_{i = 1}^N  a_i 3^{-i}, \sum_{i = 1}^N  a_i 3^{-i} + 2\cdot 3^{-(N+1)} \Biggr)= [ T ( ( a_1,\dots, a_N ) ),T ( ( a_1,\dots, a_N, 2 ) )  ),
				\end{equation*}
				which together with the definition of $f^{a_1,\dots, a_N}$ implies 
				\begin{align*}
				 	f^{(a_1,\dots, a_N)} \biggl(\sum_{i = 1}^\infty  \theta_i 3^{-i}\biggr) & = 3^{N} \rho\biggl(\sum_{i = 1}^\infty  \theta_i 3^{-i} - T ( ( a_1,\dots, a_N ) )\biggr)\\
				 	 & =  3^{N} \cdot \sum_{i = {N+1}}^\infty  \theta_i 3^{-i}\\
				 	& =  \sum_{i = 1}^\infty  \theta_{N+i}\, 3^{-i}.
				\end{align*} 
				Moreover, for numbers $\sum_{i = 1}^\infty  \theta_i 3^{-i} \in \mathbb{T}$ whose sequence of $N$-leading digits differ from $a_1,\dots, a_N$, we have $f^{(a_1,\dots, a_N)} (\sum_{i = 1}^\infty  \theta_i 3^{-i}) = 0$, as every number in the support of $f^{(a_1,\dots, a_N)}$, i.e., in $[ T ( ( a_1,\dots, a_N ) ),T ( a_1,\dots, a_N, 2,1 ) )  )$, has leading digits $a_1, \dots, a_N$ in its ternary representation. In summary, we therefore get, for $\sum_{i = 1}^\infty  \theta_i 3^{-i} \in \mathbb{T}$,
				\begin{equation*}
					f^{(a_1,\dots, a_N)} \biggl(\sum_{i = 1}^\infty  \theta_i 3^{-i}\biggr) = 1_{\{\theta_1 = a_1, \dots, \theta_N =a_N\} } \sum_{i = 1}^\infty  \theta_{N+i}\, 3^{-i}.
				\end{equation*}

				\begin{figure}[H]
				\centering
				\begin{tikzpicture}
				    \draw[-, thick, red] (-1,0) -- (0,0);
				    \draw[dotted, thick] (0,0) -- (1,0);
				    \draw[->, thick] (1,0) -- (10,0) node[right] {$x$};
				    \draw[->, thick] (0,-1) -- (0,5) node[above] {$y$};

				    \draw[-, thick] (-0.1,4) -- (0.1,4) node[left=3mm] {$T ( (2) )$};
				    
				    \draw[red, thick] (1,0) -- (2,0) --(8,4) -- (9,0) -- (10,0);
				   
				    \draw[-, thick] (2,-0.1) -- (2,0.1);
				    \draw[-, thick] (8,-0.1) -- (8,0.1);
				    \draw[-, thick] (9,-0.1) -- (9,0.1);
				    
				    \draw[->, thick] (2,-0.8) -- (2,-0.2) node[midway, below = 1mm, font=\tiny] {$T(( a_1, a_2, \dots, a_{N} ))$};
				    \draw[->, thick] (8,0.8) -- (8,0.2) node[midway, above = 5mm, left = -8mm, font=\tiny] {$T(( a_1, a_2, \dots, a_{N}, 2 ) )$};
				    \draw[->, thick] (9,-0.8) -- (9,-0.2) node[midway, below = 2mm, font=\tiny] {$T(( a_1, a_2, \dots, a_{N},2,1 ) )$};
				\end{tikzpicture}
				\caption{The function $f^{(a_1,\dots, a_N)}$.}
				\label{fig:image_of_f}
			\end{figure}
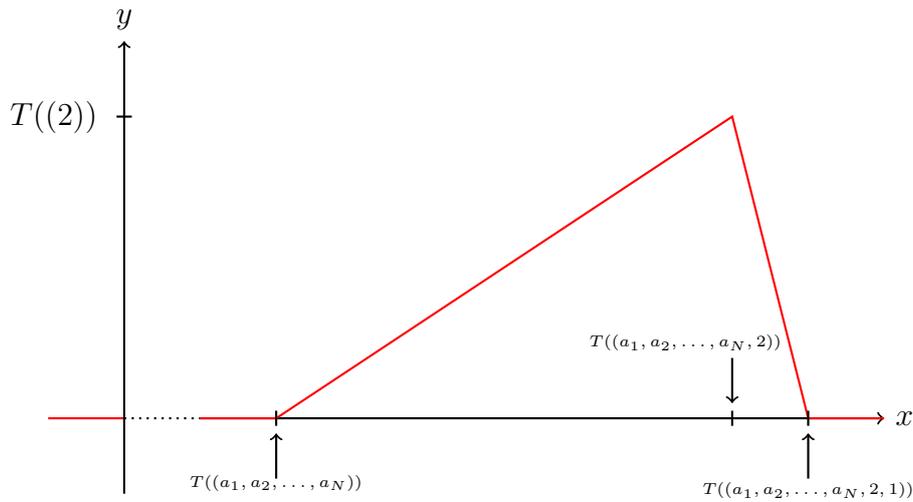




				Summing over all $( a_1,\dots, a_N )\in \{ 0,1 \}^N$, we finally obtain $g^2: \mathbb{R}^3 \mapsto \mathbb{R}$, according to
				\begin{align}
					g^2 ( x,y,z ) :=&  \sum_{\substack{ ( a_1,\dots, a_N ) \in \{ 0,1 \}^{N}}} f^{(a_1,\dots, a_N)}(y)  \\
					=& \sum_{\substack{ ( a_1,\dots, a_N ) \in \{ 0,1 \}^{N}}} \bigl ( 3^{N} \rho(y - T ( ( a_1,\dots, a_N ) ) )\label{def:g_2_1}\\
					 &\quad  - 7 \cdot 3^{N} \rho( y - T ( ( a_1,\dots, a_N,2) ) ) \\
					 &\quad + 6 \cdot 3^{N} \rho( y - T ( ( a_1,\dots, a_N,2,1 ) ) )\bigr).\label{def:g_2_2}
				\end{align}
				Then, for all $\zero \in \mathbb{R}$, $\sum_{i = 1}^\infty \theta_{i} 3^{-i} \in \mathbb{T}$, $\bit \in \mathbb{N} \cup \{ 0 \}$, we have 
				\begin{align*}
					g^2 \biggl( \zero ,\sum_{i = 1}^\infty \theta_{i} 3^{-i}, \bit\biggr) =& \sum_{\substack{ ( a_1,\dots, a_N ) \in \{ 0,1 \}^{N}}} f^{(a_1,\dots, a_N)} \biggl( \sum_{i = 1}^\infty \theta_{i} 3^{-i} \biggr) \\
					=&  \sum_{\substack{ ( a_1,\dots, a_N ) \in \{ 0,1 \}^{N}}} \biggl(1_{\{\theta_1 = a_1, \dots, \theta_N =a_N\}} \sum_{i = 1}^\infty  \theta_{N+ i} 3^{-i}\biggr)\\
					=& \sum_{i = 1}^\infty \theta_{N  + i} 3^{-i},
				\end{align*}
				as desired. Moreover, it follows from \eqref{def:g_2_1}-\eqref{def:g_2_2} that $g^2$ can be realized by a 2-layer ReLU network with $3$ nodes in the input layer, $3 \cdot 2^N$ nodes in the first layer, and weight magnitude $7\cdot 3^N$, formally 
				\begin{equation}
				\label{eqline:g_2_complexity}
					g^2 \in ( ( 3,1 ), 3\cdot 2^N, 2, 7 \cdot 3^N ).
				\end{equation}


				\item Construction of $g^3$. Setting $g^3 ( x,y,z ) :=  \rho ( z -N )$, for $x,y,z \in \mathbb{R}$, we get $g^3 ( \zero, \sum_{i = 1}^\infty  \theta_i 3^{-i},\allowbreak \bit )  = \rho( \bit -N)$, for all $\zero \in \mathbb{R}$, $\sum_{i = 1}^\infty \theta_{i} 3^{-i} \in \mathbb{T}$, $\bit \in \mathbb{N} \cup \{ 0 \}$. Moreover, $g^3$ can be realized by a ReLU network according to
				\begin{equation}
				\label{eqline:g_3_complexity}
				 	g^3 \in \mathcal{R} ( ( 3,1), 3, 2, N ).
				 \end{equation} 
			\end{enumerate}
			Finally, we note that $g=  ( ( g^1, g^2 ), g^3)$ satisfies \eqref{eq:expression_of_g_bit_extraction}, and application of Lemma~\ref{lem:algebra_on_ReLU_networks}, considering \eqref{eqline:g_1_complexity}, \eqref{eqline:g_2_complexity}, and \eqref{eqline:g_3_complexity}, yields
			\begin{align*}
				g \in&\, \mathcal{R} ( ( 3,3 ),2^{N+3} + 3\cdot 2^{N} + 3, 3,  \max \{ 3^{N+1},7\cdot 3^N, N \}) \\
				\subseteq&\, \mathcal{R} ( ( 3,3 ),2^{N+4}, 3,  3^{N+2}). \qedhere
			\end{align*}




		\subsection{Proof of Lemma~\ref{lem:C2}} 
		\label{sub:proof_of_lemma_lem:c2}
		
			We prove \eqref{eq:C2_1} and \eqref{eq:C2_2} by induction on $L \in \mathbb{N}$. The base case $L = 1$ follows from $G_{N, 1} = g$, with $g$ per Lemma~\ref{lem:constructive_proof_of_g} and the properties of $g$ as specified in \eqref{eq:g_complexity} and \eqref{eq:expression_of_g_bit_extraction}.

			For the induction step $L - 1 \mapsto L$, $L \geq 2$, we start with the induction assumption
			\begin{equation}
			\label{eq:induction_assumption_G_N0}
				G_{N,L - 1} \in \, \mathcal{R} ( (3,3),2^{N+4},3 ( L-1 ),3^{N+2}),
			\end{equation}
			and, for $\sum_{i = 1}^\infty \theta_{i} 3^{-i} \in \mathbb{T}$, $\bit \in \mathbb{N} \cup \{ 0 \}$,
			\begin{equation}
				\label{eq:induction_assumption_G_N}
				G_{N,L-1} \biggl(0, \sum_{i = 1}^\infty \theta_i 3^{-i}, \bit  \biggr) = \biggl(\,\sum_{i = 1}^{\min \{ N ( L -1 ),\bit \}} \theta_i ,  \sum_{i =1}^\infty \theta_{N ( L -1 ) + i}\, 3^{-i},  \max \{ \bit - N(L-1),\, 0 \}   \biggr).
			\end{equation}

			Next, note that $G_{N,L } = g \circ G_{N, L - 1}$, and both $g$ and $G_{N, L - 1}$ can be realized by ReLU networks according to \eqref{eq:g_complexity} and \eqref{eq:induction_assumption_G_N0}. Application of Lemma~\ref{lem:algebra_on_ReLU_networks} yields
			\begin{align*}
				G_{N,L} \in &\, \mathcal{R} ( (3,3),2^{N+4},3 L,3^{N+2}).
			\end{align*}
			Furthermore, for $\sum_{i = 1}^\infty \theta_{i} 3^{-i} \in \mathbb{T}$, $\bit \in \mathbb{N} \cup \{ 0 \}$, we have
				\begin{align}
					G_{N,L} \biggl( 0,\sum_{i = 1}^\infty \theta_i 3^{-i}, \bit \biggr) & =  g \biggl(   G_{N,L-1} \biggl(  0,\sum_{i = 1}^\infty \theta_i 3^{-i}, \bit \biggr)  \biggr) \\
					& = g \biggl(\sum_{i = 1}^{\min \{ N ( L -1 ),\bit \}} \theta_i ,  \sum_{i =1}^\infty \theta_{N ( L -1 ) + i} \, 3^{-i},  \max \{ \bit - N(L-1), 0 \}   \biggr) \label{eqline:apply_def_G_ell_1}\\ 
					& = \biggl(  \rho \biggl ( \sum_{i = 1}^{\min \{ N ( L -1 ),\bit \}} \theta_i \biggr) + \sum_{i =1}^{\min \{N,\, \max \{ \bit - N(L-1), 0 \}\}} \theta_{N ( L -1 ) + i},  \nonumber \\
					&\: \: \sum_{i =1}^\infty \theta_{N + N ( L -1 ) + i} \, 3^{-i}, \max \{ \max \{ \bit - N(L-1), 0 \} - N ,0 \} \biggr) \label{eqline:apply_def_g} \\
					& = \biggl(\sum_{i = 1}^{\min \{ NL,\bit \}} \theta_i ,  \sum_{i =1}^{\infty} \theta_{NL + i} \, 3^{-i},  \max \{ \bit - NL, 0 \}   \biggr),
				\end{align}
				where \eqref{eqline:apply_def_G_ell_1} follows from the induction assumption \eqref{eq:induction_assumption_G_N}, and in \eqref{eqline:apply_def_g} we used \eqref{eq:expression_of_g_bit_extraction}. This finishes the proof.



\section{Proof of Lemma~\ref{lem:quantization_error}} 
		\label{sub:quantization_error}
		Lemma~\ref{lem:quantization_error} is a special case, with input dimension $d = 1$ and weight magnitude $B = 1$, of the more general result Lemma~\ref{lem:quantization_error_general_dimension} stated and proved here. We provide this more general result as its proof is no longer than that for $d = 1$ and $B = 1$, while offering deeper insights into the mechanisms at play.

		\begin{lemma}
			\label{lem:quantization_error_general_dimension}
			Let $d,W,L, \ell \in \mathbb{N}$ with $W \geq d$ and $\ell \leq L$, $B \in \mathbb{R}_+$ with $B \geq 1$, and let 
			\begin{equation*}
				\Phi^i = ( ( A_j^i,\nmathbf{b}_j^i ) )_{j =1 }^{\ell} \in \mathcal{N} ( ( d,1 ),W,L,B ), \quad i = 1,2,
			\end{equation*}
			have the same architecture.
			Then,
			\begin{equation}
			\label{eq:quantization_error_general_dimension_000}
				\|  R ( \Phi^1 )  -  R ( \Phi^2 )  \|_{L^\infty ( [0,1]^d )} \leq L (W+1)^L B^{L-1} \| \Phi^1 - \Phi^2 \|.
			\end{equation}

			\begin{proof}
				Fix $x \in [0,1]^d$. For $i = 1,2$ and $k = 1,\dots, \ell$, let $y^i_k(x) :=\Phi ( ( ( A_j^i,b_j^i ) )_{j =1 }^{k} )(x)$, denote the output of the $k$-th layer of $\Phi^i$.

				We start with a preparatory result upper-bounding $\| y^1_k (x) \|_\infty $, for $k = 1,\dots, \ell$. Specifically, we prove, by induction, that 
				\begin{equation}
					\label{_new_eq:proof_quantization_0}
				 	\| \nmathbf{y}^1_k (x) \|_\infty \leq (W+1)^k B^k,
				\end{equation} 
				for $k =1,2,\dots, \ell$. The base case  $k = 1$ follows by noting that
				\begin{align}
					\| \nmathbf{y}^1_1 (x) \|_\infty =&\, \| A_1^1 \nmathbf{x} + \nmathbf{b}^1_1   \|_\infty \\
					\leq &\, W \| A_1^1 \|_\infty \| \nmathbf{x} \|_\infty + \| \nmathbf{b}^1_1 \|_\infty \label{eqline:induction_assumption_width}\\
					\leq &\, WB + B = (W + 1) B,
				\end{align}
				where in \eqref{eqline:induction_assumption_width} we used the fact that  $A_1^1$ has at most $W$ columns.
				We proceed to establish the induction step $k-1 \mapsto k$ with the induction assumption given by
				\begin{equation}
					\label{_new_eq:induction_assumption}
				 	\| \nmathbf{y}^1_{k - 1} (x) \|_\infty \leq (W+1)^{k - 1} B^{k - 1}.
				\end{equation} 
				As
				\begin{align}
					\| \nmathbf{y}^1_k (x) \|_\infty  =&\, \| A^1_k \, \rho(\nmathbf{y}_{k-1}^1 (x))  + \nmathbf{b}_k^1 \|_\infty\\
					\leq &\, W\| A_k^1 \|_\infty \| \rho(\nmathbf{y}_{k-1}^1 (x))  \|_\infty + \| b_k^1 \|_\infty \label{eqline:proof_quantization_1} \\
					\leq &\, W\| A_k^1 \|_\infty \| \nmathbf{y}_{k-1}^1 (x) \|_\infty + \| b_k^1 \|_\infty  \label{eqline:proof_quantization_2}\\
					\leq &\,  WB (W+1)^{k - 1} B^{k - 1} + B  \label{eqline:proof_quantization_3} \\
					\leq &\,(W+1)^k B^k,\label{eqline:proof_quantization_31}
				\end{align}
				where in \eqref{eqline:proof_quantization_1} we used that  $A_k^1$ has at most $W$ columns, \eqref{eqline:proof_quantization_2} follows from the $1$-Lipschitz continuity of $\rho$, in \eqref{eqline:proof_quantization_3} we employed the induction assumption  \eqref{_new_eq:induction_assumption}, and \eqref{eqline:proof_quantization_31} is by $B \geq 1$. 

				Next, we bound the difference $\| y^1_k (x) - y^2_k (x) \|_\infty$, for $k = 1,\dots, \ell$.  Specifically, we show, again by induction, that
    			\begin{equation*}
					\| y^1_k (x) - y^2_k (x) \|_\infty  \leq k (W+1)^{k} B^{k-1} \| \Phi^1 - \Phi^2 \|,
				\end{equation*}
				for $k = 1,\dots, \ell$. The base case $k = 1$ follows according to
				\begin{equation}
				\begin{aligned}
					\| \nmathbf{y}^1_1 (x) - \nmathbf{y}^2_1 (x) \|_\infty = &\, \| ( A^1_1 - A_2^1 )\nmathbf{x} + ( \nmathbf{b}_1^1 - \nmathbf{b}_2^1 ) \|_\infty\\
					\leq &\,  W \|  A^1_1 - A_2^1 \|_\infty \|\nmathbf{x}\|_\infty + \|\nmathbf{b}_1^1 - \nmathbf{b}_2^1 \|_\infty \\
					\leq &\, W \| \Phi^1 - \Phi^2 \|  + \| \Phi^1 - \Phi^2 \|\\
					\leq &\, (W+1) \| \Phi^1 - \Phi^2 \|.
				\end{aligned}
				\end{equation}
				To establish the induction step $k-1 \mapsto k$ starting from the induction assumption given by 
				\begin{equation}
					\label{_new_eq:induction_assumption_2}
				 	\| y^1_{k-1} (x) - y^2_{k-1} (x) \|_\infty  \leq (k-1) (W+1)^{k-1} B^{k-2} \| \Phi^1 - \Phi^2 \|,
				\end{equation} 
				we note that
    				\begin{align}
					&\| \nmathbf{y}^1_k (x) - \nmathbf{y}^2_k (x) \|_\infty \\
					&= \, \| A^1_k \rho(\nmathbf{y}_{k-1}^1 (x)) + \nmathbf{b}_{k}^1 - A^2_k \rho(\nmathbf{y}_{k-1}^2 (x)) - \nmathbf{b}_{k}^2 \|_\infty \label{eqline:definition_y_k}\\
					&\leq \, \| ( A_k^1 - A_k^2 ) \rho ( \nmathbf{y}_{k-1}^1 (x) ) \|_\infty + \| \nmathbf{b}_{k}^1  - \nmathbf{b}_{k}^2\|_\infty \nonumber \\
					&\, +  \| A_k^2 ( \rho(\nmathbf{y}_{k-1}^1 (x)) -  \rho(\nmathbf{y}_{k-1}^2) (x)  ) \|_\infty \label{eqline:apply_triangle}\\
					&\leq \, W \|  A_k^1 - A_k^2  \|_\infty \| \rho ( \nmathbf{y}_{k-1}^1 (x) ) \|_\infty +  \| \nmathbf{b}_{k}^1  - \nmathbf{b}_{k}^2\|_\infty  \nonumber \\
					&\, + W \| A_k^2 \|_\infty \| \rho(\nmathbf{y}_{k-1}^1 (x)) -  \rho(\nmathbf{y}_{k-1}^2 (x))   \|_\infty \\
					&\leq \, W \| \Phi^1 - \Phi^2 \|  (W+1)^{k-1} B^{k-1} + \| \Phi^1 - \Phi^2 \| \nonumber \\
                    &\,\,+ W B \| \nmathbf{y}_{k-1}^1 (x) -  \nmathbf{y}_{k-1}^2 (x) \|_\infty   \label{eqline:applying_assumption} \\
					&\leq \,  W \| \Phi^1 - \Phi^2 \|  (W+1)^{k-1} B^{k-1} + \| \Phi^1 - \Phi^2 \| \nonumber \\
                    &\,\,+ W B (k-1) (W+1)^{k-1} B^{k-2} \| \Phi^1 - \Phi^2 \| \label{eqline:applying_assumption_2} \\
					&\leq \,  k (W+1)^{k} B^{k-1} \| \Phi^1 - \Phi^2 \|,
				\end{align}
				where \eqref{eqline:apply_triangle} follows from the triangle inequality, in \eqref{eqline:applying_assumption} we used (i) $\| ( A_k^1 - A_k^2 ) \|_\infty$, $\| \nmathbf{b}_{k}^1  - \nmathbf{b}_{k}^2\|_\infty \leq \| \Phi^1 - \Phi^2 \|$, (ii) $\| \rho ( \nmathbf{y}_{k-1}^1 (x) ) \|_\infty \leq \| \nmathbf{y}_{k-1}^1 (x)  \|_\infty \leq (W+1)^{k-1} B^{k-1}$ owing to \eqref{_new_eq:proof_quantization_0},  (iii) $\| A_k^2 \|_\infty \leq B$, and (iv) $\| \rho(\nmathbf{y}_{k-1}^1 (x)) -  \rho(\nmathbf{y}_{k-1}^2 (x))   \|_\infty \leq \| \nmathbf{y}_{k-1}^1 (x) -  \nmathbf{y}_{k-1}^2 (x) \|_\infty$  thanks to the $1$-Lipschitz continuity of $\rho$, and \eqref{eqline:applying_assumption_2} follows from the induction assumption \eqref{_new_eq:induction_assumption_2} along with $W+1, B \geq 1$.
				In particular, we get
				\begin{equation}
				\label{_new_eq:proof_quantization_3}
				\begin{aligned}
					\| R ( \Phi^1 ) ( \nmathbf{x} ) - R ( \Phi^2 ) ( \nmathbf{x} ) \|_\infty = &\, \| \nmathbf{y}^1_\ell (x) - \nmathbf{y}^2_\ell (x) \|_\infty\\
					\leq &\, \ell (W+1)^{\ell} B^{\ell-1} \| \Phi_1 - \Phi_2 \|\\
					\leq &\, L (W+1)^{L} B^{L-1} \| \Phi_1 - \Phi_2 \|,
				\end{aligned}
				\end{equation}
				where the last step follows from $\ell \leq L$ with $W+1, B \geq 1$. The proof is concluded by noting that \eqref{_new_eq:proof_quantization_3}  holds for all $\nmathbf{x} \in [0,1]^d$.
			\end{proof}
		\end{lemma}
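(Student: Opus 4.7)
The plan is a two-stage layerwise induction. Fix $\nmathbf{x} \in [0,1]^d$ and denote the pre-activation output of $\Phi^i$ after its first $k$ affine maps by $\nmathbf{y}^i_k(\nmathbf{x})$, for $i = 1,2$ and $k = 1,\dots,\ell$. The strategy is first to control the magnitude of the activations of a single network along depth, and then to use that control to propagate the weightwise discrepancy $\|\Phi^1 - \Phi^2\|$ through the layers.

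For the first stage, I would prove by induction on $k$ that $\|\nmathbf{y}^1_k(\nmathbf{x})\|_\infty \leq (W+1)^k B^k$. The base case uses that each row of $A^1_1$ has at most $W$ nonzero entries, whence $\|A^1_1 \nmathbf{x} + \nmathbf{b}^1_1\|_\infty \leq WB \cdot 1 + B = (W+1)B$. For the inductive step, the key identities are the sub-multiplicativity bound $\|A^1_k \nmathbf{v}\|_\infty \leq W\|A^1_k\|_\infty \|\nmathbf{v}\|_\infty$ (valid since $A^1_k$ has at most $W$ columns) and the componentwise estimate $\|\rho(\nmathbf{v})\|_\infty \leq \|\nmathbf{v}\|_\infty$; combining them with $\|A^1_k\|_\infty, \|\nmathbf{b}^1_k\|_\infty \leq B$ closes the induction.

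For the second stage, I would establish by induction that $\|\nmathbf{y}^1_k(\nmathbf{x}) - \nmathbf{y}^2_k(\nmathbf{x})\|_\infty \leq k(W+1)^k B^{k-1}\|\Phi^1 - \Phi^2\|$. The base case is immediate from $( A^1_1 - A^2_1)\nmathbf{x} + (\nmathbf{b}^1_1 - \nmathbf{b}^2_1)$. For the inductive step, the crucial manipulation is the splitting
\begin{equation*}
\nmathbf{y}^1_k - \nmathbf{y}^2_k = (A^1_k - A^2_k)\rho(\nmathbf{y}^1_{k-1}) + A^2_k\bigl(\rho(\nmathbf{y}^1_{k-1}) - \rho(\nmathbf{y}^2_{k-1})\bigr) + (\nmathbf{b}^1_k - \nmathbf{b}^2_k),
\end{equation*}
which isolates the ``new'' discrepancy injected at layer $k$ from the propagated discrepancy from layer $k-1$. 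The first summand is bounded by $W(W+1)^{k-1}B^{k-1}\|\Phi^1 - \Phi^2\|$ using the first-stage bound; the second by $WB\|\nmathbf{y}^1_{k-1}-\nmathbf{y}^2_{k-1}\|_\infty$ using the $1$-Lipschitz continuity of $\rho$ together with $\|A^2_k\|_\infty \leq B$; and the third by $\|\Phi^1 - \Phi^2\|$. Plugging in the induction hypothesis and simplifying yields the claimed linear-in-$k$ growth of the constant. Setting $k = \ell$ and invoking $\ell \leq L$ together with $W+1, B \geq 1$ gives \eqref{eq:quantization_error_general_dimension_000}.

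The main obstacle is accounting: a naive application of the triangle inequality and sub-multiplicativity tends to produce $LB^L$ or $(W+1)^{L+1}B^L$ rather than the sharper $L(W+1)^L B^{L-1}$. The factor $L$ must emerge as the arithmetic sum of $L$ unit contributions (one per layer) rather than a geometric product; this is achieved only by isolating the cross term $A^2_k(\rho(\nmathbf{y}^1_{k-1}) - \rho(\nmathbf{y}^2_{k-1}))$ and treating it separately from the two discrepancy terms, so that the induction hypothesis is multiplied by $WB$ (not $(W+1)B$) while the injection term contributes $\leq (W+1)^k B^{k-1}\|\Phi^1 - \Phi^2\|$. Careful bookkeeping of these constants is the only delicate step; everything else is routine estimation.
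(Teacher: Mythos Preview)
Your proposal is correct and essentially identical to the paper's proof: the same two-stage layerwise induction, the same magnitude bound $\|\nmathbf{y}^1_k\|_\infty \leq (W+1)^kB^k$, the same three-term splitting of $\nmathbf{y}^1_k - \nmathbf{y}^2_k$, and the same final step $\ell \leq L$. Even your remark about the delicate bookkeeping—multiplying the induction hypothesis by $WB$ rather than $(W+1)B$ so that the $k$ emerges additively—matches exactly how the paper's constants fit together.
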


\section{Proof of Proposition~\ref{prop:tradeoff_binary}} 
\label{sec:proof_of_proposition_prop:tradeoff_binary}

	We prove a result, Proposition~\ref{thm:general_representation}, that is more general than Proposition~\ref{prop:tradeoff_binary}, namely we consider general weight sets and general input-output dimensions. This result is then particularized to the setting of Proposition~\ref{prop:tradeoff_binary}. 


	\begin{proposition}
		\label{thm:general_representation}
		Let $d,d',W,L \in \mathbb{N}$, and let $\mathbb{A} \subseteq \mathbb{R}$ be a finite set satisfying $\{ -1,0,1 \} \subseteq \mathbb{A}$. Then, for every $k \in \mathbb{N}$ and all $u,v \in \mathbb{A} \cap \mathbb{R}_{\geq 0}$, it holds that
		\begin{equation*}
			\mathcal{R}_{\mathcal{T}_1 ( \mathbb{A},u,v,k)} (( d,d' ),  W,  L  ) \subseteq  \mathcal{R}_\mathbb{A} ( ( d,d' ), 16W,(k+3) L )
		\end{equation*}
		with
		\begin{equation}
			\label{eq:def_Auvk}
			\mathcal{T}_1 ( \mathbb{A},u,v,k) := \biggl\{ \sum_{i = 0}^{k} (u^i \alpha_i + v^i \beta_i) : | \alpha_i |,| \beta_i | \in \mathbb{A} , i = 0,\dots, k \biggr\}.	
		\end{equation} 
		\begin{proof}
			See Appendix~\ref{sub:representation}.
		\end{proof}
	\end{proposition}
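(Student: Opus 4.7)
The plan is to replace each layer $(A_j,b_j)$ of a given $\Phi = ((A_j,b_j))_{j=1}^L \in \mathcal{N}_{\mathcal{T}_1(\mathbb{A},u,v,k)}((d,d'),W,L)$ by a block of at most $k+3$ sub-layers with weights in $\mathbb{A}$, so as to produce a $\Phi' \in \mathcal{N}_\mathbb{A}((d,d'),16W,(k+3)L)$ with $R(\Phi')=R(\Phi)$. The starting point is the elementwise decomposition, which is valid by the very definition of $\mathcal{T}_1(\mathbb{A},u,v,k)$,
\begin{equation*}
A_j = \sum_{i=0}^{k}\bigl(u^i A_{j,i} + v^i B_{j,i}\bigr),\qquad b_j = \sum_{i=0}^{k}\bigl(u^i c_{j,i} + v^i d_{j,i}\bigr),
\end{equation*}
where the entries of $A_{j,i},B_{j,i}$ and the components of $c_{j,i},d_{j,i}$ all have absolute values in $\mathbb{A}$. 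The quantity $A_j\xi_{j-1}+b_j$ can then be evaluated by a Horner-type recursion requiring only $k$ multiplications by $u$ and $k$ by $v$, each of which involves a single scalar from $\mathbb{A}$.

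To realize this recursion with weights in $\mathbb{A}$ while making the ReLU between sub-layers act as the identity on all intermediate coordinates, I would split each $A_{j,i} = A_{j,i}^+ - A_{j,i}^-$ into non-negative parts (entries in $\mathbb{A}\cap\mathbb{R}_{\geq 0}$), and likewise for $B_{j,i}, c_{j,i}, d_{j,i}$. Maintaining the state $(\xi_{j-1},P_\ell^+,P_\ell^-,Q_\ell^+,Q_\ell^-)$ with the invariants
\begin{equation*}
P_\ell^+ - P_\ell^- = \sum_{i=\ell}^{k} u^{i-\ell}\bigl(A_{j,i}\xi_{j-1}+c_{j,i}\bigr),\qquad Q_\ell^+ - Q_\ell^- = \sum_{i=\ell}^{k} v^{i-\ell}\bigl(B_{j,i}\xi_{j-1}+d_{j,i}\bigr),
\end{equation*}
a single sub-layer per Horner step implements $P_\ell^\pm = A_{j,\ell}^\pm \xi_{j-1} + c_{j,\ell}^\pm + u\,P_{\ell+1}^\pm$ and the $Q$-analogue. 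Because $u,v\geq 0$, $\xi_{j-1}=\rho(y_{j-1})\geq 0$, and all coefficients used are non-negative, every pre-activation is non-negative, so ReLU acts as the identity on it; the coordinate $\xi_{j-1}$ is carried forward via the same identity-under-ReLU device.

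Each $j$-th block then comprises one set-up sub-layer producing $(\xi_{j-1},P_k^\pm,Q_k^\pm)$, the $k$ Horner sub-layers reducing $\ell$ from $k$ to $0$, and one combining sub-layer that forms $y_j = (P_0^+ - P_0^-) + (Q_0^+ - Q_0^-)$ with the weights $1,-1 \in \mathbb{A}$---totalling $k+2$ sub-layers---while the first block additionally requires a preliminary sub-layer that splits the signed input $x$ into $(\rho(x),\rho(-x))$; uniform padding with identity sub-layers then delivers the stated $(k+3)L$ depth bound. The state width stays within roughly $5W$--$7W$ plus a constant-$1$ coordinate absorbing biases, well inside the $16W$ budget.

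The main obstacle is that, in contrast to the concrete case $\mathbb{A}=\mathbb{Q}_b^a$ used in Proposition~\ref{prop:tradeoff_binary}, the general $\mathbb{A}$ is not assumed symmetric about zero, so a coefficient $\alpha$ with $|\alpha| \in \mathbb{A}$ and $\alpha<0$ is itself unavailable as a weight. The $\pm$-splitting device circumvents this: every weight appearing in $\Phi'$ is either $u$, $v$, an entry of some $A_{j,i}^\pm$ or $B_{j,i}^\pm$, or one of $\{-1,0,1\}$, all of which lie in $\mathbb{A}$; sign re-assembly is deferred to the combining sub-layer, where $\pm 1$ suffice. The remaining technical work is a careful book-keeping of the bias decomposition (cleanly handled through an augmented always-on coordinate) and of the block transitions, which fit comfortably within the stated depth and width bounds.
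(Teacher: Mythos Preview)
Your proposal is correct and follows essentially the same route as the paper: a Horner-type recursion per layer, a four-way $\pm$ split (for the $u$- and $v$-parts) so that all intermediate weights lie in $\mathbb{A}\cap\mathbb{R}_{\geq 0}$ and ReLU acts as the identity, a final $\pm 1$ recombination, and a preliminary $(x,-x)$ split for the first block. The paper packages the argument into two lemmas---one for the single-parameter Horner scheme and one assembling the four parallel channels---whereas you run all four channels in a single state vector; this is an organizational difference only, and your $(k+2)$-per-block plus one extra layer count matches the paper's $(k+2)\ell+1\leq(k+3)L$.
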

	Proposition~\ref{thm:general_representation}  illustrates that a network with weights in the  set $\mathcal{T}_1 ( \mathbb{A},u,v,k)$ can equivalently be realized by networks with weights in the simpler underlying set $\mathbb{A}$, at the cost of increased network depth and width. We next demonstrate how Proposition~\ref{prop:tradeoff_binary} follows from Proposition \ref{thm:general_representation}.

	\begin{proof}
		[Proof of Proposition~\ref{prop:tradeoff_binary}] Let $a,b \in \mathbb{N}$. For $k =1$, \eqref{eq:embedding_binary_weights} is trivially satisfied.  For $k \geq 2$, we note that
		\begin{align}
			&\,\mathcal{T}_1 (\mathbb{Q}^a_b,2^{-b},2^a,k-1) \label{eqline:def_A_binary_10} \\
			&=\, \Biggl\{ \sum_{i = 0}^{k-1} (2^{-bi} \alpha_i + 2^{ai} \beta_i) : | \alpha_i |,| \beta_i | \in \mathbb{Q}_{b}^a , i = 0,\dots, k-1 \Biggr\} \\ 
			&=\, \Biggl\{ \sum_{i = 0}^{k-1} (2^{-bi} \alpha_i + 2^{ai} \beta_i) : \alpha_i , \beta_i \in \mathbb{Q}_{b}^a , i = 0,\dots, k-1 \Biggr\} \label{eqline:def_A_binary_1}\\ 
			&\supseteq\, \Biggl\{ \pm \sum_{i = -kb}^{ka} 2^{i} c_i: c_i \in \{ 0,1 \} \Biggr\}\label{eqline:def_A_binary_2}\\
			&= \, \mathbb{Q}_{kb}^{ka}, \label{eqline:def_A_binary_19}
		\end{align}
		where in \eqref{eqline:def_A_binary_2} we used $\mathbb{Q}^{a}_b =  \{ \pm \sum_{i = -b}^a \theta_i 2^{i}: \theta_i \in \{ 0,1 \} \}$. Based on \eqref{eqline:def_A_binary_10}-\eqref{eqline:def_A_binary_19}, it now follows that 
		\begin{equation}
			\label{eq:proof_tradeoff_inclusion_1}
			\mathcal{R}^{ka}_{kb} (W,  L) \subseteq \mathcal{R}_{\mathcal{T}_1 (\mathbb{Q}^a_b,2^{-b},2^a,k-1)} (W,  L  ).
		\end{equation}
		Application of Proposition~\ref{thm:general_representation} with $d= d' = 1$, $u = 2^{-b}$, $v = 2^{a}$, $\mathbb{A} = \mathbb{Q}_b^a$, and $k$ replaced by $k-1$, yields
		\begin{equation}
			\label{eq:proof_tradeoff_inclusion_2}
			\begin{aligned}
			\mathcal{R}_{\mathcal{T}_1 (\mathbb{Q}_b^a,2^{-b},2^a,k-1)} ( W,  L  ) 
			\subseteq&\,   \mathcal{R}_b^a (16W,(k+2)L).
			\end{aligned}
		\end{equation}
		The proof is finalized by combining \eqref{eq:proof_tradeoff_inclusion_1} and \eqref{eq:proof_tradeoff_inclusion_2} to obtain  \eqref{eq:embedding_binary_weights}.
	\end{proof}

	\subsection{Proof of Proposition~\ref{thm:general_representation}} 
	\label{sub:representation}


		We start with a technical lemma, which shows that,  for given $w \in \mathbb{A} \cap \mathbb{R}_{\geq 0}$, every affine mapping $\affine ( A,b )$ with weight set $\coef ( A), \coef ( b )$\footnote{Recall that $\coef(A)$ denotes the set comprising all elements of the matrix $A$, while $\coef(b)$ represents the set containing the entries of the vector $b$.} contained in 
		\begin{equation}
		\label{def:T_2}
			\mathcal{T}_2 ( \mathbb{A}, w,k ) := \biggl\{ \sum_{i = 0}^{k} w^i \alpha_i  : \alpha_i  \in \mathbb{A} \cap \mathbb{R}_{\geq 0} , i = 0,\dots, k \biggr\}, 
		\end{equation}
		can be realized by a composition of affine mappings with weights in $\mathbb{A} \cap \mathbb{R}_{\geq 0}$. 

		
		\begin{lemma}
			\label{lem:adaptive_quantization_single_layer_decomposition}
			Let $m,n,k \in \mathbb{N}$, $\mathbb{A} \subseteq \mathbb{R}$ with $\{ 0,1 \} \subseteq \mathbb{A}$, and $w \in \mathbb{A} \cap \mathbb{R}_{\geq 0}$. Let  $A \in \mathbb{R}^{m\times n} $, $\nmathbf{b} \in \mathbb{R}^m$ be such that $\coef ( A ), \coef ( \nmathbf{b} ) \subseteq \mathcal{T}_2 ( \mathbb{A}, w,k )$, with  $\mathcal{T}_2 ( \mathbb{A}, w,k )$ as defined in \eqref{def:T_2}.  Then, there exists a neural network configuration $( ( G_i,h_i ) )_{i=1}^{k+1}\in \mathcal{N}_{\mathbb{A} \cap \mathbb{R}_{\geq 0}} ( ( n,m ), m + n, k+1 )$ such that
			\begin{equation}
			\label{eq:coefficient_property_0}
			\begin{aligned}
				\affine( G_{k+1},h_{k+1} )\circ\cdots \circ \affine ( G_{1},h_1 ) = \, \affine(A,b).
			\end{aligned}
			\end{equation}

			\begin{proof}
			[Proof of Lemma~\ref{lem:adaptive_quantization_single_layer_decomposition}]
				We first note that thanks to $\coef ( A ), \coef ( b) \subseteq \mathcal{T}_2 ( \mathbb{A}, w,k )$, $A$ and $\nmathbf{b}$ can be written in the form
				\begin{equation}
				\begin{aligned}
					A = \sum_{i = 0}^k w^i A_i, \quad \nmathbf{b} = \sum_{i = 0}^k w^i \nmathbf{b}_i,
				\end{aligned}
				\end{equation}
				with $A_i \in \mathbb{R}^{m\times n}$, $b_i \in \mathbb{R}^m$, $\coef (A_i), \coef ( \nmathbf{b}_i )  \in \mathbb{A} \cap \mathbb{R}_{\geq 0}$, $i =0,\dots,k$. Next, set\footnote{Note that, if $k =1$,  then $\{ 1,\dots,k \}\backslash \{ 1\} = \emptyset$, so that no assignment is made in \eqref{eqline:assign_skip_depth_precision_tradeoff}. 
				}
				\begin{alignat}{3}
					G_1 =& \begin{pmatrix} I_n\\A_{k}\end{pmatrix} , &h_1 =&  \begin{pmatrix} 0_n\\b_{k} \end{pmatrix}, \label{eqline:assign_skip_depth_precision_tradeoff_0}\\
					G_j =&  \begin{pmatrix} I_n & 0 \\ A_{k - j + 1} & w I_m  \end{pmatrix} , &h_j =&  \begin{pmatrix} 0_n\\\nmathbf{b}_{k - j + 1} \end{pmatrix},\,  \  j \in \{ 1,\dots,k \}\backslash \{ 1\} ,\label{eqline:assign_skip_depth_precision_tradeoff}\\
					G_{k+1} =&  \begin{pmatrix} A_{0} & w I_m  \end{pmatrix} , \qquad \qquad &h_{k+1} =&\,  \nmathbf{b}_{0}.
				\end{alignat}
				We have $( ( G_i,h_i ) )_{i=1}^{k+1}\in \mathcal{N}_{\mathbb{A} \cap \mathbb{R}_{\geq 0}} ( ( n,m ), m + n, k+1 )$ as desired. It remains to verify \eqref{eq:coefficient_property_0}. To simplify notation, we define $L ((  ( G_i,h_i ) )_{i=1}^{j}  )$, $j = 1,\dots, k+1$, recursively, according to 
				\begin{equation}
				L ( ( ( G_i,h_i ) )_{i=1}^{j}  ) = \left\{
				\begin{aligned}
					& \affine ( G_1, h_1 ), &&\text{if } j = 1,\\
					& \affine ( G_j, h_j ) \circ L ( (( G_i, h_i )  )_{i = 1}^{j - 1} ), &&\text{if } j \geq 2,
				\end{aligned}
				\right.
				\end{equation}
				and note that $L ( ( G_i, h_i )_{i = 1}^{k+1} )  = \affine( G_{k+1},h_{k+1} )\circ\cdots \circ \affine ( G_{1},h_1 ) $. The verification of \eqref{eq:coefficient_property_0} will be  effected by proving the following relation by induction. Specifically, for $j = 1,\dots, k$,
				\begin{equation}
				\label{eq:proved_by_induction_tradeoff}
					L (( ( G_i,h_i ) )_{i=1}^{j} ) (x)  = \begin{pmatrix} x\\ \sum_{i = {k - j +1 } }^k  w^{i - k + j -1} A_i x + \sum_{i = {k - j +1 } }^k w^{i - k + j -1}\nmathbf{b}_i\end{pmatrix},\quad x \in \mathbb{R}^n. 
				\end{equation}
				The base case $j=1$ follows from
				\begin{equation*}
					L (( ( G_i,h_i ) )_{i=1}^{1} )(x) = \affine ( G_{1},h_1 ) (x) = \begin{pmatrix} x\\A_{k}x + b_k\end{pmatrix}, \quad x \in \mathbb{R}^n.
				\end{equation*}
				If $k = 1$, the induction step is not needed. For $k \geq 2$, we prove the induction step $j - 1 \mapsto j$ with $2\leq j \leq k$ starting from the induction assumption
				\begin{equation}
				\label{eq:T2_induction_assumption}
					L(( ( G_i,h_i ) )_{i=1}^{j-1} ) (x)  = \begin{pmatrix} x\\ \sum_{i = {k - j + 2 } }^k  w^{i - k + j-2} A_i x + \sum_{i = {k - j +2  } }^k w^{i - k + j-2}\nmathbf{b}_i\end{pmatrix},\, x \in \mathbb{R}^n,
				\end{equation}
				through the following chain of arguments
				\begin{align}
					&\,\,L ( ( ( G_i,h_i ) )_{i=1}^{j} ) (x)\\
					&=\, S ( G_j, h_j ) \circ L (  ( ( G_i,h_i ) )_{i=1}^{j-1} ) (x) \\
					&=\,  \begin{pmatrix} I_n & 0 \\ A_{k - j + 1} & w I_m  \end{pmatrix}  \begin{pmatrix} x\\ \sum_{i = {k - j+2 } }^k  w^{i - k + j-2} A_i x + \sum_{i = {k - j+2 } }^k w^{i - k + j-2}\nmathbf{b}_i\end{pmatrix} +  \begin{pmatrix} 0_n\\\nmathbf{b}_{k - j + 1} \end{pmatrix} \label{eqline:T2_apply_induction_assumption} \\
					&= \,\begin{pmatrix} x\\ \sum_{i = {k - j +1 } }^k  w^{i - k + j -1} A_i x + \sum_{i = {k - j +1 } }^k w^{i - k + j -1}\nmathbf{b}_i\end{pmatrix}, \quad x \in \mathbb{R}^n.
				\end{align}
                The proof is concluded upon noting that
				\begin{align}
					&\, \, \affine( G_{k+1},h_{k+1} )\circ\cdots \circ \affine ( G_{1},h_1 ) (x)\\
					&=\, S ( G_{k+1}, h_{k+1} ) \circ L ( ( ( G_i,h_i ) )_{i=1}^{k}) (x) \\
					&=\,  \begin{pmatrix} A_{0} & w I_m  \end{pmatrix}  \begin{pmatrix} x\\ \sum_{i = 1  }^k  w^{i -1} A_i x + \sum_{i = 1}^k w^{i -1}\nmathbf{b}_i\end{pmatrix} + \nmathbf{b}_{0}  \label{eq:proof_trade_off_induction_assumption}\\
					&=\, \sum_{i = 0}^k w^i A_i x + \sum_{i = 0}^k w^i b_i \\
					&=\, Ax + b\\
					&=\, S ( A,b ) ( x ),\quad x \in \mathbb{R}^n,
				\end{align}
				where in \eqref{eq:proof_trade_off_induction_assumption} we used \eqref{eq:proved_by_induction_tradeoff} with $j = k$.
			\end{proof}
		\end{lemma}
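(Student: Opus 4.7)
The plan is to realize the affine map $S(A,b)$ by a Horner-style nested evaluation of the polynomial $A = \sum_{i=0}^{k} w^i A_i$, $b = \sum_{i=0}^{k} w^i b_i$ in the scalar $w$, where $A_i \in \mathbb{R}^{m\times n}$, $b_i \in \mathbb{R}^m$ have entries in $\mathbb{A} \cap \mathbb{R}_{\geq 0}$. Such a decomposition exists by the very definition \eqref{def:T_2} of $\mathcal{T}_2(\mathbb{A},w,k)$ applied entrywise to $A$ and $b$. Horner's scheme rewrites the target as
\begin{equation*}
Ax+b=\bigl(A_0x+b_0\bigr)+w\Bigl(\bigl(A_1x+b_1\bigr)+w\bigl(\cdots +w(A_kx+b_k)\bigr)\Bigr),
\end{equation*}
so the evaluation only ever requires multiplication by $w$ (which is a single element of $\mathbb{A} \cap \mathbb{R}_{\geq 0}$) and additive contributions of the form $A_i x + b_i$ whose coefficients already lie in $\mathbb{A} \cap \mathbb{R}_{\geq 0}$.

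To implement this nested evaluation as a composition of affine maps with weights drawn from $\mathbb{A} \cap \mathbb{R}_{\geq 0}$, I would carry the input $x$ along as part of the state so that it remains available for each innermost injection $A_i x + b_i$. Concretely, the first map sends $x\mapsto(x,\, A_k x + b_k)\in \mathbb{R}^{n+m}$; each intermediate map, for $j=2,\dots,k$, sends $(x,y)\mapsto(x,\, A_{k-j+1}x + wy + b_{k-j+1})$; and the final map sends $(x,y)\mapsto A_0 x + wy + b_0$. The required blocks are precisely $I_n$ (for propagating $x$), the matrices $A_{k-j+1}$, scalar multiplication by $w$, and the biases $b_{k-j+1}$. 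Since $\{0,1\}\subseteq \mathbb{A}$ and $w\in \mathbb{A}\cap\mathbb{R}_{\geq 0}$, all entries of the resulting $G_j$ and $h_j$ lie in $\mathbb{A}\cap\mathbb{R}_{\geq 0}$. This is exactly the assignment made in \eqref{eqline:assign_skip_depth_precision_tradeoff_0}--\eqref{eqline:assign_skip_depth_precision_tradeoff}. The width is $m+n$ (the two stacked components), and the depth is $k+1$, matching the claimed architecture in $\mathcal{N}_{\mathbb{A}\cap\mathbb{R}_{\geq 0}}((n,m),m+n,k+1)$.

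Correctness of the composition would be verified by induction on the number $j$ of composed layers, with induction hypothesis that after $j$ layers the output is $(x, \sum_{i=k-j+1}^{k} w^{i-k+j-1}(A_i x + b_i))$; this is exactly \eqref{eq:proved_by_induction_tradeoff}. The base case $j=1$ is immediate, and the induction step is a direct block-matrix computation using the structure of $G_j$ and $h_j$. A single additional application of the final layer $S(G_{k+1},h_{k+1})$ then collapses the carried $x$ and the Horner accumulator into $\sum_{i=0}^{k} w^i A_i x + \sum_{i=0}^{k} w^i b_i = Ax+b$, yielding \eqref{eq:coefficient_property_0}.

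I do not anticipate a deep obstacle: the main point requiring care is the bookkeeping of indices in the Horner recursion and the verification that \emph{every} entry appearing in the $G_j$ and $h_j$ really lies in $\mathbb{A}\cap\mathbb{R}_{\geq 0}$ (in particular, that the identity propagation block contributes only $0$'s and $1$'s and that $w$ is nonnegative). Once the block structure is set up as above, both the inductive verification and the weight-set membership reduce to straightforward inspection.
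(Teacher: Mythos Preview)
Your proposal is correct and follows essentially the same approach as the paper's proof: decompose $A$ and $b$ into their $w$-polynomial components, implement Horner's scheme by carrying $x$ alongside an accumulator, and verify the resulting composition by induction with precisely the invariant \eqref{eq:proved_by_induction_tradeoff}. The block structure you describe for the layers coincides exactly with the paper's choice of $(G_j,h_j)$, and your remarks on weight-set membership and width/depth bookkeeping match the paper's verification.
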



		We proceed by incorporating the effect of the ReLU activation function. Specifically, the following lemma is in the style of Lemma~\ref{lem:adaptive_quantization_single_layer_decomposition}, but for  $S (A, b ) \circ \rho$.


		\begin{lemma}
			\label{lem:adaptive_quantization_single_layer}
			Let $m,n,k \in \mathbb{N}$, $\mathbb{A} \subseteq \mathbb{R}$ with $\{ -1,0,1 \} \subseteq \mathbb{A}$, and $u,v \in \mathbb{A} \cap \mathbb{R}_{\geq 0}$. Let $A \in \mathbb{R}^{m\times n} $, $\nmathbf{b} \in \mathbb{R}^m$ be such that $\coef ( A ), \coef ( \nmathbf{b} ) \subseteq \mathcal{T}_1 (\mathbb{A},u,v,k)$, with $\mathcal{T}_1 (\mathbb{A},u,v,k)$ as defined in \eqref{eq:def_Auvk}.  Then, there exists a neural network configuration $( ( G_i,h_i ) )_{i=1}^{k+2}$ with $\coef (  ( ( G_i,h_i ) )_{i=1}^{k+2}) \subseteq \mathbb{A}$ and $\mathcal{W} ( ( ( G_i,h_i ) )_{i=1}^{k+2} ) \leq 4 ( m + n )$ such that 
			\begin{align}
				 \affine( G_{k+2},h_{k+2} )\circ \rho \circ \cdots \circ \rho \circ \affine ( G_{1},h_1 ) \circ \rho = \affine(A,b)\circ \rho . \label{eq:coefficient_property_4}
			\end{align}
		\end{lemma}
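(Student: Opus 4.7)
The approach is to reduce the claim to Lemma~\ref{lem:adaptive_quantization_single_layer_decomposition} by running four chains in parallel—one per combination of sign and base—whose coefficients are all non-negative, so that the interposed ReLU activations act as the identity.

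First, I would decompose the signed entries of $A$ and $b$ into non-negative positive and negative parts. Because $|\alpha_i|, |\beta_i| \in \mathbb{A}$ entrywise and $\{-1,0,1\} \subseteq \mathbb{A}$, the splits $\alpha_i = \alpha_i^+ - \alpha_i^-$ with $\alpha_i^+ := \max(\alpha_i,0)$, $\alpha_i^- := \max(-\alpha_i,0)$ satisfy $\alpha_i^\pm \in \mathbb{A} \cap \mathbb{R}_{\geq 0}$, since one of the two is $0$ and the other equals $|\alpha_i| \in \mathbb{A}$; the same holds for $\beta_i$ and for the entries of $b$. Collecting terms base by base yields four matrices $A_u^\pm := \sum_{i=0}^k u^i \alpha_i^\pm$ and $A_v^\pm := \sum_{i=0}^k v^i \beta_i^\pm$, whose entries lie in $\mathcal{T}_2(\mathbb{A}, u, k)$ and $\mathcal{T}_2(\mathbb{A}, v, k)$ respectively, together with analogous biases $b_u^\pm, b_v^\pm$, satisfying $A = A_u^+ - A_u^- + A_v^+ - A_v^-$ and the matching identity for $b$.

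Next, I would apply Lemma~\ref{lem:adaptive_quantization_single_layer_decomposition} four times—twice with $w = u$ to $(A_u^\pm, b_u^\pm)$ and twice with $w = v$ to $(A_v^\pm, b_v^\pm)$—to obtain four configurations, each of depth $k+1$, width $m+n$, and coefficients in $\mathbb{A} \cap \mathbb{R}_{\geq 0}$. Stack these four chains into a single configuration of width $4(m+n)$: let $G_1$ be the vertical concatenation of the four first matrices (so that all four chains receive the common input $\rho(x)$), let $G_j$ for $2 \leq j \leq k+1$ be block-diagonal over the four chains, and stack the biases $h_j$ in the obvious way. Finally, append a $(k+2)$-nd layer with block-row matrix $G_{k+2} := [\,I_m,\; -I_m,\; I_m,\; -I_m\,]$ and bias $h_{k+2} := 0_m$, whose coefficients lie in $\{-1, 0, 1\} \subseteq \mathbb{A}$.

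The verification of \eqref{eq:coefficient_property_4} hinges on showing that every intermediate vector along each of the four chains is non-negative, so that each interposed $\rho$ acts as the identity and the purely-affine composition furnished by Lemma~\ref{lem:adaptive_quantization_single_layer_decomposition} carries over unchanged. The input $\rho(x)$ is non-negative, the bases $u,v$ are non-negative, and by the recursion established in the proof of Lemma~\ref{lem:adaptive_quantization_single_layer_decomposition} the intermediate state of each chain has the form $(\rho(x),\, \sum_{i} w^{i-k+j-1}(A_i^\pm \rho(x) + b_i^\pm))$, a non-negative combination of non-negative quantities. Consequently each chain's output after layer $k+1$ equals $A_\bullet^\pm \rho(x) + b_\bullet^\pm$, and $G_{k+2}$ combines the four outputs into $A\rho(x) + b = S(A,b) \circ \rho(x)$. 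The principal care required is the width bookkeeping—the intermediate dimensions $n,\,4(m+n),\ldots,4(m+n),\,4m,\,m$ all stay within $4(m+n)$—and checking that every weight used (the non-negative chain coefficients from Lemma~\ref{lem:adaptive_quantization_single_layer_decomposition} and the signs $\pm 1$ in $G_{k+2}$) lies in $\mathbb{A}$; no deeper obstacle arises.
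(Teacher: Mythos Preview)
Your proposal is correct and follows essentially the same approach as the paper: decompose each entry of $(A,b)$ into four non-negative parts (one per sign/base combination), invoke Lemma~\ref{lem:adaptive_quantization_single_layer_decomposition} on each to get four depth-$(k+1)$ chains with coefficients in $\mathbb{A}\cap\mathbb{R}_{\ge 0}$, stack them in parallel (vertical concatenation for $G_1$, block-diagonal for $G_2,\dots,G_{k+1}$), and recombine with $G_{k+2}=[\,I_m,-I_m,I_m,-I_m\,]$. The only cosmetic difference is that you justify the insertion of $\rho$ by appealing to the explicit non-negative form of the intermediate states from \eqref{eq:proved_by_induction_tradeoff}, whereas the paper argues more abstractly that $S(G_j,h_j)\circ\rho=\rho\circ S(G_j,h_j)\circ\rho$ because the coefficients of $G_j,h_j$ are non-negative; both routes are valid.
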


		\begin{proof}
		[Proof of Lemma~\ref{lem:adaptive_quantization_single_layer}]

			We first represent $S ( A,b )$ in terms of affine mappings with coefficients in $\mathcal{T}_2 ( \mathbb{A}, u,k )$ or $\mathcal{T}_2 ( \mathbb{A}, v,k )$ and then apply Lemma~\ref{lem:adaptive_quantization_single_layer_decomposition} to these mappings. To this end, we note that  $x = \sum_{i =0}^k {u^i \alpha_i + v^i \beta_i} \in \mathcal{T}_1 ( \mathbb{A},u,v,k)$ with $| \alpha_i |,| \beta_i | \in \mathbb{A}  $, $i = 0,\dots, k$, can be decomposed according to 
			\begin{equation}
			\label{eq:decomposition_scalar}
				x = x^{(u,+)} - x^{(u,-)} + x^{(v,+)} - x^{(v,-)},
			\end{equation}
			with $x^{(u,+)} = \sum_{i = 0}^{k} u^i \max \{ \alpha_i,0 \}$, $x^{(u,-)} = \sum_{i = 0}^{k} u^i \max \{ -\alpha_i,0 \}$, $x^{(v,+)} = \sum_{i = 0}^{k} v^i \max \{ \alpha_i,0 \}$, $x^{(v,-)} = \sum_{i = 0}^{k} v^i \max \{ -\alpha_i,0 \}$, such that 
			\begin{align}
				x^{(u,+)},  x^{(u,-)} \in \mathcal{T}_2 ( \mathbb{A}, u,k ),\\
				x^{(v,+)},  x^{(v,-)} \in \mathcal{T}_2 ( \mathbb{A}, v,k ).	
			\end{align} 
			Applying this decomposition entry-wise to the matrix $A$ and the vector $b$ yields
			\begin{alignat}{5}
				A =\, &A^{(u,+)} &&- A^{(u,-)} && + A^{(v,+)} &&-  A^{(v,-)}, \label{eq:decomposition_A}\\
				b =\, &\,b^{(u,+)} &&- b^{(u,-)} && + b^{(v,+)} &&-  b^{(v,-)},\label{eq:decomposition_b}
			\end{alignat}
			where $A^{(u,+)},A^{(u,-)}, A^{(v, + )}, A^{(v,-)} \in \mathbb{R}^{m\times n}$ and $b^{(u,+)},b^{(u,-)}, b^{(v, + )}, b^{(v,-)} \in \mathbb{R}^m$ satisfy
			\begin{align}
				\coef ( A^{(u,+)}), \coef ( A^{(u,-)} ),\coef ( \nmathbf{b}^{(u,+)} ), \coef ( b^{(u,-)} ) \in&\, \mathcal{T}_2 ( \mathbb{A}, u,k ), \label{eq:coefficient_u_matrices}\\
				\coef ( A^{(v,+)}), \coef ( A^{(v,-)} ),\coef ( \nmathbf{b}^{(v,+)} ), \coef ( b^{(v,-)} ) \in&\, \mathcal{T}_2 ( \mathbb{A}, v,k ). \label{eq:coefficient_v_matrices}
			\end{align}
			Consequently, we get
			\begin{equation*}
				S ( A,b ) = S ( A^{(u,+)},b^{(u,+)} )- S ( A^{(u,-)},b^{(u,-)} ) + S ( A^{(v,+)},b^{(v,+)} ) - S (A^{(v,-)},b^{(v,-)}).
			\end{equation*}
			For $\omega = (u,+), (u,-),(v,+),(v,-)$, application of Lemma~\ref{lem:adaptive_quantization_single_layer_decomposition} to the  affine mapping $S ( A^{\omega},b^{\omega} )$ now yields the existence of network configurations
			\begin{equation}
			\label{eq:choice_G_alpha}
				( ( G^\omega_i,h^\omega_i ) )_{i=1}^{k+1}\in \mathcal{N}_{\mathbb{A} \cap \mathbb{R}_{\geq 0}} ( ( n,m ), m + n, k+1 )
			\end{equation}
			such that 
			\begin{equation}
			\label{eqline:composition_four_terms}	
				\affine( G^\omega_{k+1},h^\omega_{k+1} )\circ\cdots \circ \affine ( G^\omega_{1},h^\omega_1 ) = \, \affine(A^\omega,b^\omega).
			\end{equation}
			Set
			\begin{equation}
			\label{eq:choice_of_G1}
				G_1 = \begin{pmatrix}
					G^{(u,+)}_1 \\ G^{(u,-)}_1 \\ G^{(v,+)}_1 \\ G^{(v,-)}_1
				\end{pmatrix}, \quad 
				h_1 = \begin{pmatrix}
					h^{(u,+)}_1 \\ h^{(u,-)}_1 \\ h^{(v,+)}_1 \\ h^{(v,-)}_1
				\end{pmatrix},			
			\end{equation}
			\begin{equation}
			\label{eq:choice_of_Gj}
				G_j = \text{diag}\bigl(
					G^{(u,+)}_j, G^{(u,-)}_j, G^{(v,+)}_j, G^{(v,-)}_j\bigr), \quad 
				h_j = \begin{pmatrix}
					h^{(u,+)}_j \\ h^{(u,-)}_j \\ h^{(v,+)}_j \\ h^{(v,-)}_j
				\end{pmatrix},			
			\end{equation}
			for $j = 2,\dots, k+1$,
			and 
			\begin{equation*}
				G_{k+2} = \begin{pmatrix}I_m& -I_m& I_m& -I_m \end{pmatrix},\quad  h_{k+2} = 0_m.
			\end{equation*}
			A direct calculation shows that
			\begin{align}
				&\,\affine( G_{k+2},h_{k+2} )\circ\cdots \circ \affine ( G_{1},h_1 ) \label{eqline:SAB_0}\\
				&=\, \affine( G^{(u,+)}_{k+1},h^{(u,+)}_{k+1} )\circ\cdots \circ \affine ( G^{(u,+)}_{1},h^{(u,+)}_1 ) \nonumber  \\ 
				&\,\,- \affine( G^{(u,-)}_{k+1},h^{(u,-)}_{k+1} )\circ\cdots \circ \affine ( G^{(u,-)}_{1},h^{(u,-)}_1 )  \nonumber\\
				&\,\,+ \affine( G^{(v,+)}_{k+1},h^{(v,+)}_{k+1} )\circ\cdots \circ \affine ( G^{(v,+)}_{1},h^{(v,+)}_1 )\nonumber \\
				&\,\,- \affine( G^{(v,-)}_{k+1},h^{(v,-)}_{k+1} )\circ\cdots \circ \affine ( G^{(v,-)}_{1},h^{(v,-)}_1 ) \nonumber\\
				&=\, S ( A^{(u,+)},b^{(u,+)} )- S ( A^{(u,-)},b^{(u,-)} ) \\
                    &\,\,+ S ( A^{(v,+)},b^{(v,+)} ) - S (A^{(v,-)},b^{(v,-)}) \label{eqline:SAB}\\
				&=\, S ( A,b ).\label{eqline:SAB_2}
			\end{align}
			In turn, we directly get
			\begin{equation}
			\label{eq:SAB_rho}
			 	\affine( G_{k+2},h_{k+2} )\circ\cdots \circ \affine ( G_{1},h_1 ) \circ \rho = S ( A,b ) \circ \rho.
			\end{equation}
			Moreover, as, by \eqref{eq:choice_G_alpha}, $G_j,h_j$,  for $j = 1,\dots, k+1$, contains only non-negative entries, namely in $\mathbb{A}\cap \mathbb{R}_{\geq 0}$, the affine mapping $\affine ( G_j,h_j )$ takes vectors with nonnegative entries into vectors with nonnegative entries, so that
			\begin{equation}
			\label{eq:nonnegative}
				\affine ( G_j,h_j ) \circ \rho = \rho \circ \affine ( G_j,h_j ) \circ \rho.
			\end{equation}
			Substituting \eqref{eq:nonnegative}, for $j =1,\dots, k$, into \eqref{eq:SAB_rho} finally yields
			\begin{equation*}
				 \affine( G_{k+2},h_{k+2} )\circ \rho \circ \cdots \circ \rho \circ \affine ( G_{1},h_1 ) \circ \rho = \affine(A,b)\circ \rho . \qedhere
			\end{equation*}
		\end{proof}

		We are now ready to prove Proposition~\ref{thm:general_representation}. 
		\begin{proof}
		[Proof of Proposition~\ref{thm:general_representation}]
			Let
			\begin{equation*}
				\Phi = ( ( A_j,\nmathbf{b}_j ))_{j = 1}^\ell \in \mathcal{N}_{\mathcal{T}_1 ( \mathbb{A},u,v,k)} (( d,d' ),  W,  L  ),
			\end{equation*}
			with $\ell \leq L$.  Set 
			\begin{alignat}{3}
				\tilde{A}_1 =&\,  \begin{pmatrix} I_{d} \\ - I_{d}  \end{pmatrix}, &&\tilde{b}_1 =&\, 0_{2d},\\
				\tilde{A}_2 =&\, \begin{pmatrix} A_1 & - A_1 \end{pmatrix}, \quad &&\tilde{b}_2 =&\, b_1,		
			\end{alignat}
			and note that $\coef ( \tilde{A}_1 ), \coef (\tilde{b}_1  ) \subseteq \{ -1,0, 1 \} \subseteq \mathbb{A}$  and  $\coef ( \tilde{A}_2 ), \coef (\tilde{b}_2  ) \subseteq \mathcal{T}_1 ( \mathbb{A},u,v,k)$.
			Next, using $x = \rho(x) - \rho(-x), \, x \in \mathbb{R}$, we can write  $A_1 x + b_1 = A_1 \rho(x) - A_1 \rho(-x) + b_1$, $x \in \mathbb{R}^d$, and therefore get
			\begin{equation}
			\label{eq:effect_of_A_1}
			 	\affine ( A_1,b_1 ) = \affine ( \tilde{A}_2, \tilde{b}_2)\circ \rho \circ \affine ( \tilde{A}_1, \tilde{b}_1 ).
			\end{equation} 
			Set\footnote{Note that, for $\ell = 1$, the set $ \{ 1,\dots,\ell \} \backslash {1}$ is empty and no $\tilde{A}_{j+1}$ or $\tilde{b}_{j+1}$ are assigned. }
			\begin{equation}
			\label{eq:def_A_j_plus_1}
				\tilde{A}_{j+1} = A_j, \quad \tilde{b}_{j+1} = b_j, \quad \text{for every } j \in \{ 1,\dots,\ell \} \backslash \{1\}.
			\end{equation}
			We have
			\begin{align}
				\coef  (\tilde{A}_1), \coef (\tilde{b}_1) \subseteq&\,  \mathbb{A},\label{eq:property_of_A_0} \\
				\coef  (\tilde{A}_j), \coef (\tilde{b}_j) \subseteq&\,  \mathcal{T}_1 ( \mathbb{A}, u,v,k ), \quad j \in \{ 2,\dots, \ell + 1 \},\label{eq:coefficient_of_A_j}\\
				\mathcal{W} \bigl( \bigl(\bigl( \tilde{A}_j, \tilde{b}_j\bigr)\bigr)_{j=1}^{\ell + 1} \bigr) \leq&\,  2W, \label{eq:width_of_A_j}
			\end{align}
			and, by \eqref{eq:effect_of_A_1} and \eqref{eq:def_A_j_plus_1},
			\begin{equation}
			\label{eq:realization_Phi}
			\begin{aligned}
				R ( \Phi ) =&\, \affine ( A_\ell, b_\ell )\circ \rho \circ \cdots  \circ \affine ( A_1, b_1 )\\
				=&\, \affine ( \tilde{A}_{\ell+1}, \tilde{b}_{\ell+1} )\circ \rho \circ \cdots \circ \affine ( \tilde{A}_2, \tilde{b}_2 ) \circ \rho \circ \affine ( \tilde{A}_1, \tilde{b}_1 ).
			\end{aligned}
			\end{equation}
			For $j = 2,\dots, \ell+1$, we note that $\affine ( \tilde{A}_j, \tilde{b}_j)$ is an affine mapping with $\tilde{A}_j \in \mathbb{R}^{n_{j}\times n_{j-1}}$, $\tilde{b}_j \in \mathbb{R}^{n_j}$ such that $\coef ( \tilde{A}_j ), \coef ( \tilde{b}_j ) \in \mathcal{T}_1 ( \mathbb{A},u,v,k)$,  and $n_j,n_{j-1} \leq 2W$. Application of Lemma~\ref{lem:adaptive_quantization_single_layer} to $\affine ( \tilde{A}_j, \tilde{b}_j)$, $j = 2,\dots, \ell + 1$, therefore yields network configurations $( ( G_{j,i},h_{j,i} ) )_{i=1}^{k+2}$ with 
			\begin{align}
				\coef (  ( ( G_{j,i},h_{j,i} ) )_{i=1}^{k+2}) \subseteq&\, \mathbb{A}, \label{eq:properties_G_h_representation_1}\\
				\mathcal{W} ( ( ( G_{j,i},h_{j,i} ) )_{i=1}^{k+2} ) \leq&\, 4 ( n_{j} + n_{j-1} ) \leq\, 16 W,\label{eq:properties_G_h_representation_2}
			\end{align}
			such that 
			\begin{equation}
			\label{eq:Gh_substitute}
				\affine( G_{j,k+2},h_{j,k+2} )\circ \rho \circ \cdots \circ \rho \circ \affine ( G_{j,1},h_{j,1} ) \circ \rho = \affine(\tilde{A}_j,\tilde{b}_j)\circ \rho. 
			\end{equation}
			Substituting \eqref{eq:Gh_substitute}, for $j =2,\dots, \ell + 1$, into \eqref{eq:realization_Phi} then yields $R ( \Phi ) = R ( \widetilde{\Phi} )$,
			with 
			\begin{align*}
				\widetilde{\Phi} :=&\, (( G_{\ell + 1,k+2},h_{\ell + 1,k+2} ),\dots, ( G_{\ell + 1,1},h_{\ell + 1,1} ),\dots,\\
                &\,\, ( G_{2,k+2},h_{2,k+2} ), \dots,( G_{2,1},h_{2,1} ),  ( \tilde{A}_1, \tilde{b}_1 ) ) \\
				\in &\, \mathcal{N} ( ( d,d' ),16W, ( k+2 )\ell + 1 ) \label{eq:complexity_measurement_Phi_tilde}\\
				\subseteq &\,\mathcal{N} ( ( d,d' ),16W, ( k+3 )L).
			\end{align*}
			Since the network $\Phi \in \mathcal{N}_{\mathcal{T}_1 ( \mathbb{A},u,v,k)} (( d,d' ),  W,  L  )$ was arbitrary, we have, indeed, established that
			\begin{equation*}
				\{ R ( \Phi ): \Phi \in \mathcal{N}_{\mathcal{T}_1 ( \mathbb{A},u,v,k)} (( d,d' ),  W,  L  ) \} \subseteq \mathcal{R} ( ( d,d' ),16W, ( k+3 )L),
			\end{equation*}
			which is equivalent to
			\begin{equation*}
				\mathcal{R}_{\mathcal{T}_1 ( \mathbb{A},u,v,k)} (( d,d' ),  W,  L  )  \subseteq \mathcal{R} ( ( d,d' ),16W, ( k+3 )L). \qedhere
			\end{equation*}
		\end{proof}

\section{Proof of Theorem~\ref{thm:three_phase_achievability}} 
	\label{sub:approximating_lipschitz_continuous_functions_under_assumption_basic_assumption}


		Let $D_1,C_1, E_1 $  be the constants specified in Proposition~\ref{prop:sufficient_rate}. Set $E_{2,1} = \max \left\{ E_1,1 \right\}$, $E_{2,2} = 16 E_{2,1}   (D_1 + 2)^2  $, $C_2 =  2^{18}  E_{2,1} C_1$, and $D_2:= \max \{16 ( D_1 + 2 ), 8 ( E_{2,2} + 1)^2, 2^{15} \}$.  Let $b,W,L \in \mathbb{N}$ with $W,L \geq D_2$. We have 
		\begin{equation*}
			\frac{E_{2,1} L  \log (W)}{E_{2,2} \frac{\log (W)}{L}} = L^2 \frac{E_{2,1}}{E_{2,2}} \geq D_2^2 \cdot \frac{1}{16 ( D_1 + 2 )^2} >  1,
		\end{equation*}
		which, in turn, implies $E_{2,1} L  \log (W) > E_{2,2} \frac{\log (W)}{L}$, and therefore the three regimes corresponding to Points $1$-$3$ are well-defined and, indeed, pairwise disjoint. In addition, they exhaust $\mathbb{N}$.

		In the \textit{over-quantization regime}, i.e., for  $b \geq E_{2,1} L  \log (W)  $, we have
		\begin{align}
			\mathcal{A}_\infty	( \lip  ( [0,1] ), \mathcal{R}_{b}^{1} (W, L ) )  \leq&\, \mathcal{A}_\infty	( \lip  ( [0,1] ), \mathcal{R}_{\lceil E_1 L  \log (W)  \rceil}^{1} ( W, L ) )\label{eqline:applying_inclusion}\\
			\leq & \, C_1 ( W^2 L^2 \log (W) )^{-1},\label{eqline:applying_lowerbound} \\
			\leq & \, C_2 ( W^2 L^2 \log (W) )^{-1},\label{eqline:applying_lowerbound_1}
		\end{align}
		where \eqref{eqline:applying_inclusion} follows from $ b = \lceil b \rceil \geq \lceil E_{2,1} L  \log (W)  \rceil \geq \lceil E_1 L  \log (W)  \rceil $, in \eqref{eqline:applying_lowerbound} we applied Proposition~\ref{prop:sufficient_rate}, and \eqref{eqline:applying_lowerbound_1} is by  $C_2 > C_1$.

		In the \textit{proper-quantization regime}, i.e., for $b \in \bigl[E_{2,2} \frac{\log (W)}{L}, E_{2,1} L  \log (W)\bigr)$,
		we set 
		\begin{align}
			\widetilde{W} =& \biggl\lfloor \frac{W}{16} \biggr\rfloor, \\
			\widetilde{L} =& \Biggl\lfloor \sqrt{\frac{bL}{16E_{2,1}  \log (\widetilde{W})}} \Biggr\rfloor, \\
			k=& \Biggl\lceil \frac{E_{2,1} \tilde{L} \log (\widetilde{W}) }{b} \Biggr \rceil.
		\end{align}
		These choices guarantee that $\widetilde{W} = \lfloor \frac{W}{16} \rfloor \geq\lfloor \frac{D_2}{16} \rfloor  \geq \lfloor \frac{16 (D_1 + 2 )  }{16} \rfloor \geq D_1 + 1$, $ \tilde{L} = \biggl \lfloor \sqrt{\frac{bL}{16E_{2,1}  \log (\widetilde{W})}} \biggr\rfloor \geq \biggl \lfloor \sqrt{\frac{L}{16 E_{2,1}  \log (\lfloor \frac{W}{16} \rfloor)} E_{2,2} \frac{\log (W)}{L}} \biggr\rfloor \geq \biggl\lfloor \sqrt{\frac{E_{2,2}  }{16E_{2,1} }} \biggr \rfloor = \biggl \lfloor\sqrt{\frac{16 E_{2,1}  (D_1 + 2)^2}{16E_{2,1} }} \biggr \rfloor \geq D_1 + 1$, and $k \geq 1$. Application of Proposition~\ref{prop:sufficient_rate} with $W,L$ replaced by $\widetilde{W},\tilde{L}$ then yields 
		\begin{equation}
		\label{eq:proof_proper_quantization_0}
			\mathcal{A}_\infty	( \lip  ( [0,1] ), \mathcal{R}^1_{\lceil E_{1} \tilde{L}\log(\widetilde{W}) \rceil} ( \widetilde{W},\tilde{L}   ) ) 
			\leq C_1 (\widetilde{W}^2 \tilde{L}^2 \log (\widetilde{W}))^{- 1}.
		\end{equation}
		We next note that the family of approximants $\mathcal{R}^{1}_{\lceil E_{1} \tilde{L}\log(\widetilde{W}) \rceil} ( \widetilde{W},\tilde{L}   )$ in \eqref{eq:proof_proper_quantization_0} satisfies
		\begin{align}
			\mathcal{R}^{1}_{\lceil E_{1} \tilde{L}\log(\widetilde{W}) \rceil} ( \widetilde{W},\tilde{L}   ) \subseteq&\, \mathcal{R}_{k b}^{k} ( \widetilde{W},\tilde{L}   )  \label{eqline:proof_proper_quantization_3}  \\
			\subseteq&\, \mathcal{R}_{b}^{1} ( 16\widetilde{W}, (k+2)\tilde{L}  )\label{eqline:proof_proper_quantization_2} \\
			\subseteq &\, \mathcal{R}_{b}^{1} (W, (k+2)\tilde{L} ), \label{eqline:proof_proper_quantization_1}
		\end{align}
		where \eqref{eqline:proof_proper_quantization_3} follows from $kb = \lceil kb \rceil   = \bigl\lceil \bigl\lceil \frac{E_{2,1} \tilde{L} \log (\widetilde{W}) }{b}  \bigr\rceil b \bigr\rceil \geq \lceil E_{2,1} \tilde{L} \log (\widetilde{W})\rceil \geq  \lceil E_{1} \tilde{L} \log (\widetilde{W})\rceil$,  in \eqref{eqline:proof_proper_quantization_2}  we used  Proposition~\ref{prop:tradeoff_binary} with $a = 1$ and $( W,L )$ replaced by $( \widetilde{W},\tilde{L} )$,  and \eqref{eqline:proof_proper_quantization_1} is by $16 \widetilde{W} = 16 \lfloor \frac{W}{16} \rfloor \leq W$. We proceed to show that $(k+2)\tilde{L} \leq L$ which will then yield $\mathcal{R}_{b}^{1} (W, (k+2)\tilde{L} ) \subseteq \mathcal{R}_{b}^{1} (W, L )$. This will be done by distinguishing the cases $\frac{E_{2,1} \tilde{L} \log (\widetilde{W}) }{b} \geq 1$ and $\frac{E_{2,1} \tilde{L} \log (\widetilde{W}) }{b} < 1$. For $\frac{E_{2,1} \tilde{L} \log (\widetilde{W}) }{b} \geq 1$, we have  $( k+2 )\tilde{L} = ( \bigl\lceil \frac{E_{2,1} \tilde{L} \log (\widetilde{W}) }{b}  \bigr\rceil + 2 ) \tilde{L} \leq  \biggl(  \frac{E_{2,1} \tilde{L} \log (\widetilde{W}) }{b}  + 3 \biggr) \tilde{L} \leq \biggl(  \frac{E_{2,1} \tilde{L} \log (\widetilde{W}) }{b}  + 3 \frac{E_{2,1} \tilde{L} \log (\widetilde{W}) }{b} \biggr) \tilde{L}  =  \frac{4E_{2,1} \tilde{L}^2 \log (\widetilde{W}) }{b} \leq  \frac{4E_{2,1}  \log (\widetilde{W}) }{b}\biggl(\sqrt{\frac{bL}{16E_{2,1}  \log (\widetilde{W})}} \biggr)^2 \leq  L$. For $\frac{E_{2,1} \tilde{L} \log (\widetilde{W}) }{b} <1$, we have $( k+2 )\tilde{L} = ( \lceil \frac{E_{2,1} \tilde{L} \log (\widetilde{W}) }{b}  \rceil + 2 ) \tilde{L} = 3\tilde{L} = 3\Bigl\lfloor \sqrt{\frac{bL}{16E_{2,1}  \log (\widetilde{W})}} \Bigr\rfloor \leq  3 \sqrt{\frac{bL}{16E_{2,1}  \log (\widetilde{W})}}  \leq 3 \sqrt{\frac{E_{2,1} L  \log (W) L}{16E_{2,1}  \log (\widetilde{W})}} =  \sqrt{\frac{  9 \log (W) }{  16 \log (\lfloor W \slash 16\rfloor) }}  \,L  \leq  \sqrt{\frac{  9 \log (W) }{  16 \log ( (W \slash 16) - 1) }} \,L  < L$, where in the last inequality we used that, for all $x \geq 2^{15}$, $\frac{\log ( x )}{ \log ( (x\slash 16) -1  )} \leq  \frac{\log ( x )}{ \log ( x\slash 32 )} =  \frac{\log ( x )}{ \log ( x) - 5} \leq \frac{3}{2}$, with $x = W \geq D_2 = \max \{16 ( D_1 + 2 ), 8 ( E_{2,2} + 1)^2, 2^{15} \} \geq 2^{15}$. Overall, we have $(k+2)\tilde{L} \leq L$,
		which together with \eqref{eqline:proof_proper_quantization_3}-\eqref{eqline:proof_proper_quantization_1} implies
		\begin{equation}
		\label{eqline:proof_proper_quantization_4}
			\mathcal{R}^{1}_{\lceil E_{1} \tilde{L}\log(\widetilde{W}) \rceil} ( \widetilde{W},\tilde{L}   )\subseteq  \mathcal{R}_{b}^{1} (W, (k+2)\tilde{L} ) \subseteq \mathcal{R}_{b}^{1} (W, L ). 
		\end{equation}
		We then have 
		\begin{align}
			\mathcal{A}_\infty	( \lip  ( [0,1] ), \mathcal{R}_{b}^{1} ( W, L ) ) & \leq \mathcal{A}_\infty	( \lip  ( [0,1] ), \mathcal{R}^{1}_{\lceil E_{1} \tilde{L}\log(\widetilde{W}) \rceil} ( \widetilde{W},\tilde{L}   ) ) \label{eqline:prove_upperbound_3}\\
			& \leq  C_1 (\widetilde{W}^2 \tilde{L}^2 \log (\widetilde{W}))^{- 1} \label{eqline:prove_upperbound_4}\\
			& \leq  C_1 \biggl( \biggl(\frac{W}{32}\biggr)^2 \tilde{L}^2 \log (\widetilde{W})\biggr)^{- 1} \label{eqline:prove_upperbound_5}\\
			& \leq  C_1 \biggl(\biggl(\frac{W}{32}\biggr)^2 \frac{bL}{2^6 \, E_{2,1}  \log (\widetilde{W})}  \log (\widetilde{W})\biggr)^{- 1}\label{eqline:prove_upperbound_6} \\
			& =  2^{16}E_{2,1} C_1  ( W^2 L b)^{- 1} \label{eqline:prove_upperbound_60}\\ 
			& \leq  C_2 (W^2 L b)^{- 1}, \label{eqline:prove_upperbound_61}
		\end{align}
		where in \eqref{eqline:prove_upperbound_3} we used the inclusion relation \eqref{eqline:proof_proper_quantization_4}, \eqref{eqline:prove_upperbound_4} is \eqref{eq:proof_proper_quantization_0}, in \eqref{eqline:prove_upperbound_5} we employed $\widetilde{W} = \lfloor \frac{W}{16} \rfloor \geq \frac{W}{32}$, which is owing to $\lfloor x \rfloor \geq \frac{1}{2} x $, for $x \geq 1$, \eqref{eqline:prove_upperbound_6} follows from 
		\begin{align*}
			\tilde{L}^2 =&\, \Biggl( \Biggl\lfloor \sqrt{\frac{bL}{16E_{2,1}  \log (\widetilde{W})}} \Biggr\rfloor \Biggr)^2 \\
			\geq&\,  \Biggl( \frac{1}{2} \sqrt{\frac{bL}{16E_{2,1}  \log (\widetilde{W})}}\Biggr)^2 \\
			= &\, \frac{bL}{2^6 \, E_{2,1}  \log (\widetilde{W})},
		\end{align*}
		and in \eqref{eqline:prove_upperbound_61} we used the definition of $C_2$.

	   In the \textit{under-quantization regime}, i.e., for $b \in [1, E_{2,2} \frac{\log (W)}{L})$, we reduce the problem to the \textit{proper-quantization regime}. This will be done by finding a natural number $\overline{W}$ satisfying $D_2 \leq \overline{W} \leq W$ and $ E_{2,2} \frac{\log (\overline{W})}{L} \leq b \leq E_{2,1} L\log (\overline{W})$ such that $\mathcal{A}_\infty	( \lip  ( [0,1] ), \mathcal{R}_{b}^{1} ( \overline{W}, L ) )$ can be upper-bounded using the result from the \textit{proper-quantization regime}. Specifically, set $\overline{W} = \bigl\lfloor 2^{\frac{L b}{ E_{2,2}}} \bigr \rfloor$, which implies $\overline{W}  \leq 2^{\frac{L b}{ E_{2,2}}} \leq  W $ and note that
		\begin{align}
			\overline{W} \geq&\, \Bigl\lfloor 2^{\frac{D_2 b}{ E_{2,2}}}  \Bigr\rfloor\label{eqline:prove_upperbound_8}\\
			=&\, \Bigl\lfloor \Bigl(2^{ \frac{ D_2 b}{ 2 E_{2,2}} } \Bigr)^2 \Bigr\rfloor \label{eqline:prove_upperbound_9}\\
			\geq&\, \Bigl\lfloor  \Bigl(\frac{ D_2 b}{ 2 E_{2,2}}\Bigr)^2 \Bigr\rfloor \label{eqline:prove_upperbound_10}\\
			\geq&\, \bigl\lfloor 2 D_2 \bigr\rfloor \label{eqline:prove_upperbound_102}\\
			\geq&\, D_2,  \label{eqline:prove_upperbound_103}
		\end{align}
		where \eqref{eqline:prove_upperbound_8} follows from $L \geq D_2$, in \eqref{eqline:prove_upperbound_10} we used $2^x \geq x$, for $x \geq 1$, with $x =  \frac{ D_2 b}{ 2 E_{2,2}} = \frac{ \max \{16 ( D_1 + 2 ),\, 8 ( E_{2,2} + 1)^2,\, 2^{15} \} b}{2 E_{2,2}}  \geq \frac{ 8 ( E_{2,2} + 1)^2  b}{ 2 E_{2,2}} \geq 1 $, and \eqref{eqline:prove_upperbound_102} is by $D_2 =  \max \{16 ( D_1 + 2 ), 8 ( E_{2,2} + 1)^2, 2^{15} \}  \geq  8 ( E_{2,2} + 1 )^2 \geq  8 E_{2,2}^2$. We further note that $b  \ge E_{2,2} \frac{\log (\overline{W})}{L}$ and 
		\begin{align}
			b  & \leq  E_{2,2} \frac{\log (\overline{W} + 1) }{L} \label{eqline:prove_upperbound_14}\\
			& = \frac{2 E_{2,2}}{E_{2,1} L^2} \frac{\log( \overline{W} + 1)}{2 \log(\overline{W})} \cdot E_{2,1}  L\log (\overline{W}) \label{eqline:prove_upperbound_15}\\
			& <   E_{2,1} L\log (\overline{W}),\label{eqline:prove_upperbound_16}
		\end{align}
		where \eqref{eqline:prove_upperbound_16} follows from  $\frac{2 E_{2,2}}{E_{2,1} L^2}= \frac{32 E_{2,1}  (D_1 + 2)^2}{E_{2,1} L^2}  \leq \frac{32  (D_1 + 2)^2}{D_2^2} = \frac{32  (D_1 + 2)^2}{ (\max \{16 ( D_1 + 2 ), 8 ( E_{2,2} + 1)^2, 2^{15} \} )^2}        \leq \frac{32   (D_1 + 2)^2}{( 16 ( D_1 +2  ) )^2} \leq 1 $. 
		Then, application of the bound for the \textit{proper-quantization regime}, specifically \eqref{eqline:prove_upperbound_60} with $W$ replaced by $\overline{W}$, upon noting that $ \overline{W},L \geq D_2$, $ E_{2,2} \frac{\log (\overline{W})}{L} \leq b < E_{2,1} L\log (\overline{W})$ as established in \eqref{eqline:prove_upperbound_8}-\eqref{eqline:prove_upperbound_16},  yields 
		\begin{equation}
			\label{eqline:prove_upperbound_161}
			\mathcal{A}_\infty	( \lip  ( [0,1] ), \mathcal{R}_{b}^{1} ( \overline{W}, L ) ) \leq 2^{16}   E_{2,1}C_1 \bigl( \overline{W}^2L b \bigr)^{-1}.
		\end{equation}
		We finalize the proof for the \textit{under-quantization regime} by noting that
		\begin{align}
			\mathcal{A}_\infty	( \lip  ( [0,1] ), \mathcal{R}_{b}^{1} (  W, L ) ) & \leq  \mathcal{A}_\infty	( \lip  ( [0,1] ), \mathcal{R}_{b}^{1} ( \overline{W}, L ) ) \label{eqline:prove_upperbound_17}\\
			& \leq  2^{16} E_{2,1} C_1 \bigl( \overline{W}^2L b \bigr)^{-1} \label{eqline:prove_upperbound_18}\\
			& \leq  2^{16} E_{2,1} C_1  \overline{W}^{-2}  \label{eqline:prove_upperbound_19}\\
			& = 2^{16} E_{2,1} C_1  \Bigl( \Bigl\lfloor 2^{\frac{L b}{ E_{2,2}}}  \Bigr\rfloor \Bigr)^{-2}\label{eqline:prove_upperbound_20}\\
			& \leq 2^{16} E_{2,1} C_1   \biggl( \frac{1}{2} 2^{\frac{L b}{ E_{2,2}}}  \biggr)^{-2} \label{eqline:prove_upperbound_21}\\
			& = 2^{18}  E_{2,1} C_1  ( 2^{\frac{2}{E_{2,2} }} )^{-L b}\label{eqline:prove_upperbound_22} \\
			& \leq C_2 \, \alpha^{-Lb},\label{eqline:prove_upperbound_23}
		\end{align}
		where in \eqref{eqline:prove_upperbound_17} we used $\overline{W} \leq W$, \eqref{eqline:prove_upperbound_18} is \eqref{eqline:prove_upperbound_161}, \eqref{eqline:prove_upperbound_19} follows from $Lb \geq 1$, \eqref{eqline:prove_upperbound_21} is by $\lfloor x \rfloor \geq \frac{1}{2}x$, for $x \geq 1$, with $x = 2^{\frac{L b}{ E_{2,2}}} \geq 1$, and in \eqref{eqline:prove_upperbound_23} we set $\alpha :=  2^{\frac{2}{E_{2,2}}}  > 1$ and used $C_2 = 2^{18}  E_{2,1} C_1 $.


\section{Auxiliary results} 
\label{sec:auxiliary_results}
	\subsection{Triangle Inequality} 
	\label{sub:triangle_inequality}
		\begin{lemma}
			\label{lem:triangle_inequality}
			Let $( \mathcal{X}, \delta )$ be a metric space and $\mathcal{F},\mathcal{G}, \mathcal{H} \subseteq \mathcal{X}$. Then,
			\begin{equation}
				\label{eq:prove_triangle_0}
				\mathcal{A} ( \mathcal{F}, \mathcal{H}, \delta) \leq  \mathcal{A} ( \mathcal{F}, \mathcal{G}, \delta) + \mathcal{A} ( \mathcal{G}, \mathcal{H}, \delta).
			\end{equation}
			\begin{proof}
				Since $\delta$ is a metric, by the triangle inequality, it follows that
				 	\begin{equation}
				 		\label{eq:prove_triangle_1}
				 		\delta ( f, h ) \leq\, \delta (f, g) + \delta ( g, h ), \text{ for all }f \in \mathcal{F},g \in \mathcal{G},h \in \mathcal{H}. 
				 	\end{equation}
				 	Taking $\inf_{h \in \mathcal{H}}$ in \eqref{eq:prove_triangle_1} yields
				 	\begin{align}
				 		\inf_{h \in \mathcal{H}} \delta ( f, h ) \leq&\; \delta (f, g) + \inf_{h \in \mathcal{H}}\delta ( g, h ) \\
				 		\leq &\; \delta (f, g) + \mathcal{A} ( \mathcal{G}, \mathcal{H}, \delta ), \text{ for all } f \in \mathcal{F}, g \in \mathcal{G}, \label{eqline:prove_triangle_2}
				 	\end{align}
				 	which, upon taking $\inf_{g \in \mathcal{G}}$  on both sides, yields 
				 	\begin{equation}
				 		\label{eq:prove_triangle_3}
				 		\inf_{h \in \mathcal{H}} \delta ( f, h ) \leq\, \inf_{g \in \mathcal{G}} \delta (f, g) + \mathcal{A} ( \mathcal{G}, \mathcal{H}, \delta ), \text{ for all } f \in \mathcal{F}.
				 	\end{equation}
				 	Taking $\sup_{f \in \mathcal{F}}$ on both sides of \eqref{eq:prove_triangle_3} finalizes the proof.
			\end{proof}

		\end{lemma}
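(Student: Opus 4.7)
The plan is to derive the inequality by threading the ordinary metric-space triangle inequality through the three operations $\sup_{f \in \mathcal{F}}$, $\inf_{g \in \mathcal{G}}$, and $\inf_{h \in \mathcal{H}}$ that together define the minimax error, being careful about the order in which each operation is applied so that the ``free'' variable on each side of the inequality remains available for the next step.

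Concretely, I would start from the pointwise triangle inequality $\delta(f,h) \leq \delta(f,g) + \delta(g,h)$, which holds for every $f \in \mathcal{F}$, $g \in \mathcal{G}$, and $h \in \mathcal{H}$. First I would take $\inf_{h \in \mathcal{H}}$ on both sides; since the first term on the right does not depend on $h$, this yields
\begin{equation*}
\inf_{h \in \mathcal{H}} \delta(f,h) \leq \delta(f,g) + \inf_{h \in \mathcal{H}} \delta(g,h) \leq \delta(f,g) + \mathcal{A}(\mathcal{G}, \mathcal{H}, \delta),
\end{equation*}
where in the second step I use $\inf_{h \in \mathcal{H}} \delta(g,h) \leq \sup_{g' \in \mathcal{G}} \inf_{h \in \mathcal{H}} \delta(g',h) = \mathcal{A}(\mathcal{G}, \mathcal{H}, \delta)$, valid since $g \in \mathcal{G}$.

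Next, I would take $\inf_{g \in \mathcal{G}}$ of both sides; as the left-hand side no longer depends on $g$ and the term $\mathcal{A}(\mathcal{G}, \mathcal{H}, \delta)$ is also independent of $g$, I obtain
\begin{equation*}
\inf_{h \in \mathcal{H}} \delta(f,h) \leq \inf_{g \in \mathcal{G}} \delta(f,g) + \mathcal{A}(\mathcal{G}, \mathcal{H}, \delta), \quad \text{for all } f \in \mathcal{F}.
\end{equation*}
Finally, taking $\sup_{f \in \mathcal{F}}$ on both sides and recognizing the definitions of $\mathcal{A}(\mathcal{F}, \mathcal{H}, \delta)$ and $\mathcal{A}(\mathcal{F}, \mathcal{G}, \delta)$ yields the claimed inequality.

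There is no real obstacle here: the argument is just a careful bookkeeping of which variables are live at each stage, exploiting that $\inf$ and $\sup$ of a sum are bounded above by the sum when one summand is constant in the minimization variable. The only thing worth being precise about is that we never exchange a $\sup$ with an $\inf$ in a direction that would require a minimax inequality in the wrong sense; the ordering above avoids this.
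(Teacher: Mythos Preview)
Your proposal is correct and follows exactly the same approach as the paper's proof: start from the pointwise triangle inequality, take $\inf_{h \in \mathcal{H}}$, bound $\inf_{h}\delta(g,h)$ by $\mathcal{A}(\mathcal{G},\mathcal{H},\delta)$, then take $\inf_{g \in \mathcal{G}}$, and finally $\sup_{f \in \mathcal{F}}$. The sequence of steps and the justifications match the paper essentially line for line.
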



	\subsection{Operations Over Functions Realized by ReLU Networks} 
	\label{sub:proof_of_lemma_lem:algebra_on_relu_networks}
	This section is concerned with the construction of ReLU networks realizing the composition, linear combination, and ``parallelization" of functions. We start with a technical lemma which shows how ReLU networks can be augmented to deeper networks while retaining their input-output relation. This result has been documented previously in \cite{deep-it-2019}, but we restate it here in our notation and provide a proof, for
    the sake of clarity of exposition and completeness.
    		\begin{lemma}
			\label{lem:extension}
			Let $ d,d',W,L \in \mathbb{N}$ and $B \in \mathbb{R}_+$ with $B \geq 1$. Then, for $f \in \mathcal{R} ( ( d,d' ),W,L,B )$, there exists a network $\Phi \in \mathcal{N} ( ( d,d' ),\max \{ W, 2d' \},L,B)$ such that $\mathcal{L} ( \Phi ) = L$ and  $R ( \Phi ) = f$.
			\begin{proof}
				By definition, there exists a network $\widetilde{\Phi} = ( ( \tilde{A}_\ell,\tilde{b}_\ell ))_{\ell = 1}^{\tilde{L}} \in \mathcal{N} ( ( d,d' ),W,L,B)$, with $\tilde{L} \leq L$, 
				such that $R ( \widetilde{\Phi} ) = f$. If $\tilde{L} = L$, the proof is finished by taking $\Phi = \widetilde{\Phi}$.  For $L > \tilde{L}$, we set $\Phi := ( ( A_\ell,b_\ell ))_{\ell = 1}^L$
				with 
				\begin{equation}
					\label{eq:extension}
					(A_\ell, b_\ell)  := (\tilde{A}_\ell, \tilde{b}_\ell),\quad \text{for } 1 \leq \ell < \tilde{L},\footnote{Here and in what follows, we use the convention that if there does not exist an $\ell$ satisfying the constraint, the assignment is skipped; in the present case, this would apply if $\tilde{L} = 1$.  This convention is to unify the discussion on general cases and corner cases.}
				\end{equation}
				$A_{\tilde{L}} := \begin{pmatrix} \tilde{A}_{\tilde{L}}\\-\tilde{A}_{\tilde{L}} \end{pmatrix}$, $b_{\tilde{L}} := \begin{pmatrix} 	\tilde{b}_{\tilde{L}}\\ -\tilde{b}_{\tilde{L}} \end{pmatrix},$
				$A_\ell := I_{2d'}$, $b_\ell := 0_{2d'}$ for $\ell$ such that $\tilde{L} < \ell < L  $, and $A_{L} := \begin{pmatrix}
					I_{d'}& -I_{d'}
				\end{pmatrix}$, $b_{L} := 0$. We have  $\Phi \in \mathcal{R} ( ( d,d' ),\max \{ W, 2d' \},L,B)$ and $\mathcal{L} ( \Phi ) = L$. It remains to show that $R (\Phi ) = f$. This will be effected by noting that 
				\begin{align}
				 	\affine ( A_{L}, b_{L} ) \circ \rho \circ \cdots \circ \rho\circ  \affine ( A_{\tilde{L}}, b_{\tilde{L}} ) =&\, \affine ( A_{L}, b_{L} ) \circ \rho\circ  \affine ( A_{\tilde{L}}, b_{\tilde{L}} ) \label{eqline:rho_rho_0}\\
				 	=&\, \affine ( \tilde{A}_{\tilde{L}}, \tilde{b}_{\tilde{L}} ) \label{eqline:rho_rho},
				 \end{align} 
				 where in \eqref{eqline:rho_rho_0} we used  $\rho \circ \affine ( A_\ell, b_\ell ) = \rho \circ \affine ( I_{2d'}, 0_{2d'} ) = \rho$, for $\tilde{L} < \ell < L$, and $\rho\circ \rho = \rho$, and \eqref{eqline:rho_rho} is by $\left(\affine ( A_{L}, b_{L} ) \circ\,  \rho\, \circ  \affine ( A_{\tilde{L}}, b_{\tilde{L}} )\right) (x) = \rho ( \tilde{A}_{\tilde{L}} x + \tilde{b}_{\tilde{L}} ) - \rho ( - \tilde{A}_{\tilde{L}} x  - \tilde{b}_{\tilde{L}} ) = \tilde{A}_{\tilde{L}} x + \tilde{b}_{\tilde{L}} =  \affine ( \tilde{A}_{\tilde{L}}, \tilde{b}_{\tilde{L}} )(x)$, for $x \in \mathbb{R}_{d''}$, with $d''$ denoting the number of rows of $\tilde{A}_{\tilde{L}}$. Combining \eqref{eqline:rho_rho_0}-\eqref{eqline:rho_rho} and \eqref{eq:extension}, we see that $R ( \Phi ) = R ( \widetilde{\Phi} ) = f$. 
			\end{proof}
		\end{lemma}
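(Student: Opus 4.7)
My plan is to handle the trivial case $\tilde L = L$ first and then pad any strictly shorter network up to depth exactly $L$ without changing its realization, using the identity $x = \rho(x) - \rho(-x)$ so that intermediate ReLUs do not destroy information.

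Concretely, start from a configuration $\widetilde{\Phi} = ((\tilde A_\ell,\tilde b_\ell))_{\ell=1}^{\tilde L} \in \mathcal{N}((d,d'),W,L,B)$ with $R(\widetilde\Phi) = f$ and $\tilde L \leq L$. If $\tilde L = L$, set $\Phi := \widetilde\Phi$ and we are done. Otherwise, keep the first $\tilde L - 1$ layers untouched, and replace the final layer by one of output dimension $2d'$ stacking $\tilde A_{\tilde L}$ on top of $-\tilde A_{\tilde L}$, so that after applying $\rho$ this layer produces the vector $(\rho(y),\rho(-y))$ where $y := \tilde A_{\tilde L} x_{\tilde L -1} + \tilde b_{\tilde L}$. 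Next, insert $L - \tilde L -1$ identity layers of width $2d'$ (taking $A_\ell = I_{2d'}$, $b_\ell = 0_{2d'}$); because the incoming signal is already nonnegative and $\rho\circ\rho=\rho$, each such layer preserves the pair $(\rho(y),\rho(-y))$. Finally, use the output layer with $A_L = (I_{d'}\,\,{-}I_{d'})$ and $b_L = 0$, which computes $\rho(y) - \rho(-y) = y$, recovering the original output.

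The resulting configuration $\Phi$ clearly has depth exactly $L$, weights in $\{-1,0,1\}\cup \coef(\widetilde\Phi)$ so that $\mathcal{B}(\Phi) \leq \max\{1,B\} = B$ (using $B\geq 1$), and width $\max\{W,2d'\}$ (the duplicated layer and the identity layers have width $2d'$, while the earlier layers have width at most $W$). The verification that $R(\Phi)=f$ is a short computation: telescoping the identity/ReLU layers shows that the composition $S(A_L,b_L)\circ\rho\circ\cdots\circ\rho\circ S(A_{\tilde L}, b_{\tilde L})$ reduces to $S(\tilde A_{\tilde L},\tilde b_{\tilde L})$, after which the unchanged prefix completes the realization of $f$.

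There is no real obstacle here; the only mildly subtle point is the bookkeeping when $\tilde L = L-1$ or $\tilde L = 1$, where the ``insert identity layers'' phase or the ``keep earlier layers'' phase is vacuous. I would handle these by the usual convention that an empty range of indices means ``skip the assignment,'' and then verify the telescoping identity in the one remaining case directly.
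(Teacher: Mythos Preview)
Your proposal is correct and follows essentially the same construction as the paper: both start from a realizing configuration of depth $\tilde L\le L$, duplicate the last affine layer with a sign flip to produce $(\rho(y),\rho(-y))$, pad with $I_{2d'}$ identity layers, and recombine via $(I_{d'}\;\;-I_{d'})$ using $\rho(y)-\rho(-y)=y$. Your handling of the corner cases $\tilde L=1$ and $\tilde L=L-1$ and your observation that $B\ge 1$ is needed to absorb the new $\{-1,0,1\}$ weights match the paper's treatment.
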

		We are now ready to state the main result of this section.
		\begin{lemma}
			\label{lem:algebra_on_ReLU_networks}
			Let $a \in \mathbb{R}$, $d,d',d'' \in \mathbb{N}$, $W_i,L_i, \in \mathbb{N}$, and $B_i \in \mathbb{R}_+$ with $B_i \geq 1$, $i = 1,2,3,4$, and let 

            \begin{align}
                 f_1 \in&\, \mathcal{R} ( ( d, d' ), W_1, L_1, B_1 ), \label{eqline:f_1}\\
                 f_2 \in&\, \mathcal{R} ( (d, d''), W_2, L_2, B_2 ), \label{eqline:f_2}\\
                 f_3 \in&\, \mathcal{R} ( ( d', d'''), W_3, L_3, B_3 ), \label{eqline:f_3}\\
                 f_4 \in&\, \mathcal{R} ( ( d, d'), W_4, L_4, B_4 ). \label{eqline:f_4}
            \end{align}


			\vspace{2pt}
			\noindent  Then,
			\vspace{-10pt}
			\begin{align}
				a \cdot f_1 \in&\, \mathcal{R} ( ( d, d' ), W_1, L_1, \max \{ | a | B_1, B_1  \} ), \label{eq:scalar_product}\\
				( f_1,f_2 ) \in &\, \mathcal{R} ( ( d, d' + d''), \max \{ W_1, 2d'\} + \max \{ W_2, 2d''\},  \max \{  L_1, L_2\},  \max \{ B_1,B_2 \}  ),\label{eq:parallelization}\\
				f_3 \circ f_1 \in&\, \mathcal{R} (( d, d''' ), \max \{ W_1, W_3, 2d' \}, L_1 + L_3, \max \{ B_1,B_3\}), \label{eq:composition}\\
				f_1 + f_4 \in&\, \mathcal{R} ( ( d,d' ), \max \{ W_1, 2d'\} + \max \{ W_4, 2d'\},  \max \{  L_1, L_4\} + 1, \max \{ B_1,B_4 \} ), \label{eq:summation}
			\end{align}
			where, for $x \in \mathbb{R}^d$, $(a \cdot f_1) (x) := a \cdot f_1(x)$, $( f_1,f_2 ) ( x ) := ( f_1 ( x ), f_2 ( x ) )$, $f_3 \circ f_1 := f_3 ( f_1 ( x ) )$, and $( f_1 + f_4  ) ( x ) := f_1 ( x ) + f_4 ( x )$.
			
			\begin{proof}
				We shall prove \eqref{eq:scalar_product}-\eqref{eq:summation} individually as follows.
				\begin{enumerate}
					\item According to \eqref{eqline:f_1}, there exists a $\Phi_1 = ( ( A_{1,\ell},b_{1,\ell} ) )_{\ell = 1}^{\mathcal{L} ( \Phi_1 )} \in \mathcal{R} ( ( d, d' ), W_1, L_1, B_1 )$ such that $R ( \Phi_1 ) = f_1$.  Now, let $\widetilde{\Phi} = ( ( \tilde{A}_\ell,\tilde{b}_\ell ) )_{\ell = 1}^{\mathcal{L} ( \Phi_1 )}$ with $( \tilde{A}_\ell, \tilde{b}_\ell  ) := ( A_{1,\ell}, b_{1,\ell} )$, for $1 \leq \ell < \mathcal{L} ( \Phi_1 )$, and $( \tilde{A}_{\mathcal{L} ( \Phi_1 )}, \tilde{b}_{\mathcal{L} ( \Phi_1 )}  ) = ( a \cdot A_{1, \mathcal{L} ( \Phi_1 )}, a \cdot b_{1, \mathcal{L} ( \Phi_1 )} )$. We hence have $ \widetilde{\Phi}  \in \mathcal{N} ( ( d, d' ), W_1, L_1, \allowbreak  \max \{ | a | B_1, B_1  \} )$ and   $R ( \widetilde{\Phi} )(x)  = a \cdot R ( \Phi_1 )(x) = a \cdot f_1(x)$, $x \in \mathbb{R}^d$, which  establishes \eqref{eq:scalar_product}.

					\item Application of Lemma~\ref{lem:extension} to $f_1 \in \mathcal{R} ( ( d, d' ), W_1, L_1, B_1 ) \subseteq \mathcal{R} ( ( d, d' ), W_1, \max \{ L_1,L_2 \}, B_1 ) $ implies the existence of a network $$\Phi_1 = ( ( A_{1,\ell},b_{1,\ell} ) )_{\ell = 1}^{\max \{ L_1,L_2 \}} \in \mathcal{N} ( ( d, d' ), \max \{ W_1, 2d' \}, \max \{ L_1,L_2 \}, B_1 )$$ such that $R ( \Phi_1  ) = f_1$.    Similarly, application of Lemma~\ref{lem:extension} to $f_2 \in  \mathcal{R} ( ( d, d'' ), W_2, \allowbreak  \max \{ L_1,L_2 \}, B_2 )$ implies the existence of a $\Phi_2 = ( ( A_{2,\ell},b_{2,\ell} ) )_{\ell = 1}^{\max \{ L_1,L_2 \}} \in \mathcal{N} ( ( d, d'' ), \allowbreak  \{ W_2, 2d'' \}, \max \{ L_1,L_2 \}, B_2 )$ such that $R ( \Phi_2  ) = f_2$.  Now set $\widetilde{\Phi} :=  ( ( \tilde{A}_\ell,\tilde{b}_\ell ) )_{\ell = 1}^{\max \{ L_1,L_2 \}} $ with $ \tilde{A}_1 := \begin{pmatrix} A_{1,1}\\ A_{2,1} \end{pmatrix}$, $ \tilde{b}_1 := \begin{pmatrix} b_{1,1}\\ b_{2,1} \end{pmatrix}$, and $ \tilde{A}_\ell := \text{diag} (A_{1,\ell}, A_{2,\ell}) $, $ \tilde{b}_\ell := \begin{pmatrix} b_{1,\ell}\\ b_{2,\ell} \end{pmatrix}$, for $ 1 < \ell \leq  \max \{ L_1,L_2 \}$. We then get $R ( \widetilde{\Phi} )(x) = ( R ( \Phi_1 )(x), R ( \Phi_2)(x)  ) $, $x \in \mathbb{R}^d$, and $$\widetilde{\Phi} \in \mathcal{N} ( ( d, d'+d'' ), \{ W_1, 2d' \} + \{ W_2, 2d'' \}, \max \{ L_1,L_2 \}, \max \{ B_1,B_2 \} ),$$ which establishes \eqref{eq:parallelization}.

					\item It follows from $f_1 \in \mathcal{R} ( ( d, d' ), W_1, L_1, B_1 ) $ that there exists a $\Phi_1 = ( ( A_{1,\ell},b_{1,\ell} ) )_{\ell = 1}^{\mathcal{L} ( \Phi_1 )} \in \mathcal{N} ( ( d, d' ), W_1,L_1, B_1)$ such that $R ( \Phi_1  ) = f_1$.   Similarly, it follows from $f_3 \in \mathcal{R} ( ( d', d''' ), W_3,\allowbreak  L_3, B_3 ) $ that there exists a $\Phi_3 = ( ( A_{3,\ell},b_{3,\ell} ) )_{\ell = 1}^{\mathcal{L} ( \Phi_3 )} \in \mathcal{N} ( ( d', d''' ), W_3, L_3, B_3)$ so that $R ( \Phi_3  ) = f_3$. Now, let $\widetilde{\Phi} :=  ( ( \tilde{A}_\ell,\tilde{b}_\ell ) )_{\ell = 1}^{L_1 + L_3} $ with
					\begin{equation}
						\label{eq:proof_composition_1}
						( \tilde{A}_\ell, \tilde{b}_\ell   ) := ( A_{1,\ell}, b_{1,\ell}   ), \quad \text{for } 1 \leq \ell < \mathcal{L} ( \Phi_1 ),
					\end{equation}
					\begin{align*}
						&\tilde{A}_{\mathcal{L} ( \Phi_1 )} = \begin{pmatrix} A_{1, \mathcal{L} ( \Phi_1 )}\\ -A_{1, \mathcal{L} ( \Phi_1 )} \end{pmatrix},  &&\tilde{b}_{\mathcal{L} ( \Phi_1 )}  = \begin{pmatrix} b_{1, \mathcal{L} ( \Phi_1 )}\\ - b_{1, \mathcal{L} ( \Phi_1 )} \end{pmatrix},\\
                        &\tilde{A}_{\mathcal{L} ( \Phi_1 ) +1} = \begin{pmatrix}
						A_{3, 1}  - A_{3, 1}
					\end{pmatrix}, &&\tilde{b}_{\mathcal{L} ( \Phi_1 ) +1} = b_{3, 1},\quad
                    \end{align*}
					and 
					\begin{equation}
						\label{eq:proof_composition_2}
						( \tilde{A}_\ell, \tilde{b}_\ell   ) = ( A_{3, \ell - \mathcal{L} ( \Phi_1 )}, b_{3, \ell - \mathcal{L} ( \Phi_1 )}   ), \quad \text{for } \mathcal{L} ( \Phi_1 ) + 1 <  \ell \leq \mathcal{L} ( \Phi_1 )  + \mathcal{L} ( \Phi_3 ). 	
					\end{equation} 

					Next, note that
					\begin{align}
						&\,\,(\affine (\tilde{A}_{\mathcal{L} ( \Phi_1 ) + 1}, \tilde{b}_{\mathcal{L} ( \Phi_1 ) + 1}   ) \circ \rho \circ \affine (\tilde{A}_{\mathcal{L} ( \Phi_1 ) }, \tilde{b}_{\mathcal{L} ( \Phi_1 ) }   ))(x) \label{eqline:proof_composition_3}\\
						&=   \left(\affine (\bigl(\begin{matrix}A_{3, 1} &  - A_{3, 1} \end{matrix}\bigr), b^3_{1}   ) \circ \rho \circ \affine \Biggl(\Biggl(\begin{matrix} A_{1, \mathcal{L} ( \Phi_1 )}\\  - A_{1, \mathcal{L} ( \Phi_1 )} \end{matrix}\Biggr), \Biggl( \begin{matrix} b_{1, \mathcal{L} ( \Phi_1 )}\\ -b_{1, \mathcal{L} ( \Phi_1 )} \end{matrix} \Biggr)  \Biggr)\right) (x) \\
						&= A_{3, 1} \rho ( A_{1, \mathcal{L} ( \Phi_1 )} x + b_{1, \mathcal{L} ( \Phi_1 )}  ) - A_{3, 1} \rho ( - A_{1, \mathcal{L} ( \Phi_1 )} x - b_{1,\mathcal{L} ( \Phi_1 )}  ) + b_{3,1}  \\
						&=  A_{3,1} (A_{1,\mathcal{L} ( \Phi_1 )} x + b_{1,\mathcal{L} ( \Phi_1 )}  ) + b_{3,1}\\
						&=  ( \affine (A_{3,1} , b_{3, 1}   ) \circ  \affine ( A_{1, \mathcal{L} ( \Phi_1 )},  b_{1, \mathcal{L} ( \Phi_1 )}   ) ) (x), \quad \text{for }x \in \mathbb{R}^d. \label{eqline:proof_composition_4}
					\end{align}
					Combining \eqref{eq:proof_composition_1}, \eqref{eq:proof_composition_2},  and \eqref{eqline:proof_composition_3}-\eqref{eqline:proof_composition_4}, we get $R ( \widetilde{\Phi}  ) = R ( \Phi_3 ) \circ R ( \Phi_1 ) = f_3 \circ f_1 $, which together with 
					\begin{equation*}
						\widetilde{\Phi} \in \mathcal{N}(( d, d''' ), \max \{ W_1, W_3, 2d' \}, L_1 + L_3, \max \{ B_1,B_3\})
					\end{equation*}
					establishes \eqref{eq:composition}.

					\item Let $f_5 = \affine \bigl( \bigl( \begin{matrix}
						I_{d'} & I_{d'}
					\end{matrix}\bigr), 0_{d'} \bigr) \in \mathcal{R} ( ( 2d', d' ), 2d',1, 1 )$ so that $f_5(y,z) = y + z$, for all $y,z \in \mathbb{R}^{d'}$. We have
					\begin{equation}
					\label{eq:f_1_plus_f_4}
						f_1 + f_4 = f_5 \circ ( f_1, f_4 ).
					\end{equation} 
					Application of \eqref{eq:parallelization} with $f_2$ replaced by $f_4$ yields 
					\begin{equation*}
						( f_1, f_4 ) \in \mathcal{R} ( ( d,2d'), \max \{ W_1, 2d'\} + \max \{ W_4, 2d'\},  \max \{  L_1, L_4\},  \max \{ B_1,B_4 \}  ).
					\end{equation*}
					Finally, using \eqref{eq:composition} with $f_3$ replaced by $f_5$ and $f_1$ replaced by $( f_1, f_4 )$, we obtain
					\begin{align*}
						&\,\,f_1 + f_4\\
                            &=\, f_5 \circ ( f_1, f_4 ) \\
						&\in \, \mathcal{R} (( d, d' ), \max \{ W_1, 2d'\} + \max \{ W_4, 2d'\}, \max \{  L_1, L_4\} + 1, \max \{ B_1,B_4\}).\label{eqline:apply_composition_for_summation}\qedhere
					\end{align*}
				\end{enumerate}
			\end{proof}
		\end{lemma}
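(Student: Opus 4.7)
\medskip
\noindent\textbf{Proof plan for Lemma~\ref{lem:algebra_on_ReLU_networks}.}
The plan is to handle the four claims (scalar multiplication, parallelization, composition, summation) one at a time, building each new realization from a fixed configuration realizing the given $f_i$. The common technical issue throughout is that network configurations have an affine map (not followed by $\rho$) at the last layer, so gluing two configurations together along an interface must account for the missing activation. Lemma~\ref{lem:extension} (padding with identity layers via the identity $x=\rho(x)-\rho(-x)$) will be used to equalize depths whenever needed, and the same $x=\rho(x)-\rho(-x)$ trick will handle the affine–affine junction that arises in composition.

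For (\ref{eq:scalar_product}), I would pick a configuration $\Phi_1=((A_{1,\ell},b_{1,\ell}))_{\ell=1}^{\mathcal L(\Phi_1)}$ of $f_1$ and simply replace the last tuple by $(a A_{1,\mathcal L(\Phi_1)}, a b_{1,\mathcal L(\Phi_1)})$. Since the last layer is affine (no $\rho$ after it), this exactly multiplies the output by $a$. Width, depth, and earlier-layer weight magnitudes are unchanged; the new last-layer weight magnitude is $|a|B_1$, which gives the stated bound $\max\{|a|B_1,B_1\}$. For (\ref{eq:parallelization}), I would first invoke Lemma~\ref{lem:extension} to pad $\Phi_1,\Phi_2$ up to common depth $\max\{L_1,L_2\}$ at the cost of widening them to at most $\max\{W_1,2d'\}$ and $\max\{W_2,2d''\}$ respectively, then stack them by taking $\tilde A_1=\binom{A_{1,1}}{A_{2,1}}$ in the shared input layer and $\tilde A_\ell=\operatorname{diag}(A_{1,\ell},A_{2,\ell})$, $\tilde b_\ell=\binom{b_{1,\ell}}{b_{2,\ell}}$ thereafter. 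Widths add and the weight magnitude is the max of the two.

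The substantive step is (\ref{eq:composition}). Here I take configurations $\Phi_1,\Phi_3$ of $f_1,f_3$ and want to concatenate them, but the junction is $S(A_{3,1},b_{3,1})\circ S(A_{1,\mathcal L(\Phi_1)},b_{1,\mathcal L(\Phi_1)})$: two consecutive affine maps with no $\rho$ between them, which collapse to a single affine map rather than a proper ReLU layer. The fix is to insert a ``doubling'' ReLU layer that computes $x = \rho(x) - \rho(-x)$ componentwise. Concretely, I replace $A_{1,\mathcal L(\Phi_1)}$ by $\binom{A_{1,\mathcal L(\Phi_1)}}{-A_{1,\mathcal L(\Phi_1)}}$ and $b_{1,\mathcal L(\Phi_1)}$ by $\binom{b_{1,\mathcal L(\Phi_1)}}{-b_{1,\mathcal L(\Phi_1)}}$, then follow it by the map $(A_{3,1}\ -A_{3,1})$ with bias $b_{3,1}$. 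The verification is the short identity $A_{3,1}\rho(y)-A_{3,1}\rho(-y)=A_{3,1}y$ applied to $y=A_{1,\mathcal L(\Phi_1)}x+b_{1,\mathcal L(\Phi_1)}$. The doubled intermediate layer has width at most $2d'$, so the overall width is bounded by $\max\{W_1,W_3,2d'\}$, depth by $L_1+L_3$, and the weight magnitude is unchanged because only sign flips and a duplication were introduced. This is the step I expect to require the most care in bookkeeping the indices.

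Finally, (\ref{eq:summation}) follows from (\ref{eq:parallelization}) and (\ref{eq:composition}): write $f_1+f_4=S\circ(f_1,f_4)$ where $S(y,z)=y+z$ is realized by the trivial $1$-layer network $\bigl(\begin{matrix}I_{d'}&I_{d'}\end{matrix}\bigr)$ with zero bias (depth $1$, width $2d'$, weight magnitude $1$). Apply parallelization to obtain $(f_1,f_4)$ with the stated widths and weight magnitudes, then compose with $S$; composition adds $1$ to the depth (which is where the extra ``$+1$'' in the depth bound originates) and does not further increase width or weight magnitude beyond $\max\{B_1,B_4\}$.
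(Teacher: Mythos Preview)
Your proposal is correct and follows essentially the same approach as the paper's proof: scalar multiplication by modifying the last affine layer, parallelization via Lemma~\ref{lem:extension} followed by block-diagonal stacking, composition via the $x=\rho(x)-\rho(-x)$ doubling trick at the junction, and summation as composition of the parallelization with the one-layer sum map $(I_{d'}\ I_{d'})$. The bookkeeping you outline matches the paper's construction in all four parts.
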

	

	\subsection{Depth-Weight-Magnitude Tradeoff} 
	\label{sub:trade_depth_for_weight_magnitute}
		This section is concerned with the realization of given ReLU networks by corresponding deeper networks of smaller weight-magnitude. The main result is the following proposition.
		\begin{proposition}
			\label{prop:depth_weight_magnitude_tradeoff}
			Let $W,L \in \mathbb{N}$ with $W \geq 2$, $L' \in \mathbb{N} \cup \{ 0 \}$, and $B,B' \in \mathbb{R}$ with $B,B'\geq 1$. It holds that
			\begin{equation}
				\label{eq:depth_weight_magnitude_tradeoff}
				\frac{(B')^{L + L'}   \lfloor W\slash 2 \rfloor^{L'}}{B^L}  \cdot \mathcal{R} ( W, L, B ) \subseteq \mathcal{R} ( W, L + L', B' ).
			\end{equation}
			In particular, if $\frac{(B')^{L'+L}   \lfloor W\slash 2 \rfloor^{L'}}{B^L}  \geq 1$, then 
			\begin{equation*}
				\mathcal{R} ( W,L, B ) \subseteq \mathcal{R} ( W, L + L', B' ).
			\end{equation*}
		\end{proposition}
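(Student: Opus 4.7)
My plan is a two-stage constructive proof: for any $f=R(\Phi)\in\mathcal{R}(W,L,B)$, I will build $\Phi'\in\mathcal{N}(W,L+L',B')$ with $R(\Phi')=C\cdot f$, where $C=(B')^{L+L'}\lfloor W/2\rfloor^{L'}/B^L$. The two stages correspond to the two factors $(B'/B)^L$ and $(B'\lfloor W/2\rfloor)^{L'}$ in $C$.

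Stage 1 rescales the $L$ original layers to gain a multiplicative factor $(B'/B)^L$ at the output. Using positive homogeneity of $\rho$, the natural choice is $\tilde A_i=(B'/B)A_i$ and $\tilde b_i=(B'/B)^i b_i$; a straightforward induction on $i$ gives new pre-activations $\tilde z_i=(B'/B)^i z_i$ and hence an output of $(B'/B)^L f(x)$. The matrices $\tilde A_i$ automatically satisfy $\|\tilde A_i\|_\infty\le B'$, but the biases $\tilde b_i$ can have magnitude $(B')^i/B^{i-1}>B'$ for $i\ge 2$. To keep everything within weight budget $B'$, I route the oversized biases through an auxiliary constant-propagation channel: a single extra neuron per layer that produces the value $(B')^i$ at depth $i$, instantiated with bias $B'$ at layer $1$ and carried forward by weight $B'$ in each subsequent layer (using $\rho((B')\cdot(B')^{i-1})=(B')^i$). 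The offending portion of $\tilde b_i$ is then realized by a weight $w=B'b_i/B^i$ from this channel into the main neurons of layer $i$, whose magnitude $B'/B^{i-1}\le B'$ since $B\ge 1$.

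Stage 2 appends $L'$ amplifier layers on top. At the interface I replicate the scalar $(B'/B)^L f(x)$ into a paired representation (with both $+$ and $-$ copies) parallelized $\lfloor W/2\rfloor$ times, using the identity $\rho(u)-\rho(-u)=u$ so that the original scalar can always be recovered despite the ReLU. Each amplifier layer is then one additional affine map with weights $\pm B'$: applied to the paired/parallel representation of $u$, such a layer sums over the $\lfloor W/2\rfloor$ copies with outgoing weights $B'$, thereby outputting the paired representation of $B'\lfloor W/2\rfloor\cdot u$. Iterating through $L'$ such layers (the last of which consolidates the pair into a one-dimensional output) multiplies the output by $(B'\lfloor W/2\rfloor)^{L'}$. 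Composing with Stage 1 gives overall scaling $(B'/B)^L(B'\lfloor W/2\rfloor)^{L'}=C$, and the depth, width and weight bounds are as required.

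The main technical obstacle is Stage 1: under the natural per-layer rescaling the biases grow exponentially with depth, and absorbing this growth via a constant channel consumes one width slot per layer, so the architecture must be arranged so that this slot fits inside the width budget $W\ge 2$; Stage 2 is essentially bookkeeping once the paired representation is in place, and the special case $\frac{(B')^{L+L'}\lfloor W/2\rfloor^{L'}}{B^L}\ge 1$ is then an immediate corollary, since the scaled set in that case contains $\mathcal{R}(W,L,B)$ itself.
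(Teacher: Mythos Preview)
Your Stage 2 (amplifier layers via the paired $\pm$ representation replicated $\lfloor W/2\rfloor$ times, each layer using weights of magnitude $B'$) and your closing corollary argument are both sound and essentially match the paper's approach.

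The gap is in Stage 1. You correctly observe that under $\tilde A_i=(B'/B)A_i$, $\tilde b_i=(B'/B)^ib_i$ the bias magnitudes blow up when $B'>B$, and you propose to deliver $(B'/B)^ib_i$ through an auxiliary constant channel carrying $(B')^i$. But that channel is a \emph{genuinely additional} neuron in every hidden layer; a configuration realizing $f\in\mathcal{R}(W,L,B)$ may already saturate all $W$ slots there, and you never explain how to free one---``the architecture must be arranged so that this slot fits inside the width budget $W\ge2$'' is precisely the missing argument. In fact no such arrangement can exist, because the statement itself is false at $L'=0$: take $W=L=2$, $B=1$, $B'=10$, so $C=100$; the constant function $f\equiv 3$ lies in $\mathcal{R}(2,2,1)$ (use $A_1=(0,0)^T$, $b_1=(1,1)^T$, $A_2=(1,1)$, $b_2=1$), yet every $g\in\mathcal{R}(2,2,10)$ satisfies $|g(0)|\le 2\cdot10\cdot10+10=210<300$, whence $100f\notin\mathcal{R}(2,2,10)$. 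The paper's proof has a related but different slip: its supporting lemma scales $\tilde b_\ell=(B'/B)\,b_\ell$ rather than $(B'/B)^\ell b_\ell$, which does keep the weight bound but does \emph{not} give $R(\tilde\Phi)=(B'/B)^L R(\Phi)$ once any bias beyond layer~$1$ is nonzero. What does go through within width $W$ is the scaling-down direction $B'\le B$ (then $\|\tilde b_\ell\|_\infty\le(B')^\ell B^{1-\ell}\le B'$ with your bias scaling), and combining that with your Stage 2 yields the weaker but correct inclusion $\dfrac{(B')^{L'}\lfloor W/2\rfloor^{L'}}{B^L}\cdot\mathcal{R}(W,L,B)\subseteq\mathcal{R}(W,L+L',B')$, i.e.\ exponent $L'$ rather than $L+L'$ on $B'$.
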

		For the proof of Proposition~\ref{prop:depth_weight_magnitude_tradeoff}, we need the following two technical lemmata.

		\begin{lemma}
			\label{lem:trade_depth_for_weight_magnitute_1}
			Let $W,L \in \mathbb{N}$ with $W \geq 2$ and $B,B' \in \mathbb{R}$ with $B,B'\geq 1$. We have
			\begin{equation}
			\label{eq:proof_weight_trade_off_0}
				( B'\slash B )^{L} \cdot \mathcal{R} ( W,L, B ) = \mathcal{R} ( W, L, B' ).
			\end{equation}
		\end{lemma}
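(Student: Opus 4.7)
The plan is to prove the equality by establishing the inclusion
\[
(B'/B)^L \cdot \mathcal{R}(W, L, B) \subseteq \mathcal{R}(W, L, B'),
\]
for all $B, B' \geq 1$, and then deducing the reverse inclusion by symmetry: applying this inclusion with the roles of $B$ and $B'$ swapped yields $(B/B')^L \cdot \mathcal{R}(W, L, B') \subseteq \mathcal{R}(W, L, B)$, which upon multiplying both sides by $(B'/B)^L$ becomes $\mathcal{R}(W, L, B') \subseteq (B'/B)^L \cdot \mathcal{R}(W, L, B)$.

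For the forward inclusion, I would take $f \in \mathcal{R}(W, L, B)$ realized by a configuration $\Phi = ((A_i, b_i))_{i=1}^{\ell}$ of depth $\ell \leq L$ with weights bounded by $B$, and first pad $\Phi$ to depth exactly $L$ using Lemma~\ref{lem:extension} (this is harmless, since the padding uses only weights in $\{-1,0,1\}\subseteq [-B,B]$ as $B \geq 1$). Setting $c = B'/B$, I would then define $\tilde\Phi = ((c A_i, c^i b_i))_{i=1}^{L}$. Using the positive homogeneity of ReLU, namely $\rho(\lambda y) = \lambda\rho(y)$ for every $\lambda \geq 0$, a straightforward induction on $i$ shows that the hidden activations satisfy $\tilde y_i = c^i y_i$, so the output of $\tilde\Phi$ equals
\[
\tilde A_L \tilde y_{L-1} + \tilde b_L \;=\; c A_L \cdot c^{L-1} y_{L-1} + c^L b_L \;=\; c^L \bigl(A_L y_{L-1} + b_L\bigr) \;=\; (B'/B)^L f(x),
\]
as required.

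The main obstacle is the bias bound $\|\tilde b_i\|_\infty \leq c^i B \leq B'$. The matrix bound $\|\tilde A_i\|_\infty \leq cB = B'$ is immediate, and when $c \leq 1$ (i.e.\ $B' \leq B$) the bias bound also holds trivially, since $c^i \leq c$ for every $i \geq 1$ gives $c^i B \leq cB = B'$; in this regime the forward inclusion is established directly. When $c > 1$ the factors $c^i B$ exceed $B'$ for $i \geq 2$ and the naive construction breaks, but the resulting gap can be closed by the symmetry argument above: the $c \leq 1$ inclusion applied to the pair $(B', B)$ already provides the reverse direction of the original claim when $B' \geq B$, so in fact only one of the two cases needs a direct construction, and the other follows for free. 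The only remaining bookkeeping is to check that the padded network, together with the scaling, indeed remains within width $W$ and to combine the two cases into the single stated equality.
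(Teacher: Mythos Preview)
Your layer-dependent bias scaling $\tilde\Phi = ((cA_i, c^i b_i))_{i=1}^L$ with $c = B'/B$ is the correct way to obtain $R(\tilde\Phi) = c^L R(\Phi)$. The paper instead scales uniformly, $\tilde\Phi = ((cA_\ell, cb_\ell))_{\ell=1}^L$, and asserts $R(\tilde\Phi) = c^L R(\Phi)$ by positive homogeneity; but that identity fails for $L \geq 2$, since already at $L = 2$ one has $cA_2\,\rho(cA_1 x + cb_1) + cb_2 = c^2 A_2\,\rho(A_1 x + b_1) + cb_2$, which differs from $c^2 R(\Phi)(x)$ whenever $b_2 \neq 0$ and $c \neq 1$. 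So on the realization side your computation is the careful one.

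However, your symmetry patch does not do what you claim. The direct $c \leq 1$ construction gives the forward inclusion $(B'/B)^L\,\mathcal{R}(W,L,B) \subseteq \mathcal{R}(W,L,B')$ only when $B' \leq B$; swapping $B$ and $B'$ then yields the \emph{reverse} inclusion $\mathcal{R}(W,L,B') \subseteq (B'/B)^L\,\mathcal{R}(W,L,B)$ only when $B' \geq B$. For each fixed pair with $B \neq B'$ you have therefore established exactly one of the two inclusions required for equality, never both --- so ``combining the two cases'' is not bookkeeping, it is the whole problem. Worse, the missing inclusion is genuinely false: with $W = L = 2$, $B = 1$, $B' = 2$, take $g(x) = \rho(x - 0.3) + \rho(x - 0.7) + 1 \in \mathcal{R}(2,2,1)$. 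Then $4g$ equals the constant $4$ on $(-\infty, 0.3]$ and has two breakpoints on $\mathbb{R}$; any width-$2$, depth-$2$ realization $c_1\rho(a_1 x + b_1) + c_2\rho(a_2 x + b_2) + d$ matching it must have $a_1, a_2 > 0$ (otherwise it has at most one breakpoint or a nonzero asymptotic slope at $-\infty$), forcing $d = 4 > 2$. Hence $4g \notin \mathcal{R}(2,2,2)$, and the equality asserted in the lemma fails at $W = 2$. The one-sided $c \leq 1$ inclusion you established is correct; the two-sided statement is not.
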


			\begin{proof}
				Let $g \in \mathcal{R} ( W,L, B )$.
				Application of Lemma~\ref{lem:extension} yields the existence of a network configuration $\Phi = ( ( A_\ell, b_\ell ) )_{\ell = 1}^{L} \in \mathcal{N} ( W,L, B )$ with exactly $L$ layers such that $R ( \Phi ) = g$. Now, let $\widetilde{\Phi} := ( (  \frac{B'}{B} A_\ell, \frac{B'}{B} b_\ell ) )_{\ell = 1}^{L} \in \mathcal{R} ( W, L, B' )$. It then follows from the positive homogeneity of the ReLU function, i.e., $\rho(ax) = a \rho (x) $, for all $x \in \mathbb{R}$ and $a \in \mathbb{R}_+$, that $R ( \widetilde{\Phi} ) = (B'\slash B)^{L} \cdot \mathcal{R} ( \Phi ) = (B'\slash B)^{L} \cdot g$, which in turn implies $( B'\slash B )^{L} \cdot g \in \mathcal{R} ( W, L , B' )$. As the choice of $g \in \mathcal{R} ( W,L, B )$ was arbitrary, we have established that 
				\begin{equation}
				\label{eq:proof_weight_trade_off_1}
					( B'\slash B )^{L} \cdot \mathcal{R} ( W,L, B ) \subseteq \mathcal{R} ( W, L, B' ).
				\end{equation}
				Swapping the roles of $B$ and $B'$ in \eqref{eq:proof_weight_trade_off_1} yields 
				\begin{equation}
				\label{eq:proof_weight_trade_off_2}
					( B\slash B' )^{L} \cdot \mathcal{R} ( W,L, B' ) \subseteq \mathcal{R} ( W, L, B ).
				\end{equation}
				Combining  \eqref{eq:proof_weight_trade_off_1} with \eqref{eq:proof_weight_trade_off_2} establishes \eqref{eq:proof_weight_trade_off_0}.
			\end{proof}

		\begin{lemma}
		\label{lem:trade_depth_for_weight_magnitute_2}
			Let $W,L \in \mathbb{N}$ with $W \geq 2$ and $L' \in \mathbb{N} \cup \{ 0 \}$. We have
			\begin{equation}
				\label{eq:depth_magnitute_tradeoff}
				\lfloor W \slash 2 \rfloor^{L'}  \cdot \mathcal{R} ( W,L, 1 ) \subseteq \mathcal{R} ( W, L + L', 1 ).
			\end{equation}

			\begin{proof}
				For $L' =0$, \eqref{eq:depth_magnitute_tradeoff} is trivially satisfied. We continue with $L' \geq 1$. Let $g \in \mathcal{R} ( W,L, 1 )$. Application of Lemma~\ref{lem:extension} then yields the existence of a network configuration $\Phi = ( ( A_\ell, b_\ell ) )_{\ell = 1}^{L} \in \mathcal{N} ( W,L, 1 )$ with exactly $L$ layers such that $R ( \Phi ) = g$. 
	Let
					$D : = \text{diag} ( 1_{\lfloor W \slash 2 \rfloor \times \lfloor W \slash 2 \rfloor}, 1_{\lfloor W \slash 2 \rfloor \times \lfloor W \slash 2 \rfloor} )$.
Consider $\widetilde{\Phi} := ( (  \tilde{A}_{\ell} ,  \tilde{b}_\ell ) )_{\ell = 1}^{L + L'} $ with 
				\begin{equation}
					\label{eq:construction_multiplication_1}
					(\tilde{A}_\ell, \tilde{b}_\ell) := ( A_\ell, b_\ell  ), \quad \text{for }1 \leq \ell < L,
				\end{equation}
				\begin{align*}
					\tilde{A}_{L} =&  \begin{pmatrix}
						1_{\lfloor W \slash 2 \rfloor \times 1}\\
						-1_{\lfloor W \slash 2 \rfloor \times 1}
					\end{pmatrix} \cdot A_L=  ( \underbrace{A_L^T, \cdots, A_L^T}_{{ \lfloor W \slash 2 \rfloor} \text{ times}},\underbrace{-A_L^T, \cdots, -A_L^T}_{{ \lfloor W \slash 2 \rfloor} \text{ times}}  )^T,\\
					\tilde{b}_{L} =& \begin{pmatrix}
						1_{\lfloor W \slash 2 \rfloor \times 1}\\
						-1_{\lfloor W \slash 2 \rfloor \times 1}
					\end{pmatrix} \cdot b_L = ( \underbrace{b_L, \cdots, b_L}_{{ \lfloor W \slash 2 \rfloor} \text{ times}},\underbrace{-b_L, \cdots, -b_L}_{{ \lfloor W \slash 2 \rfloor} \text{ times}}  )^T,
				\end{align*}
				and 
				\begin{align}
					(\tilde{A}_\ell, \tilde{b}_\ell) =&\, (D, 0_{2 \lfloor W \slash 2  \rfloor}), \quad \text{for } L < \ell < L + L', \label{eqline:construction_multiplication_2}\\
					(\tilde{A}_{L + L'}, \tilde{b}_{L + L'}) =&\, (( 1_{1 \times \lfloor W \slash 2 \rfloor}, -  1_{1 \times \lfloor W \slash 2 \rfloor} ),  0). \label{eqline:construction_multiplication_3}
				\end{align}
				We have $R ( \widetilde{\Phi} ) \in  \mathcal{R} ( W, L + L', 1 )$. Further, it follows from the definition of $( \tilde{A}_{L}, \tilde{b}_L )$ that
				\begin{equation}
				\label{eq:construction_multiplication_4}
					\affine ( \tilde{A}_{L}, \tilde{b}_L ) = \affine \Biggl( \Biggl(\begin{matrix}
						1_{\lfloor W \slash 2 \rfloor \times 1}\\
						-1_{\lfloor W \slash 2 \rfloor \times 1}
					\end{matrix} \Biggr), 0_{2  {\lfloor W \slash 2 \rfloor}} \Biggr)   \circ \affine ( {A}_{L}, {b}_L ).
				\end{equation} 
				Putting everything together, we obtain
				\begin{align*}
					R ( \widetilde{\Phi} ) =&\, \affine \biggl(\begin{pmatrix}
						1_{1 \times\lfloor W \slash 2 \rfloor} &- 1_{1 \times \lfloor W \slash 2 \rfloor} 
					\end{pmatrix}, 0 \biggr) \circ \rho \circ \underbrace{\affine \biggl(D, 0_{2  \lfloor W \slash 2 \rfloor}  \biggr) \circ \cdots \circ \rho \, }_{ \substack {(L' - 1 ) \text{-fold self-composition} \\ \text{of } \affine (D,\, 0_{2  \lfloor W \slash 2 \rfloor}  ) \circ \rho}} \\
					&\, \circ\, \affine \Biggl( \Biggl( \begin{matrix}
						1_{\lfloor W \slash 2 \rfloor \times 1}\\
						-1_{\lfloor W \slash 2 \rfloor \times 1}
					\end{matrix}\Biggr), 0_{2 {\lfloor W \slash 2 \rfloor}} \Biggr) \circ R ( \Phi ) \\
					:=\, &\, h \circ R ( \Phi ),
				\end{align*}
				where we used the convention that $0$-fold self-composition of a function equals the identity function. A direct calculation yields 
				\begin{align*}
					h(x) =&\, \begin{pmatrix}
						1_{1 \times\lfloor W \slash 2 \rfloor} &- 1_{1 \times \lfloor W \slash 2 \rfloor} 
					\end{pmatrix} \cdot \begin{pmatrix}
						\lfloor W \slash 2 \rfloor^{L' -1}\rho(x) \cdot 1_{\lfloor W \slash 2 \rfloor}\\
						\lfloor W \slash 2 \rfloor^{L' -1} \rho(-x) \cdot 1_{\lfloor W \slash 2 \rfloor}
					\end{pmatrix}\\
					=&\, \lfloor W \slash 2 \rfloor^{L'} x, \quad x \in \mathbb{R},
 				\end{align*}
				which together with $R ( \widetilde{\Phi} ) = h \circ R ( \Phi )$ implies $\lfloor W \slash 2 \rfloor^{L'} \cdot R ( \Phi ) = R ( \widetilde{\Phi} ) \in  \mathcal{R} ( W, L + L', 1 )$. Since the choice of $g \in \mathcal{R} ( W,L, 1 )$ was arbitrary, we have established \eqref{eq:depth_magnitute_tradeoff}. \qedhere
			\end{proof}
		\end{lemma}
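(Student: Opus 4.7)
The plan is to handle $L' = 0$ trivially (the inclusion is equality in that case) and then, for $L' \geq 1$, to give a constructive argument: starting from any $g \in \mathcal{R}(W,L,1)$, we build a depth-$(L+L')$ network with width at most $W$ and weight magnitude at most $1$ that realizes $\lfloor W/2 \rfloor^{L'} \cdot g$. The mechanism is the identity $y = \rho(y) - \rho(-y)$, together with the observation that once a coordinate is non-negative the ReLU acts as the identity on it, so a block of $\lfloor W/2 \rfloor$ copies of a non-negative number can be summed and redistributed repeatedly at no cost in weight magnitude.

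I would first invoke Lemma~\ref{lem:extension} to obtain a configuration $\Phi = ((A_\ell,b_\ell))_{\ell=1}^{L} \in \mathcal{N}(W,L,1)$ with exactly $L$ layers such that $R(\Phi) = g$. I then construct $\widetilde{\Phi} = ((\tilde A_\ell,\tilde b_\ell))_{\ell=1}^{L+L'}$ as follows. Layers $\ell = 1,\dots, L-1$ are inherited from $\Phi$. Layer $L$ is modified to simultaneously apply $A_L$ and duplicate the output with sign flips: $\tilde A_L$ stacks $\lfloor W/2 \rfloor$ copies of $A_L$ on top of $\lfloor W/2 \rfloor$ copies of $-A_L$, and $\tilde b_L$ does the same with $b_L$; the entries of $\tilde A_L$ and $\tilde b_L$ are just $\pm$ entries of $A_L, b_L$, so their magnitudes are still bounded by $1$. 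The $L' - 1$ intermediate layers $\ell = L+1,\dots, L+L'-1$ all use the $0/1$-valued block-diagonal matrix $D := \mathrm{diag}(1_{\lfloor W/2 \rfloor \times \lfloor W/2 \rfloor}, 1_{\lfloor W/2 \rfloor \times \lfloor W/2 \rfloor})$ with zero bias. The last layer collapses the two blocks via $\tilde A_{L+L'} = (1_{1 \times \lfloor W/2 \rfloor},\, -1_{1 \times \lfloor W/2 \rfloor})$ and $\tilde b_{L+L'} = 0$. By inspection the resulting configuration has width at most $2\lfloor W/2 \rfloor \leq W$ and weight magnitude at most $1$.

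To verify the input-output relation, let $u$ be the output of the first $L-1$ layers of $\Phi$ (equivalently, of $\widetilde{\Phi}$) and set $y := A_L u + b_L$; note $R(\Phi)$ applied to the input equals $y$. After the modified $L$-th affine map, $\widetilde{\Phi}$'s vector of pre-activations is $(y,\dots,y,-y,\dots,-y)^T$, which the following ReLU converts to $(\rho(y),\dots,\rho(y),\rho(-y),\dots,\rho(-y))^T$. Each subsequent application of $D$ sums the $\lfloor W/2 \rfloor$ copies in each block and redistributes the sum across the block, multiplying each half by $\lfloor W/2 \rfloor$; since the entries are already non-negative, the intervening ReLUs act as the identity. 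By a one-line induction, after the full cascade of $L'-1$ copies of $D$ (applied with ReLUs in between) the vector becomes $\lfloor W/2 \rfloor^{L'-1}$ times its post-layer-$L$ value. The final summation layer then outputs $\lfloor W/2 \rfloor \cdot \lfloor W/2 \rfloor^{L'-1}\,(\rho(y) - \rho(-y)) = \lfloor W/2 \rfloor^{L'}\, y$, as required.

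The only mildly delicate point is keeping weight magnitude at $1$ throughout: this forces the ``amplification'' to happen via the $\lfloor W/2 \rfloor$-fold replication, not via large multiplicative constants, and it is essential that the replicated quantities $\rho(\pm y)$ be non-negative so the intermediate ReLUs do not interfere with the block-summation scheme. Everything else is a routine verification of widths and a straightforward induction on the number of $D$-layers; I would not expect any real obstacle beyond bookkeeping.
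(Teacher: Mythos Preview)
Your proposal is correct and follows essentially the same approach as the paper's proof: the same invocation of Lemma~\ref{lem:extension}, the same modified layer~$L$ that stacks $\lfloor W/2\rfloor$ copies of $\pm A_L$, the same block-diagonal all-ones matrix $D$ for the $L'-1$ intermediate layers, and the same final collapsing row. Your tracking of the vector through the ReLUs (using non-negativity of $\rho(\pm y)$ so the ReLUs act as the identity, and each $D$ multiplying by $\lfloor W/2\rfloor$) is exactly the paper's ``direct calculation'' spelled out in words.
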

		We are now ready to prove Proposition~\ref{prop:depth_weight_magnitude_tradeoff}. 
		\begin{proof}
			[Proof of Proposition~\ref{prop:depth_weight_magnitude_tradeoff}] We have 
			\begin{align}
				\mathcal{R} ( W, L + L', B' ) =&\, (B')^{L'+L}\cdot \mathcal{R} ( W, L + L', 1 ) \label{eqline:weight_trade_1} \\
				\supseteq&\, (B')^{L'+L}   \lfloor W\slash 2 \rfloor^{L'}  \cdot \mathcal{R} ( W, L, 1 ) \label{eqline:weight_trade_2} \\
				=&\,  \frac{(B')^{L'+L}   \lfloor W\slash 2 \rfloor^{L'}}{B^L}  \cdot \mathcal{R} ( W, L, B ), \label{eqline:weight_trade_3}
			\end{align}
			where \eqref{eqline:weight_trade_1} and \eqref{eqline:weight_trade_3} follow from Lemma~\ref{lem:trade_depth_for_weight_magnitute_1}, and in \eqref{eqline:weight_trade_2} we used Lemma~\ref{lem:trade_depth_for_weight_magnitute_2}. This establishes the first part of the proposition, namely \eqref{eq:depth_weight_magnitude_tradeoff}. Next, let $K: = \frac{(B')^{L'+L}   \lfloor W\slash 2 \rfloor^{L'}}{B^L} \geq 1$. For $g \in \mathcal{R} ( W, L, B )$, we have $K\cdot g \in \mathcal{R} ( W, L + L', B' )$ thanks to \eqref{eqline:weight_trade_1}-\eqref{eqline:weight_trade_3}. It then follows from Lemma~\ref{lem:algebra_on_ReLU_networks} that $K^{-1} \cdot  ( K\cdot g ) \in \mathcal{R} ( W, L + L',  \max \{ | K^{-1}|B',   B'\} ) = \mathcal{R} ( W, L + L',   B' )$. Since the choice of $g \in \mathcal{R} ( W, L, B )$ was arbitrary, we have established that  $\mathcal{R} ( W, L,   B ) \subseteq \mathcal{R} ( W, L + L',   B' )$.
		\end{proof}
		

	\subsection{Auxiliary Lemma on Minimax Error} 
	\label{sub:auxiliary_lemma_minimax_r}
	\begin{lemma}
		\label{lem:auxiliary_lemma_minimax_r}
		Let $[u,v] \subseteq \mathbb{R}$ and  $\mathbb{A} \subseteq \mathbb{R}$ such that $\mathbb{A} \cap [u,v] \neq \emptyset$. Then,
		\begin{equation}
			\label{eq:equivalence_density_approximation}
			\mathcal{A} ( [u,v], \mathbb{A}, | \cdot |  )   \leq \mathcal{A} ( [u,v], \mathbb{A} \cap [u,v], | \cdot | )   \leq 2 \mathcal{A} ( [u,v], \mathbb{A} , | \cdot | ). 
		\end{equation}

		\begin{proof}
			We note that the first inequality in \eqref{eq:equivalence_density_approximation} follows directly upon noting that $ \mathbb{A} \cap [u,v] \subseteq \mathbb{A}$, so we only have to prove the second inequality. Suppose first that $\mathbb{A}$ is a closed set. 
			Let $u' = \inf (\mathbb{A} \cap [u,v]  )  $ and $v' = \sup ( \mathbb{A}  \cap [u,v] )$. The points $u',v'$ are elements of  $[u,v] \cap \mathbb{A}$, as $[u,v] \cap \mathbb{A}$ is closed and non-empty.
			We first note that, for $f:\mathbb{R} \mapsto \mathbb{R}$, 
			\begin{equation*}
				\sup_{x \in [u,v]} f(x) = \max \biggl\{ \sup_{x \in [u,u']} f(x), \sup_{x \in [u',v']} f(x),\sup_{x \in [v',v]} f(x) \biggr\}.
			\end{equation*}
			Then, for $\mathbb{B} \subseteq \mathbb{R}$, we set $f(x) = \inf_{y \in \mathbb{B}} | x - y | $, $x \in \mathbb{R}$, and get
			\begin{equation}
			\label{eq:general_decompostion_2_times}
				\mathcal{A} ( [u,v], \mathbb{B}, | \cdot | ) = \max \{ \mathcal{A} ( [u,u'], \mathbb{B}, | \cdot | ), \mathcal{A} ( [u',v'], \mathbb{B}, | \cdot | ), \mathcal{A} ( [v',v], \mathbb{B}, | \cdot | ) \}.
			\end{equation}
			The result will be established by showing that 
			\begin{align}
				\mathcal{A} ( [u,u'], \mathbb{A} \cap [u,v], | \cdot | )   \leq&\, 2\, \mathcal{A} ( [u,u'], \mathbb{A} , | \cdot | ), \label{eq:2_difference_1}\\
				\mathcal{A} ( [u',v'], \mathbb{A} \cap [u,v], | \cdot | )   \leq&\, 2\, \mathcal{A} ( [u',v'], \mathbb{A} , | \cdot | ),\label{eq:2_difference_2}\\
				\mathcal{A} ( [v',v], \mathbb{A} \cap [u,v], | \cdot | )   \leq&\, 2\, \mathcal{A} ( [v',v], \mathbb{A} , | \cdot | )\label{eq:2_difference_3}
			\end{align}
			and then combining these three inequalities with \eqref{eq:general_decompostion_2_times} for $\mathbb{B} =\mathbb{A}$ and $\mathbb{B} =  \mathbb{A} \cap [u,v]$. We start by establishing \eqref{eq:2_difference_1}. To this end, we note that 
			\begin{align}
				\mathcal{A} ( [u,u'], \mathbb{A} \cap [u,v], | \cdot | ) \leq&\, \mathcal{A} ( [u,u'], \{ u' \}, | \cdot | ) \label{eq:proof_2_relation_1}\\
				= & \sup_{x \in [u,u']} | x - u' | \\
				= &\, | u' - u |, \label{eq:proof_2_relation_2} \\
				\mathcal{A} ( [u,u'], \mathbb{A} , | \cdot | ) \geq&\, \mathcal{A} \biggl( \biggl\{ \frac{u + u'}{2} \biggr\}, \mathbb{A} , | \cdot | \biggr) \label{eq:proof_2_relation_3} \\
				=&\, \inf_{y \in \mathbb{A}} \biggl| \frac{u + u'}{2} - y\biggr| \\
				=&\,   \frac{1}{2} | u' - u |,  \label{eq:proof_2_relation_4}
			\end{align}
			where 
			in \eqref{eq:proof_2_relation_3} we used that $\{  \frac{u + u'}{2} \} \subseteq [u,u']$. Combining \eqref{eq:proof_2_relation_1}-\eqref{eq:proof_2_relation_2} and \eqref{eq:proof_2_relation_3}-\eqref{eq:proof_2_relation_4} then implies $\mathcal{A} ( [u,u'], \mathbb{A} \cap [u,v], | \cdot | )   \leq | u' - u | \leq  2 \mathcal{A} ( [u,u'], \mathbb{A} , | \cdot | )$. A similar line of reasoning shows that $\mathcal{A} ( [v',v], \mathbb{A} \cap [u,v], | \cdot | ) \leq | v - v' | $ and $\mathcal{A} ( [v',v], \mathbb{A} , | \cdot | ) \geq \frac{1}{2} | v - v' | $, which taken together yields $\mathcal{A} ( [v',v], \mathbb{A} \cap [u,v], | \cdot | )   \leq 2 \mathcal{A} ( [v',v], \mathbb{A} , | \cdot | )$ and thereby establishes \eqref{eq:2_difference_3}. It remains to prove \eqref{eq:2_difference_2}. To this end, we first define the mapping $p: \mathbb{A} \mapsto \mathbb{A} \cap [u,v] $ according to
			\begin{equation*}
				p(y) =
				\begin{cases}
					u', & \text{ if } y \in (- \infty,u'],\\
					y, & \text{ if } y \in [u',v'],\\
					v', & \text{ if } y \in [v',\infty).
				\end{cases}
			\end{equation*}
			Next, for $x \in [u',v']$ and $y \in \mathbb{A}$, we have $| x - y | \geq | x - p(y) | $.
 			\eqref{eq:2_difference_2} now follows from
 			\begin{align}
 				\mathcal{A} ( [u',v'], \mathbb{A} , | \cdot | ) =&\, \sup_{x \in [u',v']} \inf_{y \in \mathbb{A}} | x - y |\\
 				\leq&\, \sup_{x \in [u',v']} \inf_{y \in \mathbb{A}} | x - p(y) |\\
 				\leq&\, \sup_{x \in [u',v']} \inf_{z \in  \mathbb{A} \cap [u,v]} | x - z | \label{eq:asdfe}\\
 				=&\, \mathcal{A} ( [u',v'], \mathbb{A} \cap [u,v], | \cdot | ) \\
 				\leq&\, 2 \mathcal{A} ( [u',v'], \mathbb{A} \cap [u,v], | \cdot | ),
 			\end{align}
 			where \eqref{eq:asdfe} is by $p(y) \in \mathbb{A} \cap [u,v]$, for $y \in \mathbb{A}$. 

			For a general, not necessarily closed, set $\mathbb{A}$, we consider the closure of $\mathbb{A}$, denoted by $\bar{\mathbb{A}}$. As we have already established the second inequality in \eqref{eq:equivalence_density_approximation} for closed sets $\mathbb{A}$, we can conclude that 
			\begin{equation}
			\label{eq:2_relation_closed}
				\mathcal{A} ( [u,v], \overline{\mathbb{A}} \cap [u,v], | \cdot | ) \leq 2 \mathcal{A} ( [u,v], \overline{\mathbb{A}}, | \cdot | ).
			\end{equation}
			The proof is finalized upon using $\mathcal{A} ( [u,v], \overline{\mathbb{A}}, | \cdot |  ) = \sup_{x \in [u,v]} \inf_{y \in \overline{\mathbb{A}}} \nleft| x - y \nright| = \sup_{x \in [u,v]} \inf_{y \in \mathbb{A}} \nleft| x - y \nright| = \mathcal{A} ( [u,v], \mathbb{A}, | \cdot |  )$ and $\mathcal{A} ( [u,v], \overline{\mathbb{A}} \cap [u,v], | \cdot |  ) =  \mathcal{A} ( [u,v], \overline{\mathbb{A} \cap [u,v]}, | \cdot |  ) = \mathcal{A} ( [u,v], \mathbb{A} \cap [u,v], | \cdot |  )$ in \eqref{eq:2_relation_closed} to obtain $\mathcal{A} ( [u,v], \mathbb{A} \cap [u,v], | \cdot | ) \leq 2 \mathcal{A} ( [u,v], \mathbb{A}, | \cdot | )$.
   \qedhere

		\end{proof}
	\end{lemma}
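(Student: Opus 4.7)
My plan is to prove the two inequalities separately. The first inequality, $\mathcal{A}([u,v], \mathbb{A}, |\cdot|) \leq \mathcal{A}([u,v], \mathbb{A} \cap [u,v], |\cdot|)$, is immediate: since $\mathbb{A} \cap [u,v] \subseteq \mathbb{A}$, the infimum over the larger set $\mathbb{A}$ in the definition of $\mathcal{A}$ can only be smaller, so taking the supremum over $x \in [u,v]$ preserves the inequality. The whole content lies in the second inequality.

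For the second inequality I plan to first reduce to the case where $\mathbb{A}$ is closed. This reduction is painless because $\inf_{y \in \mathbb{A}} |x-y| = \inf_{y \in \overline{\mathbb{A}}} |x-y|$ for every $x$, and likewise $\overline{\mathbb{A} \cap [u,v]} = \overline{\mathbb{A}} \cap [u,v]$ (since $[u,v]$ is closed), so both sides of the target inequality are unchanged upon replacing $\mathbb{A}$ by $\overline{\mathbb{A}}$. Assuming then that $\mathbb{A}$ is closed, set $u' := \min(\mathbb{A} \cap [u,v])$ and $v' := \max(\mathbb{A} \cap [u,v])$; these exist because $\mathbb{A} \cap [u,v]$ is closed and non-empty. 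I then split $[u,v] = [u,u'] \cup [u',v'] \cup [v',v]$ and establish the desired factor-$2$ bound on each piece separately.

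On the middle interval $[u',v']$, the key observation is that for any $y \in \mathbb{A}$, its ``projection'' to $[u',v'] \cap \mathbb{A}$ defined by $p(y) = u'$ if $y < u'$, $p(y) = v'$ if $y > v'$, and $p(y) = y$ otherwise, satisfies $|x - p(y)| \leq |x-y|$ for every $x \in [u',v']$. Hence $\inf_{z \in \mathbb{A} \cap [u,v]} |x-z| \leq \inf_{y \in \mathbb{A}} |x-y|$ for $x \in [u',v']$, which gives the inequality on this piece with constant $1$ (not $2$). On the edge intervals, say $[u,u']$, I use that $\mathbb{A} \cap (u,u') = \emptyset$ by definition of $u'$, so evaluating at the midpoint $\tfrac{u+u'}{2}$ yields $\mathcal{A}([u,u'], \mathbb{A}, |\cdot|) \geq \tfrac{1}{2}|u'-u|$, while from the other side $\mathcal{A}([u,u'], \mathbb{A} \cap [u,v], |\cdot|) \leq \sup_{x \in [u,u']}|x-u'| = |u'-u|$, giving the factor $2$. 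The interval $[v',v]$ is handled symmetrically.

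The main (mild) obstacle is bookkeeping: making sure that taking a sup over $[u,v]$ of the three piecewise estimates assembles correctly into the global bound, and handling the degenerate corner case where $u'=u$ or $v'=v$ (in which the respective edge interval contributes zero on both sides and can be ignored). No deep ideas are required beyond the projection argument and the midpoint lower bound; the proof is essentially a careful case split.
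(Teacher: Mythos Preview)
Your proposal is essentially identical to the paper's proof: the same reduction to closed $\mathbb{A}$, the same three-interval split at $u'=\inf(\mathbb{A}\cap[u,v])$ and $v'=\sup(\mathbb{A}\cap[u,v])$, the same projection map $p$ on the middle piece, and the same midpoint lower bound on the two edge pieces. The only cosmetic difference is that you perform the closure reduction first while the paper defers it to the end.

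One caveat worth flagging: your justification for the closure reduction, namely the set identity $\overline{\mathbb{A}\cap[u,v]}=\overline{\mathbb{A}}\cap[u,v]$, is not valid in general even though $[u,v]$ is closed. For instance, with $\mathbb{A}=(0,1)\cup\{3\}$ and $[u,v]=[1,4]$ one gets $\overline{\mathbb{A}\cap[u,v]}=\{3\}$ but $\overline{\mathbb{A}}\cap[u,v]=\{1,3\}$, and the corresponding $\mathcal{A}$-values differ ($2$ versus $1$). The paper's closure step asserts the equivalent equality $\mathcal{A}([u,v],\overline{\mathbb{A}}\cap[u,v],|\cdot|)=\mathcal{A}([u,v],\overline{\mathbb{A}\cap[u,v]},|\cdot|)$ and thus shares this gap. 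The fix is straightforward: run your three-interval argument directly for arbitrary $\mathbb{A}$, replacing each use of ``$u'\in\mathbb{A}\cap[u,v]$'' by a sequence $z_n\in\mathbb{A}\cap[u,v]$ with $z_n\to u'$ (and likewise for $v'$); all three piecewise estimates go through unchanged.
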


\bibliography{my_bib}

\end{document}